
\documentclass{article}
\usepackage[accepted]{icml2025}


\usepackage{amsmath,amsfonts}

\newcommand{\bm}[1]{\boldsymbol{\mathbf{#1}}}









\def\eqref#1{equation~\ref{#1}}









\def\1{\bm{1}}








\def\vepsilon{{\bm{\epsilon}}}

\def\veta{{\bm{\eta}}}

\def\vmu{{\bm{\mu}}}
\def\vnu{{\bm{\nu}}}

\def\vp{{\bm{p}}}

\def\vs{{\bm{s}}}

\def\vv{{\bm{v}}}

\def\vx{{\bm{x}}}
\def\vy{{\bm{y}}}



\def\mI{{\bm{I}}}

\def\mW{{\bm{W}}}
\def\mX{{\bm{X}}}

\def\mSigma{{\bm{\Sigma}}}

\DeclareMathAlphabet{\mathsfit}{\encodingdefault}{\sfdefault}{m}{sl}
\SetMathAlphabet{\mathsfit}{bold}{\encodingdefault}{\sfdefault}{bx}{n}


\def\gC{{\mathcal{C}}}

\def\gE{{\mathcal{E}}}

\def\gI{{\mathcal{I}}}

\def\gN{{\mathcal{N}}}

\def\gR{{\mathcal{R}}}

\def\gW{{\mathcal{W}}}










\newcommand{\E}{\mathbb{E}}
\newcommand{\Ls}{\mathcal{L}}
\newcommand{\R}{\mathbb{R}}

\newcommand{\Var}{\mathrm{Var}}
\newcommand{\snr}{\mathrm{SNR}}



\DeclareMathOperator{\sign}{sign}
\DeclareMathOperator{\diag}{diag}
\DeclareMathOperator{\Tr}{Tr}


\usepackage{microtype}
\usepackage{graphicx}
\usepackage{subcaption}
\usepackage{booktabs} 

\usepackage{hyperref}

\usepackage{amsmath}
\usepackage{amssymb}
\usepackage{mathtools}
\usepackage{amsthm}

\usepackage{enumitem}
\usepackage{url}
\usepackage{makecell}
\usepackage{thm-restate}
\usepackage{caption}
\usepackage{subcaption}
\usepackage{wrapfig}
\usepackage{siunitx}
\usepackage{multirow}

\sisetup{output-exponent-marker=\ensuremath{\mathrm{e}}}

\usepackage{minitoc}
\setlist[description]{leftmargin=25pt,labelindent=25pt}

\usepackage{pifont}
%
%


\usepackage[capitalize,noabbrev]{cleveref}

\theoremstyle{plain}
\newtheorem{theorem}{Theorem}[section]
\newtheorem{proposition}[theorem]{Proposition}
\newtheorem{lemma}[theorem]{Lemma}

\theoremstyle{definition}
\newtheorem{definition}[theorem]{Definition}

\newtheorem{remark}[theorem]{Remark}

\newcounter{assumption_counter}
\newcommand{\assumptiontag}{A\arabic{assumption_counter}}
\newcounter{assumption_prime_counter}

\usepackage[textsize=tiny]{todonotes}

\icmltitlerunning{Benign Overfitting in Token Selection of Attention Mechanism}

\begin{document}
\doparttoc 
\faketableofcontents 

\twocolumn[
\icmltitle{Benign Overfitting in Token Selection of Attention Mechanism}



\icmlsetsymbol{equal}{*}

\begin{icmlauthorlist}
\icmlauthor{Keitaro Sakamoto}{yyy}
\icmlauthor{Issei Sato}{yyy}
\end{icmlauthorlist}

\icmlaffiliation{yyy}{Department of Computer Science, The University of Tokyo, Tokyo, Japan}

\icmlcorrespondingauthor{Keitaro Sakamoto}{sakakei-1999@g.ecc.u-tokyo.ac.jp}
\icmlcorrespondingauthor{Issei Sato}{sato@g.ecc.u-tokyo.ac.jp}

\icmlkeywords{Machine Learning, ICML}
\vskip 0.3in
]



\printAffiliationsAndNotice{}  

\begin{abstract}
Attention mechanism is a fundamental component of the transformer model and plays a significant role in its success.
However, the theoretical understanding of how attention learns to select tokens is still an emerging area of research.
In this work, we study the training dynamics and generalization ability of the attention mechanism under classification problems with label noise.
We show that, with the characterization of signal-to-noise ratio (SNR), the token selection of attention mechanism achieves ``benign overfitting'', i.e., maintaining high generalization performance despite fitting label noise.
Our work also demonstrates an interesting delayed acquisition of generalization after an initial phase of overfitting.
Finally, we provide experiments to support our theoretical analysis using both synthetic and real-world datasets.

\end{abstract}

\section{Introduction}
The transformer models \citep{vaswani2017attention} have greatly succeeded across a wide range of fields, including natural language \citep{devlin2018bert, brown2020language}, vision \citep{dosovitskiy2021an, touvron2021training}, and have become a foundational architecture in modern machine learning.
The key component that characterizes the transformer model is the attention mechanism, which was originally introduced in recurrent neural network (RNN) and long short-term memory (LSTM) to capture the long-range structure of sequence \citep{bahdanau2014neural, xu2015show}.
The attention architecture can process variable-length input sequences and flexibly select important tokens based on the input.
The seminal work on the training dynamics of this token selection mechanism \citep{tarzanagh2023transformers, tarzanagh2023maxmargin, vasudeva2024implicit, sheen2024implicit} studied the implicit bias of gradient descent to the max-margin token separator.
However, it remains unclear whether such max-margin solutions actually generalize well. 
In contrast, theoretical studies such as \citep{jelassi2022vision,li2023a,jiang2024unveil} analyzed the generalization ability of attention mechanisms, but in the data models assumed by these works, token selection proceeds in the same direction for all training samples.
Under annotation errors or adversarial label flips, the behavior of token selection in both clean and noisy samples and its impact on generalization remain unclear.
These existing studies raise the following research questions: 
(Q1) \textit{How do the training dynamics of token selection evolve under label noise?} (Q2) \textit{Does the obtained solutions generalize well?}

To this end, we analyze the token selection of attention mechanisms in the context of benign overfitting studies, which allows us to study the training dynamics of gradient descent and the generalization performance.
Modern over-parameterized neural networks achieve a high generalization while perfectly fitting to the training data \citep{zhang2021understanding}.
This ``benign overfitting'' phenomenon has attracted attention over the past few years because it contrasts with the conventional wisdom that achieving better generalization requires balancing training error and model complexity.
There are lines of studies analyzing the benign overfitting phenomenon in various settings, including linear classification and two-layer neural networks, but the analysis of benign overfitting is mostly limited to these types of architecture. 
In this paper, following the common setup in existing attention work \citep{tarzanagh2023transformers,tarzanagh2023maxmargin,oymak2023role,sheen2024implicit}, we analyze a one-layer attention network $f(\mX; \mW, \vp) = \vnu^\top \mX^\top \mathbb{S}(\mX\mW^\top\vp) \in \R$, where $\mathbb{S}(\cdot)$ is the softmax function, $\mX = (\vx_1, \ldots, \vx_T)^\top \in \R^{T\times d}$ is the sequence of input tokens, $\mW \in \R^{d\times d}$ is the trainable key-query matrix, $\vp \in \R^d$ is a tunable token, and $\vnu \in \R^d$ is a pretrained linear head, on a binary classification task.
Here, $\vp$ corresponds to the trainable [CLS] token \citep{devlin2018bert,dosovitskiy2021an} or prompt tuning \citep{li2021prefix,lester2021power} in the application of transformers, and $f$ represents the output at that token position.
To isolate the role of benign overfitting in token selection, we fix the linear classifier $\vnu$, and focus our analysis on the training dynamics and generalization behavior that arise solely from optimizing the attention mechanism.

\begin{figure*}[t]
\centering
\includegraphics[width=0.9\textwidth]{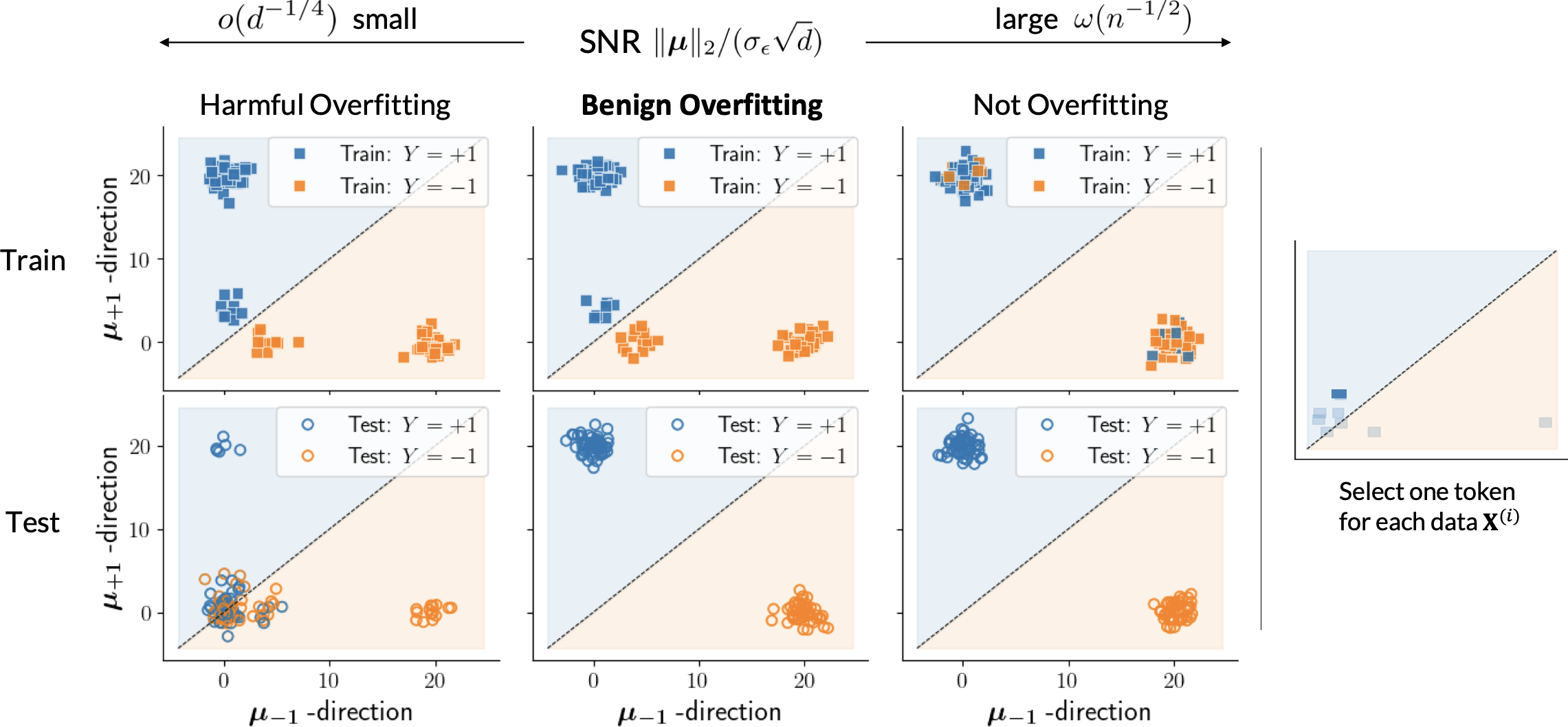}
\caption{
Projection of one selected token per sequence. 
Each point indicates $\vx_t^{(i)}$ selected by attention from each input $\mX^{(i)} = (\vx_1^{(i)}, \ldots, \vx_T^{(i)})^\top$ in the direction of class signals $\vmu_{+1}$ and $\vmu_{-1}$ for the three scenarios of harmful overfitting, benign overfitting, and not overfitting.
\textbf{Top:} Training data with label noise. 
\textbf{Bottom:} Test data.
The decision boundary is common because the head $\vnu$ is fixed, but \textbf{the model can select an appropriate token that belongs to the desired output region.}
Here, $\|\vmu\|_2$ denotes the strength of the class signal, $d$ is the dimension of the data and parameters, $\sigma_\epsilon^2$ is the variance of the input noise, and $n$ is the size of the training set.
}
\label{fig:figure_1}
\end{figure*}

\paragraph{Contribution.}
We show that, under the condition based on SNR, benign overfitting occurs in the token selection mechanism when optimizing $\mW$ and $\vp$ with gradient descent from random initialization.
Specifically, when SNR is high, the model disregards fitting to noisy samples and fits only clean samples, and the generalization ability is high (not overfitting case).
As SNR decreases, the model performs distinct token selection for clean and noisy training samples, fitting both while still generalizing well even in the presence of label noise (benign overfitting case).
This reflects a balance where sufficient class signal strength is required for generalization, while noise memorization is necessary for fitting noisy samples.
\cref{fig:figure_1} illustrates the benign overfitting in token selection.

Technically, we provide a method to evaluate the evolution of the softmax probabilities for each training example, both from above and below.
We need it because the learning direction of the class signal is not determined solely by the number ratio of clean to noisy data but requires careful evaluation of the softmax probability ratio at each time step.
Moreover, our results demonstrate that grokking \citep{power2022grokking,nanda2023progress} occurs in the benign overfitting case.
While fitting the training data is achieved at a relatively early stage of training, further training in exponential order is required to learn the class signals sufficiently and reduce the generalization error.
In this paper, we consider binary classification for simplicity of discussion, but as shown in \cref{sec:multi_class}, it can be extended to the multi-class setting without fundamentally modifying the argument.

\begin{table*}[t]
\centering
\caption{Comparison of the theoretical works of the attention mechanism on the classification task.
``Label Noise'' indicates whether distant token selections occur depending on the training example.
}
\label{tab:existing_work}
\scalebox{1.0}{
\begin{tabular}{c|cccc}
\toprule
{Paper} & {Dynamics} & {Generalization} & {Label Noise} & {Token Type}  \\
\midrule
\cite{tarzanagh2023transformers,tarzanagh2023maxmargin} & \checkmark &  & - & \small{One optimal, the others non-optimal} \\
\cite{vasudeva2024implicit} & \checkmark &  & - & \small{One optimal, the others non-optimal} \\
\cite{sheen2024implicit} & \checkmark &  & - & \small{One optimal, the others non-optimal} \\
\cite{oymak2023role} & & \checkmark & & \small{Relevant and irrelevant tokens} \\
\cite{li2023a} & \checkmark & \checkmark & & \small{Relevant and irrelevant tokens} \\
\cite{jiang2024unveil} & \checkmark & \checkmark & & \small{One relevant, the others irrelevant}   \\
\cite{magen2024benign} &  & \checkmark & \checkmark & \small{One relevant, the others irrelevant}   \\
\textbf{Ours} & \checkmark & \checkmark & \checkmark & \small{One relevant, the others weakly or irrelevant} \\
\bottomrule
\end{tabular}
}
\end{table*}

\section{Related Work}
\paragraph{Token-selection in Attention.}
The theoretical analyses on the attention mechanism from the perspectives of generalization ability \citep{jelassi2022vision, li2023a, li2023transformers, li2024what} and training dynamics \citep{tian2023scan, tian2024joma} have been progressing in recent years. 
The most related line of work to ours deals with the implicit bias of gradient descent for a one-layer attention model \citep{tarzanagh2023transformers, tarzanagh2023maxmargin, li2024mechanics, vasudeva2024implicit, sheen2024implicit}.
While our work follows these studies in terms of model settings, their primary focus is the training dynamics without generalization analysis.
Furthermore, our work is also influenced by the problem setting of \citep{oymak2023role}.
However, they analyze the initial few steps of training under the data model without label noise.
As for the technical differences in proof, we show in \cref{sec:comparison_with_existing_work} that our results cannot simply be obtained by adapting their analysis to the context of benign overfitting.
\cref{tab:existing_work} provides a concise comparison of the analysis focus with the existing studies of the attention mechanism for classification.

\paragraph{Benign Overfitting.}
The success of modern over-parameterized models has led to numerous studies attempting to understand why and when benign overfitting occurs.
The analysis is interesting because the standard generalization bound based on uniform convergence does not explain this phenomenon well \citep{nagarajan2019uniform}. 
For comprehensive surveys of the literature on this topic, please see work such as \citet{bartlett2021deep, belkin2021fit}.

Benign overfitting in regression has been studied in the linear model \citep{bartlett2020benign, hastie2022surprises} and kernel regression \citep{liang2020just, tsigler2023benign}.
It is complicated to analyze the classification task because the explicit formula for the min-norm separator is not obtained.
One approach is to track the training dynamics with gradient descent in the linear classifier and two-layer neural network \citep{chatterji2021finite, frei2022benign, xu2023benign, cao2022benign, kou2023benign, george2023training, meng2024benign, xu2024benign}. 
Another line of work is built on the results of implicit bias to the max-margin solution \citep{cao2021risk, wang2021benign, wang2022binary, frei2023benign}.
These studies base their discussion of convergence on existing research on implicit bias \citep{soudry2018implicit, Ji2019implicit, Lyu2020Gradient, frei2023implicit}.
Specifically, they analyze the properties of the solution through the KKT conditions of the max-margin problem in order to examine whether the solution at convergence shows benign overfitting or not.
Furthermore, there is a research direction in investigating the degree of benignity of overfitting \citep{mallinar2022benign,wen2023benign,kornowski2024tempered}.

Recently, \citet{jiang2024unveil} studied benign overfitting in a simplified vision transformer model.
A major difference is that our analysis is under label noise, where the direction of token selection differs between clean and noisy data in the same training run, and signal learning for generalization competes between clean and noisy data.
Concurrently, \citet{magen2024benign} has also analyzed benign overfitting in a similar model setting; however, our analysis is beyond the initial few training steps and is conducted on a more general data model.
Furthermore, our research put more focus on the token selection mechanism than these studies, demonstrating that benign overfitting can be achieved solely through optimization within the attention mechanism.
Please refer to \cref{sec:comparison_with_existing_work} for further details.


\section{Problem Setting}
\label{sec:problem_setting}
In this section, we introduce the notation and the problem settings in the rest of the paper.

\subsection{Notations}
Let $[n]$ be a shorthand for the set $\{1, \ldots, n\}$.
We denote a multivariate Gaussian distribution with mean vector $\veta$ and covariance matrix $\mSigma$ by $N(\veta, \mSigma)$.
Denote by $\mathbb{S}: \R^T \rightarrow \R^T, \mathbb{S}(\vv)_t = \exp(\vv_t) / \sum_{t^\prime \in [T]} \exp(\vv_{t^\prime})$ the softmax function.
The standard Big-O notations $O(\cdot), \Theta(\cdot)$, $\Omega(\cdot)$, $o(\cdot)$, and $\omega(\cdot)$ are used to hide absolute constants, and we denote inequality ignoring constant factors by $\gtrsim, \lesssim$.

\subsection{Data Model}
\label{sec:data_model}
In the analysis of benign overfitting, we typically need to consider the specific shape of the data distribution to evaluate the generalization error without using a uniform convergence argument.
We consider the following data distribution $P$ defined over $(\mX, Y) \in \R^{T \times d} \times \{\pm 1\}$.
In this paper, we consider binary classification for simplicity of discussion, but the same argument applies to the multi-class case.
Please refer to Section~\ref{sec:multi_class} in the appendix for more details.

\begin{definition}
\label{def:data}
Let $\vmu_{+1}, \vmu_{-1} \in \R^d$ be fixed class signal vectors satisfying $\|\vmu\|_2 = \|\vmu_{+1}\|_2 = \|\vmu_{-1}\|_2$ and $\langle \vmu_{+1}, \vmu_{-1} \rangle = 0$. 
The input $\mX = [\vx_1, \ldots, \vx_T]^\top \in \R^{T\times d}$ has $T$ tokens that are split into three groups: \textit{relevant token} $\gR = \{1\}$ containing the strong signal for true class, \textit{weakly relevant token} $\gW \subseteq [T] \setminus \gR$ containing weak class signals, and \textit{irrelevant token} $\gI = [T] \setminus \left( \gR \cup \gW \right)$ containing only noise.
Let clean distribution $P^*$ be the distribution over $\R^{T \times d} \times \{\pm 1\}$ such that $(\mX, Y^*)$ is sampled as follows:
\begin{enumerate}
\item The clean label $Y^*$ is sampled from $\text{Unif}(\{ \pm 1 \})$.
\item The noise vectors $(\vepsilon_t)_{t \in [T]}$ are sampled independently from $N(\bm{0}, \sigma_\epsilon^2 \mI_d)$.
\item The relevant token is given by $\vx_1 = \vmu_{Y^*} + \vepsilon_1$.
\item The weakly relevant tokens $\vx_u$ for $u \in \gW$ are given by $\vx_u = \rho \vmu_{w_u} + \vepsilon_u$, where $\rho \ll 1$ is a small scale parameter and $w_u \in \{\pm 1\}$ for $u \in \gW$. 
For simplicity, we assume single confusing token $\vx_2 = \rho \vmu_{-Y^*} + \vepsilon_2$ and other tokens $\vx_u = \rho \vmu_{Y^*} + \vepsilon_u$ for $u \in \gW \setminus \{2\}$.

\item The irrelevant tokens are given by $\vx_v = \vepsilon_v$ for $v \in \gI$.
\end{enumerate}
We use the notation $\gW_{+1} = \{ u \in \gW \mid w_u = +1 \}$ and $\gW_{-1} = \{ u \in \gW \mid w_u = -1 \}$.
The data distribution $P$ is defined as the label-corrupted version of $P^*$ with the level of label noise $\eta > 0$.
The data point $(\mX, Y)$ from $P$ is generated by first sampling $(\mX, Y^*)$ from clean distribution $P^*$ and then setting $Y = - Y^*$ with probability $\eta$ and $Y = Y^*$ with probability $1 - \eta$.
We denote signal-to-noise ratio by $\snr = \|\vmu\|_2 / (\sigma_\epsilon \sqrt{d})$.
For simplicity of notation, we assumed a fixed scale $\rho$ for weakly relevant tokens, but the analysis holds even if the scale varies across examples.
\end{definition}

Training data $S = (\mX^{(i)}, Y^{(i)})_{i=1}^n$ are sampled i.i.d. from $P$.
We denote the clean data $\{i \in [n] \mid Y^{(i)} = Y^{*(i)} \}$ and noisy data $\{i \in [n] \mid Y^{(i)} \neq Y^{*(i)} \}$ by $\gC$ and $\gN$, respectively.
The set of data $\{i \in \gC \mid Y^{(i)} = 1 \}$ are denoted as $\gC_+$, and $\{ i \in \gC \mid Y^{(i)} = -1 \}$ are denoted as $\gC_-$.
The same notation is applied to $\gN$.
The superscript $(i)$ denotes that the variable corresponds to the training data $i \in [n]$.
For instance, we write $\vx_t^{(i)}$ for $t \in [T]$, $\gW^{(i)}$ and $\gI^{(i)}$.

Data models based on signal and noise widely appear in the existing benign overfitting studies \citep{chatterji2021finite, frei2022benign, cao2022benign, jiang2024unveil, meng2024benign}.
Such data models based on signal and noise are not limited to benign overfitting works but are also commonly observed in other analyses of attention architecture \citep{jelassi2022vision, li2023a, oymak2023role}.

\begin{remark}[Weakly relevant token and label noise]
\label{remark:weakly_relevant_token}
Weakly relevant tokens represent tokens with weak signal strength and confusing class information, reflecting a more realistic scenario than a clean separation into relevant and irrelevant tokens.
Furthermore, such weak class information, including that which is confusing, is likely to lead to lower annotation quality, making it more plausible to consider the presence of label noise.
\end{remark}

\subsection{Attention Model}
\label{sec:attention_model}
Given a sequential input $\mX = (\vx_1, \ldots \vx_T)^\top \in \R^{T \times d}$, a single-head self-attention layer $f_{\text{SA}} : \R^{T\times d} \rightarrow \R^{T\times m}$ is
\begin{align*}
f_{\text{SA}} (\mX) = \mathbb{S}(\mX \mW_Q \mW_K^\top \mX^\top)\mX \mW_V \mW_o,
\end{align*}
with trainable weights $\mW_Q, \mW_K, \mW_V \in \R^{d\times d}$, and $\mW_o \in \R^{d \times m}$.
Here, the softmax function $\mathbb{S}(\cdot)$ is applied row-wise with the abuse of notation.

In practice, an additional tunable token $\vp \in \R^d$ is concatenated to the input, and this position is used for the model prediction. 
This setup is widely used in, for example, the classification token $\text{[CLS]}$ in BERT \citep{devlin2018bert} and ViT \citep{dosovitskiy2021an}, and prompt-tuning technique \citep{li2021prefix, lester2021power}.
Let the concatenated input be $\mX_\vp \coloneqq [\vp, \mX^\top]^\top \in \R^{(T+1) \times d}$; then the cross-attention feature between $\mX_\vp$ and $\mX$ is given by 
\begin{align*}
\begin{bmatrix}
f(\mX)^\top \\ f_{\text{SA}}(\mX) \\
\end{bmatrix}
&= 
\mathbb{S}( \mX_\vp \mW \mX^\top ) \mX \mW_V \mW_o \\
&=
\begin{bmatrix}
\mathbb{S}( \vp^\top \mW \mX^\top ) \\
\mathbb{S}( \mX \mW \mX^\top ) \\
\end{bmatrix}
\mX \mW_V \mW_o,
\end{align*}
where we use $\mW$ to denote a key-query weight matrix $\mW_Q\mW_K^\top$, and the output corresponding to the position of $\vp$ is denoted by $f(\mX) = \mW_o^\top\mW_V^\top \mX^\top \mathbb{S}( \mX\mW^\top \vp ) \in \R^m$.
In this work, we use the model output for binary classification, leading to the output dimension being $m = 1$, and we denote the value prediction head by $\vnu = \mW_V\mW_o \in \R^d$. 
Therefore, the model under our analysis is of the form
\begin{align}
\label{eq:f}
f(\mX) = \vnu^\top \mX^\top \mathbb{S}( \mX\mW^\top \vp).
\end{align}
The output can be regarded as an affine combination of the token scores $\{\gamma_t \coloneqq \vnu^\top \vx_t\}_{t \in [T]}$, using the learned softmax probabilities.
Let $\vs \in \R^T$ be a shorthand for the softmax vector $\mathbb{S}( \mX\mW^\top\vp )$.

To clarify the role of token selection in the attention mechanism with respect to benign overfitting, we fix the linear classifier $\vnu$, for which benign overfitting has already been extensively studied \citep{bartlett2020benign,chatterji2021finite}, and analyze the training dynamics and generalization behavior arising solely from the optimization of the token selection mechanism inside the softmax.
To formulate the token selection problem based on given token scores $\{\gamma_t \}_{t \in [T]}$, we assume a pretrained head $\vnu$ that assigns appropriate scores to each token.
Specifically, we consider $\vnu$ such that
\begin{align}
\label{eq:pretrained_head}
k \cdot \cos\theta_k > \Theta(1),
\end{align}
for $k \in \{\pm 1\}$, where $\theta_k$ denotes the angle between $\vnu$ and $\vmu_k$.
Jointly training $\vnu$ is itself an intriguing setting, but it would obscure whether benign overfitting occurs in the softmax weights or in the linear classifier.
The analytical setup in this paper enables a stronger claim: that noise memorization and benign overfitting can occur solely within the token selection mechanism.
We further discuss the rationale behind \cref{eq:pretrained_head} in \cref{sec:head_optimization}.

\begin{remark}[Relevance to practical scenarios]
The analytical setup is relevant to practical scenarios, especially in the context of parameter-efficient fine-tuning.
For example, prompt-tuning \citep{li2021prefix, lester2021power} trains only the tunable input tokens, and LoRA \citep{hu2022lora} focuses on training only the attention weights.
The results on the training dynamics and the generalization performance provide a guarantee regarding the required training time and the generalization when applying the model to low-quality downstream tasks containing label noise.
\end{remark}

\subsection{Gradient-descent Training}
\label{sec:gradient_descent_training}
The learnable parameters $(\mW, \vp)$ are trained to minimize the empirical risk objective:
\begin{gather}
\label{eq:empirical_loss}
\widehat{\Ls}(\mW, \vp) = \frac{1}{n}\sum_{i=1}^n \ell \left( Y^{(i)} \cdot f(\mX^{(i)}) \right), \\
\ell(z) = \log(1 + \exp(-z)),
\end{gather}
where $\ell: \R \rightarrow \R$ is a binary cross-entropy loss.

%

Let each element of $\mW$ and $\vp$ be initialized as $\mW(0)_{i,j} \sim N(0, \sigma_w^2)$ and $\vp(0)_i \sim N(0, \sigma_p^2)$, respectively.
The parameters are optimized by gradient descent with a step size $\alpha > 0$:
\begin{align}
\label{eq:w_update}
\mW(\tau+1) &= \mW(\tau) - \alpha \nabla_\mW \widehat{\Ls}(\mW(\tau), \vp(\tau)), \\
\label{eq:p_update}
\vp(\tau+1) &= \vp(\tau) - \alpha \nabla_\vp \widehat{\Ls}(\mW(\tau), \vp(\tau)).
\end{align}
Let $f_\tau$ be the $f$ after the $\tau$ gradient descent step and $\widehat{\Ls}(\tau)$ be a shorthand for $\widehat{\Ls}(\mW(\tau), \vp(\tau))$.
The weight updates with specifically calculated loss gradients are provided in \cref{sec:notation}.

\subsection{Assumption on Parameters}
\label{sec:assumption}
In this section, we first discuss the necessity of the assumptions on parameters and compare them with existing studies, followed by a list of the assumptions used in our work.

The condition on $d$ in Assumption~\ref{assump_d} is necessary for training in an over-parameterized setting, and similar terms $n\|\vmu\|_2^2$ can be found in \citep{chatterji2021finite, frei2022benign, xu2023benign, kou2023benign}.
Assumption~\ref{assump_signal} is required for generalization. 
The lower bound with $d^{1/4}$ appears in \citep{xu2023benign}, and $n^{1/4}d^{1/4}$ and $n d^{1/4}$ are found in \citep{xu2024benign, meng2024benign}, respectively. 
While these conditions may seem intricate, the key aspect in the analysis is the relationship among $d$, $\|\vmu\|_2$, and $n$.
Assumption~\ref{assump_rho} is a natural setting to represent weak class information.
The lower bound is set such that the weak signal strength is on a larger scale than the inner product of the class signal and a random noise vector.
Assumption~\ref{assump_step_size} is for a sufficiently small learning rate, which is widely set in the existing studies, including \citet{frei2022benign,cao2022benign,jiang2024unveil}.
Assumptions~\ref{assump_n} and~\ref{assump_noise_rate} are the common assumptions to evaluate the class balance in the training data and the amount of noisy data.
For example, $n = \Omega(\operatorname{polylog}(d))$ is assumed in \citep{cao2022benign, jiang2024unveil}.
Assumption~\ref{assump_t} is put to focus on the dependencies on $d, \|\vmu\|_2$ and $n$ in the asymptotic notation, following \citep{jiang2024unveil}.
Finally, Assumption~\ref{assump_variance_wp} ensures that the attention probabilities at initial weights are reasonably uniform, and this type of condition is observed in existing studies, including other architectures \citep{cao2022benign, kou2023benign, meng2024benign, jiang2024unveil}.

Now, we state our assumptions in the following.
Let $\hat{\sigma}_\epsilon = \max\{ \sigma_\epsilon, 1 / \sigma_\epsilon \}$.
Given a small failure probability $\delta > 0$ and a large enough universal constant $C$, we make the assumptions for each parameter as follows:
\begin{gather}
\label{assump_d}
\refstepcounter{assumption_counter}
d \geq C \hat{\sigma}_\epsilon n\|\vmu\|_2^{4/3} \log^3(Tn/\delta), \tag{\assumptiontag} \\
\label{assump_signal}
\refstepcounter{assumption_counter}
\|\vmu\|_2 \geq C \sigma_\epsilon d^{3/8} \log(Tn / \delta), \tag{\assumptiontag} \\
\label{assump_rho}
\refstepcounter{assumption_counter}
C \|\vmu\|_2^{-1} \sigma_\epsilon \log(Tn/\delta) \leq
\rho \leq 1/C, \tag{\assumptiontag} \\
\label{assump_step_size}
\refstepcounter{assumption_counter}
\alpha \leq \max\{ \|\vmu\|_2 \sqrt{d}, \sigma_\epsilon d \}^{-1} / C, \tag{\assumptiontag} \\
\label{assump_n}
\refstepcounter{assumption_counter}
n \geq C \log(d/\delta), \tag{\assumptiontag} \\
\label{assump_noise_rate}
\refstepcounter{assumption_counter}
\eta \leq  1 / C, \tag{\assumptiontag} \\
\label{assump_t}
\refstepcounter{assumption_counter}
T = \Theta(1), \tag{\assumptiontag} \\
\label{assump_variance_wp}
\refstepcounter{assumption_counter}
\sigma_w^2, \sigma_p^2
= \Theta \left( \max\{ \|\vmu\|_2 \sqrt{d}, \sigma_\epsilon d \}^{-1} \log^{-2}(Tn/\delta) \right).  \tag{\assumptiontag}
\end{gather}

\section{Main Results}
\label{sec:main_results}

In this section, we provide the main results regarding the training dynamics and generalization of attention mechanisms.
The key techniques used in the proof are presented in \cref{sec:key_techniques}.
By dividing ranges for the signal-to-noise ratio, we obtain the following results for the convergence.
\begin{theorem}
\label{thm:convergence}
Suppose that the norm of the linear head scales as $\|\vnu\|_2 = O(1 / \|\vmu\|_2)$.
Under the parameter assumptions in \cref{sec:assumption}, we have
\begin{enumerate}[leftmargin=*]
\item (Not Overfitting) 
If $\snr^2 = \omega(n^{-1})$, then with probability at least $1 - \delta$, there exists a time step $\tau = \Theta\left( \frac{1}{\alpha \|\vnu\|_2\|\vmu\|_2^3 d \max\{ \sigma_w^2, \sigma_p^2 \} } \right)$ such that the weights $(\mW(\tau), \vp(\tau))$ fit only the clean data, and the test loss is sufficiently low:
\begin{gather*}
\forall i \in \gC, \; f_\tau(\mX^{(i)}) = Y^{(i)}, \\
\forall j \in \gN, \; f_\tau(\mX^{(j)}) \neq Y^{(j)}, \\
\Pr_{(\mX, Y^*) \sim P^*}\left[ \sign\left( f_\tau(\mX) \right) \neq Y^* \right] < \delta.
\end{gather*}

\item (Benign Overfitting) 
If $\snr^2 = o(n^{-1})$, then with probability at least $1 - \delta$, there exists a time step $\tau = \Theta\left( \frac{ \exp(n^{-1} \snr^{-2}) }{\alpha n^{-1} \sigma_\epsilon^2 \|\vnu\|_2\|\vmu\|_2 d^2 \max\{ \sigma_w^2, \sigma_p^2 \} } \right)$ such that the weights $(\mW(\tau), \vp(\tau))$ overfit the training data, and the test loss is sufficiently low:
\begin{gather*}
\forall i \in [n], \; f_\tau(\mX^{(i)}) = Y^{(i)}, \\
\Pr_{(\mX, Y^*) \sim P^*}\left[ \sign\left( f_\tau(\mX) \right) \neq Y^* \right] < \delta.
\end{gather*}
\end{enumerate}
\end{theorem}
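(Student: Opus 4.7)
The plan is to track $\mW(\tau)^\top\vp(\tau)$ through its components along the two signal directions $\vmu_{\pm 1}$ and along the per-sample noise directions $\vepsilon_t^{(i)}$, and to translate these components, via the attention logits $\vp^\top\mW\vx_t^{(i)}$, into two-sided bounds on the softmax vector $\vs^{(i)}=\mathbb{S}(\mX^{(i)}\mW^\top\vp)$. Because $\vnu$ is fixed and satisfies \cref{eq:pretrained_head}, control of $s_1^{(i)}$ (and analogously of $s_1$ for a fresh test point) translates directly into control of $Y^{(i)} f_\tau(\mX^{(i)})$ and of the misclassification probability under $P^*$.

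The first step is standard Gaussian concentration: with probability $\ge 1-\delta$ one has $\|\vepsilon_t^{(i)}\|_2^2=\Theta(\sigma_\epsilon^2 d)$, $|\langle\vepsilon_t^{(i)},\vepsilon_{t'}^{(j)}\rangle|=\widetilde O(\sigma_\epsilon^2\sqrt d)$ for $(i,t)\neq(j,t')$, $|\langle\vepsilon_t^{(i)},\vmu_{\pm 1}\rangle|=\widetilde O(\sigma_\epsilon\|\vmu\|_2)$, and $|\gC_k|,|\gN_k|=\Theta(n)$; Assumption~\ref{assump_variance_wp} ensures that at $\tau=0$ every attention probability is $\Theta(1/T)$. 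I then write the updates \cref{eq:w_update}--\cref{eq:p_update} in projected form, introducing
\begin{align*}
\alpha_\tau^{(k)} &\coloneqq \langle \mW(\tau)^\top\vp(\tau),\vmu_k\rangle, \qquad k\in\{\pm 1\},\\
\beta_\tau^{(i,t)} &\coloneqq \langle \mW(\tau)^\top\vp(\tau),\vepsilon_t^{(i)}\rangle,
\end{align*}
and observing that each gradient step adds a \emph{signal} contribution of scale $\|\vmu\|_2^2$ to $\alpha_\tau^{(k)}$, given by a signed sum over samples weighted by $\ell'_i$ and by the current softmax mass on relevant/weakly relevant tokens, together with a \emph{memorization} contribution of scale $\sigma_\epsilon^2 d$ added to each $\beta_\tau^{(i,t)}$ individually, active only while $\ell'_i=\Theta(1)$.

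The main obstacle, and the point the paper highlights as a technical contribution, is the two-sided tracking of $s_1^{(i)}$ throughout training, because the sign of the signal update depends on the current attention and the update along $\vmu_{+1}$ is a near-cancellation between $\gC_+$ and $\gN_-$ (and analogously for $\vmu_{-1}$). I would handle this by an induction on $\tau$ that simultaneously maintains upper and lower envelopes on $\alpha_\tau^{(k)}$ and on $\max_{i,t}|\beta_\tau^{(i,t)}|$, and then split by regime. In the high-SNR regime $\snr^2=\omega(n^{-1})$, the signal term dominates memorization from the first step; iterating the envelopes gives that by $\tau_\star=\Theta((\alpha\|\vnu\|_2\|\vmu\|_2^3 d\max\{\sigma_w^2,\sigma_p^2\})^{-1})$ the attention on every clean sample has concentrated on its relevant token so $Y^{(i)} f_{\tau_\star}(\mX^{(i)})>0$ on $\gC$, while the $\beta$'s remain too small to flip the sign of $f_{\tau_\star}$ on any noisy sample, giving the stated characterization of not-overfitting. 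In the low-SNR regime $\snr^2=o(n^{-1})$, the signal update is nearly cancelled early on, so $\beta_\tau^{(i,1)}$ grows via memorization until every noisy sample is also fit; beyond this point each $\ell'_i$ is suppressed by $\exp(-\alpha_\tau^{(Y^{(i)})})$, so signal learning proceeds only on an exponential time scale, producing the $\exp(n^{-1}\snr^{-2})$ factor in the stated $\tau_\star$ and explaining the grokking delay.

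To close, the test bound follows because on a fresh $(\mX,Y^*)\sim P^*$ the noise tokens are independent of $\{\vepsilon_{t'}^{(i)}\}_{i,t'}$, so the memorization terms contribute only $\widetilde O(1/\sqrt d)$ to the test attention logits, while the signal coefficient $\alpha_{\tau_\star}^{(Y^*)}$ is by then $\omega(\log(1/\delta))$ in either regime by the induction. Combined with \cref{eq:pretrained_head}, this concentrates $s_1$ on the relevant test token, forces $\sign f_{\tau_\star}(\mX)=Y^*$, and yields $\Pr_{(\mX,Y^*)\sim P^*}[\sign f_{\tau_\star}(\mX)\neq Y^*]<\delta$ as claimed in both cases.
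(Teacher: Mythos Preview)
Your overall scaffolding---project $\mW(\tau)^\top\vp(\tau)$ onto signals and noises, maintain two-sided envelopes by induction, and split by SNR regime---matches the paper's strategy (your $\alpha_\tau^{(k)},\beta_\tau^{(i,t)}$ are the paper's $\lambda_k(\tau),\rho_{i,t}(\tau)$). However, there are two genuine conceptual gaps in the benign-overfitting half that would make the argument fail as written.

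\textbf{The loss derivatives are never suppressed.} You write that after noisy samples are fit, ``each $\ell'_i$ is suppressed by $\exp(-\alpha_\tau^{(Y^{(i)})})$, so signal learning proceeds only on an exponential time scale.'' But under the hypothesis $\|\vnu\|_2=O(1/\|\vmu\|_2)$ every token score $\gamma_t^{(i)}=\vnu^\top\vx_t^{(i)}$ is $O(1)$, hence the output $f_\tau(\mX^{(i)})=\sum_t s_t^{(i)}\gamma_t^{(i)}$ is $O(1)$ uniformly in $\tau$, and $-\ell'_i(\tau)=\Theta(1)$ for all $i$ and all $\tau$ (this is Lemma~\ref{lem:ratio_loss_derivative} in the paper). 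The exponential time scale arises instead from \emph{softmax saturation}: every per-step update is proportional to $s_1^{(i)}(\tau)(1-s_1^{(i)}(\tau))$, which shrinks as the attention gap $\Lambda_{i,t}(\tau)\coloneqq(\vx_1^{(i)}-\vx_t^{(i)})^\top\mW(\tau)^\top\vp(\tau)$ grows. The paper captures this by showing that $g(\Lambda_{i,t}(\tau))$ with $g(x)=2x+2\sinh(x-\log T)$ evolves \emph{linearly} in $\tau$; consequently $\Lambda$ itself grows only logarithmically, $\sum_{\tau'\le\tau} s_1(1-s_1)$ is $\Theta(\log\tau)$ times a scale factor, and the signal component satisfies $\lambda_{+1}(\tau)\gtrsim n\,\snr^2\cdot\log(\tau\cdot\text{scale})$. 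Getting $\lambda_{+1}=\Theta(1)$ then forces $\tau$ to be exponential in $n^{-1}\snr^{-2}$. Your proposed mechanism would instead stall \emph{all} progress.

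\textbf{Noisy samples are fit by selecting the confusing token, not by memorizing the relevant one.} You say ``$\beta_\tau^{(i,1)}$ grows via memorization until every noisy sample is also fit.'' For $j\in\gN$ the relevant-token score satisfies $Y^{(j)}\gamma_1^{(j)}<0$, so pushing $s_1^{(j)}$ down is necessary but not sufficient: the freed mass must land on a token with $Y^{(j)}\gamma_t^{(j)}>0$, and the only such token is the confusing weakly relevant one $\vx_2^{(j)}=\rho\vmu_{-Y^{*(j)}}+\vepsilon_2^{(j)}$ (irrelevant tokens have $\gamma\approx 0$ and the other weakly relevant tokens have the wrong sign). The paper handles this with a two-stage analysis: in Stage~1 (time $\Theta(\rho^{-1}/\text{scale})$) it shows $g(\Lambda_{j,t})$ \emph{decreases} linearly so $s_1^{(j)}\to\Theta(\rho)$; in Stage~2 it tracks the second attention gap $\Gamma_{j,u}(\tau)\coloneqq(\vx_2^{(j)}-\vx_u^{(j)})^\top\mW(\tau)^\top\vp(\tau)$ and shows $g(\Gamma_{j,u})$ increases linearly (at rate $\rho$ times the clean-data rate), driving $s_2^{(j)}\to 1$. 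Without identifying this second gap and the role of $\rho$, you cannot conclude that noisy samples are fit with positive margin, and you lose the factor-of-$\rho$ slowdown that enters the final time bound.
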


The assumption $\|\vnu\|_2 = O(1 / \|\vmu\|_2)$ in the theorem ensures that the token scores remain bounded by a constant.
This condition can be easily satisfied by appropriately scaling down the model output.

This theorem demonstrates that under the parameter assumptions in \cref{sec:assumption}, the model does not fit noisy data and achieves high generalization accuracy with high SNR, and the model shows benign overfitting with low SNR. 
In both cases, high generalization is achieved despite the presence of label noise.
Since both the signal required for generalization and the noise memorization for fitting label noise are significant, their balance is essential for benign overfitting.
An upper bound on this balance is provided in the statement of the theorem, while the lower bound is determined by the parameter assumption~\ref{assump_signal}.
Specifically, by extracting the relationship among $\|\vmu\|_2$, $d$, and $n$, we see that $\snr^2 = \Omega(d^{-1/4})$ holds in this benign overfitting case.

\begin{remark}[Harmful Overfitting]
\label{remark:harmful_overfitting}
\cref{thm:convergence} presents cases of not overfitting and benign overfitting, where the test loss is sufficiently low.
With a lower SNR that violates Assumption~\ref{assump_signal}, specifically $\snr^2 = o(d^{-1/2})$, even inner products among the random noises in the input dominate the selection of class signal (see \cref{lem:good_run} in the appendix for details).
It becomes inherently challenging to select class-relevant tokens and generalize effectively.
The low SNR case corresponds to the left column of \cref{fig:figure_1}.
\end{remark}

\begin{remark}[Implication for Grokking]
\label{remark:grokking}
The time step demonstrated in \cref{thm:convergence} is also an important result.
In the case of benign overfitting, fitting the training set requires a similar time step order to the not-overfitting case; however, to sufficiently learn the signal components and achieve high generalization, the exponential term of the SNR in the numerator is necessary. 
This delayed generalization ability after fitting the training set implies a connection to the phenomenon of grokking \citep{power2022grokking,nanda2023progress}.
For the necessity of this exponential term, please refer to the end of \cref{sec:key_techniques}.
\end{remark}

\begin{figure*}[t]
\centering
\includegraphics[width=0.8\textwidth]{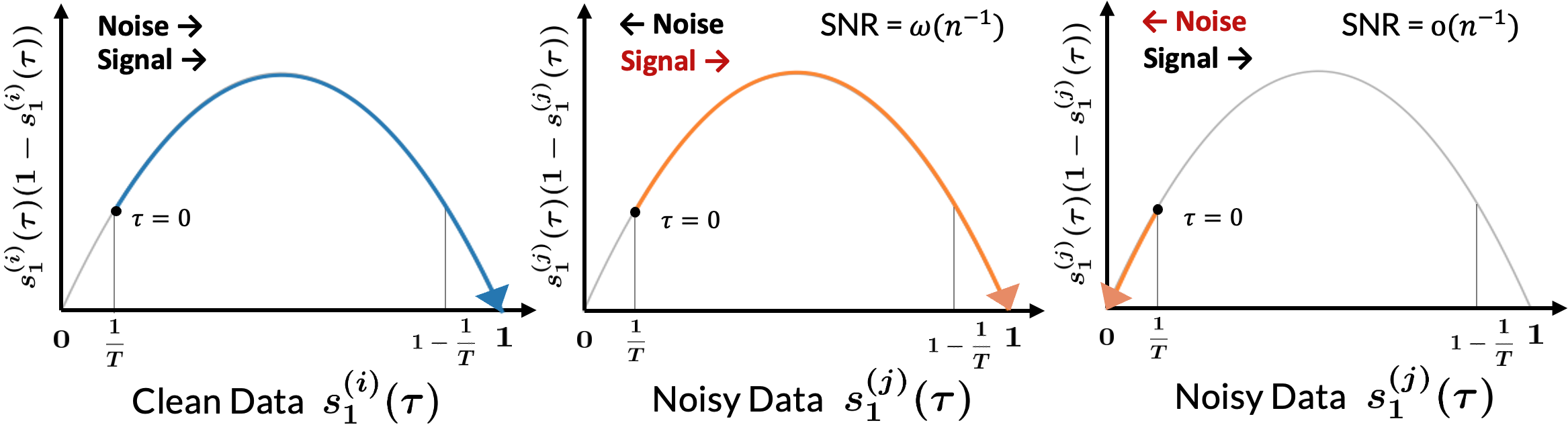}
\caption{
Illustration for the training dynamics of the probability assigned to relevant tokens $s_1$ in clean data $i \in \gC$ and noisy data $j \in \gN$.
The y-axis shows $s_1(\tau)(1-s_1(\tau))$, which determines the magnitude of the gradient descent update.
This value converges to $0$ as $s_1(\tau)$ approaches $0$ or $1$, and consequently, the contribution of this training example to the gradient update diminishes. 
The middle figure corresponds to the not-overfitting case in \cref{thm:convergence}, and the right figure represents the benign overfitting case.
}
\label{fig:illustration_relevant_token_prob}
\end{figure*}

We illustrate the differences between the two scenarios of the main theorem using \cref{fig:illustration_relevant_token_prob}.
Selecting the class-relevant token $\vx_1$, i.e., increasing $s_1$ towards $1$, leads to fitting clean data but fails to adapt to noisy labels.
For clean data, both signal learning and noise memorization contribute to increasing $s_1$, whereas for noisy data, these two processes are in competition.
The learning direction is determined by the strength of SNR, which leads to the different cases in \cref{thm:convergence}.
The middle in \cref{fig:illustration_relevant_token_prob} corresponds to the not-overfitting case, where signal learning dominates and the model does not fit noisy data. 
The right figure illustrates the benign overfitting case, where noise memorization becomes dominant.
The vertical axis in \cref{fig:illustration_relevant_token_prob} represents the value of $s_1 (1 - s_1)$, which determines the amount of parameter updates.
As attention probabilities become more concentrated around $0$ or $1$, this value decreases, reducing the influence of the example on parameter updates.
This makes analyzing the token selection dynamics inherently challenging. 
Our analysis under the label noise setting must account for \textit{two competing training directions} within the same training run: 1) between signal learning and noise memorization (\cref{fig:illustration_relevant_token_prob}), and 2) between clean and noisy samples for learning the class signals. 
These balances depend on softmax probabilities and are not determined by pre-training quantities such as SNR or label noise $\eta$.
This is a specific difficulty with the attention mechanism, which is absent in existing benign overfitting studies. 
For instance, depending on the convergence speed—how quickly $s(\tau)$ approaches $0$ or $1$—it is possible that even when label noise $\eta$ is quite small, the actual contribution to the weight updates at some time step can be dominated by noisy samples. 
This motivates us to carefully analyze the dynamics of softmax probabilities to evaluate the direction of these competing relationships.


\subsection{Key Techniques}
\label{sec:key_techniques}

In this section, we present the key techniques used in the proof of \cref{thm:convergence}.
The whole proof is provided in \cref{sec:proof_of_convergence}.
The proof for the not-overfitting case in the theorem is more straightforward and is presented first in the complete proof.
In this section, to capture the essence of the proof in the main text, we focus on the more non-trivial benign overfitting case, i.e., $\snr^2 = o(n^{-1})$.

In the analysis of benign overfitting, it is necessary to track the model behavior on the training data while also its generalization ability. 
We begin by presenting the results on training dynamics, and the generalization result is shown at the end of this section in \cref{lem:attention_value_class_signal}.
To this end, we introduce the following values that represent the relative strength of token selection of the learned weights.
This quantity is useful for evaluating the softmax probability at each time step $\tau$ for a given training example.

\begin{definition}[Attention gap]
\label{def:attention_gap_main_text}
We define the attention gap between a significant token and other tokens as follows:
\begin{align*}
\Lambda_{i,t}(\tau) &\coloneqq 
\left( \vx_1^{(i)} - \vx_t^{(i)} \right)^\top \mW(\tau)^\top \vp(\tau), \\
\Gamma_{i,u}(\tau) &\coloneqq 
\left( \vx_2^{(i)} - \vx_u^{(i)} \right)^\top \mW(\tau)^\top \vp(\tau),
\end{align*}
for $i \in [n]$, $t \in [T] \setminus \{1\}$, and $u \in [T] \setminus \{2\}$.
\end{definition}

In this paper, we first show that the training dynamics of the attention gap can be described by the monotonically increasing function $g(x) = 2x + 2\sinh(x - \log T)$.
This function naturally arises when expressing the evolution of the attention gap using the weight updates in \cref{eq:w_update,eq:p_update} and evaluating the dynamics via the quadrature method. Please refer to \cref{sec:proof_of_convergence} for further details on the derivation.
By using this function, the token selection of relevant token in clean data $i \in \gC$ evolves as follows:
\begin{lemma}[Attention gap dynamics of clean data]
\label{lem:attention_gap_dynamics_clean_main_text}
For any $T_2 = \Theta\left( \frac{ \exp(n^{-1} \snr^{-2}) }{\alpha n^{-1} \sigma_\epsilon^2 \|\vnu\|_2 \|\vmu\|_2 d^2 \max\{ \sigma_w^2, \sigma_p^2 \} } \right)$, with probability at least $1 - \delta$, we have for all time step $\tau \in [0, T_2]$ that
\begin{align*}
g\left( \Lambda_{i,t}(\tau) \right)
&= g\left( \Lambda_{i,t}(0) \right)  \nonumber \\
&\  + \tau \cdot \Theta\left( \alpha n^{-1} \sigma_\epsilon^2 \|\vnu\|_2 \|\vmu\|_2 d^2 \max\{ \sigma_w^2, \sigma_p^2 \} \right),
\end{align*}
for any $t \in [T] \setminus \{1\}$.
\end{lemma}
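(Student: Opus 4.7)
The plan is to derive a per-step recursion for $\Lambda_{i,t}(\tau)$, isolate its dominant contribution by high-dimensional concentration, and then convert that recursion into linear growth of $g(\Lambda_{i,t})$ via telescoping. First I would expand the gradient updates~\eqref{eq:w_update}--\eqref{eq:p_update} using the softmax identities
\begin{align*}
\nabla_\mW f(\mX^{(j)}) = \vp\,\vq^{(j)\top}, \qquad \nabla_\vp f(\mX^{(j)}) = \mW\vq^{(j)},
\end{align*}
where $\vq^{(j)} := \sum_r \gamma_r^{(j)} s_r^{(j)}(\vx_r^{(j)} - \bar{\vx}^{(j)})$, $\bar{\vx}^{(j)} := (\mX^{(j)})^\top \vs^{(j)}$, and $\gamma_r^{(j)} = \vnu^\top \vx_r^{(j)}$. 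Writing $\ell'_j := \ell'(Y^{(j)} f_\tau(\mX^{(j)}))$, the one-step update of $\mW^\top\vp$ is
\begin{align*}
\mW(\tau+1)^\top\vp(\tau+1) - \mW(\tau)^\top\vp(\tau) = -\frac{\alpha}{n}\sum_{j=1}^n \ell'_j Y^{(j)} \bigl[\|\vp(\tau)\|_2^2 \mI + \mW(\tau)^\top\mW(\tau)\bigr]\vq^{(j)} + O(\alpha^2),
\end{align*}
and pairing with $\vx_1^{(i)}-\vx_t^{(i)}$ yields a closed-form recursion for $\Delta\Lambda_{i,t}(\tau)$.

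Next I would isolate the diagonal term $j=i$. Gaussian concentration gives $\|\vepsilon^{(i)}\|_2^2 = \Theta(\sigma_\epsilon^2 d)$ while $|\langle\vepsilon_r^{(i)},\vepsilon_{r'}^{(j)}\rangle| = O(\sigma_\epsilon^2\sqrt{d\log(Tn/\delta)})$ for $j\neq i$, so under Assumption~\ref{assump_d} the $j=i$ piece dominates in the signal-plus-noise directions. Because $i \in \gC$ gives $Y^{(i)} = Y^{*(i)}$, the pretrained-head condition~\eqref{eq:pretrained_head} forces $\vnu^\top\vmu_{Y^{(i)}} > 0$, so both signal learning and noise memorization push $\Lambda_{i,t}$ upward; contributions from noisy samples $j\in\gN$ oppose this but are suppressed by Assumption~\ref{assump_noise_rate} and by the smaller off-diagonal scale. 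Using Assumption~\ref{assump_variance_wp} the prefactor satisfies $\|\vp(\tau)\|_2^2 + \|\mW(\tau)\|^2 = \Theta(d\max\{\sigma_w^2,\sigma_p^2\})$; combining with $|\ell'_i| = \Theta(1)$ early in training and the diagonal inner product magnitude $\sigma_\epsilon^2\|\vnu\|_2\|\vmu\|_2 d$ yields
\begin{align*}
\Delta\Lambda_{i,t}(\tau) = \Theta\!\bigl( \alpha n^{-1} s_1^{(i)}(1-s_1^{(i)})\,\sigma_\epsilon^2 \|\vnu\|_2\|\vmu\|_2 d^2 \max\{\sigma_w^2,\sigma_p^2\} \bigr).
\end{align*}

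The key observation connecting the recursion to $g$ is that $g'(x) = 2 + e^{x-\log T} + e^{-(x-\log T)}$, and under the two-level softmax profile in which $\vx_1^{(i)}$ is singled out from the remaining $T-1$ tokens one checks $s_1^{(i)}(1-s_1^{(i)}) \asymp 1/g'(\Lambda_{i,t}(\tau))$ uniformly in $\Lambda$ (the three regimes $\Lambda \ll 0$, $\Lambda \approx 0$, $\Lambda \gg 0$ all match up to constants since $T=\Theta(1)$). Consequently $g'(\Lambda_{i,t}(\tau))\,\Delta\Lambda_{i,t}(\tau)$ equals the target rate in the lemma. A discrete Taylor expansion $g(\Lambda(\tau+1)) - g(\Lambda(\tau)) = g'(\Lambda(\tau))\Delta\Lambda(\tau) + O(g''(\Lambda)(\Delta\Lambda)^2)$ with second-order remainder bounded via Assumption~\ref{assump_step_size} then telescopes into the stated identity, which is the quadrature-method perspective alluded to in the lemma statement.

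The hard part will be sustaining these estimates across the full window $\tau\in[0,T_2]$ by induction on $\tau$. I would maintain the invariants (i) $\|\vp(\tau)\|_2^2 = \Theta(\sigma_p^2 d)$ and $\|\mW(\tau)\|^2 = \Theta(\sigma_w^2 d)$, (ii) $|\ell'_j(\tau)| = \Theta(1)$ for every $j \in [n]$, and (iii) all pairwise cross-term concentration bounds. The horizon $T_2$ is chosen precisely so that $g(\Lambda_{i,t}(T_2)) \asymp \exp(n^{-1}\snr^{-2})$, equivalently the time at which $\Lambda_{i,t}$ becomes large enough for clean samples to be correctly classified and $|\ell'_i|$ begins to decay, so the induction closes exactly at this stopping time. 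A secondary subtlety is that different non-relevant tokens $t \in [T]\setminus\{1\}$ may evolve at slightly different rates, but $T = \Theta(1)$ (Assumption~\ref{assump_t}) keeps them comparable up to absolute constants, so the bound holds uniformly in $t$.
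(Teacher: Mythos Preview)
Your high-level outline---one-step recursion for $\Delta\Lambda_{i,t}$, the identification $s_1(1-s_1)\asymp 1/g'(\Lambda_{i,t})$, quadrature/telescoping, and an induction to sustain the estimates---matches the paper's strategy. Two concrete points, however, are off and would prevent the argument from going through as written.

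\textbf{The loss derivative never decays.} You write that $T_2$ is ``the time at which $\Lambda_{i,t}$ becomes large enough for clean samples to be correctly classified and $|\ell'_i|$ begins to decay, so the induction closes exactly at this stopping time.'' This is not what happens. Because the model output $f_\tau(\mX^{(i)})=\sum_t s_t^{(i)}\gamma_t^{(i)}$ is a convex combination of token scores and $\|\vnu\|_2=O(1/\|\vmu\|_2)$ forces $|\gamma_t^{(i)}|=O(1)$, the quantity $-\ell'_i(\tau)$ is bounded above and below by positive absolute constants for \emph{all} $\tau$ (the paper records this as a standalone lemma). So invariant (ii) in your list is free, but also cannot serve as the stopping criterion. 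The horizon $T_2$ is instead dictated by the generalization requirement: one needs $\lambda_{\pm1}(T_2)\gtrsim 1$, and since the signal update scales like $n\,\snr^2$ times the noise-memorization rate, this requires the $\log$ of $\tau\cdot[\text{rate}]$ to reach $\Theta(n^{-1}\snr^{-2})$, giving the exponential numerator. The induction is sustained up to $T_2$ by separately checking that the drift of $\|\vp\|_2^2$, $\|\mW\vmu_{\pm1}\|_2^2$, and the various cross inner products stays within constant factors of initialization under the parameter assumptions.

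\textbf{The missing invariant is the cross-sample softmax balance.} Your invariants (i)--(iii) are not sufficient. The diagonal-dominance argument handles the noise-memorization piece of $\Delta\Lambda_{i,t}$, but the \emph{signal} piece (the update of $\lambda_{+1}$ and $\lambda_{-1}$) receives order-one contributions from every sample $j\in[n]$, weighted by $s_1^{(j)}(1-s_1^{(j)})$. Clean and noisy samples push these in opposite directions, and---as the paper emphasizes---whether clean samples win is \emph{not} decided by the count ratio $\eta$ alone: it depends on the ratio of $s_1^{(j)}(1-s_1^{(j)})$ across $j$, which evolves with $\tau$. The paper therefore carries additional invariants bounding $\exp(\Lambda_{i,t}-\Lambda_{j,u})$ and $\exp(\Lambda_{i,t}+\Lambda_{k,u})$ by constants for all clean $i,j$ and noisy $k$, from which the uniform comparability $\max_{i,j}\frac{s_1^{(i)}(1-s_1^{(i)})}{s_1^{(j)}(1-s_1^{(j)})}<c$ follows. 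Only with this in hand can one combine the small-$\eta$ assumption with the per-step signal/noise update lemmas to conclude the sign and magnitude of $\Delta\Lambda_{i,t}$. Without it, your sentence ``contributions from noisy samples $j\in\gN$ \ldots are suppressed by Assumption~\ref{assump_noise_rate}'' is not justified, because a priori the softmax weights on noisy samples could be arbitrarily larger than on clean ones at some $\tau$.
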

This lemma is shown by tracking the gradient descent dynamics and conducting an induction argument with several desirable properties.
Due to the shape of the hyperbolic sine function, the function $g$ is close to linear around $x = \log T$, and gradually becomes exponential as $x$ moves away from $\log T$.
This lemma states that while the value of $g$ evolves linearly over time, the actual evolution of the attention gap becomes increasingly a logarithmic factor.
This finding corresponds to the left in \cref{fig:illustration_relevant_token_prob}, where $s_1^{(i)}(\tau)$ approaches $1$, and the value of $s_1^{(i)}(\tau)(1 - s_1^{(i)}(\tau) )$, which determines the scale of gradient updates, diminishes accordingly.
This part is also illustrated in \cref{fig:evolution_g} in the appendix.

In contrast, the behavior of noisy data $j \in \gN$ is described across two training stages.
During the initial Stage 1, the softmax probability assigned to the relevant token $\vx_1^{(j)}$ is suppressed.
In this stage, as illustrated in the right column of \cref{fig:illustration_relevant_token_prob}, the influence of noise memorization dominates, and the training progresses in a way that avoids selecting the relevant token, unlike the clean data case.
In the subsequent Stage 2, learning progresses to select the confusing token $\vx_2^{(j)}$ that can fit the label noise.

The next lemma shows that in Stage 1, the value of $g$ decreases linearly over time, leading to a decrease in the attention gap $\Lambda_{j,t}$ in contrast to \cref{lem:attention_gap_dynamics_clean_main_text}.
\begin{lemma}[Attention gap dynamics of noisy data in Stage 1]
\label{lem:attention_gap_dynamics_noisy_stage_1_main_text}
For some $T_1 = \Theta\left( \frac{ \rho^{-1} }{\alpha n^{-1} \sigma_\epsilon^2 \|\vnu\|_2 \|\vmu\|_2 d^2 \max\{ \sigma_w^2, \sigma_p^2 \} } \right)$, with probability at least $1 - \delta$, we have for all time step $\tau \in [0, T_1]$ that
\begin{align*}
g\left( \Lambda_{j,t}(\tau) \right)  
&= g\left( \Lambda_{j,t}(0) \right) \nonumber \\
&\  - \tau \cdot \Theta\left( \alpha n^{-1} \sigma_\epsilon^2 \|\vnu\|_2 \|\vmu\|_2 d^2 \max\{ \sigma_w^2, \sigma_p^2 \} \right),
\end{align*}
for any $t \in [T] \setminus \{1\}$.
\end{lemma}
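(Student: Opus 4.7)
The plan is to mirror the structure of the proof of \cref{lem:attention_gap_dynamics_clean_main_text}, but with the sign of the dominant term flipped, reflecting that for a noisy datum $j \in \gN$ the loss gradient pushes attention \emph{away from} the relevant token $\vx_1^{(j)}$. First I would write the one-step change $\Lambda_{j,t}(\tau+1) - \Lambda_{j,t}(\tau)$ using the gradient descent updates of $\mW$ and $\vp$ together with the explicit expressions for $\nabla_{\mW}\widehat{\Ls}$ and $\nabla_{\vp}\widehat{\Ls}$ worked out in Appendix~\ref{sec:notation}. Each gradient is a sum over all training examples $i \in [n]$ of terms of the form $\alpha n^{-1} \ell'(Y^{(i)} f_\tau(\mX^{(i)})) Y^{(i)}$ times rank-one tensors in $\vp(\tau)$ and in the centered token differences $\vx_{t'}^{(i)} - \sum_{t''} s_{t''}^{(i)} \vx_{t''}^{(i)}$. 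Projecting onto $\vx_1^{(j)} - \vx_t^{(j)}$ on one side and onto $\vp(\tau)$ or $\mW(\tau)^\top \vp(\tau)$ on the other produces a sum of inner products among class signals $\vmu_{\pm 1}$ and the Gaussian noises $\vepsilon_{t'}^{(i)}$ of every training example, which by Assumption~\ref{assump_d} and standard concentration are near-orthogonal up to logarithmic slack.

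Under the inductive hypothesis that $\ell'(Y^{(j)} f_\tau(\mX^{(j)})) = \Theta(1)$ throughout $[0, T_1]$ and that the softmax concentration $s_1^{(j)}(\tau)(1-s_1^{(j)}(\tau))$ is controlled from below by $g'(\Lambda_{j,t}(\tau))^{-1}$, the self-term of example $j$ contributes an update of magnitude $\Theta(\alpha n^{-1} \sigma_\epsilon^2 \|\vnu\|_2 \|\vmu\|_2 d^2 \max\{\sigma_w^2, \sigma_p^2\})/g'(\Lambda_{j,t}(\tau))$ with a \emph{negative} sign, because $Y^{(j)} \gamma_1^{(j)} = -Y^{*(j)} \vnu^\top \vmu_{Y^{*(j)}} + Y^{(j)}\vnu^\top \vepsilon_1^{(j)}$ has mean of the correct sign by the head condition \cref{eq:pretrained_head}. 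Cross-example interference from clean and other noisy data is bounded by the same concentration arguments and is dominated by the self-term, while the weak-signal contribution from $\vx_2^{(j)}$ is suppressed by a factor of $\rho$. This $\rho$ suppression is exactly why Stage~1 persists only up to the horizon $T_1 = \Theta(\rho^{-1}/(\alpha n^{-1} \sigma_\epsilon^2 \|\vnu\|_2 \|\vmu\|_2 d^2 \max\{\sigma_w^2,\sigma_p^2\}))$: beyond this time, the accumulated weak signal in $\Gamma_{j,u}$ becomes competitive and Stage~2 begins. Invoking the quadrature identity $g(\Lambda_{j,t}(\tau+1)) - g(\Lambda_{j,t}(\tau)) = g'(\xi)(\Lambda_{j,t}(\tau+1) - \Lambda_{j,t}(\tau))$ for some intermediate $\xi$, the $g'$ factors cancel and the evolution of $g(\Lambda_{j,t}(\tau))$ becomes linear in $\tau$; summing from $0$ to $\tau \le T_1$ gives the stated formula.

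The main obstacle will be controlling the cross-example interference inside each gradient step while simultaneously maintaining the induction across all $n$ examples. Many noisy Gaussian directions $\vepsilon_{t'}^{(i)}$ from other training examples perturb $\mW(\tau)^\top \vp(\tau)$ via rank-one updates, and their projections onto $\vx_1^{(j)} - \vx_t^{(j)}$ must be shown to be smaller than the example-$j$ self-contribution. Hanson--Wright-type bounds on $\langle \vepsilon_{t'}^{(i)}, \vepsilon_s^{(j)} \rangle$ and $\langle \vepsilon_{t'}^{(i)}, \vmu_{\pm 1}\rangle$ combined with Assumption~\ref{assump_d} deliver the required scale-separation, but we must further propagate the invariant that every loss derivative $\ell'(Y^{(i)} f_\tau(\mX^{(i)}))$ stays $\Theta(1)$ over the entire horizon $[0, T_1]$, so that no single datum can hijack the gradient direction before Stage~1 ends. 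The parameter conditions in \cref{sec:assumption}, in particular the lower bound on $d$ relative to $n$ and $\|\vmu\|_2$, are precisely what make this joint induction go through, and this is the technical heart of the argument.
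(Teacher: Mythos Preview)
Your overall plan---mirror the clean-data argument with flipped sign, use $g'(\Lambda_{j,t})^{-1}\approx s_1^{(j)}(1-s_1^{(j)})$, and integrate---matches the paper's route. However, there is a real gap in how you dispose of the contribution from the \emph{other} training examples.

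You write that ``cross-example interference from clean and other noisy data is bounded by the same concentration arguments and is dominated by the self-term.'' This is not how the paper controls the clean-data contribution, and it would not work. The update of $\Lambda_{j,t}$ decomposes (see \cref{def:attention_gap}) into a signal piece $\lambda_{Y^{*(j)}}(\tau+1)-\lambda_{Y^{*(j)}}(\tau)$ and a noise piece $\rho_{j,1}-\rho_{j,t}$. The noise piece is indeed the ``self-term'' of size $\Theta(\alpha n^{-1}\sigma_\epsilon^2\|\vnu\|_2\|\vmu\|_2 d^2\max\{\sigma_w^2,\sigma_p^2\})$, negative for noisy $j$. But the signal piece receives a \emph{coherent} positive contribution from all $\Theta(n)$ clean examples sharing the true label $Y^{*(j)}$: each such $i$ has $\langle\vmu_{Y^{*(j)}},\vx_1^{(i)}\rangle=\|\vmu\|_2^2(1+o(1))$, not something small by Hanson--Wright. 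Summing over clean data produces a term of order $\Theta(\alpha\|\vnu\|_2\|\vmu\|_2^3 d\max\{\sigma_w^2,\sigma_p^2\})$ (cf.\ \cref{lem:signal_update_induction_benign_overfitting_stage_1}) with no $n^{-1}$ factor. This is exactly the ``signal learning vs.\ noise memorization'' competition emphasized around \cref{fig:illustration_relevant_token_prob}. The noise term wins only because the lemma sits in the regime $\snr^2=o(n^{-1})$, i.e.\ $\|\vmu\|_2^2=o(n^{-1}\sigma_\epsilon^2 d)$; you never invoke this condition.

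A second, related gap: even once you compare the two scales correctly, they come multiplied by \emph{different} softmax factors---$s_1^{(i)}(1-s_1^{(i)})$ for clean $i$ driving the signal term versus $s_1^{(j)}(1-s_1^{(j)})$ for the noisy $j$ in the self-term. Since clean $i$ has $s_1^{(i)}\uparrow 1$ while noisy $j$ has $s_1^{(j)}\downarrow$, these factors evolve in opposite directions, and nothing a priori prevents one from swamping the other. The paper handles this by carrying, as part of the induction (propositions $E(\tau)$ and $F(\tau)$ in \cref{lem:induction_benign_overfitting_stage_1}), the invariant that $s_1^{(i)}(1-s_1^{(i)})/s_1^{(j)}(1-s_1^{(j)})$ stays within a fixed constant for all pairs $i,j$. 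This balance is what lets the scale comparison go through at every step, and it is precisely the technical difficulty the paper highlights in \cref{sec:key_techniques}. Your induction invariant about $\ell'$ staying $\Theta(1)$ is correct but much weaker; it does not give the softmax-ratio control you need.
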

Subsequently, the training proceeds to select $\vx_2^{(j)}$, that is, to increase the attention gap $\Gamma_{j,u}$.
\begin{lemma}[Attention gap dynamics of noisy data in Stage 2]
\label{lem:attention_gap_dynamics_noisy_stage_2_main_text}
Let $T_1$ be the time step in \cref{lem:attention_gap_dynamics_noisy_stage_1_main_text}.
For any $T_2 = \Theta\left( \frac{ \exp(n^{-1} \snr^{-2}) }{\alpha n^{-1} \sigma_\epsilon^2 \|\vnu\|_2 \|\vmu\|_2 d^2 \max\{ \sigma_w^2, \sigma_p^2 \} } \right)$, with probability at least $1 - \delta$, we have for all time step $\tau \in [T_1, T_2]$ that
\begin{align*}
&g\left( \Gamma_{j,1}(\tau) - \log \rho^{-1} \right)  
= g\left( \Gamma_{j,1}(0) - \log \rho^{-1} \right) \nonumber \\
&\qquad\quad + \tau \cdot \Theta\left( \rho \alpha n^{-1} \sigma_\epsilon^2 \|\vnu\|_2 \|\vmu\|_2 d^2 \max\{ \sigma_w^2, \sigma_p^2 \} \right), \\
&g\left( \Gamma_{j,u}(\tau) \right)  
= g\left( \Gamma_{j,u}(0) \right) \nonumber \\
&\qquad\quad + \tau \cdot \Theta\left( \rho \alpha n^{-1} \sigma_\epsilon^2 \|\vnu\|_2 \|\vmu\|_2 d^2 \max\{ \sigma_w^2, \sigma_p^2 \} \right),
\end{align*}
for any $u \in [T] \setminus \{1, 2\}$.
\end{lemma}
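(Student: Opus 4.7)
The plan is to extend the template already used for Lemmas~\ref{lem:attention_gap_dynamics_clean_main_text} and~\ref{lem:attention_gap_dynamics_noisy_stage_1_main_text} to the post-$T_1$ regime. At time $T_1$, Lemma~\ref{lem:attention_gap_dynamics_noisy_stage_1_main_text} guarantees that $s_1^{(j)}$ has been suppressed for every noisy example $j\in\gN$, so to drive the logistic loss of $j$ down further, gradient descent must promote the only remaining token whose signal is aligned with the noisy label: the confusing token $\vx_2^{(j)} = \rho\vmu_{-Y^{*(j)}} + \vepsilon_2^{(j)} = \rho\vmu_{Y^{(j)}} + \vepsilon_2^{(j)}$. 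Structurally, Stage 2 is a replay of the clean-data signal-learning lemma with the roles of $\vx_1$ and $\vx_2$ interchanged and effective signal strength $\rho\|\vmu\|_2$ in place of $\|\vmu\|_2$, which is what produces the extra $\rho$ factor in the stated rate.

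First I would expand $\Gamma_{j,u}(\tau+1)-\Gamma_{j,u}(\tau) = (\vx_2^{(j)}-\vx_u^{(j)})^\top\bigl(\mW(\tau+1)^\top\vp(\tau+1)-\mW(\tau)^\top\vp(\tau)\bigr)$ using the explicit gradient formulas stated in \cref{sec:notation}, then decompose the right-hand side into (i) self-interaction from example $j$, whose leading piece is the product of $\rho\vmu_{Y^{(j)}}$ (the signal inside $\vx_2^{(j)}$) with itself and with $\vnu$, weighted by $-\ell'(Y^{(j)}f_\tau(\mX^{(j)}))\cdot s_2^{(j)}(\tau)(1-s_2^{(j)}(\tau))$; (ii) self-noise, contributing the $\sigma_\epsilon^2 d$ factor through $\|\vepsilon_2^{(j)}\|_2^2$; and (iii) cross-example interference, controlled by Gaussian concentration under Assumption~\ref{assump_d}. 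Summing (i) and (ii) gives the per-step contribution $\Theta(\rho\alpha n^{-1}\sigma_\epsilon^2\|\vnu\|_2\|\vmu\|_2 d^2\max\{\sigma_w^2,\sigma_p^2\})$. After this, the quadrature substitution that previously produced $g(x)=2x+2\sinh(x-\log T)$ applies essentially verbatim to $\Gamma_{j,u}$ for $u\notin\{1,2\}$, since the rate-limiter $s_2(1-s_2)$ has the same $\operatorname{sech}^2$-type structure in $\Gamma_{j,u}$ as the corresponding factor in the clean-data proof.

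For $\Gamma_{j,1}$ the analysis is asymmetric because both tokens $\vx_1^{(j)}$ and $\vx_2^{(j)}$ carry a $\vmu$-aligned component, of strengths $1$ and $\rho$ respectively. Writing out the softmax denominator shows that the effective variable for which the $g$-level linear evolution holds is the shifted quantity $\Gamma_{j,1}(\tau)-\log\rho^{-1}$: the additive $\log\rho^{-1}$ is precisely the attention-space gap that $\vx_2^{(j)}$ must close against $\vx_1^{(j)}$'s larger class-signal before $g$ enters its $\sinh$-dominated regime. Absorbing this shift then reduces the analysis to the same per-step rate as for $u\notin\{1,2\}$ and yields the stated formula.

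The main obstacle will be closing an induction over the exponentially long interval $[T_1,T_2]$, where $T_2$ carries the factor $\exp(n^{-1}\snr^{-2})$. Throughout this horizon one must simultaneously maintain: (a) the clean-data signal-learning conclusion of Lemma~\ref{lem:attention_gap_dynamics_clean_main_text} for every $i\in\gC$; (b) the noisy-data weak-signal growth just described, without the clean-data updates contaminating it or vice versa; and (c) uniform control on Gaussian cross-noise inner products across all $\tau$, which is what forces Assumption~\ref{assump_d} to require $d \gtrsim n\|\vmu\|_2^{4/3}$ up to polylog factors. The exponential scaling of $T_2$ is structurally inevitable: since $g$ is $\sinh$-dominated once its argument exceeds $\log T$ and fitting the noisy label requires $\Gamma_{j,u}$ to reach $\log T + \Theta(n^{-1}\snr^{-2})$, a per-step increment of $g$ of order $\rho\alpha n^{-1}\sigma_\epsilon^2\|\vnu\|_2\|\vmu\|_2 d^2\max\{\sigma_w^2,\sigma_p^2\}$ forces roughly $\exp(n^{-1}\snr^{-2})/(\text{rate})$ iterations. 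Verifying the induction amounts to showing that (a)--(c) each close with a failure probability at most $\delta/\operatorname{poly}(n,T,d)$ per step, so that a union bound over $\tau\in[T_1,T_2]$ retains the required $1-\delta$ overall probability.
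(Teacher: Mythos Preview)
Your overall architecture is right: induction over $\tau$, per-step increment of $\Gamma_{j,u}$ proportional to $s_2^{(j)}(1-s_2^{(j)})$ times the stated rate, then the same quadrature trick via $g$. But two points are genuinely off.

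\textbf{The union-bound mechanism is wrong, and this matters.} You write that the induction closes by having each step fail with probability at most $\delta/\operatorname{poly}(n,T,d)$ and then union-bounding over $\tau\in[T_1,T_2]$. That cannot work: $T_2$ carries the factor $\exp(n^{-1}\snr^{-2})$, so a per-step polynomial failure probability blows up under the union bound. The paper's argument is structurally different. All randomness is consumed at time $0$: the high-probability ``good run'' event (norm and inner-product concentration for the training data and the initialization) is established once, with total failure probability $\delta$. Conditional on the good run, the gradient-descent trajectory is deterministic, and the Stage~2 induction (propositions $A(\tau)$ through $G(\tau)$, with your claim sitting inside $C(\tau)$) is proved deterministically for every $\tau\le T_2$. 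No fresh randomness enters at each step, so no per-step union bound is needed. This is why the exponentially long horizon is survivable.

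\textbf{The role of $T_2$ is misread.} You say ``fitting the noisy label requires $\Gamma_{j,u}$ to reach $\log T+\Theta(n^{-1}\snr^{-2})$''. That is not the constraint: fitting noisy labels only needs $\Gamma_{j,u}$ of order $\log\rho^{-1}$, which is achieved far earlier. The $\exp(n^{-1}\snr^{-2})$ scale is imposed by the \emph{generalization} requirement (signal learning must accumulate $\Theta(1)$ despite the small factor $n\,\snr^2$ multiplying each log-step; see the lemma on attention value of class signals). For the present lemma, $T_2$ is simply the horizon up to which the induction must be maintained so that the subsequent generalization argument can be run; it is not driven by the $\Gamma$-dynamics themselves.

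A smaller point: your explanation of the $\log\rho^{-1}$ shift as ``the attention-space gap that $\vx_2^{(j)}$ must close against $\vx_1^{(j)}$'s larger class-signal'' is not the mechanism. The shift is inherited from Stage~1: by time $T_1$, noise memorization has driven $s_1^{(j)}$ down to $\Theta(\rho)$, so $\Gamma_{j,1}(T_1)=-\Lambda_{j,2}(T_1)\approx\log\rho^{-1}$ while $\Gamma_{j,u}(T_1)=\Theta(1)$ for $u\notin\{1,2\}$. The shifted variable $\Gamma_{j,1}-\log\rho^{-1}$ is then the one for which the $s_2(1-s_2)\approx 1/(2+2\cosh(\,\cdot\, -\log T))$ bound holds, because among the competitors $t\neq 2$ the relevant token contributes $\exp(-\Gamma_{j,1})\sim\rho\exp(-\Gamma_{j,u})$.
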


Similarly to \cref{lem:attention_gap_dynamics_clean_main_text}, these lemmas are proved by using the parameter updates to track the evolution of the attention gap and the softmax probabilities assigned to each token.
At this point, it is necessary to carefully evaluate several factors: the softmax probabilities, the norms and inner products between the model weights and both signal and noise vectors, as well as the relative contributions of other training examples.
In particular, the presence of label noise complicates the analysis involving softmax probabilities in the attention mechanism.
Since the gradient updates depend on the softmax probabilities of training examples, the learning direction of the class signal cannot be determined solely by the ratio of clean to noisy data sample sizes, which is different from the existing benign overfitting work on linear or two-layer neural networks.
Therefore, it is essential to evaluate the ratio of softmax probabilities across different training examples at each time step.
This technique is useful for the analysis when dealing with distinct token selection behaviors among training samples, not limited to the label noise setting. 
The whole proofs based on mathematical induction are provided in \cref{sec:proof_of_convergence}.

\begin{remark}[Two-stage Analysis]
\label{remark:two_stage_analysis}
In the two-stage analysis in \cref{lem:attention_gap_dynamics_noisy_stage_1_main_text,lem:attention_gap_dynamics_noisy_stage_2_main_text}, it is implicitly considered that $\exp(n^{-1}\snr^{-2})$ is of larger order than $\rho^{-1}$, meaning that the $\log \rho^{-1} > 0$ scale is neglected on the benign overfitting condition $\snr^2 = o(n)$.
However, even if this is not the case, the benign overfitting result can still be obtained by replacing the numerator of $T_2$ with $\rho^{-2}$.
\end{remark}

Finally, we present results related to generalization.
The attention value of the class signal at time step $\tau$ is bounded from lower as follows:
\begin{lemma}[Attention value of class signals]
\label{lem:attention_value_class_signal}
Let $T_2$ be the time step in \cref{lem:attention_gap_dynamics_clean_main_text,lem:attention_gap_dynamics_noisy_stage_2_main_text}.
With probability at least $1 - \delta$, we have for all time step $\tau \in [0, T_2]$ and $k \in \{\pm 1\}$ that
\begin{align*}
& \vmu_{k}^\top \mW(\tau)^\top \vp(\tau)
\gtrsim 
\vmu_{k}^\top \mW(0)^\top \vp(0) \nonumber \\ 
&\quad + n\snr^2 \cdot \log\left( \tau \cdot \alpha n^{-1} \sigma_\epsilon^2 \|\vnu\|_2 \|\vmu\|_2 d^2 \max\{ \sigma_w^2, \sigma_p^2 \} \right).
\end{align*}
\end{lemma}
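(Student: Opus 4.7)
The plan is to track $A_k(\tau) \coloneqq \vmu_k^\top \mW(\tau)^\top \vp(\tau)$ directly via the per-step update and telescope. Using the closed-form gradients from \cref{sec:notation}, I would expand
\[
A_k(\tau{+}1) - A_k(\tau) = \frac{\alpha}{n}\sum_{i=1}^{n}(-\ell'_i)\,Y^{(i)}\Bigl[\|\vp(\tau)\|_2^2\,\vmu_k^\top\vq^{(i)} + (\mW(\tau)\vmu_k)^\top\mW(\tau)\vq^{(i)}\Bigr] + O(\alpha^2),
\]
where $\vq^{(i)} = \sum_t s_t^{(i)}\bigl(\gamma_t^{(i)} - f(\mX^{(i)})\bigr)\vx_t^{(i)}$ and $-\ell'_i > 0$. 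The good-run event already used in the proofs of \cref{lem:attention_gap_dynamics_clean_main_text,lem:attention_gap_dynamics_noisy_stage_1_main_text,lem:attention_gap_dynamics_noisy_stage_2_main_text} yields $\|\vp(\tau)\|_2^2 \gtrsim d\max\{\sigma_w^2,\sigma_p^2\}$ throughout $[0,T_2]$, so the $\|\vp\|_2^2$ piece drives the update and the cross-term is subdominant.

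Next, I would split $\vmu_k^\top\vq^{(i)}$ into signal and noise parts using $\vx_t^{(i)} = (\text{signal})_t + \vepsilon_t^{(i)}$. The signal factor $\vmu_k^\top(\text{signal})_t$ equals $\|\vmu\|_2^2$ on the relevant token of samples with $Y^{*(i)}=k$, is $O(\rho\|\vmu\|_2^2)$ on weakly relevant tokens, and zero elsewhere; the Gaussian factor $\vmu_k^\top\vepsilon_t^{(i)}$ is $O(\|\vmu\|_2\sigma_\epsilon\sqrt{\log(Tn/\delta)})$ and dominated under Assumption~\ref{assump_signal}. Writing $\tilde s_t^{(i)} \coloneqq s_t^{(i)}(\gamma_t^{(i)}-f(\mX^{(i)}))$, samples $i\in\gC_k$ contribute a piece $Y^{(i)}\tilde s_1^{(i)}\|\vmu\|_2^2 = +|\tilde s_1^{(i)}|\,\|\vmu\|_2^2$ (the sign coincides for both $k=\pm 1$ via the pretrained-head condition~\cref{eq:pretrained_head}), while samples $j\in\gN_{-k}$ contribute $-|\tilde s_1^{(j)}|\,\|\vmu\|_2^2$ with $s_1^{(j)}$ already pushed small by \cref{lem:attention_gap_dynamics_noisy_stage_2_main_text}, and weak-signal pieces are smaller by a factor $\rho$ (Assumption~\ref{assump_rho}). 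Class-balance Assumption~\ref{assump_n} and noise-rate Assumption~\ref{assump_noise_rate} give $|\gC_k|-|\gN_{-k}| = \Theta(n)$ with probability at least $1-\delta$, leaving a net per-step update bounded below by $\alpha\,\|\vmu\|_2^2\,\|\vp(\tau)\|_2^2\,|\tilde s_1^{(i)}(\tau)|$ for a typical $i\in\gC_k$.

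To convert $|\tilde s_1^{(i)}(\tau)|$ into an explicit function of $\tau$, I would use the identity $\Lambda_{i,v}(\tau) = A_k(\tau) + (\vepsilon_1^{(i)} - \vepsilon_v^{(i)})^\top\mW(\tau)^\top\vp(\tau)$ for $i\in\gC_k$ and $v\in\gI^{(i)}$, combined with \cref{lem:attention_gap_dynamics_clean_main_text} and the large-$y$ inversion $g^{-1}(y)\approx\log T + \log y$, to obtain $\Lambda_{i,v}(\tau) = \Theta(\log(\tau C))$ with $C\coloneqq\alpha n^{-1}\sigma_\epsilon^2\|\vnu\|_2\|\vmu\|_2 d^2\max\{\sigma_w^2,\sigma_p^2\}$. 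Therefore $1 - s_1^{(i)}(\tau) = \Theta(1/(\tau C))$, and expanding $\tilde s_1^{(i)} = s_1^{(i)}\sum_{t\neq 1}s_t^{(i)}(\gamma_1^{(i)}-\gamma_t^{(i)})$ with each gap $\gamma_1^{(i)}-\gamma_t^{(i)} = \Theta(\|\vnu\|_2\|\vmu\|_2) = \Theta(1)$ gives $|\tilde s_1^{(i)}(\tau)| = \Theta(1/(\tau C))$. Plugging back and telescoping from $\tau_0 \asymp 1/C$ (when $\Lambda$ enters the exponential regime of $g$) to $T_2$ yields
\[
A_k(T_2) - A_k(0) \;\gtrsim\; \frac{\|\vmu\|_2^2\,d\max\{\sigma_w^2,\sigma_p^2\}}{C}\,\log(T_2 C) \;=\; n\,\snr^2\,\log(T_2 C),
\]
matching the claim after simplifying the prefactor using $\|\vnu\|_2\|\vmu\|_2 = \Theta(1)$ and $n\snr^2 = n\|\vmu\|_2^2/(\sigma_\epsilon^2 d)$.

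The hard part will be this last step: cleanly tying $|\tilde s_1^{(i)}(\tau)|$ to the logarithmic growth of the attention gap, and then showing that the various lower-order corrections — the cross-term $(\mW\vmu_k)^\top\mW\vq^{(i)}$, the $O(\rho)$ weakly-relevant pieces, the $O(\|\vmu\|_2\sigma_\epsilon)$ Gaussian inner products, and the residual Stage-1 contribution from $\gN$ — integrate to strictly smaller order than $n\snr^2\log(\tau C)$. This is delicate in the benign-overfitting regime because $n\snr^2 = o(1)$, so the total signal gain is only $\Theta(1)$ at $\tau = T_2$; control of each error term relies on Assumptions~\ref{assump_d}, \ref{assump_signal}, and \ref{assump_variance_wp}, combined with the good-run invariants already carried through the proofs of the preceding lemmas.
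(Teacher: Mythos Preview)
Your approach is essentially the paper's: bound the per-step increment $\lambda_k(\tau{+}1)-\lambda_k(\tau)\geq s_1^{(i)}(\tau)(1-s_1^{(i)}(\tau))\cdot c\alpha\|\vnu\|_2\|\vmu\|_2^3 d\max\{\sigma_w^2,\sigma_p^2\}$ for clean $i$, then use $\sum_{\tau'\leq\tau} s_1^{(i)}(1-s_1^{(i)})=\Theta(\log(\tau C)/C)$ to get $n\snr^2\log(\tau C)$. The paper's proof is two lines because both pieces are already packaged as stand-alone lemmas inside its induction (the ``signal update'' lemmas and the ``summation of probability terms'' lemma); you re-derive the same content inline via the $g^{-1}$ inversion and telescoping, which is equivalent.

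Two small corrections. First, the reason noisy samples $j\in\gN_{-k}$ are dominated is not that $s_1^{(j)}$ is ``pushed small'' (this fails throughout Stage~1); the paper instead relies on the inductive invariant that $s_1^{(j)}(1-s_1^{(j)})$ and $s_1^{(i)}(1-s_1^{(i)})$ stay within a constant factor of each other, combined with $|\gN_{-k}|\ll|\gC_k|$ --- which you do also invoke. Second, your final display drops an $\alpha$: the telescope of $\alpha\|\vmu\|_2^2\|\vp\|_2^2\cdot\Theta(1/(\tau C))$ gives prefactor $\alpha\|\vmu\|_2^2 d\max\{\sigma_w^2,\sigma_p^2\}/C$, and it is that $\alpha$ which cancels the one inside $C$ to leave $n\snr^2$.
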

Please note that under the current condition $\snr^2 = o(n^{-1})$, the value of $n\snr^2$ is of a small order within the range that satisfies the parameter assumptions.
To select the class relevant token for unseen examples and achieve a high generalization performance, $\vmu_k^\top \mW(\tau)^\top \vp(\tau)$ is required to sufficiently exceed $\vepsilon^\top \mW(\tau)^\top \vp(\tau)$, where $\vepsilon$ is a random noise vector.
\cref{lem:attention_value_class_signal} states that this scenario is satisfied if the training time $\tau$ becomes exponentially large, thereby compensating for the smallness of $n \snr^2$.
This is why we need a time step of exponential order in \cref{thm:convergence}.
Such an exponential time scale is not necessary merely for fitting training examples, which implies that a significantly longer training period is required to achieve generalization.

\begin{figure*}[t]
\centering
\begin{minipage}[t]{0.325\textwidth}
    \centering
    \includegraphics[width=\textwidth]{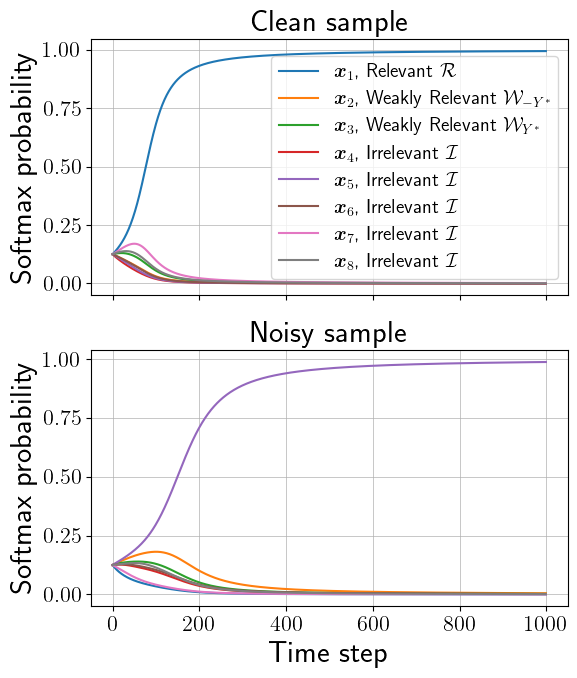}
    \subcaption{
    Large noise setting: $d = 5000$, $\|\vmu\|_2 = 5$.
    Final training accuracy is $\mathbf{1.0}$ and test accuracy is $\mathbf{0.87}$ (Harmful overfitting).
    }
    \label{fig:softmax_prob_memorize}
\end{minipage}
\hfill
\begin{minipage}[t]{0.325\textwidth}
    \centering
    \includegraphics[width=\textwidth]{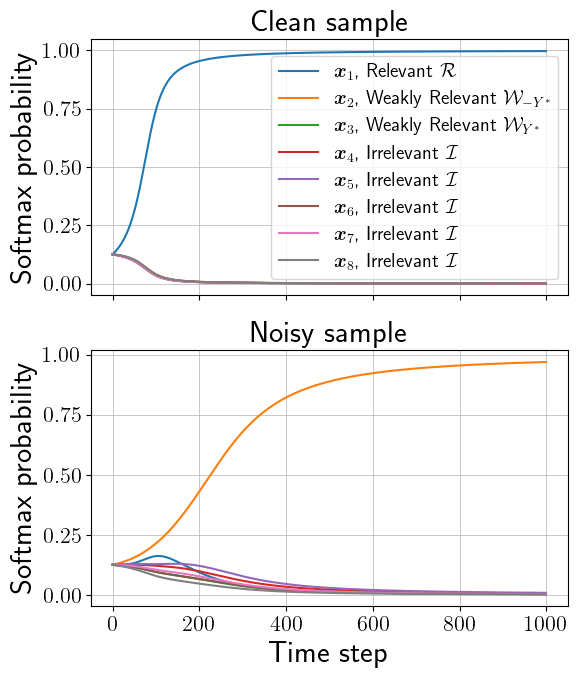}
    \subcaption{
    Balanced setting: $d = 2000$, $\|\vmu\|_2 = 20$.
    Final training accuracy is $\mathbf{1.0}$ and test accuracy is $\mathbf{1.0}$ (Benign overfitting).
    }
    \label{fig:softmax_prob_balanced}
\end{minipage}
\hfill
\begin{minipage}[t]{0.325\textwidth}
    \centering
    \includegraphics[width=\textwidth]{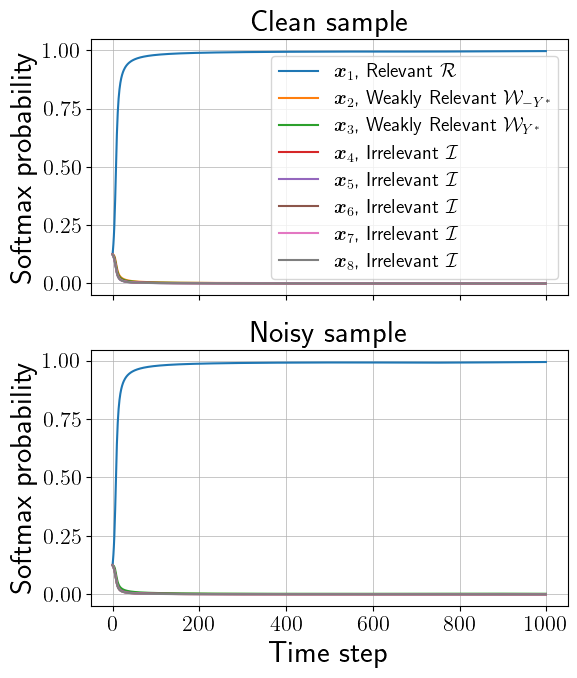}
    \subcaption{
    Large signal setting: $d = 1000$, $\|\vmu\|_2 = 100$.
    Final training accuracy is $\mathbf{0.8}$ and test accuracy is $\mathbf{1.0}$ (Not overfitting).
    }
    \label{fig:softmax_prob_signal}
\end{minipage}
\caption{
Dynamics of softmax probability.
The top represents a clean sample, while the bottom represents a noisy sample.
From left to right, the figures correspond to the cases of harmful overfitting, benign overfitting, and not overfitting.
}
\label{fig:softmax_prob}
\end{figure*}

\section{Experiments}
\label{sec:experiments}
In this section, we provide experimental results to support our analysis.
The code is available on GitHub \footnote{\url{https://github.com/keitaroskmt/benign-attention}}.

\paragraph{Synthetic experiments.}
We train the same model $f$ as defined in \cref{eq:f} with gradient descent on $\mW$ and $\vp$, using the same data model as \cref{def:data} with $\sigma_\epsilon = 1$.
Specifically, we consider the setting with $n = 20$, $T = 8$, $\eta = 0.2$, $\rho = 0.1$ and $\alpha =$\num{5e-3}, changing the value of the dimension $d$ and the signal size $\|\vmu\|_2$. 
For simplicity, we set $|\gW_{+1}| = |\gW_{-1}| = 1$ and $\vnu \propto \vmu_{+1} - \vmu_{-1}$, which aligns with our problem setup.

\cref{fig:softmax_prob} shows the dynamics of softmax probabilities for clean and noisy training samples from the initial weights.
The left column represents the harmful overfitting case, where the dimension $d$ is larger compared to $\|\vmu\|_2$.
The bottom figure shows that confusing weakly relevant token $\vx_2$ is not always selected for noisy data; instead, irrelevant tokens that fit the label noise can be picked.
This model can fit the training data with noise components, which hinders signal learning and reduces generalization ability.
The middle figure shows the case where the balance between signal and noise is achieved, and benign overfitting is observed. 
In this figure, selecting the weakly relevant token $2 \in \gW_{-Y^{*(j)}}$ for the noisy data $j \in \gN$ aligns with our analysis in \cref{thm:convergence}. 
In the right figure, where signal norm $\|\vmu\|_2$ is large, fitting the noisy data does not happen.
The model is trained to select relevant token $\vx_1$ for both clean and noisy samples, supporting the not-overfitting case in \cref{thm:convergence}.
For additional synthetic experiments, please refer to \cref{sec:additional_synthetic_experiments}.
We provide the loss heat-map when varying $d$ and $\|\vmu\|_2$, which supports the SNR boundary established in the main theorem.
Furthermore, we conduct similar experiments using a more general one-layer transformer encoder, providing results beyond our analytical setting.

\paragraph{Real-world experiments}

We further conducted real-world experiments on image and natural language datasets for classification.
For each task, we used the pre-trained ViT \citep{dosovitskiy2021an} and BERT \citep{devlin2018bert} models.
To align as closely as possible with our analysis setup, we initialized the weights within the final attention layer and trained only these weights using a dataset with label noise.
This setup corresponds to treating the fixed backbone model up to the last layer as a feature extractor and its output as the input to a one-layer attention model.
We used datasets from various types: 10-class image classification with MNIST \citep{lecun2010mnist} and CIFAR-10 \citep{krizhevsky2009learning}, anomaly detection in medical image with PneumoniaMNIST and BreastMNIST \citep{medmnistv2}, topic classification of text with AG-news \citep{zhang2015character}, and question type classification with TREC \citep{li-roth-2002-learning}.
For detailed descriptions of these datasets, please refer to \cref{sec:additional_information_real_experiments}.

\begin{table}[t]
\centering
\caption{
Training loss and test accuracy when only the final attention weights are trained on a label-noisy sub-dataset ($\eta = 0.1$) for $2000$ epochs. 
The results show the average over three different runs with the standard deviation. 
}
\label{tab:experiment_results}
\scalebox{1.0}{
\fontsize{9pt}{8pt}\selectfont
\begin{tabular}{llccc}
\toprule
\multirow{2}{*}{Dataset} &\multirow{2}{*}{Eval} & \multicolumn{3}{c}{Training Size $n$} \\
\cmidrule(){3-5}        & & $20$  & $200$ & $1000$   \\
\midrule                                                                                       
\multirow{2}{*}{MNIST}            & train   &  $0.00${\tiny$\pm 0.00$} & $0.01${\tiny$\pm 0.00$} & $0.03${\tiny$\pm 0.00$} \\
                                  & test    &  $88.5${\tiny$\pm 3.2$ } & $90.0${\tiny$\pm 0.4$ } & $91.4${\tiny$\pm 0.2$ } \\
\midrule                                                                                                                                                   
\multirow{2}{*}{CIFAR-10}         & train   &  $0.00${\tiny$\pm 0.01$} & $0.07${\tiny$\pm 0.03$} & $0.12${\tiny$\pm 0.01$} \\
                                  & test    &  $95.9${\tiny$\pm 0.4$ } & $95.1${\tiny$\pm 0.1$ } & $95.1${\tiny$\pm 0.1$ } \\
\midrule                                                                                                                                                  
\multirow{2}{1cm}{Pneumonia MNIST}& train   &  $0.00${\tiny$\pm 0.00$} & $0.00${\tiny$\pm 0.00$} & $0.02${\tiny$\pm 0.00$} \\
                                  & test    &  $76.2${\tiny$\pm 4.1$ } & $78.5${\tiny$\pm 3.2$ } & $81.0${\tiny$\pm 1.5$ } \\
\midrule                                                                                                                                                   
\multirow{2}{1cm}{Breast MNIST}   & train   &  $0.07${\tiny$\pm 0.00$} & $0.14${\tiny$\pm 0.01$} & $-$                     \\
                                  & test    &  $74.1${\tiny$\pm 2.0$ } & $77.1${\tiny$\pm 1.7$ } & $-$                     \\
\midrule                                                                                                                                                   
\multirow{2}{*}{AG-news}          & train   &  $0.00${\tiny$\pm 0.00$} & $0.00${\tiny$\pm 0.00$} & $0.00${\tiny$\pm 0.00$} \\
                                  & test    &  $83.6${\tiny$\pm 1.9$ } & $82.2${\tiny$\pm 1.1$ } & $77.6${\tiny$\pm 1.1$ } \\
\midrule                                                                                                                                                   
\multirow{2}{*}{TREC}             & train   &  $0.00${\tiny$\pm 0.00$} & $0.02${\tiny$\pm 0.02$} & $0.05${\tiny$\pm 0.02$} \\
                                  & test    &  $80.1${\tiny$\pm 1.0$ } & $78.3${\tiny$\pm 2.3$ } & $78.5${\tiny$\pm 0.2$ } \\
\bottomrule
\end{tabular}
}
\end{table}

\cref{tab:experiment_results} presents the training loss and test accuracy when varying the training size $n$.
Since the SNR cannot be controlled due to its dependency on the dataset and the pre-trained model, we varied the training size $n$ to observe the scenario transitions shown in \cref{thm:convergence}.
Please recall that under our parameter and data model setup, \cref{thm:convergence} states that reducing $n$ for fixed SNR causes a transition from not-overfitting to benign overfitting.
In the range of $n$ shown in \cref{tab:experiment_results}, datasets such as CIFAR-10 and TREC exhibit such an increased fitting to the training set as $n$ decreases.
MNIST and PneumoniaMNIST maintain very low training losses across all $n$, and the increase in noisy samples with larger $n$ does not negatively impact test accuracy.
This scenario is closer to benign overfitting rather than harmful overfitting.
In contrast, BreastMNIST does not overfit the training data, and AG-news demonstrates overfitting to the training set but with test accuracy negatively affected by noisy samples, suggesting a harmful overfitting case.

\section{Conclusion}
In this paper, we analyze the training dynamics and the generalization performance of the token selection of the attention architecture.
Specifically, we showed that benign overfitting occurs in the token selection mechanism conditioning on SNR, and the model sufficiently learns class-relevant signals after overfitting.
Furthermore, we supported the analysis with experiments.
In general, this study is helpful in analyzing the balance of convergence speeds when the model selects different tokens for different training examples, including the label noise setting.
As a natural next step, extending the analysis setup to next-token prediction or a more general data model can be considered.

\section*{Acknowledgements}
This work was supported by JSPS KAKENHI Grant Number JP24H00709.

\section*{Impact Statement}
This paper presents work whose goal is to advance the field of Machine Learning. There are many potential societal consequences of our work, none which we feel must be specifically highlighted here.


\bibliography{main}

\begin{thebibliography}{64}
\providecommand{\natexlab}[1]{#1}
\providecommand{\url}[1]{\texttt{#1}}
\expandafter\ifx\csname urlstyle\endcsname\relax
  \providecommand{\doi}[1]{doi: #1}\else
  \providecommand{\doi}{doi: \begingroup \urlstyle{rm}\Url}\fi

\bibitem[Bahdanau(2014)]{bahdanau2014neural}
Bahdanau, D.
\newblock Neural machine translation by jointly learning to align and translate.
\newblock \emph{arXiv preprint arXiv:1409.0473}, 2014.

\bibitem[Bartlett et~al.(2020)Bartlett, Long, Lugosi, and Tsigler]{bartlett2020benign}
Bartlett, P.~L., Long, P.~M., Lugosi, G., and Tsigler, A.
\newblock Benign overfitting in linear regression.
\newblock \emph{Proceedings of the National Academy of Sciences}, 117\penalty0 (48):\penalty0 30063--30070, 2020.

\bibitem[Bartlett et~al.(2021)Bartlett, Montanari, and Rakhlin]{bartlett2021deep}
Bartlett, P.~L., Montanari, A., and Rakhlin, A.
\newblock Deep learning: a statistical viewpoint.
\newblock \emph{Acta numerica}, 30:\penalty0 87--201, 2021.

\bibitem[Belkin(2021)]{belkin2021fit}
Belkin, M.
\newblock Fit without fear: remarkable mathematical phenomena of deep learning through the prism of interpolation.
\newblock \emph{Acta Numerica}, 30:\penalty0 203--248, 2021.

\bibitem[Boucheron et~al.(2003)Boucheron, Lugosi, and Bousquet]{boucheron2003concentration}
Boucheron, S., Lugosi, G., and Bousquet, O.
\newblock Concentration inequalities.
\newblock In \emph{Summer school on machine learning}, pp.\  208--240. Springer, 2003.

\bibitem[Brown et~al.(2020)Brown, Mann, Ryder, Subbiah, Kaplan, Dhariwal, Neelakantan, Shyam, Sastry, Askell, et~al.]{brown2020language}
Brown, T., Mann, B., Ryder, N., Subbiah, M., Kaplan, J.~D., Dhariwal, P., Neelakantan, A., Shyam, P., Sastry, G., Askell, A., et~al.
\newblock Language models are few-shot learners.
\newblock \emph{Advances in neural information processing systems}, 33:\penalty0 1877--1901, 2020.

\bibitem[Cao et~al.(2021)Cao, Gu, and Belkin]{cao2021risk}
Cao, Y., Gu, Q., and Belkin, M.
\newblock Risk bounds for over-parameterized maximum margin classification on sub-gaussian mixtures.
\newblock \emph{Advances in Neural Information Processing Systems}, 34:\penalty0 8407--8418, 2021.

\bibitem[Cao et~al.(2022)Cao, Chen, Belkin, and Gu]{cao2022benign}
Cao, Y., Chen, Z., Belkin, M., and Gu, Q.
\newblock Benign overfitting in two-layer convolutional neural networks.
\newblock \emph{Advances in neural information processing systems}, 35:\penalty0 25237--25250, 2022.

\bibitem[Chatterji \& Long(2021)Chatterji and Long]{chatterji2021finite}
Chatterji, N.~S. and Long, P.~M.
\newblock Finite-sample analysis of interpolating linear classifiers in the overparameterized regime.
\newblock \emph{Journal of Machine Learning Research}, 22\penalty0 (129):\penalty0 1--30, 2021.

\bibitem[Devlin et~al.(2018)Devlin, Chang, Lee, and Toutanova]{devlin2018bert}
Devlin, J., Chang, M.-W., Lee, K., and Toutanova, K.
\newblock Bert: Pre-training of deep bidirectional transformers for language understanding.
\newblock \emph{arXiv preprint arXiv:1810.04805}, 2018.

\bibitem[Dosovitskiy et~al.(2021)Dosovitskiy, Beyer, Kolesnikov, Weissenborn, Zhai, Unterthiner, Dehghani, Minderer, Heigold, Gelly, Uszkoreit, and Houlsby]{dosovitskiy2021an}
Dosovitskiy, A., Beyer, L., Kolesnikov, A., Weissenborn, D., Zhai, X., Unterthiner, T., Dehghani, M., Minderer, M., Heigold, G., Gelly, S., Uszkoreit, J., and Houlsby, N.
\newblock An image is worth 16x16 words: Transformers for image recognition at scale.
\newblock In \emph{International Conference on Learning Representations}, 2021.
\newblock URL \url{https://openreview.net/forum?id=YicbFdNTTy}.

\bibitem[Frei et~al.(2022)Frei, Chatterji, and Bartlett]{frei2022benign}
Frei, S., Chatterji, N.~S., and Bartlett, P.
\newblock Benign overfitting without linearity: Neural network classifiers trained by gradient descent for noisy linear data.
\newblock In \emph{Conference on Learning Theory}, pp.\  2668--2703. PMLR, 2022.

\bibitem[Frei et~al.(2023{\natexlab{a}})Frei, Vardi, Bartlett, and Srebro]{frei2023benign}
Frei, S., Vardi, G., Bartlett, P., and Srebro, N.
\newblock Benign overfitting in linear classifiers and leaky relu networks from kkt conditions for margin maximization.
\newblock In \emph{The Thirty Sixth Annual Conference on Learning Theory}, pp.\  3173--3228. PMLR, 2023{\natexlab{a}}.

\bibitem[Frei et~al.(2023{\natexlab{b}})Frei, Vardi, Bartlett, Srebro, and Hu]{frei2023implicit}
Frei, S., Vardi, G., Bartlett, P., Srebro, N., and Hu, W.
\newblock Implicit bias in leaky re{LU} networks trained on high-dimensional data.
\newblock In \emph{The Eleventh International Conference on Learning Representations}, 2023{\natexlab{b}}.
\newblock URL \url{https://openreview.net/forum?id=JpbLyEI5EwW}.

\bibitem[George et~al.(2023)George, Murray, Swartworth, and Needell]{george2023training}
George, E., Murray, M., Swartworth, W.~J., and Needell, D.
\newblock Training shallow re{LU} networks on noisy data using hinge loss: when do we overfit and is it benign?
\newblock In \emph{Thirty-seventh Conference on Neural Information Processing Systems}, 2023.
\newblock URL \url{https://openreview.net/forum?id=LlERoXEKjh}.

\bibitem[Hastie et~al.(2022)Hastie, Montanari, Rosset, and Tibshirani]{hastie2022surprises}
Hastie, T., Montanari, A., Rosset, S., and Tibshirani, R.~J.
\newblock Surprises in high-dimensional ridgeless least squares interpolation.
\newblock \emph{Annals of statistics}, 50\penalty0 (2):\penalty0 949, 2022.

\bibitem[Hu et~al.(2022)Hu, yelong shen, Wallis, Allen-Zhu, Li, Wang, Wang, and Chen]{hu2022lora}
Hu, E.~J., yelong shen, Wallis, P., Allen-Zhu, Z., Li, Y., Wang, S., Wang, L., and Chen, W.
\newblock Lo{RA}: Low-rank adaptation of large language models.
\newblock In \emph{International Conference on Learning Representations}, 2022.
\newblock URL \url{https://openreview.net/forum?id=nZeVKeeFYf9}.

\bibitem[Jelassi et~al.(2022)Jelassi, Sander, and Li]{jelassi2022vision}
Jelassi, S., Sander, M., and Li, Y.
\newblock Vision transformers provably learn spatial structure.
\newblock \emph{Advances in Neural Information Processing Systems}, 35:\penalty0 37822--37836, 2022.

\bibitem[Ji \& Telgarsky(2019)Ji and Telgarsky]{Ji2019implicit}
Ji, Z. and Telgarsky, M.
\newblock The implicit bias of gradient descent on nonseparable data.
\newblock In Beygelzimer, A. and Hsu, D. (eds.), \emph{Proceedings of the Thirty-Second Conference on Learning Theory}, volume~99 of \emph{Proceedings of Machine Learning Research}, pp.\  1772--1798. PMLR, 25--28 Jun 2019.
\newblock URL \url{https://proceedings.mlr.press/v99/ji19a.html}.

\bibitem[Jiang et~al.(2024)Jiang, Huang, Zhang, Suzuki, and Nie]{jiang2024unveil}
Jiang, J., Huang, W., Zhang, M., Suzuki, T., and Nie, L.
\newblock Unveil benign overfitting for transformer in vision: Training dynamics, convergence, and generalization.
\newblock \emph{arXiv preprint arXiv:2409.19345}, 2024.

\bibitem[Kornowski et~al.(2024)Kornowski, Yehudai, and Shamir]{kornowski2024tempered}
Kornowski, G., Yehudai, G., and Shamir, O.
\newblock From tempered to benign overfitting in relu neural networks.
\newblock \emph{Advances in Neural Information Processing Systems}, 36, 2024.

\bibitem[Kou et~al.(2023)Kou, Chen, Chen, and Gu]{kou2023benign}
Kou, Y., Chen, Z., Chen, Y., and Gu, Q.
\newblock Benign overfitting in two-layer relu convolutional neural networks.
\newblock In \emph{International Conference on Machine Learning}, pp.\  17615--17659. PMLR, 2023.

\bibitem[Krizhevsky et~al.(2009)Krizhevsky, Hinton, et~al.]{krizhevsky2009learning}
Krizhevsky, A., Hinton, G., et~al.
\newblock Learning multiple layers of features from tiny images.
\newblock 2009.

\bibitem[LeCun et~al.(2010)LeCun, Cortes, and Burges]{lecun2010mnist}
LeCun, Y., Cortes, C., and Burges, C.
\newblock Mnist handwritten digit database.
\newblock \emph{ATT Labs [Online]. Available: http://yann.lecun.com/exdb/mnist}, 2, 2010.

\bibitem[Lester et~al.(2021)Lester, Al-Rfou, and Constant]{lester2021power}
Lester, B., Al-Rfou, R., and Constant, N.
\newblock The power of scale for parameter-efficient prompt tuning.
\newblock \emph{arXiv preprint arXiv:2104.08691}, 2021.

\bibitem[Li et~al.(2023{\natexlab{a}})Li, Wang, Liu, and Chen]{li2023a}
Li, H., Wang, M., Liu, S., and Chen, P.-Y.
\newblock A theoretical understanding of shallow vision transformers: Learning, generalization, and sample complexity.
\newblock In \emph{The Eleventh International Conference on Learning Representations}, 2023{\natexlab{a}}.
\newblock URL \url{https://openreview.net/forum?id=jClGv3Qjhb}.

\bibitem[Li et~al.(2024{\natexlab{a}})Li, Wang, Ma, Liu, ZHANG, and Chen]{li2024what}
Li, H., Wang, M., Ma, T., Liu, S., ZHANG, Z., and Chen, P.-Y.
\newblock What improves the generalization of graph transformers? a theoretical dive into the self-attention and positional encoding.
\newblock In \emph{Forty-first International Conference on Machine Learning}, 2024{\natexlab{a}}.
\newblock URL \url{https://openreview.net/forum?id=mJhXlsZzzE}.

\bibitem[Li \& Roth(2002)Li and Roth]{li-roth-2002-learning}
Li, X. and Roth, D.
\newblock Learning question classifiers.
\newblock In \emph{{COLING} 2002: The 19th International Conference on Computational Linguistics}, 2002.
\newblock URL \url{https://www.aclweb.org/anthology/C02-1150}.

\bibitem[Li \& Liang(2021)Li and Liang]{li2021prefix}
Li, X.~L. and Liang, P.
\newblock Prefix-tuning: Optimizing continuous prompts for generation.
\newblock \emph{arXiv preprint arXiv:2101.00190}, 2021.

\bibitem[Li et~al.(2023{\natexlab{b}})Li, Ildiz, Papailiopoulos, and Oymak]{li2023transformers}
Li, Y., Ildiz, M.~E., Papailiopoulos, D., and Oymak, S.
\newblock Transformers as algorithms: Generalization and stability in in-context learning.
\newblock In \emph{International Conference on Machine Learning}, pp.\  19565--19594. PMLR, 2023{\natexlab{b}}.

\bibitem[Li et~al.(2024{\natexlab{b}})Li, Huang, Ildiz, Rawat, and Oymak]{li2024mechanics}
Li, Y., Huang, Y., Ildiz, M.~E., Rawat, A.~S., and Oymak, S.
\newblock Mechanics of next token prediction with self-attention.
\newblock In \emph{International Conference on Artificial Intelligence and Statistics}, pp.\  685--693. PMLR, 2024{\natexlab{b}}.

\bibitem[Liang \& Rakhlin(2020)Liang and Rakhlin]{liang2020just}
Liang, T. and Rakhlin, A.
\newblock Just interpolate: Kernel “ridgeless” regression can generalize.
\newblock \emph{The Annals of Statistics}, 48\penalty0 (3):\penalty0 1329--1347, 2020.

\bibitem[Loshchilov \& Hutter(2019)Loshchilov and Hutter]{loshchilov2018decoupled}
Loshchilov, I. and Hutter, F.
\newblock Decoupled weight decay regularization.
\newblock In \emph{International Conference on Learning Representations}, 2019.
\newblock URL \url{https://openreview.net/forum?id=Bkg6RiCqY7}.

\bibitem[Lyu \& Li(2020)Lyu and Li]{Lyu2020Gradient}
Lyu, K. and Li, J.
\newblock Gradient descent maximizes the margin of homogeneous neural networks.
\newblock In \emph{International Conference on Learning Representations}, 2020.
\newblock URL \url{https://openreview.net/forum?id=SJeLIgBKPS}.

\bibitem[Magen et~al.(2024)Magen, Shang, Xu, Frei, Hu, and Vardi]{magen2024benign}
Magen, R., Shang, S., Xu, Z., Frei, S., Hu, W., and Vardi, G.
\newblock Benign overfitting in single-head attention.
\newblock \emph{arXiv preprint arXiv:2410.07746}, 2024.

\bibitem[Mallinar et~al.(2022)Mallinar, Simon, Abedsoltan, Pandit, Belkin, and Nakkiran]{mallinar2022benign}
Mallinar, N., Simon, J., Abedsoltan, A., Pandit, P., Belkin, M., and Nakkiran, P.
\newblock Benign, tempered, or catastrophic: Toward a refined taxonomy of overfitting.
\newblock \emph{Advances in Neural Information Processing Systems}, 35:\penalty0 1182--1195, 2022.

\bibitem[Meng et~al.(2024)Meng, Zou, and Cao]{meng2024benign}
Meng, X., Zou, D., and Cao, Y.
\newblock Benign overfitting in two-layer re{LU} convolutional neural networks for {XOR} data.
\newblock In \emph{Forty-first International Conference on Machine Learning}, 2024.
\newblock URL \url{https://openreview.net/forum?id=EhU0xBSP4l}.

\bibitem[Nagarajan \& Kolter(2019)Nagarajan and Kolter]{nagarajan2019uniform}
Nagarajan, V. and Kolter, J.~Z.
\newblock Uniform convergence may be unable to explain generalization in deep learning.
\newblock \emph{Advances in Neural Information Processing Systems}, 32, 2019.

\bibitem[Nanda et~al.(2023)Nanda, Chan, Lieberum, Smith, and Steinhardt]{nanda2023progress}
Nanda, N., Chan, L., Lieberum, T., Smith, J., and Steinhardt, J.
\newblock Progress measures for grokking via mechanistic interpretability.
\newblock In \emph{The Eleventh International Conference on Learning Representations}, 2023.
\newblock URL \url{https://openreview.net/forum?id=9XFSbDPmdW}.

\bibitem[Oymak et~al.(2023)Oymak, Rawat, Soltanolkotabi, and Thrampoulidis]{oymak2023role}
Oymak, S., Rawat, A.~S., Soltanolkotabi, M., and Thrampoulidis, C.
\newblock On the role of attention in prompt-tuning.
\newblock In \emph{International Conference on Machine Learning}, pp.\  26724--26768. PMLR, 2023.

\bibitem[Papyan et~al.(2020)Papyan, Han, and Donoho]{papyan2020prevalence}
Papyan, V., Han, X., and Donoho, D.~L.
\newblock Prevalence of neural collapse during the terminal phase of deep learning training.
\newblock \emph{Proceedings of the National Academy of Sciences}, 117\penalty0 (40):\penalty0 24652--24663, 2020.

\bibitem[Power et~al.(2022)Power, Burda, Edwards, Babuschkin, and Misra]{power2022grokking}
Power, A., Burda, Y., Edwards, H., Babuschkin, I., and Misra, V.
\newblock Grokking: Generalization beyond overfitting on small algorithmic datasets.
\newblock \emph{arXiv preprint arXiv:2201.02177}, 2022.

\bibitem[Sheen et~al.(2024)Sheen, Chen, Wang, and Zhou]{sheen2024implicit}
Sheen, H., Chen, S., Wang, T., and Zhou, H.~H.
\newblock Implicit regularization of gradient flow on one-layer softmax attention.
\newblock \emph{arXiv preprint arXiv:2403.08699}, 2024.

\bibitem[Soudry et~al.(2018)Soudry, Hoffer, Nacson, Gunasekar, and Srebro]{soudry2018implicit}
Soudry, D., Hoffer, E., Nacson, M.~S., Gunasekar, S., and Srebro, N.
\newblock The implicit bias of gradient descent on separable data.
\newblock \emph{Journal of Machine Learning Research}, 19\penalty0 (70):\penalty0 1--57, 2018.

\bibitem[Tarzanagh et~al.(2023{\natexlab{a}})Tarzanagh, Li, Thrampoulidis, and Oymak]{tarzanagh2023transformers}
Tarzanagh, D.~A., Li, Y., Thrampoulidis, C., and Oymak, S.
\newblock Transformers as support vector machines.
\newblock \emph{arXiv preprint arXiv:2308.16898}, 2023{\natexlab{a}}.

\bibitem[Tarzanagh et~al.(2023{\natexlab{b}})Tarzanagh, Li, Zhang, and Oymak]{tarzanagh2023maxmargin}
Tarzanagh, D.~A., Li, Y., Zhang, X., and Oymak, S.
\newblock Max-margin token selection in attention mechanism.
\newblock In \emph{Thirty-seventh Conference on Neural Information Processing Systems}, 2023{\natexlab{b}}.
\newblock URL \url{https://openreview.net/forum?id=WXc8O8ghLH}.

\bibitem[Tian et~al.(2023)Tian, Wang, Chen, and Du]{tian2023scan}
Tian, Y., Wang, Y., Chen, B., and Du, S.~S.
\newblock Scan and snap: Understanding training dynamics and token composition in 1-layer transformer.
\newblock \emph{Advances in Neural Information Processing Systems}, 36:\penalty0 71911--71947, 2023.

\bibitem[Tian et~al.(2024)Tian, Wang, Zhang, Chen, and Du]{tian2024joma}
Tian, Y., Wang, Y., Zhang, Z., Chen, B., and Du, S.~S.
\newblock Jo{MA}: Demystifying multilayer transformers via joint dynamics of {MLP} and attention.
\newblock In \emph{The Twelfth International Conference on Learning Representations}, 2024.
\newblock URL \url{https://openreview.net/forum?id=LbJqRGNYCf}.

\bibitem[Touvron et~al.(2021)Touvron, Cord, Douze, Massa, Sablayrolles, and J{\'e}gou]{touvron2021training}
Touvron, H., Cord, M., Douze, M., Massa, F., Sablayrolles, A., and J{\'e}gou, H.
\newblock Training data-efficient image transformers \& distillation through attention.
\newblock In \emph{International conference on machine learning}, pp.\  10347--10357. PMLR, 2021.

\bibitem[Tsigler \& Bartlett(2023)Tsigler and Bartlett]{tsigler2023benign}
Tsigler, A. and Bartlett, P.~L.
\newblock Benign overfitting in ridge regression.
\newblock \emph{Journal of Machine Learning Research}, 24\penalty0 (123):\penalty0 1--76, 2023.

\bibitem[Vasudeva et~al.(2024)Vasudeva, Deora, and Thrampoulidis]{vasudeva2024implicit}
Vasudeva, B., Deora, P., and Thrampoulidis, C.
\newblock Implicit bias and fast convergence rates for self-attention.
\newblock \emph{arXiv preprint arXiv:2402.05738}, 2024.

\bibitem[Vaswani et~al.(2017)Vaswani, Shazeer, Parmar, Uszkoreit, Jones, Gomez, Kaiser, and Polosukhin]{vaswani2017attention}
Vaswani, A., Shazeer, N., Parmar, N., Uszkoreit, J., Jones, L., Gomez, A.~N., Kaiser, {\L}., and Polosukhin, I.
\newblock Attention is all you need.
\newblock \emph{Advances in neural information processing systems}, 30, 2017.

\bibitem[Vershynin(2018)]{vershynin2018high}
Vershynin, R.
\newblock \emph{High-dimensional probability: An introduction with applications in data science}, volume~47.
\newblock Cambridge university press, 2018.

\bibitem[Wainwright(2019)]{Wainwright_2019}
Wainwright, M.~J.
\newblock \emph{High-Dimensional Statistics: A Non-Asymptotic Viewpoint}.
\newblock Cambridge Series in Statistical and Probabilistic Mathematics. Cambridge University Press, 2019.

\bibitem[Wang \& Thrampoulidis(2022)Wang and Thrampoulidis]{wang2022binary}
Wang, K. and Thrampoulidis, C.
\newblock Binary classification of gaussian mixtures: Abundance of support vectors, benign overfitting, and regularization.
\newblock \emph{SIAM Journal on Mathematics of Data Science}, 4\penalty0 (1):\penalty0 260--284, 2022.

\bibitem[Wang et~al.(2021)Wang, Muthukumar, and Thrampoulidis]{wang2021benign}
Wang, K., Muthukumar, V., and Thrampoulidis, C.
\newblock Benign overfitting in multiclass classification: All roads lead to interpolation.
\newblock \emph{Advances in Neural Information Processing Systems}, 34:\penalty0 24164--24179, 2021.

\bibitem[Wen et~al.(2023)Wen, Teng, and Zhang]{wen2023benign}
Wen, K., Teng, J., and Zhang, J.
\newblock Benign overfitting in classification: Provably counter label noise with larger models.
\newblock In \emph{The Eleventh International Conference on Learning Representations}, 2023.
\newblock URL \url{https://openreview.net/forum?id=UrEwJebCxk}.

\bibitem[Wolf et~al.(2020)Wolf, Debut, Sanh, Chaumond, Delangue, Moi, Cistac, Rault, Louf, Funtowicz, Davison, Shleifer, von Platen, Ma, Jernite, Plu, Xu, Scao, Gugger, Drame, Lhoest, and Rush]{wolf-etal-2020-transformers}
Wolf, T., Debut, L., Sanh, V., Chaumond, J., Delangue, C., Moi, A., Cistac, P., Rault, T., Louf, R., Funtowicz, M., Davison, J., Shleifer, S., von Platen, P., Ma, C., Jernite, Y., Plu, J., Xu, C., Scao, T.~L., Gugger, S., Drame, M., Lhoest, Q., and Rush, A.~M.
\newblock Transformers: State-of-the-art natural language processing.
\newblock In \emph{Proceedings of the 2020 Conference on Empirical Methods in Natural Language Processing: System Demonstrations}, pp.\  38--45, Online, October 2020. Association for Computational Linguistics.
\newblock URL \url{https://www.aclweb.org/anthology/2020.emnlp-demos.6}.

\bibitem[Xu(2015)]{xu2015show}
Xu, K.
\newblock Show, attend and tell: Neural image caption generation with visual attention.
\newblock \emph{arXiv preprint arXiv:1502.03044}, 2015.

\bibitem[Xu \& Gu(2023)Xu and Gu]{xu2023benign}
Xu, X. and Gu, Y.
\newblock Benign overfitting of non-smooth neural networks beyond lazy training.
\newblock In \emph{International Conference on Artificial Intelligence and Statistics}, pp.\  11094--11117. PMLR, 2023.

\bibitem[Xu et~al.(2024)Xu, Wang, Frei, Vardi, and Hu]{xu2024benign}
Xu, Z., Wang, Y., Frei, S., Vardi, G., and Hu, W.
\newblock Benign overfitting and grokking in re{LU} networks for {XOR} cluster data.
\newblock In \emph{The Twelfth International Conference on Learning Representations}, 2024.
\newblock URL \url{https://openreview.net/forum?id=BxHgpC6FNv}.

\bibitem[Yang et~al.(2023)Yang, Shi, Wei, Liu, Zhao, Ke, Pfister, and Ni]{medmnistv2}
Yang, J., Shi, R., Wei, D., Liu, Z., Zhao, L., Ke, B., Pfister, H., and Ni, B.
\newblock Medmnist v2-a large-scale lightweight benchmark for 2d and 3d biomedical image classification.
\newblock \emph{Scientific Data}, 10\penalty0 (1):\penalty0 41, 2023.

\bibitem[Zhang et~al.(2021)Zhang, Bengio, Hardt, Recht, and Vinyals]{zhang2021understanding}
Zhang, C., Bengio, S., Hardt, M., Recht, B., and Vinyals, O.
\newblock Understanding deep learning (still) requires rethinking generalization.
\newblock \emph{Communications of the ACM}, 64\penalty0 (3):\penalty0 107--115, 2021.

\bibitem[Zhang et~al.(2015)Zhang, Zhao, and LeCun]{zhang2015character}
Zhang, X., Zhao, J., and LeCun, Y.
\newblock Character-level convolutional networks for text classification.
\newblock \emph{Advances in neural information processing systems}, 28, 2015.

\end{thebibliography}
\bibliographystyle{icml2025}

\newpage
\appendix
\onecolumn

\addcontentsline{toc}{section}{Appendix} 
\part{Appendix} 
\parttoc 

%
%


\section{Comparison with Existing Work}
\label{sec:comparison_with_existing_work}
In this section, we highlight the novelty of our analysis and clarify the position of this study through a comparison with existing studies.
Before presenting the existing work, we first describe the analytical challenges inherent in our setting, which are addressed in this study.
This serves to clarify the novelty of our contribution.

\subsection{Difficulty of Our Label Noise Setting}
\label{sec:difficulty_label_noise}
The analytical setting that incorporates label noise distinctively characterizes our study, as shown in \cref{tab:existing_work}.
By introducing variability in token selection across training examples, the presence of label noise enables a more general analysis of the token selection mechanism.
Specifically, label noise introduces significant challenges due to the existence of competing training directions within the same training run. 
This results in two key difficulties:
\begin{itemize}[topsep=2pt, parsep=2pt, itemsep=1pt]
\item Signal learning vs memorization in token selection of each example (see \cref{fig:illustration_relevant_token_prob}).
\item Clean samples vs noisy samples in the learning direction of class signals, i.e., $\vmu_{+1}$ and $\vmu_{-1}$.
\end{itemize}
These challenges are further complicated by the fact that the weight updates depend nontrivially on the softmax probability, as will be shown later in \cref{eq:gradient_w,eq:gradient_p}.
Therefore, the training direction is not statically determined by pre-training quantities such as SNR or label noise ratio $\eta$. 
This is a fundamental difficulty that does not appear in previous analyses of benign overfitting, including two-layer NNs. 
For instance, depending on the convergence speed—how quickly the softmax probability $s(t)$ converges to $1$ or $0$—it is possible that, even when label noise is very small and the number of noisy samples is negligible, noisy samples can still dominate the weight updates at certain time step. 
Thus, it is crucial to track the whole training dynamics of each token to analyze the above two relationships, making the analysis inherently difficult. 
A large portion of this paper is dedicated to addressing and resolving this issue. 

In the following, we present specific related studies and clarify the differences and novelty of our work.
\subsection{Comparison with Implicit Bias Studies}
We compare with the previous papers on the implicit bias of the attention mechanism \citep{tarzanagh2023transformers, tarzanagh2023maxmargin, li2024mechanics, vasudeva2024implicit, sheen2024implicit}.
The two major differences from ours are:
\begin{enumerate}[topsep=2pt, parsep=2pt, itemsep=1pt, label=(\alph*)]
\item Analysis of a fixed training set $S$, rather than considering the underlying distribution.
\label{enum:implicit_bias_diff_1}
\item Single optimal token, with all other tokens having identical token scores.
\label{enum:implicit_bias_diff_2}
\end{enumerate}

In our setting, \ref{enum:implicit_bias_diff_2} corresponds to considering a single relevant token, irrelevant tokens $\vx_t \sim N(0, \mSigma - \vmu_{+1}\vmu_{+1}^\top / \|\vmu\|_2^2 - \vmu_{-1}\vmu_{-1}^\top / \|\vmu\|_2^2)$, and the linear head $\vnu \propto \vmu_{+1} - \vmu_{-1}$.
In this setting, the noise vectors are orthogonal to the class signals and have an identical token score $\gamma_t = (\vmu_{+1} - \boldsymbol{\mathbf{\mu}}_{-1})^\top \vx_t = 0$, which aligns with \ref{enum:implicit_bias_diff_2}.
However, we consider a more general data model, including an intermediate state termed weakly relevant tokens and non-orthogonality of noise and signals.
Furthermore, for data with label noise, selecting any irrelevant token results in the model with output $0$, and the model cannot fit the training data.
Therefore, this data model is too simple to analyze the distinct behavior of the token selection, which is the focus of our study.
While we follow the model setting, our analytical methods differ entirely.

\subsection{Comparison with Benign Overfitting Studies}
\paragraph{Comparison with \citet{jiang2024unveil}.}

The major differences from ours are:
\begin{enumerate}[topsep=2pt, parsep=2pt, itemsep=1pt, label={(\alph*)}]
\item Absence of label noise.
\label{enum:jiang_diff_1}
\item Differences in the data model.
\label{enum:jiang_diff_2}
\item Focus on vision transformer (ViT) model.
\label{enum:jiang_diff_3}
\end{enumerate}

Their setup assumes no label noise in the target $Y$, meaning all training data is clean.
In contrast, we focus on the benign overfitting concerning label noise, in addition to the input noises $\{ \vepsilon_t \}_t$. 
As explained in \cref{sec:difficulty_label_noise}, the ability to fit label noise of the attention mechanism has not been analyzed before, and even when possible, the training direction is not straightforwardly determined due to the intrinsic softmax difficulties. 
These challenges are specific to the attention mechanism and are newly addressed in our work.
For \ref{enum:jiang_diff_2}, they assume orthogonality between class and noise vectors because the noise covariance matrix is given by $\mSigma = \mI - \vmu_{+1} \vmu_{+1}^\top / \|\vmu\|_2^2 - \vmu_{-1} \vmu_{-1}^\top / \|\vmu\|_2^2$.
This assumption overlooks the interactions between learned parameters and training samples, and it becomes a more unrealistic assumption in a multi-class setting in \cref{sec:multi_class}.
Additionally, our data model newly considers intermediate states as weakly relevant tokens, which reflects the real setting more closely.
Finally, for \ref{enum:jiang_diff_3}, they explored the advantage of ViT compared to linear models and CNNs from the perspective of benign overfitting. 
Their analysis includes the optimization of self-attention and value matrix, which involve challenging interactions during training.
In contrast, our focus is on the training dynamics and generalization ability of token selection mechanisms. 
By focusing on this aspect, we enable analyses in the above general settings.
Additionally, our work demonstrates that benign overfitting can be achieved solely through the token selection mechanism within the softmax.
This is demonstrated for the first time in our analytical setting.
Training the linear head $\vnu$ would obscure whether noise memorization and benign overfitting originate from the attention mechanism or from the linear classifier, the latter of which has already been extensively studied in the literature.

\paragraph{Comparison with \citet{magen2024benign}.}
This work also studies the benign overfitting of the attention mechanism with the same type of architecture, under label noise setting, but it largely differs from our work in terms of the following points:
\begin{enumerate}[topsep=2pt, parsep=2pt, itemsep=1pt, label={(\alph*)}]
\item Differences in the data model.
\label{enum:magen_diff_1}
\item Two-step gradient descent of $\vp$ and $\vnu$.
\label{enum:magen_diff_2}
\end{enumerate}

For \ref{enum:magen_diff_1}, they simplified the data model, such as signal-noise orthogonality and the clean separation of signal and noise vectors. 
Our analytical setup fills this gap by introducing noise into the relevant tokens, and by incorporating intermediate states between signal and noise vectors, which we refer to as weakly relevant tokens.
We also do not assume the signal-noise orthogonality.
For \ref{enum:magen_diff_2}, we present general gradient descent results for $\vp$ and key-query matrix $\mW$.
This allows us to discuss differences in the convergence of gradient descent conditioning on SNR, as well as the difference in the time scale.
Instead, they provide the characterization of SNR for the max-margin solution.

We also comment on the differences in experimental validation.
Their work conducted the empirical validation under the data model in their theoretical setting, not only for the model architecture analyzed but also for other settings such as self-attention and multiple layers. 
In contrast, we validate our theoretical analysis using synthetic data for both the analytical model architecture and a one-layer transformer encoder.
Furthermore, we present results on real-world data, highlighting the connection to the analytical setting in our paper.
We also provide a visualization of the heat map for the synthetic data.

\subsection{Comparison with Other Studies}
\paragraph{Comparison with \citet{oymak2023role}.}
We emphasize that while we followed the analysis setups from this work, our results cannot be obtained by simply applying their results to the context of benign overfitting.
The key differences from our study are as follows:
\begin{enumerate}[topsep=2pt, parsep=2pt, label={(\alph*)}]
\item Absence of label noise.
\label{enum:prompt_diff_1}
\item Not focusing on overfitting.
\label{enum:prompt_diff_2}
\item Three optimization steps $(\vnu(1), \vp(1), \vnu(2))$.
\label{enum:prompt_diff_3}
In particular, the learning within the softmax is a single-step optimization of $\vp$.
\item Minor differences in input distribution.
\label{enum:prompt_diff_4}
\end{enumerate}
For \ref{enum:prompt_diff_1} and \ref{enum:prompt_diff_2}, we must analyze the competing training directions due to the presence of label noise, as explained in \cref{sec:difficulty_label_noise}.
This largely complicates the analysis of softmax probabilities because it is not a simple selection of the relevant token.
For \ref{enum:prompt_diff_3}, the analysis is in the very early stage of training, so it is difficult to fit noisy data containing label noise. 
Therefore, their setting cannot be applied in our study.
Additionally, the optimization within the softmax is a single step. 
Only at the first step, it suffices to analyze the gradient descent concerning the average of the tokens because of the weight initialization, and the complex softmax term does not need to be considered in the gradient calculation.
Optimizing token selection beyond a single step introduces the softmax probabilities into the gradient calculation, resulting in a complex dependency on the current parameters. 
To address this, we needed to handle the relationships among attention probabilities across all training steps using an inductive approach.

\def\arraystretch{1.5}
\begin{table}[p]
\caption{Notations used in this work.}
\label{tab:notations}
\begin{center}
\begin{tabular}{p{1.5in}p{4.5in}}
\toprule
$\mX$ & Sequence of input tokens, i.e., $\mX = (\vx_1, \ldots, \vx_T)^\top$. \\
$Y$ & Training label. \\
$Y^*$ & True label. \\
$n$ & Number of training data. \\
$T$ & Length of input tokens. \\
$d$ & Dimension of token embeddings and model weights. \\
$\vmu_{+1}, \vmu_{-1}$ & Signal vectors for class $1$ and $-1$, respectively. \\
$\vepsilon_t^{(i)}$ & Noise component in $\vx_t^{(i)}$ for $i \in [n]$ and $t \in [T]$. \\
$\sigma_\epsilon^2$ & Variance of noise vector, i.e., $\vepsilon \sim N(\bm{0}, \sigma_\epsilon^2 \mI)$. \\
$\sigma_p^2, \sigma_w^2$ & Variances for weight initialization, i.e., $\vp(0)_i \sim N(0, \sigma_p^2)$, $\mW(0)_{i,j} \sim N(0, \sigma_w^2)$ for $i, j \in [d]$. \\
$\snr$ & Signal-to-noise ratio given by $\snr \coloneqq \|\vmu\|_2 / (\sigma_\epsilon \sqrt{d})$. \\
$\displaystyle \vs^{(i)}(\tau) $ & Probability vector for $\mX^{(i)}$ at $\tau$-th step, defined as $\mathbb{S}(\mX^{(i)} \mW(\tau)^\top \vp(\tau))$. \\
$\displaystyle \gamma_t^{(i)} $ & Token score, defined as $\vnu^\top \vx_t^{(i)}$. \\
$\gR$ & Set of relevant tokens; $\gR = \{1\}$ and $\vx_1 = \vmu_{Y^*} + \vepsilon_1$. \\
$\gW$ & Set of weakly relevant tokens; $\vx_u = \rho \vmu_{w_u} + \vepsilon_u, u \in \gW, w_u \in \{\pm 1 \}$. \\
$\gW_{+1}, \gW_{-1}$ & Set of weakly relevant tokens with specific label; $\{u \in \gW \mid w_u = +1\}$ and $\{u \in \gW \mid w_u = -1\}$. 
We assume that the confusing token consists of $\gW_{-Y^{*}} = \{2\}$. \\
$\gI$ & Set of irrelevant tokens; $\vx_v = \vepsilon_v, v \in \gI$. \\
$\gC$ & Set of clean data; $\{ i \in [n] \mid Y^{(i)} = Y^{*(i)} \}$. \\
$\gC_+, \gC_-$ & Set of clean data with label $1$ and $-1$, respectively. \\
$\gN$ & Set of noisy data; $\{ i \in [n] \mid Y^{(i)} \neq Y^{*(i)} \}$. \\
$\gN_+, \gN_-$ & Set of noisy data with training label $1$ and $-1$, respectively. \\
$\displaystyle \alpha$ & Learning rate. \\
$\displaystyle \eta$ & Label noise ratio. \\
$\lambda_{+1}(\tau), \lambda_{-1}(\tau)$ & Attention scores for class signal in \cref{def:attention_signal_and_noise}; $\lambda_{+1} \coloneqq \langle \mW(\tau) \vmu_{+1}, \vp(\tau) \rangle$ and $\lambda_{-1}(\tau) \coloneqq \langle \mW(\tau) \vmu_{-1},  \vp(\tau) \rangle$. \\
$\rho_{i,t}(\tau)$ & Attention scores for noise vectors in \cref{def:attention_signal_and_noise}; $\rho_{i,t}(\tau) \coloneqq \langle \mW(\tau) \vepsilon^{(i)}_t, \vp(\tau) \rangle$. \\
$I_{i,+}(\tau), I_{i,-}(\tau), I_{i,j,u}(\tau)$, \newline\rule{0pt}{1.0em}
$I_{i,+}^W(\tau), I_{i,-}^W(\tau), I_{i,j,u}^W(\tau)$, \newline\rule{0pt}{1.0em} 
$I_i^p(\tau)$  & Weighted inner-product terms defined in \cref{def:interaction_term}. \\
$\Lambda_{i,t}(\tau), \Gamma_{i,u}(\tau)$ & Attention gaps defined in \cref{def:attention_gap}; 
$\Lambda_{i,t}(\tau) \coloneqq \left( \vx_1^{(i)} - \vx_t^{(i)} \right)^\top \mW(\tau)^\top \vp(\tau)$, $\Gamma_{i,u}(\tau) \coloneqq \left( \vx_2^{(i)} - \vx_u^{(i)} \right)^\top \mW(\tau)^\top \vp(\tau)$. \\
\bottomrule
\end{tabular}
\end{center}
\end{table}
\clearpage

\section{Preliminaries}
\label{sec:preliminaries}

\subsection{Notation}
\label{sec:notation}
We first list the notations used in this work in \cref{tab:notations}.

Furthermore, the basic computations are presented here for convenience.
The gradient $\nabla_\mW \widehat{\Ls}(\mW, \vp)$ and $\nabla_\vp \widehat{\Ls}(\mW, \vp)$ used in the training can be explicitly computed as follows.
Since the derivative of the softmax function is given by
\begin{align*}
\nabla_\vv \mathbb{S}(\vv) = \diag (\mathbb{S}(\vv)) - \mathbb{S}(\vv) \mathbb{S}(\vv)^\top,
\end{align*}
where $\diag(\vv) \in \R^{T\times T}$ denotes the diagonal matrix whose $(i,i)$-th entry equals to $v_i$.
Using the denominator layout, we have
\begin{align}
\nabla_{\mW^\top}\widehat{\Ls}(\mW, \vp) 
&= \frac{1}{n} \sum\limits_{i=1}^n \ell_i^\prime \cdot Y^{(i)} \cdot \nabla_{\mW^\top} f(\mX^{(i)}) \\
&= \frac{1}{n} \sum\limits_{i=1}^n \ell_i^\prime \cdot Y^{(i)} \cdot \mX^{(i)\top} \left( \diag(\mathbb{S}(\mX^{(i)}\mW^\top \vp)) - \mathbb{S}(\mX^{(i)}\mW^\top \vp)\mathbb{S}(\mX^{(i)}\mW^\top \vp)^\top  \right) \mX^{(i)} \vnu \vp^\top \\
\label{eq:gradient_w}
&= \frac{1}{n} \sum\limits_{i=1}^n \ell_i^\prime \cdot Y^{(i)} \cdot \left( \sum_{t=1}^T s^{(i)}_t \left( \gamma_t^{(i)} - \sum_{u=1}^T s^{(i)}_u \gamma_u^{(i)} \right) \vx_t^{(i)} \vp^\top \right),
\end{align}
where $\ell_i^\prime$ is abbreviation for $\ell^\prime (Y^{(i)} \cdot \vnu^\top \mX^{(i)\top} \mathbb{S}(\mX^{(i)}\mW^\top\vp))$.
From a similar calculation, we have
\begin{align}
\nabla_{\vp}\widehat{\Ls}(\mW, \vp) 
&= \frac{1}{n} \sum\limits_{i=1}^n \ell_i^\prime \cdot Y^{(i)} \cdot \nabla_\vp f(\mX^{(i)}) \\
&= \frac{1}{n} \sum\limits_{i=1}^n \ell_i^\prime \cdot Y^{(i)} \cdot \mW\mX^{(i)\top} \left( \diag(\mathbb{S}(\mX^{(i)}\mW^\top \vp)) - \mathbb{S}(\mX^{(i)}\mW^\top \vp)\mathbb{S}(\mX^{(i)}\mW^\top \vp)^\top  \right) \mX^{(i)} \vnu \\
\label{eq:gradient_p}
&= \frac{1}{n} \sum\limits_{i=1}^n \ell_i^\prime \cdot Y^{(i)} \cdot \left( \sum_{t=1}^T s^{(i)}_t \left( \gamma_t^{(i)} - \sum_{u=1}^T s^{(i)}_u \gamma_u^{(i)} \right) \mW \vx_t^{(i)} \right).
\end{align}


\subsection{Proof Overview}
\label{sec:proof_overview}

This section provides a roadmap for the proofs in the appendix, aiming to improve readability.
For the precise statements and proofs of individual lemmas and propositions, please refer to the corresponding sections.

We begin in \cref{sec:high_probability_events} by establishing high-probability events.
Under parameter assumptions in \cref{sec:assumption}, we derive bounds on norms, inner products, and class balance among samples.
\cref{lem:good_run} states the collections of these events, and their proofs are provided via standard concentration inequalities, divided across \cref{lem:norm_expectation_lb,lem:lipschitz_concentration,lem:norm_concentration,lem:gaussian_tail_bound,lem:inner_product_concentration,lem:inner_product_concentration_signal_noise,lem:hoeffding_inequality,lem:num_sample}.
As preparation for the main proof, \cref{lem:token_score} presents an evaluation of the token score $\gamma_t$.
The following \cref{sec:proof_of_convergence} builds upon the high-probability events established in this section.

\cref{sec:proof_of_convergence} presents the proof of the main theorem.
In \cref{sec:preliminary_lemmas_for_convergence}, we first introduce technical notations (\cref{def:attention_signal_and_noise,def:interaction_term,def:attention_gap}), simple calculations regarding training with gradient descent (\cref{lem:update_signal_and_noise_attention,lem:update_of_p,lem:update_of_w}), and several initial properties at the start of training (\cref{lem:softmax_probability_initialization,lem:attention_gap_initialization}).
We also provide a result regarding the balance of loss gradients across examples in \cref{lem:ratio_loss_derivative}.
\cref{sec:proof_not_overfitting,sec:proof_benign_overfitting} correspond to the not-overfitting and benign overfitting cases, respectively.
In the not-overfitting case, \cref{lem:induction_not_overfitting} is the central result.
The proof proceeds inductively, incorporating desirable properties such as norm and inner product evaluations and balance in softmax probabilities.
The most important proposition $C(\tau)$ characterizes the dynamics of the attention gap $\Lambda(\tau)$ introduced in \cref{def:attention_gap_main_text} (main text) and \cref{def:attention_gap} (appendix), governed by the function $g(x) = 2x + 2\sinh(x - \log T)$.
This inductive argument, which holds for all time steps $0 \leq \tau \leq T_1$, leads to the proof of the main theorem in \cref{sec:proof_main_theorem_not_overfitting}, establishing both the not-overfitting and generalization guarantees.
The benign overfitting case follows a similar structure, but as described in \cref{sec:key_techniques}, the analysis is divided into two stages.
The inductive argument is split into \cref{lem:induction_benign_overfitting_stage_1,lem:induction_benign_overfitting_stage_2}, and \cref{sec:proof_main_theorem_benign_overfitting} concludes with the proof of the main theorem of this case, including both the overfitting and good generalization.

To help illustrate the development of the attention gap driven by the function $g$, we provide a visualization in \cref{fig:evolution_g}.
The function $g(x)$ behaves approximately linearly near $x = \log T$ but grows exponentially as $x$ moves away from this region.
The inductive lemmas in \cref{sec:proof_of_convergence} show that $g(\Lambda(\tau))$ and $g(\Gamma(\tau))$ evolve linearly, which implies that the growth of $\Lambda(\tau)$ and $\Gamma(\tau)$ themselves gradually slows down.

Although the main theorem is fully proved in \cref{sec:proof_of_convergence}, \cref{sec:technical_calculations} supplements the proof by presenting several auxiliary calculations.
\cref{sec:softmax_probability} provides results linking the weight parameters to the softmax probabilities.
\cref{sec:attention_updates_not_overfitting_case,sec:attention_updates_benign_overfitting_case} correspond to the not-overfitting and benign overfitting cases, respectively.
These sections give one-step gradient calculations under the favorable weight conditions, showing how signal learning and noise memorization progress and supporting the induction arguments in \cref{sec:proof_of_convergence}.

\begin{figure*}[t]
\centering
\includegraphics[width=\textwidth]{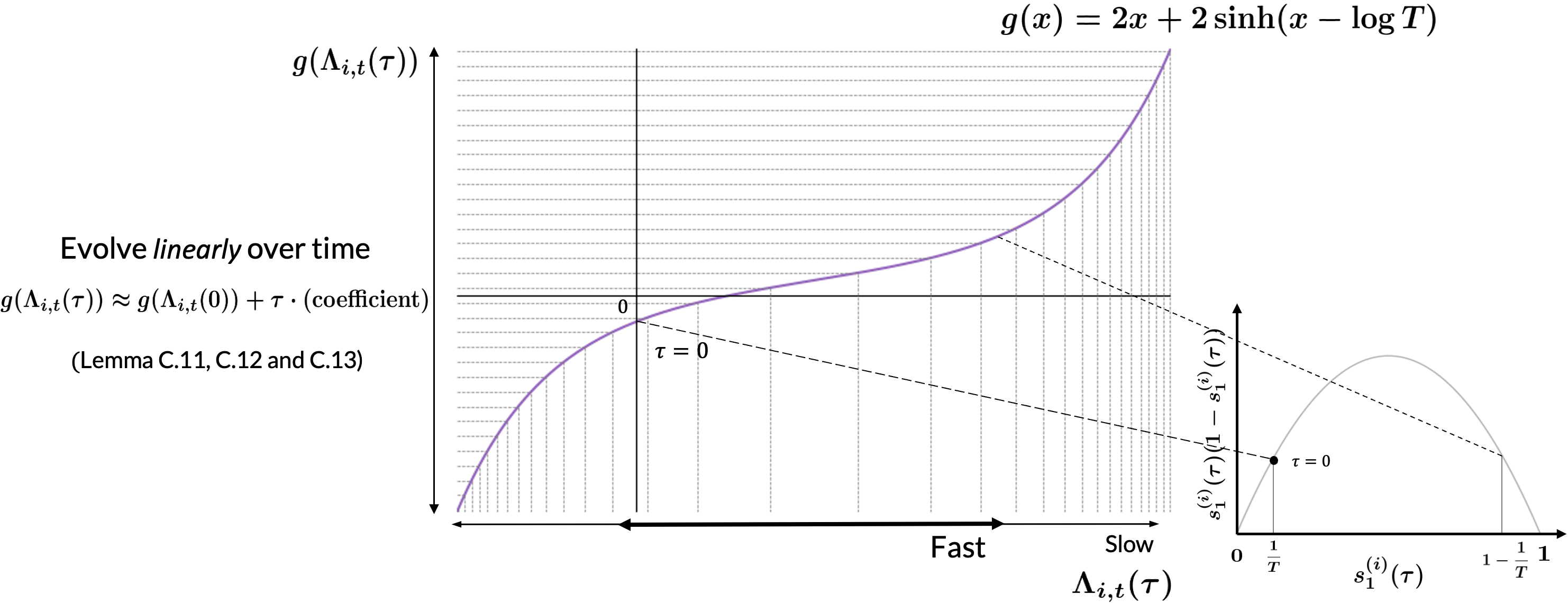}
\caption{
Illustration of the evolution of the attention gap defined in \cref{def:attention_gap}, based on the function $g$.
The right figure is similar to \cref{fig:illustration_relevant_token_prob} in the main text and illustrates that as $s_1(\tau)$ approaches $0$ or $1$, the term $s_1(\tau)(1 - s_1(\tau))$, which determines the magnitude of the one-step update, becomes smaller—corresponding to slower training of the weights.
For the relationship between $s_1^{(i)}(\tau)$ and $\Lambda_{i,t}(\tau)$, please see \cref{lem:bounds_softmax}, which shows that $s_1(\tau)(1 - s_1(\tau))$ decreases as $\Lambda(\tau)$ moves away from $\log T$.
}
\label{fig:evolution_g}
\end{figure*}

\subsection{High-probability Events}
\label{sec:high_probability_events}

We first show that the following events occur simultaneously with high probability under the assumptions in \cref{sec:assumption}.
\begin{lemma}[High-probability events]
\label{lem:good_run}
Suppose that the parameter assumptions in \cref{sec:assumption} hold.
There exists some constant $c_1, c_2 > 0$ such that the following hold simultaneously with probability at least $1 - \delta$ over the realization of training data $S$ and model parameters $\mW(0), \vp(0)$:
\begin{enumerate}[label=(\roman*), noitemsep]
\item (Norm concentration) For all $i \in [n], t \in [T]$, we have
\begin{gather}
\label{eq:good_run_norm_concentration}
\left( 1 - o(1) \right) \sigma_\epsilon \sqrt{d}
\leq \|\vepsilon_t^{(i)}\|_2
\leq \left( 1 + o(1) \right) \sigma_\epsilon \sqrt{d}, \\
\left( 1 - o(1) \right) \sigma_w \|\vmu\|_2 \sqrt{d}
\leq \| \mW(0)\vmu_{+1} \|_2
\leq \left( 1 + o(1) \right) \sigma_w \|\vmu\|_2 \sqrt{d}, \\
\left( 1 - o(1) \right) \sigma_w \|\vmu\|_2 \sqrt{d}
\leq \| \mW(0)\vmu_{-1} \|_2
\leq \left( 1 + o(1) \right) \sigma_w \|\vmu\|_2 \sqrt{d}, \\
\left( 1 - o(1) \right) \sigma_w \sigma_\epsilon d
\leq \| \mW(0)\vepsilon_t^{(i)} \|_2
\leq \left( 1 + o(1) \right) \sigma_w \sigma_\epsilon d, \\
\left( 1 - o(1) \right) \sigma_p \sqrt{d}
\leq \| \vp(0) \|_2
\leq \left( 1 + o(1) \right) \sigma_p \sqrt{d}.
\end{gather}
\label{enum:good_run_norm_concentration}

\item (Inner-product concentration)
For any $i, j \in [n], t, u \in [T]$ such that $(i,t) \neq (j,u)$, we have
\begin{gather}
\label{eq:good_run_inner_product}
|\langle \vepsilon_{t}^{(i)}, \vepsilon_{u}^{(j)} \rangle| < c_1 \sigma_\epsilon^2 \sqrt{d} \log(Tn/\delta), \\
|\langle \mW(0) \vmu_{+1}, \mW(0) \vmu_{-1} \rangle| < c_1 \sigma_w^2 \|\vmu\|_2^2 \sqrt{d} \log(Tn/\delta), \\
|\langle \mW(0) \vmu_{+1}, \mW(0) \vepsilon_t^{(i)} \rangle| < c_1 (1 + o(1)) \sigma_w^2 \sigma_\epsilon \|\vmu\|_2 d \log(Tn/\delta), \\
|\langle \mW(0) \vmu_{-1}, \mW(0) \vepsilon_t^{(i)} \rangle| < c_1 (1 + o(1)) \sigma_w^2 \sigma_\epsilon \|\vmu\|_2 d \log(Tn/\delta), \\
|\langle \mW(0) \vepsilon_{t}^{(i)}, \mW(0) \vepsilon_{u}^{(j)} \rangle| < c_1 (1 + o(1)) \sigma_w^2 \sigma_\epsilon^2 d^{3/2} \log(Tn/\delta), \\
\label{eq:wp_mu1_init}
|\langle \mW(0) \vmu_{+1}, \vp(0) \rangle| < c_1 \sigma_w \sigma_p \|\vmu\|_2 \sqrt{d} \log(Tn/\delta), \\
\label{eq:wp_mu-1_init}
|\langle \mW(0) \vmu_{-1}, \vp(0) \rangle| < c_1 \sigma_w \sigma_p \|\vmu\|_2 \sqrt{d} \log(Tn/\delta), \\
\label{eq:wp_epsilon_init}
|\langle \mW(0) \vepsilon_{t}^{(i)}, \vp(0) \rangle| < c_1 (1 + o(1)) \sigma_w \sigma_p \sigma_\epsilon d \log(Tn/\delta), \\
|\langle \vmu_{+1}, \vepsilon_t^{(i)} \rangle| < c_2 \sigma_\epsilon \|\vmu\|_2 \sqrt{\log(Tn/\delta)}, \\
|\langle \vmu_{-1}, \vepsilon_t^{(i)} \rangle| < c_2 \sigma_\epsilon \|\vmu\|_2 \sqrt{\log(Tn/\delta)}, \\
|\langle \vnu, \vepsilon_t^{(i)} \rangle| < c_2 \sigma_\epsilon \|\vnu\|_2 \sqrt{\log(Tn/\delta)}.
\end{gather}
\label{enum:good_run_noise_inner_product}

\item
Regarding the clean and noisy samples, we have
\begin{gather}
\label{eq:good_run_clean}
\frac{2-3\eta}{4} n \leq |\gC_+|, |\gC_-| \leq \frac{2-\eta}{4} n, \\
\label{eq:good_run_noisy}
\frac{\eta}{4}n \leq |\gN_+|, |\gN_-| \leq \frac{3\eta}{4} n.
\end{gather}
\label{enum:good_run_label_noise}
\end{enumerate}
\end{lemma}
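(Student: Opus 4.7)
The plan is to establish each of the three items by a standard Gaussian-concentration argument and then glue them together with a union bound. All objects under consideration are either i.i.d.\ Gaussians (the coordinates of $\vepsilon_t^{(i)}$, $\mW(0)$, and $\vp(0)$) or Bernoulli indicators (for $i\in\gC_\pm$, $\gN_\pm$), so every bound reduces to one of: a chi-square tail, a Gaussian tail, a Hanson--Wright bilinear form bound, or a Chernoff/Hoeffding bound. The logarithmic factor $\log(Tn/\delta)$ appearing in every line will come from setting the per-event failure probability to $\delta/\mathrm{poly}(Tn)$ so that the union bound over the $O(T^2n^2)$ events yields total failure $\leq \delta$.

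For item (i), I would apply Laurent--Massart to $\|\vg\|_2^2/\sigma^2 \sim \chi_d^2$ to obtain $\|\vg\|_2 = (1\pm O(\sqrt{\log(1/\delta')/d}))\sigma\sqrt{d}$ with probability $1-\delta'$. Assumption~\ref{assump_d} forces $\log(Tn/\delta)/d = o(1)$, collapsing the multiplicative error into the stated $1\pm o(1)$ factor. The bound on $\|\mW(0)\vmu_{\pm 1}\|_2$ uses $\mW(0)\vmu_{\pm 1} \sim N(\vzero, \sigma_w^2 \|\vmu\|_2^2 \mI_d)$. For $\|\mW(0)\vepsilon_t^{(i)}\|_2$, I would first condition on $\vepsilon_t^{(i)}$, giving $\mW(0)\vepsilon_t^{(i)}\mid\vepsilon_t^{(i)}\sim N(\vzero,\sigma_w^2\|\vepsilon_t^{(i)}\|_2^2\mI_d)$, and then combine with the already-established concentration of $\|\vepsilon_t^{(i)}\|_2$.

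For item (ii), I would split the inner products into two flavors. In the \emph{cross-independent} case (e.g., $\langle \mW(0)\vmu_{\pm 1}, \vp(0)\rangle$, $\langle \vnu, \vepsilon_t^{(i)}\rangle$, $\langle \vepsilon_t^{(i)}, \vepsilon_u^{(j)}\rangle$ for $(i,t)\neq(j,u)$, and $\langle \mW(0)\vmu_{\pm 1}, \mW(0)\vepsilon_t^{(i)}\rangle$ which is cross-independent after conditioning on the rows of $\mW(0)$ dotted with $\vmu_{\pm 1}$), the two vectors being dotted are conditionally independent Gaussians, so conditioning on one side reduces the expression to a one-dimensional Gaussian of variance $\sigma_1^2\|\cdot\|_2^2$, and the bound follows from $|Z|\lesssim\sqrt{\log(1/\delta')}$ combined with the norm estimates of item (i). In the \emph{shared-randomness} case ($\langle \mW(0)\vmu_{+1},\mW(0)\vmu_{-1}\rangle$ and $\langle \mW(0)\vepsilon_t^{(i)},\mW(0)\vepsilon_u^{(j)}\rangle$ with $(i,t)\neq(j,u)$), the expression is a bilinear form $\va^\top \mW(0)^\top \mW(0)\vb$ with $\va\perp\vb$ (by orthogonality of $\vmu_{+1},\vmu_{-1}$, or by independence of distinct noise vectors after conditioning on them), and Hanson--Wright gives sub-exponential concentration of scale $\sigma_w^2\|\va\|_2\|\vb\|_2\sqrt{d}$, matching the stated bound after plugging in the $\|\cdot\|_2$ estimates.

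For item (iii), the indicators $\mathbf{1}\{Y^{(i)}=+1,\ i\in\gC\}$ and the analogues for $\gC_-,\gN_\pm$ are independent Bernoulli with means $(1-\eta)/2$ and $\eta/2$; a Chernoff bound places $|\gC_\pm|,|\gN_\pm|$ within a multiplicative $(1\pm 1/4)$ of their expectations with failure probability $\exp(-\Omega(n))$, which assumption~\ref{assump_n} drives below $\delta$. Finally, the union bound over $O(T^2n^2)$ Gaussian events and $O(1)$ Bernoulli events determines $\delta'=\delta/\mathrm{poly}(Tn)$, whose logarithm is exactly the $\log(Tn/\delta)$ factor in every displayed bound. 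The main obstacle, such as it is, is purely bookkeeping: fixing a single pair $(c_1,c_2)$ that makes every one of the twenty-odd bounds hold simultaneously and verifying that assumptions~\ref{assump_d} and~\ref{assump_variance_wp} absorb each $o(1)$ error term at once. No deep concentration machinery beyond Laurent--Massart, the Gaussian tail bound, Hanson--Wright, and Chernoff is required.
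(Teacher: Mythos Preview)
Your proposal is correct and follows the same structure as the paper: establish norm, inner-product, and sample-count concentration separately and then combine by a union bound over the $O(T^2n^2)$ events. The paper uses slightly different off-the-shelf tools at each step---Lipschitz concentration of $\|\cdot\|_2$ plus a Poincar\'e lower bound on the mean instead of Laurent--Massart for norms, a conditioning-plus-Gaussian-tail argument (with a deliberately loose norm bound, which is why the stated inner-product bounds carry $\log$ rather than $\sqrt{\log}$) instead of Hanson--Wright for the bilinear forms, and Hoeffding with additive error $c'=\eta/4$ instead of multiplicative Chernoff for the counts---but the argument is otherwise identical.
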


\begin{definition}[Good run]
\label{def:good_run}
We denote by ``good run'' the trial that the events from~\ref{enum:good_run_norm_concentration} to~\ref{enum:good_run_label_noise} occur.
\end{definition}

In the proof of the main theorem, we will proceed by conditioning on these events.
\cref{lem:good_run} implies that a good run occurs with the probability at least $1 - \delta$ over the realization of training data $S$ and the weights initialization.

In the rest of this section, we will prove each high-probability event in \cref{lem:good_run}.
Specifically, we can show it by combining \cref{lem:norm_concentration,lem:inner_product_concentration,lem:inner_product_concentration_signal_noise,lem:num_sample}, using union bound argument.
In the following proofs, suppose that the parameter assumptions in \cref{sec:assumption} hold.
First, we show the norm concentration of the Gaussian noise vectors.
The next lemma gives the lower bound for the expectation of the $L_2$ norm.

\begin{lemma}
\label{lem:norm_expectation_lb}
For a Gaussian vector $\vx \sim N(0, \mSigma)$, we have
\begin{align*}
\sqrt{\Tr(\mSigma) - 1} \leq \E \left[ \| \vx\|_2 \right].
\end{align*}
\end{lemma}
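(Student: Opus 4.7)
The plan is to pass through the elementary identity $(\E[\|\vx\|_2])^2 = \E[\|\vx\|_2^2] - \Var(\|\vx\|_2)$. The second moment will be computed directly, and the variance will be controlled via a Gaussian concentration argument.

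First I would note $\E[\|\vx\|_2^2] = \sum_{i=1}^{d} \E[x_i^2] = \Tr(\mSigma)$ from the definition of the covariance, which reduces the goal to showing $\Var(\|\vx\|_2) \leq 1$. For the variance, I would write $\vx = \mSigma^{1/2}\vg$ with $\vg \sim N(\vzero, \mI_d)$ and apply the Gaussian Poincar\'e inequality to the map $h(\vg) \coloneqq \|\mSigma^{1/2}\vg\|_2$. A short computation gives $\nabla h(\vg) = \mSigma \vg / \|\mSigma^{1/2}\vg\|_2$, so $\|\nabla h(\vg)\|_2^2 = \vg^{\top}\mSigma^{2}\vg / (\vg^{\top}\mSigma \vg) \leq \|\mSigma\|_{\mathrm{op}}$. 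Poincar\'e then yields $\Var(\|\vx\|_2) \leq \|\mSigma\|_{\mathrm{op}}$, which is at most $1$ in the regime of interest (e.g., when the lemma is invoked on noise vectors with covariance $\sigma_\epsilon^{2}\mI_d$ scaled so that the largest eigenvalue is bounded by one). Substituting back and taking square roots, valid in the nontrivial regime $\Tr(\mSigma) \geq 1$, yields the claim.

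The main point requiring care is precisely the operator-norm factor in the variance bound: Poincar\'e delivers $\|\mSigma\|_{\mathrm{op}}$ rather than a universal constant, so a fully general version of the lemma should read $\sqrt{\Tr(\mSigma) - \|\mSigma\|_{\mathrm{op}}}$. In the downstream applications inside \cref{lem:good_run} the relevant covariances are scaled identities, and the factor collapses cleanly into the stated constant; after rescaling, one immediately recovers the desired norm concentration $\|\vepsilon\|_2 \geq (1 - o(1))\sigma_\epsilon \sqrt{d}$. All other pieces---the second-moment identity, the variance decomposition, and the final square root---are routine and contribute no real difficulty.
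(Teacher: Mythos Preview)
Your approach---variance decomposition plus Gaussian Poincar\'e---is exactly the paper's argument. The paper applies Poincar\'e directly to $f(\vx)=\|\vx\|_2$ with $\nabla f = \vx/\|\vx\|_2$ and simply writes $\E[\|\nabla f\|_2^2]=1$, glossing over the covariance dependence; your reparametrization through $\vg\sim N(\vzero,\mI_d)$ and the resulting $\|\mSigma\|_{\mathrm{op}}$ factor is the more careful version of the same step, and your remark that the general statement should really read $\sqrt{\Tr(\mSigma)-\|\mSigma\|_{\mathrm{op}}}$ is correct.
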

\begin{proof}[Proof of \cref{lem:norm_expectation_lb}]
We use the Gaussian Poincar\'e Inequality (\cite{boucheron2003concentration}, Theorem 3.20):
\begin{align}
\Var \left( f(\vx) \right) \leq \E \left[ \| \nabla f(\vx) \|_2^2 \right],
\end{align}
where $f: \R^d \rightarrow \R$ is any continuously differentiable function.
By taking $f$ as $f(\vx) = \|\vx\|_2$, since we have $\E \left[ \|\nabla f(\vx) \|_2^2 \right] = \E \left[ \| \vx / \|\vx\|_2 \|_2^2 \right] = 1$,
\begin{align}
1 \geq \Var(f(\vx)) = \E \left[ \|\vx\|^2 \right] - (\E \left[ \|\vx\| \right] )^2 = \Tr(\mSigma) - (\E \left[ \|\vx\| \right] )^2.
\end{align}
Rearranging the terms, we get
\begin{align}
\sqrt{\Tr(\mSigma) - 1} \leq \E \left[ \|\vx\|_2 \right],
\end{align}
which concludes the proof.
\end{proof}

The following lemma is about the concentration of Lipschitz functions and is used to prove the norm concentration.
\begin{lemma}[Rephrased from (\cite{Wainwright_2019}, Theorem 3.16)]
\label{lem:lipschitz_concentration}
For any $L$-Lipschitz function $f: \R^d \rightarrow \R$, we have
\begin{align}
\Pr_{\vepsilon \sim N(\bm{0}, \mSigma)} \left\{ \left| f(\vepsilon) - \E \left[ f(\vepsilon) \right] \right| \geq w \right\} \leq 2 \exp\left(- \frac{w^2}{4 \sigma_{max}(\mSigma) L^2} \right).
\end{align}
\end{lemma}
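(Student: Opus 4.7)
The plan is to reduce the general covariance case to the classical Gaussian concentration inequality for Lipschitz functions of a standard Gaussian vector via an affine change of variables. First, I would introduce $\vz \sim N(\bm{0}, \mI_d)$ and write $\vepsilon = \mSigma^{1/2}\vz$, then define the composed map $g(\vz) \coloneqq f(\mSigma^{1/2}\vz)$. Using the $L$-Lipschitz property of $f$ together with the operator-norm bound $\|\mSigma^{1/2}\vv\|_2 \leq \sqrt{\sigma_{\max}(\mSigma)}\,\|\vv\|_2$, I would verify that for all $\vz_1,\vz_2\in\R^d$,
$$
|g(\vz_1)-g(\vz_2)| \leq L\,\|\mSigma^{1/2}(\vz_1-\vz_2)\|_2 \leq L\sqrt{\sigma_{\max}(\mSigma)}\,\|\vz_1-\vz_2\|_2,
$$
so $g$ is $M$-Lipschitz with $M \coloneqq L\sqrt{\sigma_{\max}(\mSigma)}$.

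Next, I would invoke the classical isotropic Gaussian concentration inequality, namely that for any $M$-Lipschitz $g:\R^d\to\R$ and $\vz \sim N(\bm{0},\mI_d)$,
$$
\Pr\bigl\{|g(\vz) - \E[g(\vz)]| \geq w\bigr\} \leq 2\exp\!\bigl(-w^2/(2M^2)\bigr).
$$
Because the change of variables preserves expectations, $\E[g(\vz)] = \E[f(\vepsilon)]$. Substituting $M = L\sqrt{\sigma_{\max}(\mSigma)}$ yields a tail bound with $2 L^2\sigma_{\max}(\mSigma)$ in the denominator of the exponent, which is strictly stronger than the claimed bound with $4 L^2\sigma_{\max}(\mSigma)$; the statement of the lemma then follows immediately by weakening the constant. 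The slightly looser factor $4$ in the denominator is convenient later for downstream applications in \cref{lem:norm_concentration} where exact constants are immaterial.

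Since the lemma is explicitly labeled as a rephrasing of Wainwright (2019, Theorem 3.16), I would cite that result as a black box for the isotropic concentration step rather than re-derive it. For completeness, I would remark that the classical derivation proceeds via Gross's Gaussian logarithmic Sobolev inequality combined with Herbst's argument, which controls the moment generating function of $g(\vz)-\E[g(\vz)]$; a Chernoff bound then produces the sub-Gaussian tail. Alternative derivations via the Borell--Tsirelson--Ibragimov--Sudakov isoperimetric inequality or Maurey's entropy method give the same conclusion.

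The only genuine obstacle in a fully self-contained proof would be the isotropic Gaussian concentration bound itself, whose derivation (log-Sobolev plus Herbst, or Gaussian isoperimetry) is nontrivial to execute from scratch. In the context of this paper that step is imported as a standard tool, so the substantive content of the lemma reduces to the affine change of variables together with the identity $\|\mSigma^{1/2}\|_{op}^2 = \sigma_{\max}(\mSigma)$, both of which are elementary.
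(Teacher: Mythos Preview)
Your reduction via the affine change of variables $\vepsilon = \mSigma^{1/2}\vz$ is correct and indeed recovers the statement (with the sharper constant $2$ in place of $4$). The paper, however, does not argue this way: it gives no proof at all beyond citing Wainwright and appending a remark that $N(\bm{0},\mSigma)$ is strongly log-concave with parameter $\sigma_{\min}(\mSigma^{-1}) = 1/\sigma_{\max}(\mSigma)$, so that Wainwright's Theorem~3.16 for general strongly log-concave measures applies directly with that parameter. In other words, the paper invokes the full log-concave version of the theorem as a black box and only needs to compute the strong convexity parameter of $-\log p(\vx) = \tfrac{1}{2}\vx^\top \mSigma^{-1}\vx$, whereas you reduce to the isotropic Gaussian case and invoke only that special instance. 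Both are standard; your route is slightly more self-contained (and yields the better constant), while the paper's remark explains why the result extends beyond Gaussians to any strongly log-concave law.
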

Note that the coefficient $2$ could be removed in the case of one-sided inequality.

\begin{remark}
Generally, this holds for strongly log-concave distributions, i.e., distributions with a density $p(x) = \exp(- \psi(x))$, where $\psi: \R^d \rightarrow \R$ is a strongly convex function.
Here, we used the fact that the Gaussian distribution $N(\bm{0}, \mSigma)$ is a strongly log-concave distribution with parameter $\sigma_{min}(\mSigma^{-1}) = 1 / \sigma_{max}(\mSigma)$.
\end{remark}

We are now ready to prove the norm concentration as follows.
\begin{lemma}[Norm concentration]
\label{lem:norm_concentration}
With probability at least $1 - \delta/4$,
\begin{gather*}
\left( 1 - o(1) \right) \sigma_\epsilon \sqrt{d}
\leq \|\vepsilon_t^{(i)}\|_2
\leq \left( 1 + o(1) \right) \sigma_\epsilon \sqrt{d}, \\
\left( 1 - o(1) \right) \sigma_w \|\vmu\|_2 \sqrt{d}
\leq \| \mW(0)\vmu_{+1} \|_2
\leq \left( 1 + o(1) \right) \sigma_w \|\vmu\|_2 \sqrt{d}, \\
\left( 1 - o(1) \right) \sigma_w \|\vmu\|_2 \sqrt{d}
\leq \| \mW(0)\vmu_{-1} \|_2
\leq \left( 1 + o(1) \right) \sigma_w \|\vmu\|_2 \sqrt{d}, \\
\left( 1 - o(1) \right) \sigma_w \|\vepsilon_t^{(i)} \|_2 \sqrt{d}
\leq \| \mW(0)\vepsilon_t^{(i)} \|_2
\leq \left( 1 + o(1) \right) \sigma_w \|\vepsilon_t^{(i)} \|_2 \sqrt{d}, \\
\left( 1 - o(1) \right) \sigma_p \sqrt{d}
\leq \| \vp(0) \|_2
\leq \left( 1 + o(1) \right) \sigma_p \sqrt{d}.
\end{gather*}
for all $i \in [n], t \in [T]$
\end{lemma}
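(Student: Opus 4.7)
The plan is to treat each norm as a $1$-Lipschitz function of a Gaussian vector and apply \cref{lem:lipschitz_concentration} together with the expectation lower bound in \cref{lem:norm_expectation_lb}, then take a union bound over all $O(Tn)$ events.

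First, for $\|\vepsilon_t^{(i)}\|_2$, the vector $\vepsilon_t^{(i)} \sim N(\bm{0}, \sigma_\epsilon^2 \mI_d)$ has $\sigma_{\max}(\sigma_\epsilon^2 \mI_d) = \sigma_\epsilon^2$ and $\Tr(\sigma_\epsilon^2 \mI_d) = \sigma_\epsilon^2 d$. \cref{lem:norm_expectation_lb} gives $\E[\|\vepsilon_t^{(i)}\|_2] \ge \sqrt{\sigma_\epsilon^2 d - 1}$, while $\E[\|\vepsilon_t^{(i)}\|_2] \le \sqrt{\E[\|\vepsilon_t^{(i)}\|_2^2]} = \sigma_\epsilon \sqrt{d}$ by Jensen. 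Applying \cref{lem:lipschitz_concentration} to $f(\vx) = \|\vx\|_2$ (which is $1$-Lipschitz) with deviation $w = \sigma_\epsilon \sqrt{\log(Tn/\delta)} \cdot \omega(1)$ yields the two-sided bound $(1 - o(1))\sigma_\epsilon \sqrt{d} \le \|\vepsilon_t^{(i)}\|_2 \le (1+o(1)) \sigma_\epsilon \sqrt{d}$ with probability at least $1 - \delta / (100 T n)$; here we use Assumption~\ref{assump_d} to ensure $\sqrt{\log(Tn/\delta)} = o(\sqrt{d})$.

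Next, for $\mW(0)\vmu_{+1}$, since $\mW(0)$ has i.i.d.\ $N(0, \sigma_w^2)$ entries and $\vmu_{+1}$ is deterministic, $\mW(0)\vmu_{+1} \sim N(\bm{0}, \sigma_w^2 \|\vmu\|_2^2 \mI_d)$. The same argument as above, now with variance scale $\sigma_w^2\|\vmu\|_2^2$, gives the desired concentration around $\sigma_w \|\vmu\|_2 \sqrt{d}$; the analogous argument works for $\mW(0)\vmu_{-1}$ and for $\vp(0) \sim N(\bm{0}, \sigma_p^2 \mI_d)$.

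For $\mW(0)\vepsilon_t^{(i)}$ the difficulty is that $\mW(0)$ and $\vepsilon_t^{(i)}$ are coupled across the terms we want to bound. I would condition on $\vepsilon_t^{(i)}$: conditionally $\mW(0)\vepsilon_t^{(i)} \sim N(\bm{0}, \sigma_w^2 \|\vepsilon_t^{(i)}\|_2^2 \mI_d)$, so the Lipschitz concentration gives, with high conditional probability, $\|\mW(0)\vepsilon_t^{(i)}\|_2 = (1 \pm o(1)) \sigma_w \|\vepsilon_t^{(i)}\|_2 \sqrt{d}$. Integrating out yields the stated bound unconditionally. Finally, I take a union bound over the at most $Tn + 3$ events; the resulting total failure probability is bounded by $\delta/4$ provided the constants in the deviation parameter are chosen sufficiently large, which is guaranteed by our parameter assumptions.

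The main obstacle is the conditioning step for $\mW(0)\vepsilon_t^{(i)}$: one must verify that the conditional concentration bound, which depends on the random quantity $\|\vepsilon_t^{(i)}\|_2$, can be combined with the unconditional concentration of $\|\vepsilon_t^{(i)}\|_2$ without losing the $(1 \pm o(1))$ factor. This is handled by first intersecting with the (already established) event that $\|\vepsilon_t^{(i)}\|_2 = (1 \pm o(1))\sigma_\epsilon \sqrt{d}$, on which the conditional Gaussian has a well-controlled variance scale.
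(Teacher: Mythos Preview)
Your proof is correct and follows essentially the same approach as the paper: apply the $1$-Lipschitz concentration (\cref{lem:lipschitz_concentration}) to the norm map, sandwich the mean via \cref{lem:norm_expectation_lb} and Jensen, use $d=\omega(\log(Tn/\delta))$ to absorb the deviation into a $(1\pm o(1))$ factor, and handle $\|\mW(0)\vepsilon_t^{(i)}\|_2$ by conditioning on $\vepsilon_t^{(i)}$ before taking a union bound. Two minor remarks: the deviation should be a fixed constant multiple of $\sigma_\epsilon\sqrt{\log(Tn/\delta)}$ rather than ``$\omega(1)$ times'' it (the latter is ill-specified as a concrete choice), and your event count should be $2Tn+3$ rather than $Tn+3$, but neither affects the argument.
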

\begin{proof}[Proof of Lemma~\ref{lem:norm_concentration}]
From the definition of noise distribution, $\vepsilon_t^{(i)} \sim N(\bm{0}, \sigma_\epsilon^2 \mI_d)$ for all $i \in [n], t \in [T]$.
To begin with, we show the norm concentration of the Gaussian vector.
For $\vx, \vy \in \R^d$, since we have
\begin{align}
\|\vx\|_2 - \|\vy\|_2 
= 
\frac{\|\vx\|_2 - \|\vy\|_2}{\|\vx - \vy\|_2} \|\vx - \vy\|_2
\leq
\|\vx - \vy\|_2,
\end{align}
$f(\vx) = \|\vx\|_2$ is $1$-Lipschitz function.
Using \cref{lem:lipschitz_concentration}, we get
\begin{align}
\Pr \left\{ \left| \|\vepsilon_t^{(i)}\|_2 - \E\left[ \|\vepsilon\|_2 \right] \right| > w \right\} 
&\leq 2\exp\left(-\frac{w^2}{4 \sigma_\epsilon^2} \right),
\end{align}
for some $i \in [n], t \in [T]$.
Taking union-bound gives
\begin{align}
\label{eq:norm_concentration}
\forall i \in [n], \forall t \in [T], \;
\Pr \left\{ \left| \|\vepsilon_t^{(i)}\|_2 - \E\left[ \|\vepsilon\|_2 \right] \right| > w \right\} 
&\leq 2 Tn \exp\left(-\frac{w^2}{4 \sigma_\epsilon^2} \right).
\end{align}

\cref{lem:norm_expectation_lb} and Jensen inequality lead the following bound on the expectation of the Gaussian norm:
\begin{align}
\sqrt{\sigma_\epsilon^2 d - 1} \leq \E \left[ \|\vepsilon\|_2 \right] \leq \sigma_\epsilon \sqrt{d}.
\end{align}

Using this and \cref{eq:norm_concentration}, we have with the probability at least $1 - \delta/20$,
\begin{align}
\label{eq:norm_bound}
\sqrt{\sigma_\epsilon^2 d - 1} - 2 \sqrt{\sigma_\epsilon^2 \log \left( \frac{40Tn}{\delta} \right)}
\leq \|\vepsilon_t^{(i)} \|_2
\leq
\sigma_\epsilon \sqrt{d} + 2 \sqrt{\sigma_\epsilon^2 \log \left( \frac{40Tn}{\delta} \right)},
\end{align}
for all $i \in [n], t \in [T]$.
By using $d = \omega(\log (Tn/\delta))$ in the parameter assumptions, the second term on both sides becomes $o(\sigma_\epsilon \sqrt{d})$.
Combining this with $(\sigma_\epsilon \sqrt{d})\left( 1 - 1/(\sigma_\epsilon^2 d) \right) < \sqrt{\sigma_\epsilon^2 d - 1}$, we have the desired result.
Regarding other inequalities, by Gaussian initialization of parameters, $\mW(0) \vmu_{+1}$ is normally distributed with mean $0$ and covariance $\sigma_w^2 \|\vmu\|_2^2 \mI_d$, and we repeat the same discussion.
For $\|\mW(0)\vepsilon_t^{(i)}\|_2$, considering the probability for parameter initialization under the realization of the training set, we can use the same argument and union bound.
The inequality for $\vp(0)$ is derived similarly.
\end{proof}

Next, we move on to the concentration inequality for Gaussian random variables.
\begin{lemma}[Gaussian tail bound, (\cite{vershynin2018high}, Prop 2.1.2)]
\label{lem:gaussian_tail_bound}
For a Gaussian variable $x \sim N(0, \sigma^2)$, the tail bound is given by
\begin{align*}
\left(\frac{\sigma}{w} - \frac{\sigma^3}{w^3} \right) \cdot \frac{1}{\sqrt{2\pi}} \exp \left( - \frac{w^2}{2\sigma^2} \right) \leq
\Pr \left\{ x \geq w \right\} \leq \frac{1}{w} \cdot \frac{\sigma}{\sqrt{2\pi}} \exp\left( -\frac{w^2}{2\sigma^2} \right).
\end{align*}
\end{lemma}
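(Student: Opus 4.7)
The plan is to reduce to the standard normal case by scaling and then use integration by parts twice to produce matching upper and lower bounds on the Mills ratio. First I would change variables $t = \sigma s$ in $\Pr\{x \geq w\} = \frac{1}{\sigma\sqrt{2\pi}} \int_w^\infty e^{-t^2/(2\sigma^2)}\, dt$ to reduce the claim to showing
\begin{align*}
\left(\frac{1}{a} - \frac{1}{a^3}\right) e^{-a^2/2} \;\leq\; \int_a^\infty e^{-s^2/2}\, ds \;\leq\; \frac{1}{a}\, e^{-a^2/2},
\end{align*}
for $a = w/\sigma > 0$; the prefactor $1/\sqrt{2\pi}$ will then carry through unchanged and yield the stated bounds after substituting back $a = w/\sigma$.

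The essential identity is $e^{-s^2/2} = -\frac{1}{s}\cdot\frac{d}{ds}\bigl(e^{-s^2/2}\bigr)$. One integration by parts with $u = 1/s$ and $dv = -(d/ds)e^{-s^2/2}\,ds$ yields
\begin{align*}
\int_a^\infty e^{-s^2/2}\, ds \;=\; \frac{1}{a}\, e^{-a^2/2} \;-\; \int_a^\infty \frac{1}{s^2}\, e^{-s^2/2}\, ds.
\end{align*}
Since the remainder integral is nonnegative, this is exactly the upper bound. A second integration by parts on the remainder, pulling out $1/s^3$, gives
\begin{align*}
\int_a^\infty \frac{1}{s^2}\, e^{-s^2/2}\, ds \;=\; \frac{1}{a^3}\, e^{-a^2/2} \;-\; 3 \int_a^\infty \frac{1}{s^4}\, e^{-s^2/2}\, ds \;\leq\; \frac{1}{a^3}\, e^{-a^2/2},
\end{align*}
again because the next remainder is nonnegative; the lower bound follows by subtraction.

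There is essentially no obstacle: the argument is self-contained and mechanical once the identity above is observed. The only care needed is keeping track of signs so that the remainder after each integration by parts is genuinely nonnegative on $(a, \infty)$, which is immediate since every factor $1/s^{2k}$ is positive there. Rescaling $s \mapsto t/\sigma$ at the end and dividing by $\sigma \sqrt{2\pi}$ produces the claimed two-sided bound on $\Pr\{x \geq w\}$.
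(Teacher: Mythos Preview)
Your proof is correct and is precisely the standard integration-by-parts argument for the Mills ratio. The paper does not give its own proof of this lemma; it simply cites it from Vershynin (2018, Prop.~2.1.2), whose proof is exactly the one you have written.
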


Using this, we can show the following result for the inner products of the noise vectors.
\begin{lemma}[Inner-product of two Gaussian]
\label{lem:inner_product_concentration}
There exists some constant $c_1 > 0$ such that with the probability at least $1 - \delta/4$,
\begin{gather*}
|\langle \vepsilon_{t}^{(i)}, \vepsilon_{u}^{(j)} \rangle| < c_1 \sigma_\epsilon^2 \sqrt{d} \log(Tn/\delta), \\
|\langle \mW(0) \vmu_{+1}, \mW(0) \vmu_{-1} \rangle| < c_1 \sigma_w^2 \|\vmu\|_2^2 \sqrt{d} \log(Tn/\delta), \\
|\langle \mW(0) \vmu_{+1}, \mW(0) \vepsilon_t^{(i)} \rangle| < c_1 \sigma_w^2 \|\vmu\|_2 \|\vepsilon_t^{(i)} \|_2 \sqrt{d} \log(Tn/\delta), \\
|\langle \mW(0) \vmu_{-1}, \mW(0) \vepsilon_t^{(i)} \rangle| < c_1 \sigma_w^2 \|\vmu\|_2 \|\vepsilon_t^{(i)} \|_2 \sqrt{d} \log(Tn/\delta), \\
|\langle \mW(0) \vepsilon_{t}^{(i)}, \mW(0) \vepsilon_{u}^{(j)} \rangle| < c_1 \sigma_w^2 \|\vepsilon_t^{(i)} \|_2 \|\vepsilon_u^{(j)} \|_2 \sqrt{d} \log(Tn/\delta), \\
|\langle \mW(0) \vmu_{+1}, \vp(0) \rangle| < c_1 \sigma_w \sigma_p \|\vmu\|_2 \sqrt{d} \log(Tn/\delta), \\
|\langle \mW(0) \vmu_{-1}, \vp(0) \rangle| < c_1 \sigma_w \sigma_p \|\vmu\|_2 \sqrt{d} \log(Tn/\delta), \\
|\langle \mW(0) \vepsilon_{t}^{(i)}, \vp(0) \rangle| < c_1 \sigma_w \sigma_p \|\vepsilon_t^{(i)} \|_2 \sqrt{d} \log(Tn/\delta),
\end{gather*}
for all $i, j \in [n], t, u \in [T]$ such that $(i, t) \neq (j, u)$.
\end{lemma}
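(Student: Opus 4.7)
The plan is to handle each of the eight bounds by a conditioning-plus-Gaussian-tail-bound argument, combined with the norm estimates from \cref{lem:norm_concentration} and a union bound over the $O((Tn)^2)$ index pairs. The overall scheme is uniform: identify a conditioning that makes the relevant inner product a univariate Gaussian, read off its variance, apply \cref{lem:gaussian_tail_bound} at threshold proportional to $\sqrt{\log(Tn/\delta)}$ times the standard deviation, then substitute the high-probability norm bounds and absorb the extra $\sqrt{\log(Tn/\delta)}$ factor into the constant $c_1$.

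For the noise-noise term $\langle \vepsilon_t^{(i)}, \vepsilon_u^{(j)}\rangle$ with $(i,t)\neq(j,u)$, I would condition on $\vepsilon_u^{(j)}$. Independence of the two noise vectors makes the conditional distribution $N(0,\sigma_\epsilon^2\|\vepsilon_u^{(j)}\|_2^2)$, and substituting $\|\vepsilon_u^{(j)}\|_2=(1+o(1))\sigma_\epsilon\sqrt{d}$ from \cref{lem:norm_concentration} gives the stated bound. The $\vp(0)$-inner products are similarly direct: conditioning on $\mW(0)$ (and on $\vepsilon_t^{(i)}$ where appropriate) makes $\langle \mW(0)\vu,\vp(0)\rangle\sim N(0,\sigma_p^2\|\mW(0)\vu\|_2^2)$, and the norm bound on $\mW(0)\vu$ from \cref{lem:norm_concentration} closes the argument. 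The signal-signal term $\langle \mW(0)\vmu_{+1},\mW(0)\vmu_{-1}\rangle$ uses the same template because the orthogonality $\langle \vmu_{+1},\vmu_{-1}\rangle=0$ implies that $\mW(0)\vmu_{+1}$ and $\mW(0)\vmu_{-1}$ are independent Gaussian vectors; conditioning on one and applying a tail bound on the inner product with the other yields the result.

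The remaining signal-noise and noise-noise-through-$\mW(0)$ terms $\langle \mW(0)\vmu_k,\mW(0)\vepsilon_t^{(i)}\rangle$ and $\langle \mW(0)\vepsilon_t^{(i)},\mW(0)\vepsilon_u^{(j)}\rangle$ are not immediately of this form because the two arguments of $\mW(0)$ are not orthogonal. The trick is to decompose the second argument as $\vv = \alpha\vu + \vv^{\perp}$ with $\alpha = \langle \vu,\vv\rangle/\|\vu\|_2^2$ and $\vv^{\perp}\perp\vu$, so that $\mW(0)\vv^{\perp}$ is independent of $\mW(0)\vu$. Expanding,
\begin{align*}
\langle \mW(0)\vu,\mW(0)\vv\rangle
= \alpha\|\mW(0)\vu\|_2^2 + \langle \mW(0)\vu,\mW(0)\vv^{\perp}\rangle.
\end{align*}
The first summand is controlled by the bound $\|\mW(0)\vu\|_2^2 \lesssim \sigma_w^2\|\vu\|_2^2 d$ from \cref{lem:norm_concentration} together with a small inner-product estimate for $\alpha$ (namely $|\langle \vmu_k,\vepsilon_t^{(i)}\rangle|\lesssim\sigma_\epsilon\|\vmu\|_2\sqrt{\log(Tn/\delta)}$ or the noise-noise analogue proved in the first part of this lemma), and the second summand falls back to the independent-Gaussian template by conditioning on $\mW(0)\vu$. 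Summing the two contributions and absorbing constants gives the claimed bound.

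The main obstacle is tracking the non-orthogonal cases cleanly while keeping the log exponent exactly one. The decomposition above reduces the bias term to a product of concentration estimates that are themselves high-probability events collected in \cref{lem:good_run}, so that the union bound over $(Tn)^2$ pairs costs only a factor $\log(Tn/\delta)$ in each standard-deviation-to-deviation conversion, matching the stated bound up to the constant $c_1$. A secondary bookkeeping issue is that the inequalities involving $\|\vepsilon_t^{(i)}\|_2$ on the right-hand side are conditional on the norm concentration already holding, so the final probability guarantee follows by taking a union bound between the norm-concentration event and the tail-bound events above, all inflated by at most a constant factor of the $\delta$ budget.
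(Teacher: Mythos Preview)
Your proposal is correct and follows the same conditioning/Gaussian-tail/union-bound template as the paper. The paper proves only the first inequality in detail (via a two-parameter $(v,w)$ splitting that bounds the conditioning norm and the conditional tail simultaneously, rather than separately invoking \cref{lem:norm_concentration}) and then dispatches all remaining cases with the one-line remark ``the other inequalities are also the inner product of two Gaussian vectors and can be proved with the same argument.''

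Your orthogonal decomposition $\vv=\alpha\vu+\vv^{\perp}$ for the terms $\langle \mW(0)\vmu_k,\mW(0)\vepsilon_t^{(i)}\rangle$ and $\langle \mW(0)\vepsilon_t^{(i)},\mW(0)\vepsilon_u^{(j)}\rangle$ is in fact more careful than the paper: for non-orthogonal $\vu,\vv$, the vectors $\mW(0)\vu$ and $\mW(0)\vv$ are \emph{not} independent, so the paper's ``same argument'' is not literally applicable without exactly your splitting into a bias term $\alpha\|\mW(0)\vu\|_2^2$ (which, using the signal--noise or noise--noise inner-product bounds, is of lower order than the target by a $\sqrt{\log(Tn/\delta)}$ factor) plus an independent-Gaussian inner product. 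This is a detail the paper elides; otherwise the two arguments coincide.
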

\begin{proof}[Proof of \cref{lem:inner_product_concentration}]
Before delving into the main part of the proof, we first show
\begin{align}
\label{eq:norm_lower_bound}
\Pr \left\{ \|\vepsilon\|_2 \geq w \right\} \leq 2\exp \left( - \frac{w^2}{2 \sigma_\epsilon^2 d} \right),
\end{align}
for the Gaussian vector $\vepsilon \sim N(\bm{0}, \sigma_\epsilon^2 \mI_d)$.
Unlike \cref{lem:norm_concentration}, it handles the norm itself rather than the deviation around the mean of the norm, which is more useful to prove this lemma.
We have for any $\lambda > 0$ that
\begin{align}
\Pr \left\{ \| \vepsilon \|_2 \geq w \right\}
&\leq
\Pr \left\{ \|\vepsilon \|_1 \geq w \right\} \\
&\leq
\exp(- \lambda w) \cdot
\prod_{k = 1}^d \E \exp\left( \lambda |\epsilon_k| \right) \\
&\leq
\exp(- \lambda w) \cdot
2 \prod_{k = 1}^d \E \exp\left( \lambda \epsilon \right) \\
&=
2 \exp \left( \frac{\lambda^2}{2} \sigma_\epsilon^2 d - w \lambda \right),
\end{align}
where the second inequality follows from Markov inequality and the last follows from the moment-generating function of Gaussian distribution.
Minimizing the upper bound over $\lambda$ gives the desired inequality \cref{eq:norm_lower_bound}.

Fix $i, j \in [n]$ and $t, u \in [T]$ such that $(i,t) \neq (j, u)$.
For any $v, w > 0$, we have
\begin{align}
\label{eq:inner_product_fixed}
\Pr\left\{ |\langle \vepsilon_t^{(i)}, \vepsilon_u^{(j)} \rangle| > v \right\}
\leq 
\Pr\left\{ |\langle \vepsilon_t^{(i)}, \vepsilon_u^{(j)} \rangle| > v \;\middle|\; \sigma_\epsilon \|\vepsilon_u^{(j) }\|_2 \leq w \right\}
+ \Pr\left\{ \sigma_\epsilon \|\vepsilon_u^{(j) }\|_2 > w \right\},
\end{align}
where we used the inequality $\Pr(A) = \Pr(B)\Pr(A | B) + \Pr(B^C)\Pr(A | B^C) \leq \Pr(B) + \Pr(A | B^C)$ for the event $A, B$, which gives tighter bound when outlier event $A$ and event $B$ share large common parts.
Under the condition $\vepsilon_u^{(j)}$ is fixed, since $\langle \vepsilon_t^{(i)}, \vepsilon_u^{(j)} \rangle$ follows $N(0, \sigma_\epsilon^2 \|\vepsilon_u^{(j)}\|_2^2 )$, \cref{lem:gaussian_tail_bound} gives
\begin{align}
\Pr \left\{ |\langle \vepsilon_t^{(i)}, \vepsilon_u^{(j)} \rangle| > v \right\} 
&\leq \frac{2}{v} \cdot \frac{ \sigma_\epsilon \|\vepsilon_u^{(j) }\|_2 }{\sqrt{2\pi}} \exp \left( - \frac{v^2}{2 \sigma_\epsilon^2 \|\vepsilon_u^{(j)}\|_2^2} \right).
\end{align}
Thus, the conditional probability is bounded as
\begin{align}
\label{eq:conditional_prop}
\Pr \left\{ |\langle \vepsilon_t^{(i)}, \vepsilon_u^{(j)} \rangle| > v \;\middle|\; \sigma_\epsilon \|\vepsilon_u^{(j) }\|_2 \leq w \right\} 
&\leq \frac{2}{v} \cdot \frac{w}{\sqrt{2\pi}} \exp \left( - \frac{v^2}{2 w^2 } \right).
\end{align}
Combining \cref{eq:norm_lower_bound,eq:conditional_prop}, then applying union bound on \cref{eq:inner_product_fixed}, we obtain
\begin{align}
& \Pr\left\{ \exists i, j \in [n], t, u \in [T], (i, t) \neq (j, u), \text{ s.t. } |\langle \vepsilon_t^{(i)}, \vepsilon_u^{(j)} \rangle| > v \right\} \nonumber \\
& \leq 
\Pr\left\{ \exists i, j \in [n], t, u \in [T], (i, t) \neq (j, u), \text{ s.t. } |\langle \vepsilon_t^{(i)}, \vepsilon_u^{(j)} \rangle| > v \;\middle|\; \forall j \in [n], u \in [T], \sigma_\epsilon \|\vepsilon_u^{(j)}\|_2 \leq w \right\} \nonumber \\
&\qquad + \Pr\left\{ \exists j \in [n], u \in [T], \text{ s.t. } \sigma_\epsilon \|\vepsilon_u^{(j)}\|_2  > w \right\} \\
\label{eq:inner_product_union}
&\leq \frac{2w T^2 n^2}{\sqrt{2\pi}v} \exp\left( - \frac{v^2}{2w^2} \right)
+ 2Tn \exp \left( - \frac{w^2}{2\sigma_\epsilon^4 d } \right).
\end{align}
Let $w = c_1^\prime \left( \log(Tn/\delta) \right)^{-1/2} v$ for some constant $c_1^\prime > 0$.
By \cref{eq:inner_product_union}, we have
\begin{align}
& \Pr\left\{ \exists i, j \in [n], t, u \in [T], (i, t) \neq (j, u), \text{ s.t. } |\langle \vepsilon_t^{(i)}, \vepsilon_u^{(j)} \rangle| > v \right\} \nonumber \\
&\leq \frac{2T^2n^2}{\sqrt{2\pi}} \cdot c_1^\prime \left( \log (Tn/\delta) \right)^{-1/2} \left( \frac{\delta}{Tn} \right)^{1 / \left( 2c_1^{\prime2} \right) }
+ 2Tn \exp \left( - \frac{c_1^{\prime2} v^2}{2 \sigma_\epsilon^4 d \log(Tn/\delta) } \right).
\end{align}
Further, let $v = c_1 \sigma_\epsilon^2 \sqrt{d} \log(Tn/\delta)$, where $c_1 > 0$ is some constant. 
We have
\begin{align}
& \Pr\left\{ \exists i, j \in [n], t, u \in [T], (i, t) \neq (j, u), \text{ s.t. } |\langle \vepsilon_t^{(i)}, \vepsilon_u^{(j)} \rangle| > v \right\} \nonumber \\
&\leq \frac{2T^2n^2}{\sqrt{2\pi}} \cdot c_1^\prime \left( \log (Tn/\delta) \right)^{-1/2} \left( \frac{\delta}{Tn} \right)^{1 / \left( 2c_1^{\prime2} \right) }
+ 2Tn \left( \frac{\delta}{Tn} \right)^{ c_1^{\prime2} c_1^2  / 2 } \\
&\leq \frac{\delta}{64} +  \frac{\delta}{64} = \frac{\delta}{32},
\end{align}
where the last inequality is satisfied with the appropriate choice of $c_1, c_1^\prime > 0$.
This completes the proof for the first inequality.
The other inequalities are also the inner product of two Gaussian vectors and can be proved with the same argument.
\end{proof}

\begin{lemma}[Inner-product of signal and noise]
\label{lem:inner_product_concentration_signal_noise}
There exists some constant $c_2 > 0$ such that with probability at least $1 - \delta/4$,
\begin{gather*}
|\langle \vmu_{+1}, \vepsilon_t^{(i)} \rangle| < c_2 \sigma_\epsilon \|\vmu\|_2 \sqrt{\log(Tn/\delta)}, \\
|\langle \vmu_{-1}, \vepsilon_t^{(i)} \rangle| < c_2 \sigma_\epsilon \|\vmu\|_2 \sqrt{\log(Tn/\delta)}, \\
|\langle \vnu, \vepsilon_t^{(i)} \rangle| < c_2 \sigma_\epsilon \|\vnu\|_2 \sqrt{\log(Tn/\delta)}.
\end{gather*}
for all $i \in [n], t \in [T]$.
\end{lemma}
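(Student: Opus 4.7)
The plan is to observe that each inner product appearing in the statement is a one-dimensional centered Gaussian, so the bound reduces to a direct application of the Gaussian tail inequality (Lemma~\ref{lem:gaussian_tail_bound}) combined with a union bound. No inductive or geometric argument is needed, in contrast to the more delicate two-Gaussian inner products handled in Lemma~\ref{lem:inner_product_concentration}.

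First I would fix $i \in [n]$ and $t \in [T]$ and note that since $\vepsilon_t^{(i)} \sim N(\bm{0}, \sigma_\epsilon^2 \mI_d)$ and $\vmu_{+1}$, $\vmu_{-1}$, $\vnu$ are deterministic vectors (independent of the data sampling), each of the three scalar random variables $\langle \vmu_{+1}, \vepsilon_t^{(i)} \rangle$, $\langle \vmu_{-1}, \vepsilon_t^{(i)} \rangle$, and $\langle \vnu, \vepsilon_t^{(i)} \rangle$ is distributed as $N(0, \sigma_\epsilon^2 \|\vmu\|_2^2)$, $N(0, \sigma_\epsilon^2 \|\vmu\|_2^2)$, and $N(0, \sigma_\epsilon^2 \|\vnu\|_2^2)$ respectively, using $\|\vmu_{+1}\|_2 = \|\vmu_{-1}\|_2 = \|\vmu\|_2$.

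Next I would apply Lemma~\ref{lem:gaussian_tail_bound} (two-sided version, via an extra factor of $2$) with threshold $w = c_2 \sigma_\epsilon \|\vmu\|_2 \sqrt{\log(Tn/\delta)}$ for the first two bounds and $w = c_2 \sigma_\epsilon \|\vnu\|_2 \sqrt{\log(Tn/\delta)}$ for the third. In each case, $w/\sigma = c_2 \sqrt{\log(Tn/\delta)}$, so the tail probability is at most
\[
\frac{2}{c_2 \sqrt{2\pi \log(Tn/\delta)}} \exp\!\left(-\tfrac{c_2^2}{2}\log(Tn/\delta)\right) \;\leq\; \left(\tfrac{\delta}{Tn}\right)^{c_2^2/2}
\]
for large enough $c_2$. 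Then I would take a union bound over the $3Tn$ events (three inner products, $n$ samples, $T$ tokens), which contributes a factor of $3Tn$ and leaves a net failure probability bounded by $3Tn \cdot (\delta/(Tn))^{c_2^2/2}$. Choosing $c_2$ to be a sufficiently large absolute constant (e.g., any $c_2 \geq 2$ suffices once one absorbs the $3Tn$ prefactor using $\log(Tn/\delta) \geq \log(1/\delta)$) makes this bound at most $\delta/4$, which is the claimed failure probability.

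The proof is essentially computational; there is no real obstacle. The only point to watch is the cleanliness of the constant $c_2$: it must be chosen uniformly in $n, T, d, \delta$ so that the final union bound closes at $\delta/4$ without any hidden dependence on problem parameters. Since the exponent $c_2^2/2$ can be made arbitrarily large by increasing $c_2$, the factor of $3Tn$ is absorbed trivially, so a single absolute constant $c_2$ works.
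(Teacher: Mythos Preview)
Your proposal is correct and follows essentially the same approach as the paper: both recognize that each inner product is a one-dimensional centered Gaussian, apply the Gaussian tail bound (Lemma~\ref{lem:gaussian_tail_bound}) with threshold $c_2 \sigma_\epsilon \|\cdot\|_2 \sqrt{\log(Tn/\delta)}$, and close with a union bound over the $Tn$ pairs $(i,t)$ and the three deterministic vectors. The only cosmetic difference is that the paper splits the $\delta/4$ budget into three pieces of $\delta/12$ (one per vector) rather than taking a single union bound over $3Tn$ events, but this is immaterial.
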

\begin{proof}[Proof of \cref{lem:inner_product_concentration_signal_noise}]
We will show that the inequality for $\vmu_{+1}$ holds with probability at least $1 - \delta/12$.
The same discussion applies to $\vmu_{-1}$ and $\vnu$.
For the fixed $i \in [n], t \in [T]$, since $\langle \vmu_{+1}, \vepsilon_t^{(i)} \rangle$ follows the Gaussian distribution $N(0, \sigma_\epsilon^2\|\vmu\|_2^2)$, \cref{lem:gaussian_tail_bound} gives
\begin{align}
\label{eq:inner_product_signal_noise}
\Pr \left\{ | \langle \vmu_{+1}, \vepsilon_t^{(i)} \rangle | > w \right\} \leq \frac{2 \sigma_\epsilon \|\vmu\|_2 }{\sqrt{2\pi} w} \exp \left( - \frac{w^2}{2 \sigma_\epsilon^2 \|\vmu\|_2^2 } \right).
\end{align}
Let $w = c_2 \sigma_\epsilon \|\vmu\|_2 \sqrt{\log(Tn/\delta)}$ for some constant $c_2 > 0$, then applying union bound on \cref{eq:inner_product_signal_noise} gives
\begin{align}
\Pr \left\{ \exists i \in [n], t \in [T], \text{ s.t. } | \langle \vmu_{+1}, \vepsilon_t^{(i)} \rangle | > w \right\}
&\leq \frac{2 Tn \sigma_\epsilon \|\vmu\|_2}{\sqrt{2\pi} w} \exp \left( - \frac{w^2}{2 \sigma_\epsilon^2 \|\vmu\|_2^2 } \right)  \\
&\leq \frac{2 Tn }{\sqrt{2\pi} c_2 \sqrt{\log(Tn/\delta)}} \left( \frac{\delta}{Tn} \right)^{c_2^2 / 2}
< \frac{\delta}{12},
\end{align}
where the last inequality is satisfied with the appropriate choice of $c_2 > 0$.
\end{proof}

\begin{lemma}[Hoeffding Inequality]
\label{lem:hoeffding_inequality}
Let $X_1, \ldots, X_n$ be i.i.d. random variables such that $0 \leq X \leq 1$ almost surely.
Then for all $w > 0$, we have
\begin{align*}
\Pr\left\{ \left| \frac{1}{n} \sum_{i=1}^n X_i - \E \left[ X \right] \right| > w \right\} \leq 2 \exp \left( - 2nw^2 \right).
\end{align*}
\end{lemma}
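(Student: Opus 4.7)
The plan is to prove the inequality via the standard Cram\'er--Chernoff method combined with Hoeffding's Lemma. First, I would center the variables by defining $Y_i = X_i - \E[X]$, so that $\E[Y_i] = 0$ and each $Y_i$ lies in the interval $[-\E[X],\, 1 - \E[X]]$, whose length is at most $1$. For any $\lambda > 0$, Markov's inequality applied to $\exp\!\left(\lambda \sum_{i=1}^n Y_i\right)$, together with the independence of the $Y_i$, yields
\[
\Pr\left\{ \frac{1}{n}\sum_{i=1}^n Y_i > w \right\} \leq e^{-\lambda n w} \prod_{i=1}^n \E\!\left[\exp(\lambda Y_i)\right].
\]

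Second, I would invoke Hoeffding's Lemma: for any zero-mean random variable $Y$ with $a \leq Y \leq b$ almost surely,
\[
\E\!\left[\exp(\lambda Y)\right] \leq \exp\!\left( \frac{\lambda^2 (b-a)^2}{8} \right).
\]
Since $(b - a) \leq 1$ in our setting, each factor is bounded by $\exp(\lambda^2/8)$, which gives
\[
\Pr\left\{ \frac{1}{n}\sum_{i=1}^n Y_i > w \right\} \leq \exp\!\left( -\lambda n w + \frac{n \lambda^2}{8} \right).
\]
Optimizing over $\lambda$ by choosing $\lambda = 4w$ produces the one-sided bound $\exp(-2 n w^2)$. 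Applying the same argument to $-Y_i$ (whose range also has length at most $1$) gives the symmetric lower tail, and a union bound over the two tails yields the factor of $2$ in the final statement.

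The core technical work sits inside Hoeffding's Lemma, which I would establish via the convexity of $\exp$ on $[a, b]$: for any $y \in [a, b]$,
\[
\exp(\lambda y) \leq \frac{b - y}{b - a}\,\exp(\lambda a) + \frac{y - a}{b - a}\,\exp(\lambda b).
\]
Taking expectations and using $\E[Y] = 0$ yields $\E[\exp(\lambda Y)] \leq \exp(\psi(\lambda))$ for an explicit $\psi$ with $\psi(0) = \psi'(0) = 0$. A direct computation shows $\psi''(\lambda) \leq (b-a)^2/4$ uniformly in $\lambda$, so a second-order Taylor expansion around $0$ yields $\psi(\lambda) \leq \lambda^2 (b-a)^2 / 8$.

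I expect no serious obstacle here, as this is textbook material; the only point that requires a moment of care is verifying that $b - a \leq 1$ regardless of the value of $\E[X] \in [0,1]$, which follows immediately from the almost-sure bound $0 \leq X \leq 1$. The proof is entirely self-contained and does not draw on any of the attention-specific machinery developed earlier in the paper.
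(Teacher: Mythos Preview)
Your proof is correct and follows the standard Cram\'er--Chernoff route with Hoeffding's Lemma. The paper, however, does not prove this lemma at all: it simply states Hoeffding's inequality as a classical result and invokes it in the proof of the subsequent sample-count lemma. So there is nothing to compare against; you have supplied a self-contained argument where the paper cites the result without proof. One minor remark: in your setup $b - a = (1 - \E[X]) - (-\E[X]) = 1$ exactly, not merely $\leq 1$, so the ``moment of care'' you flag is trivial.
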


\begin{lemma}[Number of Samples]
\label{lem:num_sample}
For all $c^\prime > 0$, the following hold with probability at least $1 - \delta/4$:
\begin{gather*}
\left( \frac{1 - \eta}{2} - c^\prime \right) n \leq |\gC_+| \leq \left( \frac{1 - \eta}{2} + c^\prime \right) n, \\
\left( \frac{1 - \eta}{2} - c^\prime \right) n \leq |\gC_-| \leq \left( \frac{1 - \eta}{2} + c^\prime \right) n, \\
\left( \frac{\eta}{2} - c^\prime \right) n \leq |\gN_+| \leq \left( \frac{\eta}{2} + c^\prime \right) n, \\
\left( \frac{\eta}{2} - c^\prime \right) n \leq |\gN_-| \leq \left( \frac{\eta}{2} + c^\prime \right) n.
\end{gather*}
\end{lemma}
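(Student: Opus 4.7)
The plan is to express each of the four cardinalities as a sum of $n$ i.i.d.\ Bernoulli indicators, apply Hoeffding's inequality (\cref{lem:hoeffding_inequality}) to each, and union bound over the four events.

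First, I would unpack the generative process in \cref{def:data}: for each $i \in [n]$, $Y^{*(i)} \sim \text{Unif}(\{\pm 1\})$ independently, and conditional on $Y^{*(i)}$ the training label satisfies $Y^{(i)} = Y^{*(i)}$ with probability $1-\eta$ and $Y^{(i)} = -Y^{*(i)}$ with probability $\eta$. Hence the indicators
\[
X_i^{C_+} = \mathbf{1}\{i \in \gC_+\}, \; X_i^{C_-} = \mathbf{1}\{i \in \gC_-\}, \; X_i^{N_+} = \mathbf{1}\{i \in \gN_+\}, \; X_i^{N_-} = \mathbf{1}\{i \in \gN_-\}
\]
are i.i.d.\ across $i$ and take values in $\{0,1\}$, with $\E[X_i^{C_\pm}] = \tfrac{1}{2}(1-\eta)$ and $\E[X_i^{N_\pm}] = \tfrac{\eta}{2}$, since e.g.\ $\{i \in \gC_+\} = \{Y^{*(i)} = +1\} \cap \{\text{no flip}\}$.

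Next I would apply \cref{lem:hoeffding_inequality} with deviation $w = c^\prime$ to each of the four empirical means $\tfrac{1}{n}|\gC_+|, \tfrac{1}{n}|\gC_-|, \tfrac{1}{n}|\gN_+|, \tfrac{1}{n}|\gN_-|$, obtaining
\[
\Pr\!\left\{\left| \tfrac{1}{n}|\gC_+| - \tfrac{1-\eta}{2}\right| > c^\prime\right\} \leq 2\exp\!\left(-2n(c^\prime)^2\right),
\]
and analogously for the other three counts. A union bound over the four events yields a total failure probability at most $8\exp(-2n(c^\prime)^2)$. Using Assumption~\ref{assump_n} (namely $n \geq C\log(d/\delta)$, and in particular $n \geq C\log(1/\delta)$) and enlarging the universal constant $C$ if needed so that $8\exp(-2n(c^\prime)^2) \leq \delta/4$, all four two-sided deviation bounds hold simultaneously with probability at least $1 - \delta/4$.

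The argument is essentially a textbook concentration application, so no real obstacle arises; the only mild subtlety is reconciling the quantifier ``for all $c^\prime > 0$'' with the fixed sample-size assumption. The clean reading is that for any given $c^\prime > 0$, the bound holds provided the absolute constant $C$ in \cref{assump_n} is chosen large enough depending on $c^\prime$. This is exactly what the downstream invocation in \cref{lem:good_run} needs, where $c^\prime$ is implicitly taken to be $\eta/4$ to produce the displayed inequalities \cref{eq:good_run_clean,eq:good_run_noisy}.
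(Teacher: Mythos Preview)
Your proposal is correct and matches the paper's proof essentially line for line: both write each count as a sum of i.i.d.\ $\{0,1\}$ indicators with the stated means, apply Hoeffding's inequality (\cref{lem:hoeffding_inequality}) with deviation $c'$, union bound over the four events, and invoke Assumption~\ref{assump_n} to make the failure probability at most $\delta/4$. Your remark on the ``for all $c' > 0$'' quantifier is a fair clarification that the paper leaves implicit.
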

\begin{proof}[Proof of \cref{lem:num_sample}]
We show the first equation holds with probability at least $1 - \delta / 16$.
The proof of remaining cases follows similarly, and the desired result is achieved by using union bound.

The training data $i \in [n]$ belongs to $\gC_+$ when its true label $Y^{(i)} = 1$ and label flip does not occur.
This event occurs independently, and its probability is calculated as $(1 - \eta) / 2$.
Since $|\gC_+| = \sum_{i=1}^n \1_{Y^{(i)} = Y^{*(i)} = 1}$, applying Lemma~\ref{lem:hoeffding_inequality} to $X_i \coloneqq \1_{Y^{(i)} = Y^{*(i)} = 1}$ leads
\begin{align}
\Pr \left\{ | |\gC_+| - (1 - \eta) n / 2 | > c^\prime n \right\}
\leq 2 \exp \left( - 2 n c^{\prime 2} \right)
< \frac{\delta}{16},
\end{align}
where the last inequality follows from $n > C \log(1 / \delta)$ in parameter assumptions in \cref{sec:assumption}.
\end{proof}

At the end of this section, we prepare an evaluation of the token scores for the training data on a good run. 
\begin{lemma}[Token score]
\label{lem:token_score}
Suppose that the linear head $\vnu$ satisfies \cref{eq:pretrained_head}.
Then, on a good run, there exist constants $\{c_{k}\}_{k \in \{\pm 1\}}$ such that $\forall k \in \{\pm 1\}, \; c_k > 0$ and we have for the clean data $i \in \gC$ that 
\begin{gather*}
c_{Y^{(i)}} \left(1 - o(1)\right) \|\vnu\|_2 \|\vmu\|_2
\leq Y^{(i)} \cdot \gamma_1^{(i)}
\leq c_{Y^{(i)}} \left(1 + o(1)\right) \|\vnu\|_2 \|\vmu\|_2, \\
Y^{(i)} \cdot \gamma_t^{(i)} = \Theta \left(\rho \|\vnu\|_2 \|\vmu\|_2 \right) > 0, \\
Y^{(i)} \cdot \gamma_u^{(i)} = - \Theta \left(\rho \|\vnu\|_2 \|\vmu\|_2 \right) < 0, \\
| \gamma_v^{(i)} | = O \left( \sigma_\epsilon \|\vnu\|_2 \sqrt{\log(Tn/\delta)} \right),
\end{gather*}
where $t \in \gW_{Y^{(i)}}^{(i)}, u \in \gW_{-Y^{(i)}}^{(i)}, v \in \gI^{(i)}$.
Similarly, for the noisy data $j \in \gN$, we have
\begin{gather*}
-c_{-Y^{(j)}} \left(1 - o(1)\right) \|\vnu\|_2 \|\vmu\|_2
\leq Y^{(j)} \cdot \gamma_1^{(j)}
\leq -c_{-Y^{(j)}} \left(1 + o(1)\right) \|\vnu\|_2 \|\vmu\|_2, \\
Y^{(j)} \cdot \gamma_t^{(j)} = - \Theta \left(\rho \|\vnu\|_2 \|\vmu\|_2 \right) < 0, \\
Y^{(j)} \cdot \gamma_u^{(j)} = \Theta \left(\rho \|\vnu\|_2 \|\vmu\|_2 \right) > 0, \\
| \gamma_v^{(j)} | = O \left( \sigma_\epsilon \|\vnu\|_2 \sqrt{\log(Tn/\delta)} \right),
\end{gather*}
where $t \in \gW_{Y^{(j)}}^{(j)}, u \in \gW_{-Y^{(j)}}^{(j)}, v \in \gI^{(j)}$.
\end{lemma}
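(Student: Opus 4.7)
The plan is a direct case analysis: for each token type, substitute the data-model expression for $\vx_t^{(i)}$ into $\gamma_t^{(i)} = \vnu^\top \vx_t^{(i)}$ and control the resulting noise pieces using the good-run bounds from \cref{lem:inner_product_concentration_signal_noise}. As a first step I would set $c_k := k\cos\theta_k$, which is a positive $\Theta(1)$ constant by hypothesis \cref{eq:pretrained_head}, and record the identity $\vnu^\top \vmu_k = k c_k \|\vnu\|_2 \|\vmu\|_2$. Every token score then decomposes into a signal piece governed by this inner product and a noise piece $\vnu^\top \vepsilon_t^{(i)}$ that is at most $c_2\sigma_\epsilon\|\vnu\|_2\sqrt{\log(Tn/\delta)}$ in absolute value under a good run.

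For a clean sample $i\in\gC$ (so $Y^{(i)}=Y^{*(i)}$), the relevant token satisfies $\vx_1^{(i)} = \vmu_{Y^{(i)}} + \vepsilon_1^{(i)}$, so
\begin{equation*}
Y^{(i)}\gamma_1^{(i)} \;=\; c_{Y^{(i)}}\|\vnu\|_2\|\vmu\|_2 \;+\; Y^{(i)}\,\vnu^\top\vepsilon_1^{(i)}.
\end{equation*}
Assumption \cref{assump_signal} gives $\sigma_\epsilon\sqrt{\log(Tn/\delta)} = o(\|\vmu\|_2)$, so the noise correction is $o(\|\vnu\|_2\|\vmu\|_2)$ and the $(1\pm o(1))$ sandwich bound follows. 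For a weakly relevant token $t\in\gW^{(i)}$ we plug in $\vx_t^{(i)} = \rho\vmu_{w_t} + \vepsilon_t^{(i)}$ with $w_t = \pm Y^{(i)}$ according to whether $t\in\gW^{(i)}_{\pm Y^{(i)}}$, producing a signal contribution $\pm c_{\pm Y^{(i)}}\rho\|\vnu\|_2\|\vmu\|_2$. Assumption \cref{assump_rho} gives $\rho\|\vmu\|_2 \gtrsim \sigma_\epsilon\log(Tn/\delta) \gg \sigma_\epsilon\sqrt{\log(Tn/\delta)}$, so the signal dominates the noise and the claimed $\pm\Theta(\rho\|\vnu\|_2\|\vmu\|_2)$ estimate (with the correct sign) holds. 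For an irrelevant token $v\in\gI^{(i)}$, $\gamma_v^{(i)}=\vnu^\top\vepsilon_v^{(i)}$ and the $O(\sigma_\epsilon\|\vnu\|_2\sqrt{\log(Tn/\delta)})$ bound is immediate.

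The noisy case $j\in\gN$ is the same computation carried out with the understanding that $Y^{(j)}=-Y^{*(j)}$. The relevant token becomes $\vx_1^{(j)} = \vmu_{-Y^{(j)}} + \vepsilon_1^{(j)}$, so its leading term flips to $-c_{-Y^{(j)}}\|\vnu\|_2\|\vmu\|_2$, yielding the stated two-sided bound after the same $o(1)$ control of $\vnu^\top\vepsilon_1^{(j)}$. The weakly relevant tokens are handled identically once the index sets $\gW^{(j)}_{\pm Y^{(j)}}$ are relabeled through the relations $w_u=Y^{*(j)}=-Y^{(j)}$ for $u\in\gW\setminus\{2\}$ and $w_2=-Y^{*(j)}=Y^{(j)}$ for the confusing token; irrelevant tokens require no change.

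There is no substantive obstacle: the proof is a mechanical case analysis. The only delicacy is careful bookkeeping of the three interacting signs $Y^{(i)}$, $Y^{*(i)}$, and $w_t$ across the clean and noisy cases, and verifying at each step that the two scaling inequalities $\|\vmu\|_2 \gg \sigma_\epsilon\sqrt{\log(Tn/\delta)}$ (from \cref{assump_signal}) and $\rho\|\vmu\|_2 \gg \sigma_\epsilon\sqrt{\log(Tn/\delta)}$ (from \cref{assump_rho}) are strong enough to push the Gaussian noise contribution $\vnu^\top\vepsilon$ into the error term of the $(1\pm o(1))$ or $\Theta(\cdot)$ estimates.
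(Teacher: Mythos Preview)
Your proposal is correct and follows the same approach as the paper: substitute the data-model expression for each token type into $\gamma_t^{(i)} = \vnu^\top\vx_t^{(i)}$, isolate the signal piece $\vnu^\top\vmu_k = \|\vnu\|_2\|\vmu\|_2\cos\theta_k$ with $c_k := k\cos\theta_k$, and absorb the Gaussian noise piece using the good-run bound $|\vnu^\top\vepsilon_t^{(i)}| \lesssim \sigma_\epsilon\|\vnu\|_2\sqrt{\log(Tn/\delta)}$ together with the scale relations from Assumptions~\ref{assump_signal} and~\ref{assump_rho}. The paper's own proof is in fact terser than yours---it works out only the relevant-token case for $i\in\gC_+$ explicitly and dismisses the remaining cases with ``the other equations are derived as well.''
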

\begin{proof}[Proof of \cref{lem:token_score}]
Using \cref{lem:good_run} and \cref{eq:pretrained_head}, we have for $i \in \gC_+$ that
\begin{align}
Y^{(i)} \cdot \gamma_1^{(i)} 
&= (\vmu_{+1} + \vepsilon_1^{(i)} )^\top \vnu \\
&= \|\vnu\|_2\|\vmu\|_2 \cos\theta_{+1} + O(\sigma_\epsilon \|\vnu\|\sqrt{\log(Tn/\delta)}) \\
&\in \left( 1 \pm o(1)\right) c_{+1} \|\vnu\|_2\|\vmu\|_2,
\end{align}
where recall that $\cos\theta_{+1}$ is defined as the angle between $\vnu$ and $\vmu_{+1}$.
In the last line, we used $\sigma_\epsilon \sqrt{\log(Tn/\delta)} = o(\|\vmu\|_2)$ in the parameter assumption and replaced $\cos\theta_{+1}$ with a constant $c_{+1}$.
The other equations are derived as well.
\end{proof}

\section[Proof of Main Theorem]{Proof of \cref{thm:convergence}}
\label{sec:proof_of_convergence}

In this section, we present lemmas concerning the gradient descent dynamics of token selection, and provide the proof of \cref{thm:convergence}.
For clarity, the essential lemmas are included in this section, while minor and more technical lemmas are deferred to \cref{sec:technical_calculations}.
\cref{sec:proof_not_overfitting,sec:proof_benign_overfitting} provide the main lemmas and the direct proof for the not-overfitting and benign overfitting cases in \cref{thm:convergence}, respectively.

\subsection{Preliminary Lemmas}
\label{sec:preliminary_lemmas_for_convergence}

In order to analyze the complicated dynamics of the token mechanism, we introduce the following notations.
\begin{definition}[Attention to signal and noise]
\label{def:attention_signal_and_noise}
Let $\mW(\tau)$ and $\vp(\tau)$ denote the parameters at the $\tau$-th gradient descent step. 
Then, we define the attention to the signal and the noise as follows:
\begin{gather*}
\lambda_{+1}(\tau) \coloneqq \langle \mW(\tau) \vmu_{+1}, \vp(\tau) \rangle, \ 
\lambda_{-1}(\tau) \coloneqq \langle \mW(\tau) \vmu_{-1},  \vp(\tau) \rangle, \ 
\rho_{i,t}(\tau) \coloneqq \langle \mW(\tau) \vepsilon^{(i)}_t, \vp(\tau) \rangle.
\end{gather*}
\end{definition}

\begin{definition}[Weighted interaction terms]
\label{def:interaction_term}
We define the following interaction terms weighted by softmax probabilities and token scores.
\begin{gather*}
I_{i, +}(\tau) \coloneqq \sum_{t=1}^T s_t^{(i)}(\tau) \left( \gamma_t^{(i)} - \sum_{u=1}^T s_u^{(i)}(\tau) \gamma_u^{(i)} \right) \langle \vx_t^{(i)}, \vmu_{+1} \rangle, \\
I_{i, -}(\tau) \coloneqq \sum_{t=1}^T s_t^{(i)}(\tau) \left( \gamma_t^{(i)} - \sum_{u=1}^T s_u^{(i)}(\tau) \gamma_u^{(i)} \right) \langle \vx_t^{(i)}, \vmu_{-1} \rangle, \\
I_{i, j, u}(\tau) \coloneqq \sum_{t=1}^T s_t^{(i)}(\tau) \left( \gamma_t^{(i)} - \sum_{u=1}^T s_u^{(i)}(\tau) \gamma_u^{(i)} \right) \langle \vx_t^{(i)}, \vepsilon_u^{(j)} \rangle, \\
I^W_{i, +}(\tau) \coloneqq \sum_{t=1}^T s_t^{(i)}(\tau) \left( \gamma_t^{(i)} - \sum_{u=1}^T s_u^{(i)}(\tau) \gamma_u^{(i)} \right) \langle \mW(\tau) \vx_t^{(i)}, \mW(\tau) \vmu_{+1} \rangle, \\
I^W_{i, -}(\tau) \coloneqq \sum_{t=1}^T s_t^{(i)}(\tau) \left( \gamma_t^{(i)} - \sum_{u=1}^T s_u^{(i)}(\tau) \gamma_u^{(i)} \right) \langle \mW(\tau) \vx_t^{(i)}, \mW(\tau) \vmu_{-1} \rangle, \\
I^W_{i, j, u}(\tau) \coloneqq \sum_{t=1}^T s_t^{(i)}(\tau) \left( \gamma_t^{(i)} - \sum_{u=1}^T s_u^{(i)}(\tau) \gamma_u^{(i)} \right) \langle \mW(\tau) \vx_t^{(i)}, \mW(\tau) \vepsilon_u^{(j)} \rangle, \\
I^p_{i}(\tau) \coloneqq \sum_{t=1}^T s_t^{(i)}(\tau) \left( \gamma_t^{(i)} - \sum_{u=1}^T s_u^{(i)}(\tau) \gamma_u^{(i)} \right) \langle \mW(\tau) \vx_t^{(i)}, \vp(\tau) \rangle.
\end{gather*}
\end{definition}

The next definition is the restatement of \cref{def:attention_gap_main_text} in the main text.
\begin{definition}[Attention gap between significant token and other tokens]
\label{def:attention_gap}
We define the following values representing the attention gap between a relevant token and other tokens:
\begin{align*}
\Lambda_{i,t}(\tau) &\coloneqq 
\left( \vx_1^{(i)} - \vx_t^{(i)} \right)^\top \mW(\tau)^\top \vp(\tau),
\end{align*}
for $i \in [n], t \in [T] \setminus \{1\}$.
Additionally, we define the attention gap between a confusing weakly relevant token and other tokens:
\begin{align*}
\Gamma_{i,u}(\tau) &\coloneqq 
\left( \vx_2^{(i)} - \vx_u^{(i)} \right)^\top \mW(\tau)^\top \vp(\tau),
\end{align*}
for $i \in [n], u \in [T] \setminus \{2\}$.
\end{definition}

For clarity, we provide below the results of applying the data setup in \cref{def:data} to these values.
For $i \in \gC_+ \cup \gN_- = \{ i \in [n] \mid Y^{*(i)} = +1 \}, j \in \gC_- \cup \gN_+ = \{ i \in [n] \mid Y^{*(i)} = -1 \}$, we have
\begin{align*}
\Lambda_{i,t}(\tau) 
&=
\begin{cases}
(1-\rho)\lambda_{+1}(\tau) + \rho_{i,1}(\tau) - \rho_{i,t}(\tau) & \text{for } t \in \gW_{+1}^{(i)}, \\
\lambda_{+1}(\tau) -  \rho \lambda_{-1}(\tau) + \rho_{i,1}(\tau) - \rho_{i,t}(\tau) & \text{for } t \in \gW_{-1}^{(i)} = \{2\}, \\
\lambda_{+1}(\tau) + \rho_{i,1}(\tau) - \rho_{i,t}(\tau) & \text{for } t \in \gI^{(i)}, \\
\end{cases} \\
\Lambda_{j,t}(\tau)
&=
\begin{cases}
\lambda_{-1}(\tau) - \rho \lambda_{+1}(\tau) + \rho_{j,1}(\tau) - \rho_{j,t}(\tau) & \text{for } t \in \gW_{+1}^{(j)} = \{2\}, \\
(1 - \rho) \lambda_{-1}(\tau) + \rho_{j,1}(\tau) - \rho_{j,t}(\tau) & \text{for } t \in \gW_{-1}^{(j)}, \\
\lambda_{-1}(\tau) + \rho_{j,1}(\tau) - \rho_{j,t}(\tau) & \text{for } t \in \gI^{(j)}, \\
\end{cases}
\end{align*}
for $t \in [T] \setminus \{1\}$, and we have
\begin{align*}
\Gamma_{i,u}(\tau) 
&=
\begin{cases}
\rho\lambda_{-1}(\tau) - \lambda_{+1}(\tau) + \rho_{i,2}(\tau) - \rho_{i,u}(\tau) & \text{for } u \in \gR = \{1\}, \\
\rho\lambda_{-1}(\tau) -  \rho \lambda_{+1}(\tau) + \rho_{i,2}(\tau) - \rho_{i,u}(\tau) & \text{for } u \in \gW_{+1}^{(i)}, \\
\rho\lambda_{-1}(\tau) + \rho_{i,2}(\tau) - \rho_{i,u}(\tau) & \text{for } u \in \gI^{(i)}, \\
\end{cases} \\
\Gamma_{j,u}(\tau) 
&=
\begin{cases}
\rho\lambda_{+1}(\tau) - \lambda_{-1}(\tau) + \rho_{j,2}(\tau) - \rho_{j,u}(\tau) & \text{for } u \in \gR = \{1\}, \\
\rho\lambda_{+1}(\tau) -  \rho \lambda_{-1}(\tau) + \rho_{j,2}(\tau) - \rho_{j,u}(\tau) & \text{for } u \in \gW_{-1}^{(j)}, \\
\rho\lambda_{+1}(\tau) + \rho_{j,2}(\tau) - \rho_{j,u}(\tau) & \text{for } u \in \gI^{(j)}, \\
\end{cases}
\end{align*}
for $u \in [T] \setminus \{2\}$.

In the following, we calculate the one-step updates for various quantities that will be frequently used in subsequent proofs.
\begin{lemma}[Updates of signal and noise attention]
\label{lem:update_signal_and_noise_attention}
The updates of $\lambda_{+1}(\tau), \lambda_{-1}(\tau)$, and $\rho_{j,u}(\tau)$ for $j \in [n], u \in [T]$, which are defined in \cref{def:attention_signal_and_noise}, are given by
\begin{align*}
\lambda_{+1}(\tau+1) - \lambda_{+1}(\tau)
&= \frac{\alpha}{n} \sum_{i=1}^n (-\ell_i^\prime(\tau)) \cdot Y^{(i)} \cdot \left( I_{i,+}^W(\tau) + \|\vp(\tau)\|_2^2 I_{i,+}(\tau) \right) + \alpha^2 \vmu_{+1}^\top \nabla_{\mW^\top} \widehat{\Ls}(\tau) \nabla_\vp \widehat{\Ls}(\tau), \\
\lambda_{-1}(\tau+1) - \lambda_{-1}(\tau)
&= \frac{\alpha}{n} \sum_{i=1}^n (-\ell_i^\prime(\tau)) \cdot Y^{(i)} \cdot \left( I_{i,-}^W(\tau) + \|\vp(\tau)\|_2^2 I_{i,-}(\tau) \right) + \alpha^2 \vmu_{-1}^\top \nabla_{\mW^\top} \widehat{\Ls}(\tau) \nabla_\vp \widehat{\Ls}(\tau), \\
\rho_{j,u}(\tau+1) - \rho_{j,u}(\tau)
&= \frac{\alpha}{n} \sum_{i=1}^n (-\ell_i^\prime(\tau)) \cdot Y^{(i)} \cdot \left( I_{i,j,u}^W(\tau) + \|\vp(\tau)\|_2^2 I_{i,j,u}(\tau) \right) + \alpha^2 \vepsilon_u^{(j)\top} \nabla_{\mW^\top} \widehat{\Ls}(\tau) \nabla_\vp \widehat{\Ls}(\tau).
\end{align*}
\end{lemma}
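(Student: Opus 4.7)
The plan is to derive the identity directly from the gradient-descent updates in \cref{eq:w_update,eq:p_update} together with the explicit gradient formulas in \cref{eq:gradient_w,eq:gradient_p}. I would first prove the identity for $\lambda_{+1}$ in full, and then observe that the computations for $\lambda_{-1}$ and $\rho_{j,u}$ are structurally identical: the only thing that changes is the fixed left vector, with $\vmu_{+1}$ replaced by $\vmu_{-1}$ or $\vepsilon_u^{(j)}$.

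Since $\mW(\tau+1)^\top = \mW(\tau)^\top - \alpha\,\nabla_{\mW^\top}\widehat{\Ls}(\tau)$ (the gradient with respect to $\mW^\top$ is simply the transpose of the gradient with respect to $\mW$) and $\vp(\tau+1) = \vp(\tau) - \alpha\,\nabla_\vp\widehat{\Ls}(\tau)$, I expand the bilinear form
\begin{align*}
\lambda_{+1}(\tau+1)
&= \vmu_{+1}^\top\bigl(\mW(\tau)^\top - \alpha\,\nabla_{\mW^\top}\widehat{\Ls}(\tau)\bigr)\bigl(\vp(\tau) - \alpha\,\nabla_\vp\widehat{\Ls}(\tau)\bigr)
\end{align*}
into four terms. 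The zeroth-order term is $\lambda_{+1}(\tau)$ and the second-order term is exactly the claimed remainder $\alpha^2\,\vmu_{+1}^\top\nabla_{\mW^\top}\widehat{\Ls}(\tau)\nabla_\vp\widehat{\Ls}(\tau)$, so the whole content of the lemma is in rewriting the two first-order cross terms.

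For the cross term $-\alpha\,\vmu_{+1}^\top\nabla_{\mW^\top}\widehat{\Ls}(\tau)\,\vp(\tau)$, substituting \cref{eq:gradient_w} produces the scalar factor $\vp(\tau)^\top\vp(\tau) = \|\vp(\tau)\|_2^2$, which can be pulled out of the sum; the remaining summand is exactly $Y^{(i)}I_{i,+}(\tau)$ by \cref{def:interaction_term}. For the other cross term $-\alpha\,\vmu_{+1}^\top\mW(\tau)^\top\nabla_\vp\widehat{\Ls}(\tau)$, substituting \cref{eq:gradient_p} creates the inner product $\langle\mW(\tau)\vmu_{+1},\,\mW(\tau)\vx_t^{(i)}\rangle$ inside the sum, which assembles into $Y^{(i)}I_{i,+}^W(\tau)$. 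Adding the two and flipping signs on $\ell_i^\prime$ yields the claimed expression.

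The identical computation goes through verbatim for $\lambda_{-1}$ and $\rho_{j,u}$: the interaction terms $I_{i,-},\,I_{i,-}^W,\,I_{i,j,u},\,I_{i,j,u}^W$ in \cref{def:interaction_term} are defined precisely so that the same expansion reproduces them after substituting $\vmu_{-1}$ or $\vepsilon_u^{(j)}$ for $\vmu_{+1}$. I do not anticipate any real obstacle here, because the lemma is a purely algebraic one-step identity with no probabilistic or analytic content; the only care required is in bookkeeping the transpose convention between $\nabla_\mW$ and $\nabla_{\mW^\top}$, which simply determines whether $\vp(\tau)$ or $\mW(\tau)^\top\nabla_\vp\widehat{\Ls}(\tau)$ sits on the right in each of the two first-order terms.
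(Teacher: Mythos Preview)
Your proposal is correct and matches the paper's approach essentially step for step: the paper first expands the vector $\mW(\tau+1)^\top\vp(\tau+1)-\mW(\tau)^\top\vp(\tau)$ into the same three pieces (two first-order cross terms plus the $\alpha^2$ remainder), substitutes \cref{eq:gradient_w,eq:gradient_p}, and then takes inner products with $\vmu_{+1}$, $\vmu_{-1}$, $\vepsilon_u^{(j)}$ to recover the $I$-terms from \cref{def:interaction_term}. The only cosmetic difference is that the paper derives the vector identity once and then dots with each fixed vector, whereas you dot with $\vmu_{+1}$ from the start; the algebra is identical.
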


\begin{proof}[Proof of \cref{lem:update_signal_and_noise_attention}]
From \cref{eq:gradient_w,eq:gradient_p}, we have
\begin{align}
&\mW(\tau+1)^\top \vp(\tau+1) - \mW(\tau)^\top \vp(\tau) \\
&= \mW(\tau)^\top \left( -\alpha \nabla_\vp\widehat{\Ls}(\tau) \right) + \left( - \alpha \nabla_\mW\widehat{\Ls}(\tau) \right)^\top \vp(\tau) + \alpha^2 \left( \nabla_\mW \widehat{\Ls}(\tau) \right)^\top \left( \nabla_\vp\widehat{\Ls}(\tau) \right)  \\
\label{eq:wp_update}
&= \frac{\alpha}{n} \sum_{i=1}^n (-\ell_i^\prime(\tau)) \cdot Y^{(i)} \cdot \left( \sum_{t=1}^T s_t^{(i)}(\tau) \left( \gamma_t^{(i)} - \sum_{u=1}^T s_u^{(i)}(\tau) \gamma_u^{(i)} \right) \left( \mW(\tau)^\top \mW(\tau) + \|\vp(\tau)\|_2^2 \right) \vx_t^{(i)} \right) \nonumber \\ 
&\qquad + \alpha^2 \nabla_{\mW^\top} \widehat{\Ls}(\tau) \nabla_\vp \widehat{\Ls}(\tau).
\end{align}
Following the definitions of $\lambda$ and $\rho$, taking inner products with $\vmu_{+1}$, $\vmu_{-1}$, and $\vepsilon_u^{(j)}$ yields the desired update equations.
The notations defined in \cref{def:interaction_term} are applied here.
\end{proof}

\begin{lemma}[Updates of $\vp$]
\label{lem:update_of_p}
The update of $\|\vp(\tau)\|_2$ is given by
\begin{align*}
\|\vp(\tau+1)\|_2^2 - \|\vp(\tau)\|_2^2
&= \frac{2\alpha}{n} \sum_{i=1}^n (-\ell_i^\prime(\tau)) \cdot Y^{(i)} \cdot I^p_{i}(\tau) + \alpha^2 \| \nabla_\vp \widehat{\Ls}(\tau) \|_2^2.
\end{align*}
\end{lemma}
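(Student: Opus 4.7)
The plan is to expand the squared norm directly from the gradient descent update rule, reducing the proof to an identification of the resulting inner product with the weighted interaction term $I_i^p(\tau)$ from \cref{def:interaction_term}.

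First I would apply the update \cref{eq:p_update} and expand the square, writing
\begin{align*}
\|\vp(\tau+1)\|_2^2
&= \|\vp(\tau) - \alpha \nabla_\vp \widehat{\Ls}(\tau)\|_2^2 \\
&= \|\vp(\tau)\|_2^2 - 2\alpha \langle \vp(\tau), \nabla_\vp \widehat{\Ls}(\tau) \rangle + \alpha^2 \|\nabla_\vp \widehat{\Ls}(\tau)\|_2^2.
\end{align*}
The $\alpha^2$ term already matches the statement, so the remaining task is to identify the linear inner-product term.

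Next I would substitute the explicit gradient formula \cref{eq:gradient_p}, which expresses $\nabla_\vp \widehat{\Ls}(\tau)$ as a weighted sum of $\mW(\tau)\vx_t^{(i)}$ terms with coefficients $\ell_i^\prime(\tau)\cdot Y^{(i)} \cdot s_t^{(i)}(\tau)\bigl(\gamma_t^{(i)} - \sum_u s_u^{(i)}(\tau)\gamma_u^{(i)}\bigr)/n$. Taking the inner product with $\vp(\tau)$ pulls $\vp(\tau)$ inside the sum, producing $\langle \mW(\tau)\vx_t^{(i)}, \vp(\tau)\rangle$ in each summand. By the definition of $I_i^p(\tau)$ in \cref{def:interaction_term}, this inner product equals $\frac{1}{n}\sum_{i=1}^n \ell_i^\prime(\tau)\cdot Y^{(i)} \cdot I_i^p(\tau)$. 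Flipping the sign to match the $-\ell_i^\prime(\tau)$ convention used throughout the paper gives the stated identity.

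There is no real obstacle here: the lemma is a direct algebraic rewriting of one gradient descent step, analogous to the derivation of \cref{lem:update_signal_and_noise_attention} but with $\vp$ itself playing the role of the test vector instead of $\vmu_{\pm 1}$ or $\vepsilon_u^{(j)}$. The only point that warrants a brief sanity check is bookkeeping of signs and the factor of $2$ from the cross term in the square expansion, both of which match the statement. Hence the proof reduces to these two lines of expansion plus invocation of \cref{def:interaction_term}.
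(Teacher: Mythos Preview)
Your proposal is correct and follows essentially the same approach as the paper's proof: expand $\|\vp(\tau+1)\|_2^2$ using the update rule, obtain the cross term $2\langle \vp(\tau), -\alpha \nabla_\vp\widehat{\Ls}(\tau)\rangle$ plus the quadratic term, then substitute \cref{eq:gradient_p} and invoke \cref{def:interaction_term} to identify the cross term as $\frac{2\alpha}{n}\sum_i (-\ell_i^\prime(\tau))\,Y^{(i)}\,I_i^p(\tau)$.
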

\begin{proof}[Proof of \cref{lem:update_of_p}]
From \cref{eq:gradient_p}, we have
\begin{align}
\| \vp(\tau+1) \|_2^2 - \| \vp(\tau) \|_2^2
&= 2 \langle \vp(\tau), - \alpha \nabla_\vp\widehat{\Ls}(\tau) \rangle + \| \alpha \nabla_\vp \widehat{\Ls}(\tau) \|_2^2 \\
\label{eq:norm_p_update}
&= \frac{2\alpha}{n} \sum_{i=1}^n (-\ell_i^\prime(\tau)) \cdot Y^{(i)} \cdot I^p_{i}(\tau) + \alpha^2 \| \nabla_\vp \widehat{\Ls}(\tau) \|_2^2,
\end{align}
where in the last line, we used the notation defined in \cref{def:interaction_term}.
\end{proof}

\begin{lemma}[Updates of $\mW$]
\label{lem:update_of_w}
The updates of the terms related to $\mW(\tau)$ are given by
\begin{align*}
\|\mW(\tau+1) \vmu_{+1} \|_2^2 - \| \mW(\tau) \vmu_{+1} \|_2^2
&= \frac{2\alpha}{n}\sum_{i=1}^n (-\ell_i^\prime(\tau)) \cdot Y^{(i)} \cdot I_{i,+}(\tau) \lambda_{+1}(\tau)
+ \alpha^2 \| \nabla_\mW \widehat{\Ls}(\tau)\vmu_{+1} \|_2^2, \\
\|\mW(\tau+1) \vmu_{-1} \|_2^2 - \| \mW(\tau) \vmu_{-1} \|_2^2
&= \frac{2\alpha}{n}\sum_{i=1}^n (-\ell_i^\prime(\tau)) \cdot Y^{(i)} \cdot I_{i,-}(\tau) \lambda_{-1}(\tau)
+ \alpha^2 \| \nabla_\mW \widehat{\Ls}(\tau)\vmu_{-1} \|_2^2, \\
\|\mW(\tau+1) \vepsilon_u^{(j)} \|_2^2 - \| \mW(\tau) \vepsilon_u^{(j)} \|_2^2
&= \frac{2\alpha}{n}\sum_{i=1}^n (-\ell_i^\prime(\tau)) \cdot Y^{(i)} \cdot I_{i,j,u}(\tau) \rho_{j,u} (\tau)
+ \alpha^2 \| \nabla_\mW \widehat{\Ls}(\tau)\vepsilon_u^{(j)} \|_2^2,
\end{align*}
for any $j \in [n], u \in [T]$.
Additionally, we have
\begin{align*}
&\langle \mW(\tau+1) \vmu_{+1}, \mW(\tau+1) \vmu_{-1} \rangle - \langle \mW(\tau) \vmu_{+1}, \mW(\tau) \vmu_{-1} \rangle  \nonumber \\
&= \frac{\alpha}{n}\sum_{i=1}^n (-\ell_i^\prime(\tau)) \cdot Y^{(i)} \cdot \left( I_{i,-}(\tau) \lambda_{+1}(\tau) + I_{i,+}(\tau) \lambda_{-1}(\tau) \right)
+  \alpha^2 \langle \nabla_\mW \widehat{\Ls}(\tau) \vmu_{+1}, \nabla_\mW \widehat{\Ls}(\tau) \vmu_{-1} \rangle, \\
&\langle \mW(\tau+1) \vmu_{+1}, \mW(\tau+1) \vepsilon_u^{(j)} \rangle - \langle \mW(\tau) \vmu_{+1}, \mW(\tau) \vepsilon_u^{(j)} \rangle  \nonumber \\
&= \frac{\alpha}{n}\sum_{i=1}^n (-\ell_i^\prime(\tau)) \cdot Y^{(i)} \cdot \left( I_{i,j,u}(\tau) \lambda_{+1}(\tau) + I_{i,+}(\tau) \rho_{j,u}(\tau) \right)
+ \alpha^2 \langle \nabla_\mW \widehat{\Ls}(\tau) \vmu_{+1}, \nabla_\mW \widehat{\Ls}(\tau) \vepsilon_u^{(j)} \rangle, \\
&\langle \mW(\tau+1) \vmu_{-1}, \mW(\tau+1) \vepsilon_u^{(j)} \rangle - \langle \mW(\tau) \vmu_{-1}, \mW(\tau) \vepsilon_u^{(j)} \rangle  \nonumber \\
&= \frac{\alpha}{n}\sum_{i=1}^n (-\ell_i^\prime(\tau)) \cdot Y^{(i)} \cdot \left( I_{i,j,u}(\tau) \lambda_{-1}(\tau) + I_{i,-}(\tau) \rho_{j,u}(\tau) \right)
+ \alpha^2 \langle \nabla_\mW \widehat{\Ls}(\tau) \vmu_{-1}, \nabla_\mW \widehat{\Ls}(\tau) \vepsilon_u^{(j)} \rangle, \\
&\langle \mW(\tau+1) \vepsilon_u^{(j)}, \mW(\tau+1) \vepsilon_v^{(k)} \rangle - \langle \mW(\tau) \vepsilon_u^{(j)}, \mW(\tau) \vepsilon_v^{(k)} \rangle  \nonumber \\
&= \frac{\alpha}{n}\sum_{i=1}^n (-\ell_i^\prime(\tau)) \cdot Y^{(i)} \cdot \left( I_{i,k,v}(\tau) \rho_{j,u} (\tau) + I_{i,j,u}(\tau) \rho_{k,v}(\tau) \right)
+ \alpha^2 \langle \nabla_\mW \widehat{\Ls}(\tau) \vepsilon_u^{(j)}, \nabla_\mW \widehat{\Ls}(\tau) \vepsilon_v^{(k)} \rangle,
\end{align*}
for any $j, k \in [n], u, v \in [T], (j,u) \neq (k,v)$.
\end{lemma}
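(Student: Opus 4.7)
The lemma is a direct computational consequence of the gradient descent update $\mW(\tau+1)=\mW(\tau)-\alpha\nabla_\mW\widehat{\Ls}(\tau)$ combined with the explicit form of the gradient given in \cref{eq:gradient_w}. My plan is to apply this update inside each squared norm or inner product, expand the quadratic, and then match the resulting linear term against the weighted interaction quantities introduced in \cref{def:interaction_term}.

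Concretely, for any fixed vector $\va \in \R^d$, I will start from the identity
\begin{align*}
\|\mW(\tau+1)\va\|_2^2
&= \|\mW(\tau)\va\|_2^2
- 2\alpha\,\langle \mW(\tau)\va,\nabla_\mW\widehat{\Ls}(\tau)\va\rangle
+ \alpha^2\,\|\nabla_\mW\widehat{\Ls}(\tau)\va\|_2^2.
\end{align*}
Using the outer-product structure $\nabla_{\mW^\top}\widehat{\Ls}(\tau)=\frac{1}{n}\sum_i \ell_i'(\tau)Y^{(i)}\sum_t s_t^{(i)}(\tau)\bigl(\gamma_t^{(i)}-\sum_u s_u^{(i)}(\tau)\gamma_u^{(i)}\bigr)\vx_t^{(i)}\vp(\tau)^\top$ from \cref{eq:gradient_w}, the factor $\nabla_\mW\widehat{\Ls}(\tau)\va$ becomes $\vp(\tau)$ multiplied by a scalar of the form $\frac{1}{n}\sum_i \ell_i'(\tau)Y^{(i)}\sum_t s_t^{(i)}(\tau)(\gamma_t^{(i)}-\sum_u s_u^{(i)}(\tau)\gamma_u^{(i)})\langle\vx_t^{(i)},\va\rangle$. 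Taking the inner product with $\mW(\tau)\va$ then produces the factor $\langle\mW(\tau)\va,\vp(\tau)\rangle$, which by \cref{def:attention_signal_and_noise} equals $\lambda_{+1}(\tau)$, $\lambda_{-1}(\tau)$, or $\rho_{j,u}(\tau)$ for the choices $\va=\vmu_{+1}$, $\vmu_{-1}$, or $\vepsilon_u^{(j)}$ respectively. The remaining scalar is precisely $\frac{1}{n}\sum_i \ell_i'(\tau)Y^{(i)} I_{i,\cdot}(\tau)$ for the corresponding interaction term from \cref{def:interaction_term}. After substituting $-\ell_i'(\tau)$ to absorb the minus sign, this yields the three squared-norm identities.

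For the six inner-product identities I will perform the same expansion on two vectors $\va,\vb$ simultaneously, obtaining
\begin{align*}
\langle \mW(\tau+1)\va,\mW(\tau+1)\vb\rangle
&= \langle \mW(\tau)\va,\mW(\tau)\vb\rangle
- \alpha\,\langle \mW(\tau)\va,\nabla_\mW\widehat{\Ls}(\tau)\vb\rangle \\
&\quad - \alpha\,\langle \nabla_\mW\widehat{\Ls}(\tau)\va,\mW(\tau)\vb\rangle
+ \alpha^2\,\langle \nabla_\mW\widehat{\Ls}(\tau)\va,\nabla_\mW\widehat{\Ls}(\tau)\vb\rangle.
\end{align*}
The two cross terms contribute $I_{i,\cdot}(\tau)$ evaluated at $\vb$ multiplied by the attention score for $\va$, and vice versa, accounting for the symmetric form $I_{i,\cdot}(\tau)\lambda_{\cdot}(\tau)+I_{i,\cdot}(\tau)\lambda_{\cdot}(\tau)$ shown in the statement. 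Specializing $(\va,\vb)$ to the three pairs $(\vmu_{+1},\vmu_{-1})$, $(\vmu_{+1},\vepsilon_u^{(j)})$, $(\vmu_{-1},\vepsilon_u^{(j)})$, and $(\vepsilon_u^{(j)},\vepsilon_v^{(k)})$ produces the four inner-product identities.

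There is essentially no analytic obstacle here, only a bookkeeping one: it is important to keep track of which vector each $\langle \mW(\tau)\cdot,\vp(\tau)\rangle$ factor arises from so that the right attention score $\lambda$ or $\rho$ gets paired with the right interaction term $I$, and to verify that the quadratic $\alpha^2$ remainder is simply $\langle \nabla_\mW\widehat{\Ls}(\tau)\va,\nabla_\mW\widehat{\Ls}(\tau)\vb\rangle$ with no simplification needed. Since everything follows from expanding one gradient step and relabeling according to Definitions~\ref{def:attention_signal_and_noise} and~\ref{def:interaction_term}, this is a direct computation rather than a result requiring new techniques.
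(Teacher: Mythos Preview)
Your proposal is correct and matches the paper's proof essentially line for line: the paper also expands $\|\mW(\tau+1)\va\|_2^2$ (and the analogous inner products) using the gradient-descent update, substitutes the outer-product form of $\nabla_{\mW^\top}\widehat{\Ls}$ from \cref{eq:gradient_w}, and then identifies the resulting scalars with $I_{i,\cdot}(\tau)$ and $\lambda_{\cdot}(\tau)$ or $\rho_{\cdot}(\tau)$ via Definitions~\ref{def:attention_signal_and_noise} and~\ref{def:interaction_term}. Aside from a couple of minor counting slips in your write-up (you say ``six'' and then ``three pairs'' where there are four inner-product identities), there is nothing to add.
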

\begin{proof}[Proof of \cref{lem:update_of_w}]
From \cref{eq:gradient_w}, we have
\begin{align}
& \|\mW(\tau+1) \vmu_{+1} \|_2^2 - \| \mW(\tau) \vmu_{+1} \|_2^2 \nonumber \\
&= 2\alpha \langle \mW(\tau)\vmu_{+1}, -\alpha \nabla_\mW \widehat{\Ls}(\tau) \vmu_{+1} \rangle 
+ \| \alpha \nabla_\mW \widehat{\Ls}(\tau)\vmu_{+1} \|_2^2 \\
&= \frac{2\alpha}{n} \sum_{i=1}^n (-\ell_i^\prime(\tau)) \cdot Y^{(i)} \cdot \left( \sum_{t=1}^T s^{(i)}_t \left( \gamma_t^{(i)} - \sum_{u=1}^T s^{(i)}_u \gamma_u^{(i)} \right) \cdot \left( \vmu_{+1}^\top \vx_t^{(i)} \right) \cdot \vp(\tau)^\top \mW(\tau) \vmu_{+1} \right)
+ \alpha^2 \| \nabla_\mW \widehat{\Ls}(\tau)\vmu_{+1} \|_2^2 \\
&= \frac{2\alpha }{n}\sum_{i=1}^n (-\ell_i^\prime(\tau)) \cdot Y^{(i)} \cdot I_{i,+}(\tau) \lambda_{+1}(\tau)
+ \alpha^2 \| \nabla_\mW \widehat{\Ls}(\tau)\vmu_{+1} \|_2^2,
\end{align}
where in the last line, we used the notations introduced in \cref{def:attention_signal_and_noise,def:interaction_term}.
The other two equations are shown in a similar manner.
In the same way, we have
\begin{align}
& \langle \mW(\tau+1) \vmu_{+1}, \mW(\tau+1) \vmu_{-1} \rangle - \langle \mW(\tau) \vmu_{+1}, \mW(\tau) \vmu_{-1} \rangle \nonumber \\
&= \langle \mW(\tau)\vmu_{+1}, -\alpha \nabla_\mW \widehat{\Ls}(\tau) \vmu_{-1} \rangle 
+ \langle \mW(\tau)\vmu_{-1}, -\alpha \nabla_\mW \widehat{\Ls}(\tau) \vmu_{+1} \rangle
+ \langle \alpha \nabla_\mW \widehat{\Ls}(\tau) \vmu_{+1}, \alpha \nabla_\mW \widehat{\Ls}(\tau) \vmu_{-1} \rangle  \\
&= \frac{\alpha}{n} \sum_{i=1}^n (-\ell_i^\prime(\tau)) \cdot Y^{(i)} \cdot \left( \sum_{t=1}^T s^{(i)}_t \left( \gamma_t^{(i)} - \sum_{u=1}^T s^{(i)}_u \gamma_u^{(i)} \right) \cdot \left( \vmu_{-1}^\top \vx_t^{(i)} \right) \cdot \vp(\tau)^\top \mW(\tau) \vmu_{+1} \right) \nonumber \\
&\qquad + \frac{\alpha}{n} \sum_{i=1}^n (-\ell_i^\prime(\tau)) \cdot Y^{(i)} \cdot \left( \sum_{t=1}^T s^{(i)}_t \left( \gamma_t^{(i)} - \sum_{u=1}^T s^{(i)}_u \gamma_u^{(i)} \right) \cdot \left( \vmu_{+1}^\top \vx_t^{(i)} \right) \cdot \vp(\tau)^\top \mW(\tau) \vmu_{-1} \right) \nonumber \\
&\qquad + \alpha^2 \langle \nabla_\mW \widehat{\Ls}(\tau) \vmu_{+1}, \nabla_\mW \widehat{\Ls}(\tau) \vmu_{-1} \rangle \\
&= \frac{\alpha}{n} \sum_{i=1}^n (-\ell_i^\prime(\tau)) \cdot Y^{(i)} \cdot \left( I_{i,-}(\tau) \lambda_{+1}(\tau) + I_{i,+}(\tau) \lambda_{-1}(\tau) \right)
+ \alpha^2 \langle \nabla_\mW \widehat{\Ls}(\tau) \vmu_{+1}, \nabla_\mW \widehat{\Ls}(\tau) \vmu_{-1} \rangle.
\end{align}
The other equations are derived as well.
\end{proof}

In the following, we prepare the lemmas concerning the weight initialization.
\begin{lemma}[Softmax probabilities at initialization]
\label{lem:softmax_probability_initialization}
Under the parameter assumptions in \cref{sec:assumption} and on a good run, we have that for any constant $c > 1$,
\begin{align*}
\frac{1-c^{-1}}{T} \leq s^{(i)}_t(0) \leq \frac{1+c^{-1}}{T},
\end{align*}
for all $i \in [n]$ and $t \in [T]$.
\end{lemma}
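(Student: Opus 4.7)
The plan is to show that every softmax logit $z_t^{(i)} := \vx_t^{(i)\top}\mW(0)^\top \vp(0)$ at initialization has magnitude $o(1)$, which forces the softmax output to lie within a $(1 \pm o(1))$ factor of the uniform distribution $1/T$.

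First, I would decompose each token according to \cref{def:data} as $\vx_t^{(i)} = a_t\vmu_{+1} + b_t\vmu_{-1} + \vepsilon_t^{(i)}$, where $(a_t,b_t)$ is one of $(1,0)$, $(0,1)$, $(\rho,0)$, $(0,\rho)$, or $(0,0)$, depending on whether $t$ is relevant, weakly relevant (possibly confusing), or irrelevant. This gives
\begin{align*}
z_t^{(i)} = a_t\langle \mW(0)\vmu_{+1},\vp(0)\rangle + b_t\langle \mW(0)\vmu_{-1},\vp(0)\rangle + \langle \mW(0)\vepsilon_t^{(i)},\vp(0)\rangle.
\end{align*}
Next, I would invoke the inner-product bounds \eqref{eq:wp_mu1_init}, \eqref{eq:wp_mu-1_init}, and \eqref{eq:wp_epsilon_init} from \cref{lem:good_run}\ref{enum:good_run_noise_inner_product}, together with $|a_t|,|b_t| \leq 1$ (by \cref{assump_rho}, $\rho \leq 1/C$), to obtain the uniform bound
\begin{align*}
|z_t^{(i)}| \;\lesssim\; \sigma_w\sigma_p\bigl(\|\vmu\|_2\sqrt{d} + \sigma_\epsilon d\bigr)\log(Tn/\delta) \;\lesssim\; \sigma_w\sigma_p\,\max\{\|\vmu\|_2\sqrt{d},\sigma_\epsilon d\}\,\log(Tn/\delta).
\end{align*}

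Then I would use AM--GM, $\sigma_w\sigma_p \leq \tfrac{1}{2}(\sigma_w^2 + \sigma_p^2) \leq \max\{\sigma_w^2,\sigma_p^2\}$, and plug in \cref{assump_variance_wp}. The right-hand side collapses to $O(\log^{-1}(Tn/\delta))$. Under \cref{assump_n,assump_d}, $\log(Tn/\delta)$ can be made larger than any prescribed constant, so the resulting bound is $\epsilon := o(1)$ and in particular can be made as small as any function of the constant $c > 1$. Finally, writing $s_t^{(i)}(0) = e^{z_t^{(i)}}/\sum_u e^{z_u^{(i)}}$ and using $|z_t^{(i)}| \leq \epsilon$ for every $t$, a one-line estimate yields
\begin{align*}
s_t^{(i)}(0) \;\in\; \left[\tfrac{e^{-2\epsilon}}{T},\tfrac{e^{2\epsilon}}{T}\right] \;\subseteq\; \left[\tfrac{1-c^{-1}}{T},\tfrac{1+c^{-1}}{T}\right],
\end{align*}
for $\epsilon$ small enough, which completes the proof after a union bound over $i\in[n], t\in[T]$ that is already absorbed into \cref{lem:good_run}.

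There is no genuine obstacle here; the lemma is essentially a sanity check that \cref{assump_variance_wp} is calibrated correctly. The only care needed is the bookkeeping step of verifying that the $\log^{-2}(Tn/\delta)$ factor inherent in $\sigma_w^2,\sigma_p^2$ strictly dominates the single $\log(Tn/\delta)$ factor arising from the Gaussian concentration bounds, so that the final logit bound is truly $o(1)$ rather than merely bounded.
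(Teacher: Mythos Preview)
Your proposal is correct and follows essentially the same route as the paper: bound the logits $\vx_t^{(i)\top}\mW(0)^\top\vp(0)$ (the paper bounds their pairwise differences, which is equivalent) using the inner-product concentration \eqref{eq:wp_mu1_init}--\eqref{eq:wp_epsilon_init}, then invoke Assumption~\ref{assump_variance_wp} to conclude the bound is $O(\log^{-1}(Tn/\delta))=o(1)$, forcing the softmax to be near-uniform. Your explicit token decomposition and the AM--GM step are cosmetic additions; the paper's proof is slightly terser but identical in substance.
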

\begin{proof}[Proof of \cref{lem:softmax_probability_initialization}]
By definition, we have
\begin{align}
s^{(i)}_t(0)
= \frac{ \exp \left( \vx_t^{(i)\top}\mW(0)^\top \vp(0) \right) }{\sum_{u = 1}^T \exp \left( \vx_u^{(i)\top }\mW(0)^\top \vp(0) \right)}
\leq \frac{1}{T} \max_{u \in [T]} \left\{ \exp \left( \left( \vx_t^{(i)} - \vx_u^{(i)} \right)^\top \mW(0)^\top \vp(0) \right) \right\}.
\end{align}
From \cref{eq:wp_mu1_init,eq:wp_mu-1_init,eq:wp_epsilon_init}, we have
\begin{align}
\max_{u \in [T]} \left\{ \left( \vx_t^{(i)} - \vx_u^{(i)} \right)^\top \mW(0)^\top \vp(0) \right\}
&\leq 3 c_1 \sigma_w \sigma_p  \max\left\{ \|\vmu\|_2 \sqrt{d}, (1 + o(1)) \sigma_\epsilon d \right\} \log(Tn/\delta) \\
&\leq \log \left( 1 + c^{-1} \right),
\end{align}
where the last inequality follows from Assumption~\ref{assump_variance_wp}.
The same argument is applied to the lower bound, which completes the proof.
\end{proof}

Using a similar argument to that in \cref{lem:softmax_probability_initialization}, we confirm that the following lemma holds.
\begin{lemma}[Attention gap at initialization]
\label{lem:attention_gap_initialization}
Under the parameter assumptions in \cref{sec:assumption} and on a good run, the attention gaps introduced in \cref{def:attention_gap} at time step $\tau = 0$ are bounded as follows:
\begin{align*}
|\Lambda_{i,t}(0) | = o(1), \ 
|\Gamma_{i,u}(0) | = o(1),
\end{align*}
for all $i \in [n], t \in [T] \setminus \{1\}$, and $u \in [T] \setminus \{2\}$.
\end{lemma}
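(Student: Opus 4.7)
The plan is to expand each attention gap into the basic atoms $\lambda_{+1}(0)$, $\lambda_{-1}(0)$, and $\rho_{i,t}(0)$ introduced in \cref{def:attention_signal_and_noise}, and then bound each atom directly using the initialization inner-product bounds \eqref{eq:wp_mu1_init}, \eqref{eq:wp_mu-1_init}, and \eqref{eq:wp_epsilon_init} from \cref{lem:good_run}. Combined with the variance scaling in Assumption~\ref{assump_variance_wp}, each atom will turn out to be $O(\log^{-1}(Tn/\delta)) = o(1)$, and since $T = \Theta(1)$ each attention gap is a constant-length linear combination of atoms with bounded coefficients.

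Concretely, first I would write, for each of the cases listed just after \cref{def:attention_gap} (relevant token in $\gR$, weakly relevant tokens in $\gW_{\pm 1}$, and irrelevant tokens in $\gI$), the explicit expansion of $\Lambda_{i,t}(0)$ and $\Gamma_{i,u}(0)$. In every case the expansion has the form $a\lambda_{+1}(0) + b\lambda_{-1}(0) + \rho_{i,1}(0) - \rho_{i,t}(0)$ (or with the role of the first token replaced by the second for $\Gamma$), where $a,b \in \{-1,-\rho,0,\rho,1-\rho,1\}$; since Assumption~\ref{assump_rho} gives $\rho \le 1/C$, these coefficients are all $O(1)$.

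Next I would bound the atoms. By \eqref{eq:wp_mu1_init}, \eqref{eq:wp_mu-1_init}, and \eqref{eq:wp_epsilon_init}, on a good run we have
\begin{align*}
|\lambda_{\pm 1}(0)| &\le c_1\,\sigma_w\sigma_p\|\vmu\|_2\sqrt{d}\,\log(Tn/\delta),\\
|\rho_{i,t}(0)| &\le c_1(1+o(1))\,\sigma_w\sigma_p\sigma_\epsilon d\,\log(Tn/\delta).
\end{align*}
Assumption~\ref{assump_variance_wp} gives $\sigma_w\sigma_p = \Theta\!\left(\max\{\|\vmu\|_2\sqrt{d},\sigma_\epsilon d\}^{-1}\log^{-2}(Tn/\delta)\right)$, so multiplying by $\|\vmu\|_2\sqrt{d}$ (respectively $\sigma_\epsilon d$) and one factor of $\log(Tn/\delta)$ yields $|\lambda_{\pm 1}(0)|, |\rho_{i,t}(0)| = O(\log^{-1}(Tn/\delta)) = o(1)$.

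Finally, I would apply the triangle inequality to the explicit expansions above: since the number of summands is bounded by $T+1 = \Theta(1)$ and each coefficient is $O(1)$, we conclude $|\Lambda_{i,t}(0)| = o(1)$ and $|\Gamma_{i,u}(0)| = o(1)$ uniformly over $i \in [n]$, $t \in [T]\setminus\{1\}$, and $u \in [T]\setminus\{2\}$. There is no real obstacle here; the only thing to be careful about is checking that Assumption~\ref{assump_variance_wp} kills \emph{both} the signal atom (size $\|\vmu\|_2\sqrt{d}$) and the noise atom (size $\sigma_\epsilon d$) simultaneously, which is precisely why the assumption takes the $\max$ of the two scales.
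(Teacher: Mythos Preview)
Your proposal is correct and follows essentially the same approach as the paper: decompose $\Lambda_{i,t}(0)$ and $\Gamma_{i,u}(0)$ into the atoms $\lambda_{\pm 1}(0)$ and $\rho_{i,t}(0)$, bound each atom using \eqref{eq:wp_mu1_init}--\eqref{eq:wp_epsilon_init}, and invoke Assumption~\ref{assump_variance_wp} to obtain $o(1)$. The paper's version is slightly more compact (it bounds by $3\max\{|\lambda_{+1}(0)|,|\lambda_{-1}(0)|,\max_{i,t}|\rho_{i,t}(0)|\}$ rather than enumerating the token-type cases), but the substance is identical.
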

\begin{proof}[Proof of \cref{lem:attention_gap_initialization}]
We show the case of $t \in \gI^{(i)}$.
Since $\rho < 1/C$ from the parameter assumptions, we have the same result for $t \in \gW_{+1}^{(i)}$ and $t \in \gW_{-1}^{(i)}$.
By \cref{def:attention_gap}, we have
\begin{align}
| \Lambda_{i,t}(0) | \leq 3 \max \left\{ | \lambda_{+1}(0) |, | \lambda_{-1}(0) |, \max_{i\in[n],t\in[T]}\{ | \rho_{i,t}(0) | \} \right\}.
\end{align}
From \cref{eq:wp_mu1_init,eq:wp_mu-1_init,eq:wp_epsilon_init}, we have
\begin{align}
\max \left\{ | \lambda_{+1}(0) |, | \lambda_{-1}(0) |, \max_{i\in[n],t\in[T]}\{ | \rho_{i,t}(0) | \} \right\}
&\leq c_1 \sigma_w \sigma_p  \max\left\{ \|\vmu\|_2 \sqrt{d}, (1 + 1/C^\prime) \sigma_\epsilon d \right\} \log(Tn/\delta) = o(1),
\end{align}
where the last inequality follows from Assumption~\ref{assump_variance_wp}.
The exact same argument is applied to $\Gamma_{i,u}(0)$.
This concludes the proof.
\end{proof}

Next, we show that the ratio of the loss derivative can be bounded by a constant.
As a byproduct, we obtain that the loss derivative itself is bounded by a constant from both sides.
\begin{lemma}[Ratio of loss derivative]
\label{lem:ratio_loss_derivative}
Suppose that the assumptions in \cref{thm:convergence} are satisfied.
There exists an absolute constant $c_\ell > 0$ such that on a good run,
we have that for all time step $\tau \geq 0$,
\begin{align*}
\max_{i, j \in [n]} \frac{ \ell_i^\prime(\tau) }{ \ell_j^\prime(\tau) } < c_\ell.
\end{align*}
More strongly, $-\ell_i^\prime(\tau)$ is bounded both above and below by positive constants, uniformly over all $i \in [n]$ and $\tau \geq 0$.
\end{lemma}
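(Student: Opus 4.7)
The plan is to show that the model output $f_\tau(\mX^{(i)})$ is bounded by an absolute constant uniformly in $i\in[n]$ and $\tau\ge 0$, from which everything follows by a simple analysis of the sigmoid-like derivative of the cross-entropy loss. First, recall that $\ell(z)=\log(1+\exp(-z))$ gives
\begin{align*}
-\ell'(z)=\frac{1}{1+\exp(z)}=\sigma(-z)\in(0,1),
\end{align*}
so $-\ell_i'(\tau)=\sigma\bigl(-Y^{(i)}f_\tau(\mX^{(i)})\bigr)$. Hence, to bound both $-\ell_i'(\tau)$ from above and below by positive constants, it suffices to establish an absolute constant $M>0$ such that $|f_\tau(\mX^{(i)})|\le M$ uniformly. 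Once this holds, $-\ell_i'(\tau)\in\bigl[\tfrac{1}{1+e^{M}},\tfrac{1}{1+e^{-M}}\bigr]$, and the ratio bound follows with $c_\ell=(1+e^{M})/(1+e^{-M})$, a universal constant.

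The key observation for the uniform boundedness of $f_\tau$ is that by \cref{eq:f}, $f_\tau(\mX^{(i)})=\sum_{t\in[T]} s_t^{(i)}(\tau)\gamma_t^{(i)}$ is a \emph{convex combination} of the token scores $\gamma_t^{(i)}=\vnu^\top\vx_t^{(i)}$, because $\vs^{(i)}(\tau)$ is a probability vector for every $\tau$. Therefore, $|f_\tau(\mX^{(i)})|\le \max_{t\in[T]}|\gamma_t^{(i)}|$, a bound that is independent of the current iterate $(\mW(\tau),\vp(\tau))$. This reduces the entire claim to a purely data-dependent bound on the token scores on a good run.

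Next, I would invoke \cref{lem:token_score} to control each $|\gamma_t^{(i)}|$. For relevant and weakly relevant tokens, the bound is $O(\|\vnu\|_2\|\vmu\|_2)$, and under the theorem's hypothesis $\|\vnu\|_2=O(1/\|\vmu\|_2)$ this becomes $O(1)$. For irrelevant tokens, the bound is $O(\sigma_\epsilon\|\vnu\|_2\sqrt{\log(Tn/\delta)})$, which is $O\bigl(\sigma_\epsilon\sqrt{\log(Tn/\delta)}/\|\vmu\|_2\bigr)$; Assumption~\ref{assump_signal} forces $\|\vmu\|_2\ge C\sigma_\epsilon d^{3/8}\log(Tn/\delta)$, so this quantity is $o(1)$. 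Taking a maximum over the (constantly many, by Assumption~\ref{assump_t}) token types yields an absolute constant $M$ with $\max_{t\in[T],i\in[n]}|\gamma_t^{(i)}|\le M$ on a good run.

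I do not foresee a substantive obstacle here: the argument is a direct consequence of (i) the convex-combination structure of the attention output, (ii) the pretrained-head assumption $\|\vnu\|_2=O(1/\|\vmu\|_2)$, and (iii) the high-probability token-score bounds already established in \cref{lem:token_score}. The only point that deserves care is to verify that the bound $M$ does not implicitly depend on $n$, $d$, or $\|\vmu\|_2$; this is where the hypothesis $\|\vnu\|_2=O(1/\|\vmu\|_2)$ is essential, as it precisely cancels the $\|\vmu\|_2$ factor in the dominant term of the token score. Once this is checked, plugging the uniform bound $|Y^{(i)}f_\tau(\mX^{(i)})|\le M$ into the explicit formula for $-\ell'$ completes the proof.
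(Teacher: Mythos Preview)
Your proposal is correct and follows essentially the same route as the paper: bound each token score by an absolute constant via \cref{lem:token_score} together with $\|\vnu\|_2=O(1/\|\vmu\|_2)$, use that $f_\tau(\mX^{(i)})$ is a convex combination of these scores to get a uniform bound on the margin, and read off two-sided constant bounds on $-\ell_i'(\tau)$ from the sigmoid form of the loss derivative. The only cosmetic difference is that you make the convex-combination step explicit, whereas the paper jumps directly from $|\gamma_t^{(i)}|<c$ to the bound on $-\ell_i'(\tau)$.
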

\begin{proof}[Proof of \cref{lem:ratio_loss_derivative}]

Recall that the derivative of the loss function is given by
\begin{align}
- \ell_{i}^\prime (\tau) = \frac{1}{1 + \exp \left( - Y^{(i)} \cdot \sum_{t=1}^T s^{(i)}_t(\tau) \gamma_t^{(i)} \right) },
\end{align}
for $i \in [n], \tau \geq 0$.
On a good run, Combining \cref{lem:token_score} and $\|\vnu\|_2 = O(1/\|\vmu\|_2)$ gives the following token scores:
\begin{align}
|\gamma_1^{(i)}| = O(1), \ 
|\gamma_t^{(i)}| = O(\rho), \ 
|\gamma_u^{(i)}| = o(1),
\end{align}
for all $i \in [n]$, $t \in \gW^{(i)}$, and $u \in \gI^{(i)}$.
Thus, there exists some constant $c > 0$ such that $|\gamma^{(i)}_t| < c$ for any $i \in [n]$ and $t \in [T]$.
Since $1/(1 + \exp(-x))$ is monotonically decreasing, we have
\begin{align}
\label{eq:loss_derivative_bound}
\frac{1}{1 + \exp(c)} < -\ell_{i}^\prime (\tau) < \frac{1}{1 + \exp(-c)}.
\end{align}
This leads to the conclusion with the constant $c_\ell = 1 + \exp(c) / \left( 1 + \exp(-c) \right)$.
\end{proof}

\begin{remark}[Ratio of loss derivative]
This lemma, which shows that the gradients of loss function for clean data and noisy data remain within a constant factor of each other at every time step, is a critical component of the proof in the existing analyses of linear classifiers and two-layer neural networks \citep{chatterji2021finite, frei2022benign, xu2023benign}.
However, in the learning of token selection, the output is always an affine combination of the token scores, and the output scale is not changed.
Therefore, the training dynamics need not be considered as long as the balance of the loss derivatives in the token scores is maintained.
To ensure that the derivative of the loss function for each token remains within a constant factor, a small linear head scale, as described in Lemma~\ref{lem:ratio_loss_derivative}, is required.
If the scale of the linear head is too large, little gradient will be generated for clean data even at the initial weights, and learning the signal vectors will not progress.
\end{remark}

\subsection[Analysis of Not Overfitting Case]{Analysis of Not Overfitting Case ($\snr^2 = \omega(n^{-1})$)}
\label{sec:proof_not_overfitting}
In this section, we provide the proof for the not-overfitting case in the main theorem.
We first present the main lemma using mathematical induction and then proceed to prove the main claim.

\subsubsection{Main Lemma}
\begin{lemma}
\label{lem:induction_not_overfitting}
Suppose that the signal-to-noise ratio satisfies $\snr^2 = \omega(n^{-1})$.
For any time step $T_1 = \Theta\left( \frac{1}{\alpha \|\vnu\|_2 \|\vmu\|_2^3 d \max\{ \sigma_w^2, \sigma_p^2 \} } \right)$, on a good run, the following propositions hold for all time step $\tau \in [0, T_1]$:
\begin{description}
\item[$A(\tau)$: ]
There exists a constant $C_1 > 1$ such that
\begin{gather*}
\left(1 - 1/C_1 \right) \cdot \sigma_p \sqrt{d}
\leq \|\vp(\tau)\|_2
\leq \left( 1 + 1/C_1 \right) \cdot \sigma_p \sqrt{d}.
\end{gather*}

\item[$B(\tau)$: ] 
There exists a constant $C_2 > 1$ such that
\begin{gather*}
\left(1 - 1/C_2 \right) \cdot \sigma_w \|\vmu\|_2 \sqrt{d}
\leq \|\mW(\tau) \vmu_{+1} \|_2 
\leq \left(1 + 1/C_2 \right) \cdot \sigma_w \|\vmu\|_2 \sqrt{d}, \\ 
\left(1 - 1/C_2 \right) \cdot \sigma_w \|\vmu\|_2 \sqrt{d}
\leq \|\mW(\tau) \vmu_{-1} \|_2 
\leq \left(1 + 1/C_2 \right) \cdot \sigma_w \|\vmu\|_2 \sqrt{d}, \\ 
\left(1 - 1/C_2 \right) \cdot \sigma_w \sigma_\epsilon d
\leq \|\mW(\tau) \vepsilon_t^{(i)} \|_2 
\leq \left(1 + 1/C_2 \right) \cdot \sigma_w \sigma_\epsilon d,
\end{gather*}
for all $i \in [n]$ and $t \in [T]$.
\begin{align*}
|\langle \mW(\tau) \vmu_{+1}, \mW(\tau) \vmu_{-1} \rangle| &< O( \sigma_w^2 \|\vmu\|_2^2 \sqrt{d} \log(Tn/\delta) ), \\
|\langle \mW(\tau) \vmu_{+1}, \mW(\tau) \vepsilon_t^{(i)} \rangle| &< O( \sigma_w^2 \sigma_\epsilon \|\vmu\|_2 d \log(Tn/\delta) ), \\
|\langle \mW(\tau) \vmu_{-1}, \mW(\tau) \vepsilon_t^{(i)} \rangle| &< O( \sigma_w^2 \sigma_\epsilon \|\vmu\|_2 d \log(Tn/\delta) ), \\
|\langle \mW(\tau) \vepsilon_{t}^{(i)}, \mW(\tau) \vepsilon_{u}^{(j)} \rangle| &< O( \sigma_w^2 \sigma_\epsilon^2 d^{3/2} \log(Tn/\delta) ),
\end{align*}
for all $i, j \in [n]$ and $t,u \in [T]$ such that $(i,t) \neq (j,u)$.

\item[$C(\tau)$: ] 
Let $g$ be a monotonically increasing function $g(x) = 2x + 2\sinh\left( x - \log T \right)$.
Then, there exist constants $c_3, c_4 > 0$ with $c_3 < c_4$ such that
\begin{align*}
g\left( \Lambda_{i,t}(\tau) \right)
&\geq 
g\left( \Lambda_{i,t}(0) \right)
+ \tau \cdot c_3 \alpha \|\vnu\|_2 \|\vmu\|_2^3 d \max\{ \sigma_w^2, \sigma_p^2 \}, \\
g\left( \Lambda_{i,t}(\tau) \right)
&\leq 
g\left( \Lambda_{i,t}(0) \right)
+ \tau \cdot c_4 \alpha \|\vnu\|_2 \|\vmu\|_2^3 d \max\{ \sigma_w^2, \sigma_p^2 \},
\end{align*}
for any $i \in [n]$ and $t \in [T] \setminus \{1\}$.

\item[$D(\tau)$: ] 
\begin{gather*}
s_1^{(i)}(\tau) \geq \Theta(1),
\end{gather*}
for all $i \in [n]$.

\item[$E(\tau)$: ] 
There exists a constant $c_5 > 1$ such that
\begin{align*}
\exp\left( \Lambda_{i,t}(\tau) - \Lambda_{j, u}(\tau) \right) < c_5,
\end{align*}
for all $i, j \in [n]$ and $t, u \in [T] \setminus \{1\}$.

\item[$F(\tau)$: ] 
There exist constants $c_6, c_7 > 0$ such that
\begin{gather*}
\max_{i, j \in [n]} \frac{s_1^{(i)}(\tau) (1 - s_1^{(i)}(\tau))}{s_1^{(j)}(\tau) (1 - s_1^{(j)}(\tau))} < c_6,
\end{gather*}
and
\begin{gather*}
\max_{t, u \in [T] \setminus \{1\}} \frac{s_t^{(i)}(\tau)}{s_u^{(i)}(\tau)} < c_6, \ 
\frac{s_t^{(i)}(\tau)( 1 - s_t^{(i)}(\tau) )}{s_1^{(i)}(\tau)( 1 - s_1^{(i)}(\tau) )}
< c_7,
\end{gather*}
for any $i \in [n]$ and $t \in [T] \setminus \{1\}$.

\item[$G(\tau)$: ]
\begin{gather*}
| \lambda_{+1}(\tau) | = O(1), \ 
| \lambda_{-1}(\tau) | = O(1), \ 
| \rho_{i,t}(\tau) | = o(1),
\end{gather*}
for any $i \in [n]$ and $t \in [T]$.
\end{description}
\end{lemma}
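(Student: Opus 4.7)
The plan is to establish $A(\tau)$--$G(\tau)$ jointly by strong induction on $\tau \in \{0, 1, \ldots, T_1\}$. The base case $\tau = 0$ reduces directly to the good-run bounds of Lemma~\ref{lem:good_run} together with the near-uniform softmax estimate of Lemma~\ref{lem:softmax_probability_initialization} and the $o(1)$ attention-gap bound of Lemma~\ref{lem:attention_gap_initialization}. For the inductive step, I would assume $A(s), \ldots, G(s)$ hold for all $s \leq \tau$ and propagate them to $\tau+1$ in the order: first derive per-step update estimates; then close the bookkeeping propositions $G, A, B$ via cumulative summation; then prove the central proposition $C$; and finally obtain $D, E, F$ as downstream consequences of $C$.

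The bookkeeping propositions $G, A, B$ follow by combining the one-step update formulas of Lemmas~\ref{lem:update_signal_and_noise_attention}, \ref{lem:update_of_p}, and \ref{lem:update_of_w} with the loss-derivative bound of Lemma~\ref{lem:ratio_loss_derivative} and the token-score estimates of Lemma~\ref{lem:token_score}. Under the inductive hypothesis, every single-step increment is of order at most $\alpha \|\vnu\|_2 \|\vmu\|_2^3 d \max\{\sigma_w^2, \sigma_p^2\}$, with the $\alpha^2$ remainder terms absorbed via Assumption~\ref{assump_step_size}. Summing over at most $T_1 = \Theta(1/(\alpha \|\vnu\|_2 \|\vmu\|_2^3 d \max\{\sigma_w^2, \sigma_p^2\}))$ steps yields cumulative drifts that are $o(1)$ of the initial magnitudes, exactly enough to close $G(\tau+1), A(\tau+1), B(\tau+1)$ with slightly loosened constants.

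The heart of the argument is $C(\tau+1)$. Writing
\begin{align*}
\Lambda_{i,t}(\tau+1) - \Lambda_{i,t}(\tau) = (\vx_1^{(i)} - \vx_t^{(i)})^\top \left[ \mW(\tau+1)^\top \vp(\tau+1) - \mW(\tau)^\top \vp(\tau) \right]
\end{align*}
and substituting \eqref{eq:wp_update} from the proof of Lemma~\ref{lem:update_signal_and_noise_attention}, the self-contribution ($k = i$ in the sum over samples) is $\Theta(\alpha \|\vnu\|_2 s_1^{(i)}(\tau)(1 - s_1^{(i)}(\tau)) \|\vmu\|_2^2 \max\{\sigma_w^2, \sigma_p^2\} d)$, while the cross-sample contributions collapse, via $F(\tau)$, to signal-signal alignments weighted by the class imbalance $|\gC_{\pm}| - |\gN_{\mp}|$, which is $\Theta(n)$ under Lemma~\ref{lem:good_run}. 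The SNR condition $\snr^2 = \omega(n^{-1})$ is exactly what guarantees that the accumulated noise-vector cross terms cannot reverse the sign or order of this positive drift. The function $g(x) = 2x + 2\sinh(x - \log T)$ is engineered so that $g'(\Lambda)$ cancels the $s_1(1 - s_1) \approx 1/(2 + 2\cosh(\Lambda - \log T))$ factor that appears in the softmax gradient, converting the nonlinear discrete update into a linear-in-$\tau$ growth of $g(\Lambda_{i,t})$; a discrete mean-value step on $g$ then delivers the two-sided form with constants $c_3 < c_4$.

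Given $C(\tau+1)$, proposition $D(\tau+1)$ follows because $g$ is monotone, so $\Lambda_{i,t}(\tau+1) \geq \Lambda_{i,t}(0) - o(1)$, and the softmax identity yields $s_1^{(i)}(\tau+1) = \Theta(1)$; $E(\tau+1)$ is obtained by subtracting two instances of $C(\tau+1)$ evaluated at samples $(i,t)$ and $(j,u)$ and exponentiating; and $F(\tau+1)$ is a routine softmax-algebra consequence of $D(\tau+1)$ and $E(\tau+1)$. The main obstacle is closing $C(\tau+1)$: one must simultaneously (i) verify that the $\alpha^2$ quadratic remainders in the update lemmas are negligible under Assumption~\ref{assump_step_size}, (ii) use the concentration of Lemma~\ref{lem:good_run} together with $\snr^2 = \omega(n^{-1})$ to rule out sign reversals from cross-sample and noise-vector interactions, and (iii) track the discrete-time quadrature for $g$ tightly enough that the upper and lower constants $c_3 < c_4$ remain valid uniformly over the entire horizon $[0, T_1]$.
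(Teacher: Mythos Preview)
Your proposal is correct and follows the paper's induction scheme closely: base case from the good-run lemmas, then strong induction with the same partition into bookkeeping ($A,B,G$) and core dynamics ($C \Rightarrow D,E,F$), with the quadrature on $g$ being the heart of $C(\tau+1)$. One difference worth flagging: your plan for $E(\tau+1)$ --- subtract two instances of $C$ and exponentiate --- is actually cleaner than the paper's two-case contradiction argument, provided you make explicit the missing link $|\Lambda_{i,t} - \Lambda_{j,u}| \leq \tfrac{1}{4}|g(\Lambda_{i,t}) - g(\Lambda_{j,u})|$ via the uniform bound $g'(x) = 2 + 2\cosh(x-\log T) \geq 4$; the paper's more elaborate route appears to be written for parallelism with the benign-overfitting lemmas, where $E$ also controls sums $\Lambda_i + \Lambda_k$ for clean $i$ and noisy $k$, and the simple subtraction does not suffice there. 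There is also a small arithmetic slip in your self-contribution estimate for the $\Lambda$-increment: it should carry $\|\vmu\|_2^3$ rather than $\|\vmu\|_2^2$, since the token score $\gamma_1 = \Theta(\|\vnu\|_2\|\vmu\|_2)$ multiplies the inner product $\langle \vx_1, \vmu \rangle = \Theta(\|\vmu\|_2^2)$.
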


We will prove the above propositions by induction argument for $\tau \in [0, T_1]$.
\paragraph{Base case.}
It is obvious that $C(0)$ holds.
From \cref{lem:good_run,lem:softmax_probability_initialization,lem:attention_gap_initialization}, the all other base cases $A(0)$, $B(0)$, $D(0)$, $E(0)$, $F(0)$ and $G(0)$ hold.

\paragraph{Proof of $\land_{\tau^\prime \leq \tau} \left( A(\tau^\prime) \land B(\tau^\prime) \land C(\tau^\prime) \land D(\tau^\prime) \land E(\tau^\prime) \land F(\tau^\prime) \right) \Rightarrow C(\tau+1)$.}

We only show the case of $Y^{(i)} = 1$, i.e., $i \in \gC_+ \cup \gN_-$.
The same argument can be applied to the case of $i \in \gC_- \cup \gN_+$.
The conditions of \cref{lem:signal_update_induction_not_overfitting,lem:noise_update_induction_not_overfitting} are satisfied from $A(\tau) \land B(\tau) \land D(\tau) \land F(\tau)$.
From these lemmas, we have that for any $i \in \gC_+ \cup \gN_-$,
\begin{align}
\lambda_{+1}(\tau+1) - \lambda_{+1}(\tau)
&\geq s_1^{(i)}(\tau) (1 - s_1^{(i)}(\tau)) \cdot c_1^\prime \alpha \|\vnu\|_2 \|\vmu\|_2^3 d \max\{ \sigma_w^2, \sigma_p^2 \},
\end{align}
and for any clean data $i \in \gC_+$ and $t \in [T] \setminus \{1\}$, we have
\begin{align}
\rho_{i,1}(\tau+1) - \rho_{i,1}(\tau)
&\geq s_1^{(i)}(\tau) (1 - s_1^{(i)}(\tau)) \cdot c_2^\prime \alpha n^{-1} \sigma_\epsilon^2 \|\vnu\|_2 \|\vmu\|_2 d^2 \max\{ \sigma_w^2, \sigma_p^2 \}, \\
\rho_{i,t}(\tau+1) - \rho_{i,t}(\tau)
&\leq - s_1^{(i)}(\tau) s_t^{(i)}(\tau) \cdot c_2^\prime \alpha n^{-1} \sigma_\epsilon^2 \|\vnu\|_2 \|\vmu\|_2 d^2 \max\{ \sigma_w^2, \sigma_p^2 \},
\end{align}
and for any noisy data $j \in \gN_-$ and $t \in [T] \setminus \{1\}$, we have
\begin{align}
\rho_{j,1}(\tau+1) - \rho_{j,1}(\tau)
&\geq -s_1^{(j)}(\tau) (1 - s_1^{(j)}(\tau)) \cdot c_3^\prime \alpha n^{-1} \sigma_\epsilon^2 \|\vnu\|_2 \|\vmu\|_2 d^2 \max\{ \sigma_w^2, \sigma_p^2 \}, \\
\rho_{j,t}(\tau+1) - \rho_{j,t}(\tau)
&\leq s_1^{(j)}(\tau) s_t^{(j)}(\tau) \cdot c_3^\prime \alpha n^{-1} \sigma_\epsilon^2 \|\vnu\|_2 \|\vmu\|_2 d^2 \max\{ \sigma_w^2, \sigma_p^2 \}.
\end{align}
From the definition of $\Lambda_{i,t}$ in \cref{def:attention_gap}, we have for $i \in \gC_+$ and $t \in \gI^{(i)}$ that
\begin{align}
\Lambda_{i,t}(\tau+1) - \Lambda_{i,t}(\tau)
&= \left( \lambda_{+1}(\tau+1) - \lambda_{+1}(\tau) \right) + \left( \rho_{i,1}(\tau+1) - \rho_{i,1}(\tau) \right) - \left( \rho_{i,t}(\tau+1) - \rho_{i,t}(\tau) \right) \\
&\geq s_1^{(i)}(\tau)(1 - s_1^{(i)}(\tau)) \cdot \alpha \|\vnu\|_2 \|\vmu\|_2 d \left( c_1^\prime \|\vmu\|_2^2 + c_2^\prime n^{-1}\sigma_\epsilon^2 d \right) \max\{ \sigma_w^2, \sigma_p^2 \} \nonumber \\
&\qquad + s_1^{(i)}(\tau) s_t^{(i)}(\tau) \cdot c_2^\prime \alpha n^{-1} \sigma_\epsilon^2 \|\vnu\|_2 \|\vmu\|_2 d^2 \max\{ \sigma_w^2, \sigma_p^2 \} \\
\label{eq:induction_c_not_overfitting_lambda_lower}
&\geq s_1^{(i)}(\tau)(1 - s_1^{(i)}(\tau)) \cdot c_1^\prime \alpha \|\vnu\|_2 \|\vmu\|_2^3 d \max\{ \sigma_w^2, \sigma_p^2 \}.
\end{align}
Here, we bound with the signal update because it is dominant from the condition $\snr^2 = \omega(n^{-1})$, i.e., $\|\vmu\|_2^2 = \omega(n^{-1} \sigma_\epsilon^2 d)$.
For noisy data $j \in \gN$, the signal update becomes dominant under the current $\snr$ setting as follows:
\begin{align}
\Lambda_{j,t}(\tau+1) - \Lambda_{j,t}(\tau)
&\geq s_1^{(j)}(\tau)(1 - s_1^{(j)}(\tau)) \cdot \alpha \|\vnu\|_2 \|\vmu\|_2 d \left( c_1^\prime \|\vmu\|_2^2 - c_3^\prime n^{-1}\sigma_\epsilon^2 d \right) \max\{ \sigma_w^2, \sigma_p^2 \} \nonumber \\
&\qquad - s_1^{(j)}(\tau) s_t^{(j)}(\tau) \cdot c_3^\prime \alpha n^{-1} \sigma_\epsilon^2 \|\vnu\|_2 \|\vmu\|_2 d^2 \max\{ \sigma_w^2, \sigma_p^2 \} \\
\label{eq:induction_c_not_overfitting_lambda_noisy_lower}
&\geq s_1^{(j)}(\tau)(1 - s_1^{(j)}(\tau)) \cdot c_4^\prime \alpha \|\vnu\|_2 \|\vmu\|_2^3 d \max\{ \sigma_w^2, \sigma_p^2 \},
\end{align}
for some constant $c_4^\prime > 0$.
Since the same lower bound is obtained for clean and noisy data, we proceed with the discussion for $i \in \gC_+ \cup \gN_-$.
We note that we have the same result for weakly relevant tokens $t \in \gW_{+1}^{(i)} \cup \gW_{-1}^{(i)}$.
When $t \in \gW_{+1}^{(i)}$, the update of $\lambda_{+1}$ is multiplied by a factor of $(1 - \rho)$.
Since $\rho < 1/C$, the problem is reduced to the case of $\gI^{(i)}$ by substituting constants.
For $t \in \gW_{-1}^{(i)}$, we also have to consider the update of $\lambda_{-1}$.
However, by similarly applying \cref{lem:signal_update_induction_not_overfitting,lem:noise_update_induction_not_overfitting}, along with the result of the softmax probability ratio in $F(\tau)$, the case for $t \in \gW_{-1}^{(i)}$ reduces to that of $t \in \gW_{+1}^{(i)}$.
Therefore, the above inequalities hold for any $t \in [T] \setminus \{1\}$.
Similarly, from \cref{lem:signal_update_induction_not_overfitting,lem:noise_update_induction_not_overfitting}, we obtain the following upper bound:
\begin{align}
\Lambda_{i,t}(\tau+1) - \Lambda_{i,t}(\tau)
\label{eq:induction_c_not_overfitting_lambda_upper}
&\leq s_1^{(i)}(\tau)(1 - s_1^{(i)}(\tau)) \cdot c_5^\prime \alpha \|\vnu\|_2 \|\vmu\|_2^3 d \max\{ \sigma_w^2, \sigma_p^2 \},
\end{align}
for $i \in \gC_+ \cup \gN_-$ and $t \in [T] \setminus \{1\}$.

We will evaluate the softmax probabilities in the coefficients of the above inequality using \cref{lem:bounds_softmax}.
As we will see later, the increase or decrease of the bounds changes at $\Lambda_{i,t}(\tau) = \log T$, so we divide the analysis into cases before and after this point.
At the beginning of training $\tau = 0$, we have $| \Lambda_{i,t}(0) | = o(1)$ from \cref{lem:attention_gap_initialization}.
Since $\Lambda_{i,t}(\tau)$ is monotonically increasing from $\land_{\tau^\prime \leq \tau} C(\tau^\prime)$, there exists a time step changing to $\Lambda_{i,t}(\tau) > \log T$.
We proceed with the analysis by considering cases before and after this transition.

\begin{description}[labelindent=\parindent, leftmargin=0pt]
\item[Case 1 ($\Lambda_{i,t}(\tau) \leq \log T$): ]
From \cref{lem:bounds_softmax}, we have
\begin{align}
s_1^{(i)}(\tau)(1 - s_1^{(i)}(\tau)) &\geq \frac{ c^{-1} }{ 2 + 2\cosh\left( \Lambda_{i,t}(\tau) - \log T \right) }.
\end{align}
Substituting this to \cref{eq:induction_c_not_overfitting_lambda_lower,eq:induction_c_not_overfitting_lambda_noisy_lower} provides us
\begin{align}
\label{eq:induction_c_not_overfitting_lambda_lower_case_1}
\left( 2 + 2\cosh\left( \Lambda_{i,t}(\tau) - \log T \right) \right)
\left( \Lambda_{i,t}(\tau+1) - \Lambda_{i,t}(\tau) \right)
&\geq c_6^\prime \alpha \|\vnu\|_2 \|\vmu\|_2^3 d \max\{ \sigma_w^2, \sigma_p^2 \},
\end{align}
for some constant $c_6^\prime > 0$.
To apply a quadrature method to \cref{eq:induction_c_not_overfitting_lambda_lower_case_1}, we will replace the coefficient with $\Lambda_{i,t}(\tau+1)$.
Using $\Lambda_{i,t}(\tau) < \Lambda_{i,t}(\tau+1)$ and the convexity of $\cosh$, we have
\begin{align}
\cosh\left( \Lambda_{i,t}(\tau+1) - \log T \right)
&\geq
\cosh\left( \Lambda_{i,t}(\tau) - \log T \right)
+ \sinh\left( \Lambda_{i,t}(\tau) - \log T \right) \left( \Lambda_{i,t}(\tau+1) - \Lambda_{i,t}(\tau) \right) \\
&\geq
\cosh\left( \Lambda_{i,t}(\tau) - \log T \right) \left( 1 - \left( \Lambda_{i,t}(\tau+1) - \Lambda_{i,t}(\tau) \right) \right),
\end{align}
where the second inequality follows from $\sinh(x) \geq -\cosh(x)$ for all $x \in \R$.
Using $|\Lambda_{i,t}(\tau+1) - \Lambda_{i,t}(\tau)| = o(1)$ from \cref{lem:one_step_update_attention}, we have
\begin{align}
(1 + o(1)) \cosh\left( \Lambda_{i,t}(\tau+1) - \log T \right)
&\geq
\cosh\left( \Lambda_{i,t}(\tau) - \log T \right).
\end{align}
By substituting this to \cref{eq:induction_c_not_overfitting_lambda_lower_case_1}, we have
\begin{align}
\left( 2 + 2\cosh\left( \Lambda_{i,t}(\tau+1) - \log T \right) \right) \left( \Lambda_{i,t}(\tau+1) - \Lambda_{i,t}(\tau) \right)
&\geq
c_7^\prime \alpha \|\vnu\|_2 \|\vmu\|_2^3 d \max\{ \sigma_w^2, \sigma_p^2 \},
\end{align}
where $c_7^\prime = (1 - o(1)) c_6^\prime$.
Since this inequality holds for any $0 \leq \tau^\prime \leq \tau$ from the induction hypothesis, summing both sides from $0$ to $\tau$ gives
\begin{align}
\sum_{\tau^\prime = 0}^\tau
\left( 2 + 2\cosh\left( \Lambda_{i,t}(\tau^\prime+1) - \log T \right) \right) \left( \Lambda_{i,t}(\tau^\prime+1) - \Lambda_{i,t}(\tau^\prime) \right)
&\geq 
(\tau+1) \cdot
c_7^\prime \alpha \|\vnu\|_2 \|\vmu\|_2^3 d \max\{ \sigma_w^2, \sigma_p^2 \}.
\end{align}
Recall that $g(x) = 2x + 2\sinh(x - \log T)$.
Since $\Lambda_{i,t}(\tau^\prime)$ is monotonously increasing in $\tau^\prime \in [0, \tau+1]$, and the function $2+2\cosh(x-\log T)$ is monotonously decreasing in this case, we have the following lower bound by the quadrature method:
\begin{align}
g\left( \Lambda_{i,t}(\tau+1) \right) - g\left( \Lambda_{i,t}(0) \right)
&= \int_{\Lambda_{i,t}(0)}^{\Lambda_{i,t}(\tau+1) } \left( 2 + 2\cosh\left( x - \log T \right) \right) dx \\
&\geq \sum_{\tau^\prime = 0}^\tau
\left( 2 + 2\cosh\left( \Lambda_{i,t}(\tau^\prime+1) - \log T \right) \right) \left( \Lambda_{i,t}(\tau^\prime+1) - \Lambda_{i,t}(\tau^\prime) \right) \\
&\geq 
(\tau+1) \cdot
c_7^\prime \alpha \|\vnu\|_2 \|\vmu\|_2^3 d \max\{ \sigma_w^2, \sigma_p^2 \},
\end{align}
which gives the desired result.
Therefore, we have
\begin{align}
\label{eq:induction_c_not_overfitting_lower_case_1}
g\left( \Lambda_{i,t}(\tau+1) \right) 
&\geq 
g\left( \Lambda_{i,t}(0) \right) +
(\tau+1) \cdot
c_7^\prime \alpha \|\vnu\|_2 \|\vmu\|_2^3 d \max\{ \sigma_w^2, \sigma_p^2 \}.
\end{align}

We also provide the upper bound.
By applying \cref{lem:bounds_softmax} to \cref{eq:induction_c_not_overfitting_lambda_upper}, there exists a constant $c_8^\prime > 0$ such that
\begin{align}
\label{eq:induction_c_not_overfitting_lambda_upper_case_1}
\left( 2 + 2\cosh\left( \Lambda_{i,t}(\tau) - \log T \right) \right)
\left( \Lambda_{i,t}(\tau+1) - \Lambda_{i,t}(\tau) \right)
&\leq c_8^\prime \alpha \|\vnu\|_2 \|\vmu\|_2^3 d \max\{ \sigma_w^2, \sigma_p^2 \}.
\end{align}
Since this inequality holds for any $0 \leq \tau^\prime \leq \tau$, summing both sides from $0$ to $\tau$ and using the quadrature method, we have
\begin{align}
g\left( \Lambda_{i,t}(\tau+1) \right) - g\left( \Lambda_{i,t}(0) \right)
&= \int_{\Lambda_{i,t}(0)}^{\Lambda_{i,t}(\tau+1) } 2 + 2\cosh\left( x - \log T \right) dx \\
&\leq \sum_{\tau^\prime = 0}^\tau
\left( 2 + 2\cosh\left( \Lambda_{i,t}(\tau^\prime) - \log T \right) \right) \left( \Lambda_{i,t}(\tau^\prime+1) - \Lambda_{i,t}(\tau^\prime) \right) \\
&\leq 
(\tau+1) \cdot
c_8^\prime \alpha \|\vnu\|_2 \|\vmu\|_2^3 d \max\{ \sigma_w^2, \sigma_p^2 \}.
\end{align}
Therefore, we have
\begin{align}
\label{eq:induction_c_not_overfitting_upper_case_1}
g\left( \Lambda_{i,t}(\tau+1) \right)
&\leq 
g\left( \Lambda_{i,t}(0) \right) +
(\tau+1) \cdot
c_8^\prime \alpha \|\vnu\|_2 \|\vmu\|_2^3 d \max\{ \sigma_w^2, \sigma_p^2 \}.
\end{align}

\item[Case 2 ($\log T < \Lambda_{i,t}(\tau)$): ]
Let $T_0$ be a first-time step falling into Case 2.
Since $2+2\cosh(x - \log T)$ is monotonically increasing in this case, by summing both sides of \cref{eq:induction_c_not_overfitting_lambda_lower_case_1} from $T_0$ to $\tau$ and using the quadrature method, we have
\begin{align}
g\left( \Lambda_{i,t}(\tau+1) \right) - g\left( \Lambda_{i,t}(T_0) \right)
&= \int_{\Lambda_{i,t}(T_0)}^{\Lambda_{i,t}(\tau+1) } 2 + 2\cosh\left( x - \log T \right) dx \\
&\geq \sum_{\tau^\prime = T_0}^\tau
\left( 2 + 2\cosh\left( \Lambda_{i,t}(\tau^\prime) - \log T \right) \right) \left( \Lambda_{i,t}(\tau^\prime+1) - \Lambda_{i,t}(\tau^\prime) \right) \\
&\geq 
(\tau-T_0+1) \cdot
c_6^\prime \alpha \|\vnu\|_2 \|\vmu\|_2^3 d \max\{ \sigma_w^2, \sigma_p^2 \}.
\end{align}
Combining this with \cref{eq:induction_c_not_overfitting_lower_case_1}, we have
\begin{align}
\label{eq:induction_c_not_overfitting_lower_case_2}
g\left( \Lambda_{i,t}(\tau+1) \right) 
&\geq 
g\left( \Lambda_{i,t}(0) \right) +
(\tau+1) \cdot
c_7^\prime \alpha \|\vnu\|_2 \|\vmu\|_2^3 d \max\{ \sigma_w^2, \sigma_p^2 \},
\end{align}
which follows from $c_7^\prime = (1 - o(1)) c_6^\prime < c_6^\prime$.

Next, we provide the upper bound.
In Case 2, to obtain an upper bound using the quadrature method, it is necessary to replace the coefficient in \cref{eq:induction_c_not_overfitting_lambda_upper_case_1} using $\Lambda_{i,t}(\tau+1)$.
Since $\cosh(x - \log T)$ is convex and monotonically increasing in this case, we have
\begin{align}
\cosh\left( \Lambda_{i,t}(\tau+1) - \log T \right)
&\leq
\cosh\left( \Lambda_{i,t}(\tau) - \log T \right)
+ \sinh\left( \Lambda_{i,t}(\tau+1) - \log T \right) \left( \Lambda_{i,t}(\tau+1) - \Lambda_{i,t}(\tau) \right).
\end{align}
Using $|\Lambda_{i,t}(\tau+1) - \Lambda_{i,t}(\tau)| = o(1)$ from \cref{lem:one_step_update_attention} and $-\cosh(x) < -\sinh(x)$ for all $x \in \R$, we have
\begin{align}
\cosh\left( \Lambda_{i,t}(\tau+1) - \log T \right) \left( 1 - o(1) \right)
&=
\cosh\left( \Lambda_{i,t}(\tau+1) - \log T \right) \left( 1 - \left( \Lambda_{i,t}(\tau+1) - \Lambda_{i,t}(\tau) \right) \right) \\
&\leq
\cosh\left( \Lambda_{i,t}(\tau) - \log T \right).
\end{align}
By applying this to \cref{eq:induction_c_not_overfitting_lambda_upper_case_1}, we have
\begin{align}
\left( 2 + 2\cosh\left( \Lambda_{i,t}(\tau+1) - \log T \right) \right)
\left( \Lambda_{i,t}(\tau+1) - \Lambda_{i,t}(\tau) \right)
&\leq c_9^\prime \alpha \|\vnu\|_2 \|\vmu\|_2^3 d \max\{ \sigma_w^2, \sigma_p^2 \},
\end{align}
where $c_9^\prime = (1 + o(1)) c_8^\prime$.
Again, using the quadrature method, we have
\begin{align}
g\left( \Lambda_{i,t}(\tau+1) \right) - g\left( \Lambda_{i,t}(T_0) \right)
&= \int_{\Lambda_{i,t}(T_0)}^{\Lambda_{i,t}(\tau+1) } 2 + 2\cosh\left( x - \log T \right) dx \\
&\leq \sum_{\tau^\prime = T_0}^\tau
\left( 2 + 2\cosh\left( \Lambda_{i,t}(\tau^\prime+1) - \log T \right) \right) \left( \Lambda_{i,t}(\tau^\prime+1) - \Lambda_{i,t}(\tau^\prime) \right) \\
&\leq 
(\tau-T_0+1) \cdot
c_9^\prime \alpha \|\vnu\|_2 \|\vmu\|_2^3 d \max\{ \sigma_w^2, \sigma_p^2 \}.
\end{align}
Combining this with \cref{eq:induction_c_not_overfitting_upper_case_1}, we have
\begin{align}
\label{eq:induction_c_not_overfitting_upper_case_2}
g\left( \Lambda_{i,t}(\tau+1) \right) 
&\leq 
g\left( \Lambda_{i,t}(0) \right) +
(\tau+1) \cdot
c_9^\prime \alpha \|\vnu\|_2 \|\vmu\|_2^3 d \max\{ \sigma_w^2, \sigma_p^2 \},
\end{align}
which follows from $c_9^\prime = (1 + o(1))c_8^\prime > c_8^\prime$.
This concludes the proof.
\end{description}

\paragraph{Proof of $C(\tau) \Rightarrow D(\tau)$.}
For any $i \in [n]$, we have
\begin{align}
s_1^{(i)}(\tau)
= \frac{1}{1 + \sum_{t=2}^T \exp \left( - \Lambda_{i,t}(\tau) \right) } 
\geq \frac{1}{1 + \sum_{t=2}^T \exp \left( - \Lambda_{i,t}(0) \right) }
= s_1^{(i)}(0) \geq \Theta(1),
\end{align}
where the inequality follows from $C(\tau)$, which states that $\Lambda_{i,t}(\tau)$ is greater than $\Lambda_{i,t}(0)$.
The last line is derived from $D(0)$.

\paragraph{Proof of $C(\tau) \Rightarrow E(\tau)$.}
We proceed with the proof by contradiction.
Suppose that $E(\tau)$ does not hold, i.e., for any constant $c > 1$, there exist $i, j \in [n]$ and $t, u \in [T] \setminus \{1\}$ such that 
\begin{align}
\label{eq:induction_e_large_gap_condition}
\exp\left( \Lambda_{i,t}(\tau) - \Lambda_{j,u}(\tau) \right) \geq c.
\end{align}
We continue the analysis, dividing it into two cases.

\begin{description}[labelindent=\parindent, leftmargin=0pt]
\item[Case 1 ($\Lambda_{j,u}(\tau) < 2 \log T$): ]
In this case, from $C(\tau)$ and the monotonic increase of $g$, we have
\begin{align}
g\left( \Lambda_{j,u}(0) \right) 
+ \tau \cdot c_3 \alpha \|\vnu\|_2 \|\vmu\|_2^3 d \max\{ \sigma_w^2, \sigma_p^2 \}
&\leq
g\left( \Lambda_{j,u}(\tau) \right) 
\leq g\left( 2 \log T \right)
< 4\log T + T - \frac{1}{T}.
\end{align}
Since $g\left( \Lambda_{j,u}(0) \right) = -(1 + o(1)) T$ from the definition of $g$ and $|\Lambda_{j,u}(0)| = o(1)$ in \cref{lem:attention_gap_initialization}, we have
\begin{align}
\tau < \frac{3T}{c_3\alpha \|\vnu\|_2 \|\vmu\|_2^3 d \max\{ \sigma_w^2, \sigma_p^2 \} }.
\end{align}
Again from $C(\tau)$, we have
\begin{align}
g\left( \Lambda_{i,t}(\tau) \right) - g\left( \Lambda_{j,u}(\tau) \right)
&\leq 
\left( g\left( \Lambda_{i,t}(0) \right) - g\left( \Lambda_{j,u}(0) \right) \right)
+ \tau \cdot (c_4 - c_3) \alpha \|\vnu\|_2 \|\vmu\|_2^3 d \max\{ \sigma_w^2, \sigma_p^2 \} \\
\label{eq:induction_e_g_diff}
&< 
o(1) + \frac{3( c_4 - c_3) }{c_3} T
< \frac{3c_4 }{c_3} T.
\end{align}
On the other hand, under \cref{eq:induction_e_large_gap_condition}, we have
\begin{align}
g\left( \Lambda_{i,t}(\tau) \right) - g\left( \Lambda_{j,u}(\tau) \right)
&\geq 2 \log c + 2 \left( \sinh\left( \Lambda_{j,u}(\tau) + \log c - \log T \right) - \sinh\left( \Lambda_{j,u}(\tau) - \log T \right) \right) \\
&> 2 \log c + \frac{c-1}{T} \exp\left( \Lambda_{j,u}(\tau) \right) + T \left( 1 - \frac{1}{c} \right) \exp\left( -\Lambda_{j,u}(\tau) \right) \\
&> 2 \log c + 2\left( \sqrt{c} - \frac{1}{\sqrt{c}} \right)
> \frac{3c_4}{c_3} T.
\end{align}
The last line is the result of AM-GM inequality, the parameter assumption $T = \Theta(1)$, and an appropriate choice of $c$. 
This leads to a contradiction with \cref{eq:induction_e_g_diff}.

\item[Case 2 ($2 \log T \leq \Lambda_{j,u}(\tau)$): ]
From $C(\tau)$, we have
\begin{align}
\label{eq:induction_e_g_ratio}
\frac{ g\left( \Lambda_{i,t}(\tau) \right) - g\left( \Lambda_{i,t}(0) \right) }{ g\left( \Lambda_{j,u}(\tau) \right) - g\left( \Lambda_{j,u}(0) \right)  }
&\leq \frac{ c_4 }{ c_3 },
\end{align}
for $\tau \geq 1$.
Additionally, from the definition of $g$ and \cref{lem:attention_gap_initialization}, we have that
\begin{align}
\label{eq:induction_e_g_init_ratio}
\frac{ g\left( \Lambda_{i,t}(0) \right) }{ g\left( \Lambda_{j,u}(0) \right) }
= \frac{ 2\Lambda_{i,t}(0) + \frac{1}{T} \exp\left( \Lambda_{i,t}(0) \right) - T\exp\left( -\Lambda_{i,t}(0) \right)  }{  2\Lambda_{j,u}(0) + 
\frac{1}{T} \exp\left( \Lambda_{j,u}(0) \right) - T\exp\left( -\Lambda_{j,u}(0) \right) }
\leq 1 + o(1) < 2,
\end{align}
where the inequality comes from the fact that the third term is dominant and $|\Lambda_{i,t}(0) - \Lambda_{j,u}(0)| = o(1)$.

Before delving into the part of contradiction, we first show that the following inequality holds.
By a simple calculation, we obtain the following for $x \geq 2 \log T$:
\begin{align}
2\sinh\left( x + \log c - \log T \right) 
&= \frac{c}{T}\exp(x) - \frac{T}{c}\exp(-x) \\
&= \frac{c}{3}\cdot  2\sinh\left( x - \log T \right) + \frac{2c}{3T}\exp(x) +  \left( \frac{c}{3} - \frac{1}{c} \right) T\exp(-x) \\
\label{eq:induction_e_sinh_decomposition}
&> \frac{c}{3} \cdot 2 \sinh\left( x - \log T \right) + \frac{c}{3} x + \frac{c}{3} T,
\end{align}
where the last line holds for a constant satisfying $c/3 > 1/c$, and we used $\exp(x) \geq T^2/(2\log T) x > Tx$ for $x \geq 2 \log T$.
By using \cref{eq:induction_e_large_gap_condition,eq:induction_e_g_init_ratio,eq:induction_e_sinh_decomposition}, we have 
\begin{align}
&g\left( \Lambda_{i,t}(\tau) \right) - g\left( \Lambda_{i,t}(0) \right)  \nonumber \\
&= 2 \Lambda_{i,t}(\tau)
+ 2\sinh\left( \Lambda_{i,t}(\tau) - \log T \right) - g\left( \Lambda_{i,t}(0) \right) \\
&\geq 2 \left( \Lambda_{j,u}(\tau) + \log c \right)
+ 2\sinh\left( \Lambda_{j,u}(\tau) + \log c - \log T \right) - \frac{1}{2} g\left( \Lambda_{j,u}(0) \right) \\
&\geq \left( 2 + c/3 \right) \Lambda_{j,u}(\tau)
+ c/3 \cdot 2\sinh\left( \Lambda_{j,u}(\tau) - \log T \right)
+ \left( (1-o(1))c/3 + 1/2 \right) \left( - g\left( \Lambda_{j,u}(0) \right) \right) \\
&> c_4 / c_3 \cdot \left( g\left( \Lambda_{j,u}(\tau) \right) - g\left( \Lambda_{j,u}(0) \right) \right),
\end{align}
where in the second inequality, we also used $-g\left( \Lambda_{j,u}(0) \right) = \left(1 \pm o(1)\right) T$.
The last line holds sufficiently large $c > 0$ satisfying $1 + c / 6 > c_4/c_3$, $c/3 > c_4/c_3$ and $(1-o(1)) \cdot c / 3 + 1/2 > c_4/c_3$.
Therefore, this contradicts \cref{eq:induction_e_g_ratio}.
\end{description}
Thus, since a contradiction arises in both Case 1 and Case 2, it follows from \cref{eq:induction_e_large_gap_condition} that there exists a constant $c > 1$ such that
\begin{align}
\exp\left( \Lambda_{i,t}(\tau) - \Lambda_{j,u}(\tau) \right) < c,
\end{align}
for any $i, j \in [n]$ and $t, u \in [T] \setminus \{1\}$.

\paragraph{Proof of $E(\tau) \Rightarrow F(\tau)$.}
We have
\begin{align}
s_1^{(i)}(\tau) (1 - s_1^{(i)}(\tau))
= 
\frac{ 1 }{ 1 + \sum_{t = 2}^T \exp \left( - \Lambda_{i,t}(\tau) \right) }
\cdot
\frac{ \sum_{t = 2}^T \exp\left( - \Lambda_{i,t}(\tau) \right) }{ 1 + \sum_{t = 2}^T \exp \left( - \Lambda_{i,t}(\tau) \right) }.
\end{align}
Then, we have
\begin{align}
\frac{s_1^{(i)}(\tau) (1 - s_1^{(i)}(\tau))}{s_1^{(j)}(\tau) (1 - s_1^{(j)}(\tau))}
&= \frac{ \sum_{t = 2}^T \exp\left( - \Lambda_{i,t}(\tau) \right) }{ \sum_{t = 2}^T \exp\left( - \Lambda_{j,t}(\tau) \right) } \cdot \left( \frac{1 + \sum_{t = 2}^T \exp\left( - \Lambda_{j,t}(\tau) \right) }{1 + \sum_{t = 2}^T \exp\left( - \Lambda_{i,t}(\tau) \right) } \right)^2 \\
&\leq \frac{ \sum_{t = 2}^T \exp\left( - \Lambda_{i,t}(\tau) \right) }{ \sum_{t = 2}^T \exp\left( - \Lambda_{j,t}(\tau) \right) } \cdot \max \left\{ 1, \frac{\sum_{t = 2}^T \exp\left( - \Lambda_{j,t}(\tau) \right) }{ \sum_{t = 2}^T \exp\left( - \Lambda_{i,t}(\tau) \right) } \right\}^2 \\
&= \max \left\{ \frac{ \sum_{t = 2}^T \exp\left( - \Lambda_{i,t}(\tau) \right) }{ \sum_{t = 2}^T \exp\left( - \Lambda_{j,t}(\tau) \right) }, \frac{ \sum_{t = 2}^T \exp\left( - \Lambda_{j,t}(\tau) \right) }{ \sum_{t = 2}^T \exp\left( - \Lambda_{i,t}(\tau) \right) } \right\} \\
\label{eq:induction_f_not_overfitting_s_1-s_ratio}
&\leq \max \left\{ \max_{2 \leq t \leq T} \left\{ \frac{ \exp\left( - \Lambda_{i,t}(\tau) \right) }{ \exp\left( - \Lambda_{j,t}(\tau) \right) } \right\}, \max_{2 \leq t \leq T} \left\{ \frac{ \exp\left( - \Lambda_{j,t}(\tau) \right) }{ \exp\left( - \Lambda_{i,t}(\tau) \right) } \right\} \right\},
\end{align}
where the inequalities are the result of mediant inequality, i.e., $(\sum_i a_i) / (\sum_i b_i) < \max_i \{ a_i / b_i \}$ for $a_i, b_i > 0, \forall i$.
Therefore, using $E(\tau)$, we have
\begin{align}
\max_{i, j \in [n]} \frac{s_1^{(i)}(\tau) (1 - s_1^{(i)}(\tau)) }{s_1^{(j)}(\tau) (1 - s_1^{(j)}(\tau)) }
\leq c_5.
\end{align}

Additionally, we have
\begin{align}
\max_{t, u \in [T] \setminus \{1\}} \frac{s_t^{(i)}(\tau)}{s_u^{(i)}(\tau)}
&= \max_{t, u \in [T] \setminus \{1\}} \frac{\exp\left( \vx_t^{(i)\top}\mW(\tau)^\top\vp(\tau) \right)}{\exp\left( \vx_u^{(i)\top}\mW(\tau)^\top\vp(\tau) \right)}
= \max_{t, u \in [T] \setminus \{1\}} \frac{\exp\left( - \Lambda_{i,t}(\tau) \right)}{\exp\left( - \Lambda_{i,u}(\tau) \right)}
\leq c_5,
\end{align}
for any $i \in [n]$.
Finally, since $\Lambda_{i,t}(\tau)$ is monotonically increasing for any $t \in [T] \setminus \{1\}$, \cref{lem:s_1-s_inequality} concludes that $s_t^{(i)}(\tau) ( 1- s_t^{(i)}(\tau))$ is dominated by $s_1^{(i)}(\tau) ( 1- s_1^{(i)}(\tau))$ ignoring constants.

\paragraph{Proof of $\land_{\tau^\prime \leq \tau} \left( A(\tau^\prime) \land B(\tau^\prime) \land C(\tau^\prime) \land D(\tau^\prime) \land F(\tau^\prime) \right) \Rightarrow G(\tau+1)$.}
The conditions of \cref{lem:signal_update_induction_not_overfitting} are satisfied from $A(\tau) \land B(\tau) \land D(\tau) \land F(\tau)$.
From \cref{lem:signal_update_induction_not_overfitting}, we have
\begin{align}
\lambda_{+1}(\tau+1) - \lambda_{+1}(0)
\label{eq:induction_g_not_overfitting_signal_1_upper}
\leq \sum_{\tau^\prime = 0}^{\tau} s_1^{(i)}(\tau^\prime) ( 1 - s_1^{(i)}(\tau^\prime) ) \cdot c^\prime \alpha \|\vnu\|_2 \|\vmu\|_2^3 d \max\{ \sigma_w^2, \sigma_p^2 \}.
\end{align}
We use \cref{lem:bound_summation_prob_terms} to derive the upper bound of the summation of softmax probability terms.
Since \cref{lem:bound_summation_prob_terms} follows from $\land_{\tau^\prime \leq \tau} \left( C(\tau^\prime) \land E(\tau^\prime) \right)$ and the definition of $T_1$, we have
\begin{align}
\sum_{\tau^\prime = 0}^{\tau} s_1^{(i)}(\tau^\prime) ( 1 - s_1^{(i)}(\tau^\prime) )
&\lesssim \frac{ 1 }{\alpha \|\vnu\|_2 \|\vmu\|_2^3 d \max\{ \sigma_w^2, \sigma_p^2 \}}.
\end{align}
Substituting this to \cref{eq:induction_g_not_overfitting_signal_1_upper}, we have
\begin{align}
\label{eq:induction_g_not_overfitting_signal_1}
\lambda_{+1}(\tau+1) - \lambda_{+1}(0) = O(1).
\end{align}
Since $|\lambda_{+1}(0)| = o(1)$ from \cref{lem:attention_gap_initialization}, \cref{eq:induction_g_not_overfitting_signal_1,} leads to $\lambda_{+1}(\tau+1) \leq O(1)$.
In the same way, we have $\lambda_{-1}(\tau+1) \leq O(1)$.
For noise memorization terms, from \cref{lem:noise_update_induction_not_overfitting}, we have that for any $i \in [n]$,
\begin{align}
| \rho_{i,1}(\tau+1) - \rho_{i,1}(0) |
&\leq \sum_{\tau^\prime = 0}^{\tau} s_1^{(i)}(\tau^\prime) ( 1 - s_1^{(i)}(\tau^\prime) ) \cdot c^\prime \alpha n^{-1} \sigma_\epsilon^2 \|\vnu\|_2 \|\vmu\|_2 d^2 \max\{ \sigma_w^2, \sigma_p^2 \}.
\end{align}
We proceed with the same argument as in the signal update case discussed earlier.
From the current SNR condition $\snr^2 = \omega(n^{-1})$ and $\rho_{i,t}(0) = o(1)$, we have $\rho_{i,1}(\tau) \leq o(1)$.
The bound of $\rho_{i,t}(\tau)$ is obtained as well.

\paragraph{Proof of $\land_{\tau^\prime\leq\tau} \left( B(\tau^\prime) \land C(\tau^\prime) \land F(\tau^\prime) \land G(\tau^\prime) \right) \Rightarrow A(\tau+1)$.}
It follows from \cref{lem:update_of_p} that
\begin{align}
\|\vp(\tau+1)\|_2^2 - \|\vp(0)\|_2^2
&= \sum_{\tau^\prime=0}^{\tau} \left( \|\vp(\tau^\prime+1) \|_2^2 - \|\vp(\tau^\prime)\|_2^2 \right) \\
\label{eq:induction_a_not_overfitting_p_norm}
&= \frac{2\alpha}{n} \sum_{\tau^\prime=0}^{\tau} \sum_{i=1}^n (-\ell_i^\prime(\tau^\prime)) \cdot Y^{(i)} \cdot I_{i}^p(\tau^\prime) + \alpha^2 \sum_{\tau^\prime=0}^{\tau} \|\nabla_\vp\widehat{\Ls}(\tau^\prime) \|_2^2.
\end{align}
By definition, we have
\begin{align}
I_i^p(\tau^\prime) 
&= \sum_{t=1}^T s_t^{(i)}(\tau^\prime) \left( \gamma_t^{(i)} - \sum_{u=1}^T s_u^{(i)}(\tau^\prime) \gamma_u^{(i)} \right) \langle \mW(\tau^\prime) \vx_t^{(i)}, \vp(\tau^\prime) \rangle \\
&\leq \sum_{t=1}^T s_t^{(i)}(\tau^\prime) ( 1 - s_t^{(i)}(\tau^\prime)) \cdot \max_{u\in[T]} \left\{ | \gamma_t^{(i)} - \gamma_u^{(i)} | \right\} \cdot \langle \mW(\tau^\prime) \vx_t^{(i)}, \vp(\tau^\prime) \rangle \\
\label{eq:induction_a_not_overfitting_i_p_i}
&\lesssim s_1^{(i)}(\tau^\prime) ( 1 - s_1^{(i)}(\tau^\prime)) \cdot \max_{t \in [T]}\left\{ |\gamma_t^{(i)}| \right\} \cdot \max\left\{ | \lambda_{+1}(\tau^\prime) |, | \lambda_{-1}(\tau^\prime) |, | \rho_{i,t}(\tau^\prime) | \right\},
\end{align}
where in the last line, we used the results in $F(\tau^\prime)$ and $T = \Theta(1)$ in the parameter assumptions.
By substituting this into the first term of \cref{eq:induction_a_not_overfitting_p_norm}, we obtain
\begin{align}
\frac{2\alpha}{n} \sum_{\tau^\prime=0}^{\tau} \sum_{i=1}^n (-\ell_i^\prime(\tau^\prime)) \cdot Y^{(i)} \cdot I_{i}^p(\tau^\prime)
\label{eq:induction_a_not_overfitting_first_term_mid}
&\lesssim \alpha \|\vnu\|_2 \|\vmu\|_2 \cdot \sum_{\tau^\prime=0}^{\tau} s_1^{(i)}(\tau^\prime) ( 1 - s_1^{(i)}(\tau^\prime)) \cdot \max \left\{ | \lambda_{+1}(\tau^\prime) |, | \lambda_{-1}(\tau^\prime) |, | \rho_{i,t}(\tau^\prime) | \right\} \\
&\lesssim \alpha \|\vnu\|_2 \|\vmu\|_2 \cdot \frac{1}{\alpha \|\vnu\|_2\|\vmu\|_2^3 d \max\{\sigma_w^2, \sigma_p^2 \}} \\
\label{eq:induction_a_not_overfitting_first_term}
&\lesssim \frac{1}{\|\vmu\|_2^2 d \max\{\sigma_w^2, \sigma_p^2 \}},
\end{align}
where the inequalities follow from \cref{lem:ratio_loss_derivative,lem:token_score}, the results in $G(\tau^\prime)$, and \cref{lem:bound_summation_prob_terms}.
We confirm that \cref{eq:induction_a_not_overfitting_first_term} becomes $o(\sigma_p^2 d)$.
From the parameter assumptions in \cref{sec:assumption}, we have
\begin{align}
\sigma_p^2 \max\{\sigma_w^2, \sigma_p^2 \} \cdot \|\vmu\|_2^2 d^2
&\gtrsim \min\left\{ d,\frac{\|\vmu\|_2^2}{\sigma_\epsilon^2} \right\} \cdot \frac{1}{\log^4 (Tn/\delta)} \\
&\geq \min\left\{ C^2 n \|\vmu\|_2^{1/3} d^{3/8}, \frac{C^2 d^{3/4}}{\log^2(Tn/\delta)} \right\} \\
\label{eq:induction_a_not_overfitting_order_evaluation}
&= \omega(1),
\end{align}
where the first inequality follows from Assumption~\ref{assump_variance_wp}, while the next one relies on Assumptions~\ref{assump_d} and~\ref{assump_signal}.
Thus, \cref{eq:induction_a_not_overfitting_first_term} becomes $o(\sigma_p^2 d)$.

Finally, we confirm that the second term in \cref{eq:induction_a_not_overfitting_p_norm} can be ignored.
From \cref{eq:gradient_p} and the results in $B(\tau)$, we have
\begin{align}
\alpha^2 \sum_{\tau^\prime=0}^{\tau} \|\nabla_\vp\widehat{\Ls}(\tau^\prime) \|_2^2
&= \alpha^2 \sum_{\tau^\prime=0}^{\tau}
\Bigg\| \frac{1}{n} \sum_{i=1}^n (-\ell_i^\prime(\tau^\prime)) \cdot Y^{(i)} \cdot \left( \sum_{t = 1}^T s_t^{(i)}(\tau^\prime) \left( \gamma_t^{(i)} - \sum_{u = 1}^T s_u^{(i)}(\tau^\prime) \gamma_u^{(i)} \right) \mW(\tau^\prime) \vx_t^{(i)} \right) \Bigg\|_2^2 \\
&\lesssim \alpha^2 \sum_{\tau^\prime=0}^\tau \left( \max_{i\in[n],t\in[T]} \{ s_t^{(i)}(\tau) ( 1 - s_t^{(i)}(\tau) ) \} \cdot \max_{i\in[n],t\in[T]} \{ |\gamma_t^{(i)}| \} \cdot \max_{i\in[n],t\in[T]} \{ \| \mW(\tau) \vx_t^{(i)} \|_2 \} \right)^2 \\
\label{eq:induction_a_not_overfitting_quad}
&\lesssim \left( \alpha\|\vnu\|_2\|\vmu\|_2 \sum_{\tau^\prime=0}^\tau s_1^{(i)}(\tau) ( 1 - s_1^{(i)}(\tau) ) \right) \cdot
\left( \alpha \|\vnu\|_2\|\vmu\|_2 \sigma_w^2 \max\{ \|\vmu\|_2^2 d, \sigma_\epsilon^2 d^2 \} \right),
\end{align}
where the last inequality follows from the dominance of $s_1^{(i)}(\tau)(1 - s_1^{(i)}(\tau))$ as established by $F(\tau^\prime)$.
Using $\|\vnu\|_2 = O(1 / \|\vmu\|_2)$ and the parameter assumptions on $\alpha, \sigma_w^2$, the latter part of the last line becomes $o(1)$. 
Consequently, this quadratic term is absorbed into \cref{eq:induction_a_not_overfitting_first_term_mid}.
Combining these results with \cref{eq:induction_a_not_overfitting_p_norm}, we have $\|\vp(\tau+1)\|_2^2 \in (1 \pm 1/C_1) \sigma_p^2 d$, for some constant $C_1 > 1$, which concludes the proof.

\paragraph{Proof of $\land_{\tau^\prime \leq \tau} \left( A(\tau^\prime) \land C(\tau^\prime) \land G(\tau^\prime) \right) \Rightarrow B(\tau+1)$.}
From \cref{lem:update_of_w}, we have
\begin{align}
\|\mW(\tau+1) \vmu_{+1} \|_2^2 - \| \mW(0) \vmu_{+1} \|_2^2
&= \sum_{\tau^\prime = 0}^\tau \left( \|\mW(\tau^\prime+1) \vmu_{+1} \|_2^2 - \| \mW(\tau^\prime) \vmu_{+1} \|_2^2 \right) \\
\label{eq:induction_b_not_overfitting_w_mu_1}
&= \frac{2\alpha}{n} \sum_{\tau^\prime = 0}^\tau \sum_{i=1}^n (-\ell_i^\prime(\tau^\prime)) \cdot Y^{(i)} \cdot I_{i,+}(\tau^\prime) \lambda_{+1}(\tau^\prime) 
+ \sum_{\tau^\prime = 0}^\tau \alpha^2 \| \nabla_\mW \widehat{\Ls}(\tau^\prime)\vmu_{+1} \|_2^2.
\end{align}
We analyze the two terms separately.
Recall the definition of $I_{i,+}$ in \cref{def:interaction_term}, and using the same discussion as \cref{eq:induction_a_not_overfitting_i_p_i} in the previous proof for $A(\tau+1)$, we have
\begin{align}
I_{i,+}(\tau^\prime)
&\lesssim s_1^{(i)}(\tau^\prime) (1 - s_1^{(i)}(\tau^\prime)) \cdot \max_{i \in [n], t \in [T]}\{ | \gamma_t^{(i)} | \} \cdot \max_{i \in [n], t \in [T]} \{ | \langle \vx_t^{(i)}, \vmu_{+1} \rangle | \} \\
&\lesssim s_1^{(i)}(\tau^\prime) (1 - s_1^{(i)}(\tau^\prime)) \cdot \|\vnu\|_2\|\vmu\|_2^3,
\end{align}
which follows from \cref{lem:good_run,lem:token_score}.
Therefore, for the first term, we have
\begin{align}
\frac{2 \alpha}{n} \sum_{\tau^\prime = 0}^\tau \sum_{i = 1}^n (-\ell_i^\prime(\tau^\prime)) \cdot Y^{(i)} \cdot I_{i,+}(\tau^\prime) \lambda_{+1}(\tau^\prime)
&\lesssim \sum_{\tau^\prime = 0}^\tau s_1^{(i)}(\tau^\prime) (1 - s_1^{(i)}(\tau^\prime)) \cdot \alpha \|\vnu\|_2\|\vmu\|_2^3  \\
\label{eq:induction_b_not_overfitting_w_mu_1_first_term}
&\lesssim \frac{1}{ d \max\{ \sigma_w^2, \sigma_p^2 \} } 
= o(\sigma_w^2 \|\vmu\|_2^2 d),
\end{align}
where the inequality follows from $G(\tau^\prime)$, \cref{lem:bound_summation_prob_terms}, and the same evaluation in \cref{eq:induction_a_not_overfitting_order_evaluation}.

The analysis of the second term in \cref{eq:induction_b_not_overfitting_w_mu_1} follows from a similar argument to that of \cref{eq:induction_a_not_overfitting_quad}, so we omit the proof.
Therefore, \cref{eq:induction_b_not_overfitting_w_mu_1} leads to
\begin{align}
\|\mW(\tau+1)\vmu_{+1}\|_2^2 
&= \|\mW(0)\vmu_{+1}\|_2^2 + o(\sigma_w^2 \|\vmu\|_2^2 d) = \Theta(\sigma_w^2 \|\vmu\|_2^2 d).
\end{align}
The same result holds for $\|\mW(\tau+1)\vmu_{-1}\|_2^2 = \Theta(\sigma_w^2 \|\vmu\|_2^2 d)$.
Similarly, we have
\begin{align}
\|\mW(\tau+1)\vepsilon_t^{(i)}\|_2^2 
&= \|\mW(0)\vepsilon_{-1}\|_2^2 + o(\sigma_w^2 \sigma_\epsilon^2 d^2) = \Theta(\sigma_w^2 \sigma_\epsilon^2 d^2).
\end{align}
Additionally, we will show the case of the inner products as well.
From \cref{lem:update_of_w}, we have
\begin{align}
&\langle \mW(\tau+1) \vmu_{+1}, \mW(\tau+1) \vmu_{-1} \rangle - \langle \mW(0) \vmu_{+1}, \mW(0) \vmu_{-1} \rangle  \nonumber \\
\label{eq:induction_b_not_overfitting_w_mu_1_w_mu_-1}
&= \frac{\alpha}{n} \sum_{\tau^\prime=0}^\tau \sum_{i=1}^n (-\ell_i^\prime(\tau^\prime)) \cdot Y^{(i)} \cdot \left( I_{i,-}(\tau^\prime) \lambda_{+1}(\tau^\prime) + I_{i,+}(\tau^\prime) \lambda_{-1}(\tau^\prime) \right)
+ \alpha^2 \sum_{\tau^\prime=0}^\tau \langle \nabla_\mW \widehat{\Ls}(\tau^\prime) \vmu_{+1}, \nabla_\mW \widehat{\Ls}(\tau^\prime) \vmu_{-1} \rangle.
\end{align}
For the first term, from a similar argument to \cref{eq:induction_b_not_overfitting_w_mu_1_first_term}, we have
\begin{align}
\frac{\alpha}{n} \sum_{\tau^\prime = 0}^\tau \sum_{i = 1}^n (-\ell_i^\prime(\tau^\prime)) \cdot Y^{(i)} \cdot \left( I_{i,-}(\tau^\prime) \lambda_{+1}(\tau^\prime) + I_{i,+}(\tau^\prime) \lambda_{-1}(\tau^\prime) \right)
&\lesssim \frac{1}{ d \max\{ \sigma_w^2, \sigma_p^2 \} }
= O(\sigma_w^2 \|\vmu\|_2^2 \sqrt{d} \log (Tn/\delta) ),
\end{align}
where the last line is derived from the parameter assumptions; specifically, we have
\begin{align}
\label{eq:induction_b_not_overfitting_w_mu_+1_w_mu_-1_lower}
\sigma_w^2 \max\{\sigma_w^2, \sigma_p^2 \} \cdot \|\vmu\|_2^2 d^{3/2} \log(Tn/\delta)
&\gtrsim \min\left\{ d^{1/2}, \frac{\|\vmu\|_2^2}{\sigma_\epsilon^2 d^{1/2}} \right\} \cdot \frac{1}{\log^3 (Tn/\delta)} \\
&\geq \min\left\{ \frac{d^{1/2}}{\log^3(Tn/\delta)}, \frac{C^2 d^{1/4}}{\log(Tn/\delta)} \right\}  \\
&= \Omega(1).
\end{align}
The second term can be ignored in a similar manner, and we have
\begin{align}
\langle \mW(\tau+1) \vmu_{+1}, \mW(\tau+1) \vmu_{-1} \rangle 
&= O\left( \sigma_w^2 \|\vmu\|_2^2 \sqrt{d} \log (Tn/\delta) \right).
\end{align}
We will show that the other equations are shown in the same way.
Since the quadratic term can be ignored by the above discussion, we denote this by $O(\alpha^2)$ in the following.
Similarly, it follows from \cref{lem:update_of_w,lem:good_run}, and $\snr^2 = \omega(n^{-1})$ that
\begin{align}
&\langle \mW(\tau+1) \vmu_{+1}, \mW(\tau+1) \vepsilon_u^{(j)} \rangle - \langle \mW(0) \vmu_{+1}, \mW(0) \vepsilon_u^{(j)} \rangle  \nonumber \\
&= \frac{\alpha}{n} \sum_{\tau^\prime=0}^\tau \sum_{i=1}^n (-\ell_i^\prime(\tau^\prime)) \cdot Y^{(i)} \cdot \left( I_{i,j,u}(\tau^\prime) \lambda_{+1}(\tau^\prime) + I_{i,+}(\tau^\prime) \rho_{j,u}(\tau^\prime) \right) + O(\alpha^2) \\
&\lesssim \max\left\{ \frac{ n^{-1} \sigma_\epsilon^2 d }{\|\vmu\|_2^2 d \max\{\sigma_w^2, \sigma_p^2 \}}, \frac{\|\vmu\|_2^2 }{\|\vmu\|_2^2 d \max\{\sigma_w^2, \sigma_p^2 \}} \right\} + O(\alpha^2) \\
&\lesssim \frac{1}{d \max\{\sigma_w^2, \sigma_p^2 \}} + O(\alpha^2)
= O( \sigma_w^2 \|\vmu\|_2 d \log(Tn/\delta) ),
\end{align}
which follows from the parameter assumptions; specifically, we have
\begin{align}
\sigma_w^2 \max\{\sigma_w^2, \sigma_p^2 \} \cdot \|\vmu\|_2 d^2 \log(Tn/\delta)
&\gtrsim \min\left\{ \frac{d}{\|\vmu\|_2}, \frac{\|\vmu\|_2}{\sigma_\epsilon^2} \right\} \cdot \frac{1}{\log^3(Tn/\delta) } \\
&\geq \min\left\{ C\hat{\sigma}_\epsilon n \|\vmu\|_2^{1/3}, \frac{\|\vmu\|_2}{\sigma_\epsilon^2 \log^3(Tn/\delta)} \right\} \\
&= \Omega(1).
\end{align}
The last line can be derived by combining Assumptions~\ref{assump_d} and \ref{assump_signal}.
The same result holds for $\langle \mW(\tau+1) \vmu_{-1}, \mW(\tau+1) \vepsilon_u^{(j)} \rangle$.
Finally, we have
\begin{align}
&\langle \mW(\tau+1) \vepsilon_u^{(j)}, \mW(\tau+1) \vepsilon_v^{(k)} \rangle - \langle \mW(0) \vepsilon_u^{(j)}, \mW(0) \vepsilon_v^{(k)} \rangle  \nonumber \\
&= \frac{\alpha}{n} \sum_{\tau^\prime=0}^\tau \sum_{i=1}^n (-\ell_i^\prime(\tau^\prime)) \cdot Y^{(i)} \cdot \left( I_{i,k,v}(\tau^\prime) \rho_{j,u} (\tau^\prime) + I_{i,j,u}(\tau^\prime) \rho_{k,v}(\tau^\prime) \right) + O(\alpha^2) \\
&\lesssim \frac{ n^{-1} \sigma_\epsilon^2 d }{ \|\vmu\|_2^2 d \max\{ \sigma_w^2, \sigma_p^2 \}} + O(\alpha^2)
= O( \sigma_w^2 \sigma_\epsilon^2 d^{3/2} \log(Tn/\delta) ),
\end{align}
which follows from the same discussion as in \cref{eq:induction_b_not_overfitting_w_mu_+1_w_mu_-1_lower}.
This completes the proof.

\begin{proof}[Proof of \cref{lem:induction_not_overfitting}]
At time step $\tau = 0$, all propositions hold from the proof for the base case.
For the next time step, $A(\tau+1), B(\tau+1), C(\tau+1)$ and $G(\tau+1)$ are proved based on the propositions up to time step $\tau$.
As for $D(\tau+1)$, $E(\tau+1)$, and $F(\tau+1)$, they are derived from $C(\tau+1)$.
Thus, the proof is completed by induction.
\end{proof}

\subsubsection[Proof of Not Overfitting Case in Main Theorem]{Proof of Not Overfitting Case in \cref{thm:convergence}}
\label{sec:proof_main_theorem_not_overfitting}
In this section, we provide the proof of the not-overfitting case in the main theorem.
We divide the proof into two parts: behavior on the training data and generalization performance.

\paragraph{Training data.}
The condition of \cref{lem:induction_not_overfitting} is satisfied with the current SNR condition $\snr^2 = \omega(n^{-1})$. 
The proposition $C(\tau)$ in \cref{lem:induction_not_overfitting} states that there exists a time step $T_1 = \Theta\left( \frac{1}{\alpha \|\vnu\|_2\|\vmu\|_2^3 d \max\{\sigma_w^2, \sigma_p^2 \} } \right)$ such that
\begin{align}
g( \Lambda_{i,t}(T_1) )
&\geq 
g( \Lambda_{i,t}(0) ) + \Theta(1)
> c_1^\prime,
\end{align}
for arbitrary constant $c_1^\prime > 0$, for any $i \in [n]$ and $t \in [T] \setminus \{1\}$.
Here, we used $g(\Lambda_{i,t}(0)) = -(1 \pm o(1))T = -\Theta(1)$ from \cref{lem:attention_gap_initialization}.
Recall that $g(x) = 2x + 2\sinh(x - \log T)$, and there exists a constant $c_2^\prime > 1$ such that $c_2^\prime \exp(x) > g(x)$.
Therefore, for all $i \in [n]$, we have
\begin{align}
\label{eq:not_overfitting_s_1_lower}
s_1^{(i)}(T_1) 
= \frac{1}{1 + \sum_{t=2}^T \exp\left( -\Lambda_{i,t}(T_1) \right)}
> 1 - \sum_{t=2}^T \exp\left( -\Lambda_{i,t}(T_1) \right)
> 1 - (T-1) \frac{c_2^\prime}{c_1^\prime}
> 1 - \epsilon,
\end{align}
for sufficiently small constant $\epsilon > 0$.
By using \cref{eq:not_overfitting_s_1_lower,lem:token_score}, we have that for any clean data $i \in \gC$,
\begin{align}
Y^{(i)} \cdot f_{T_1}(\mX^{(i)}) 
&= Y^{(i)} \cdot \vnu^\top \mX^{(i)\top} \mathbb{S}\left( \mX^{(i)}\mW(T_1)^\top \vp(T_1) \right) \\
&= Y^{(i)} \cdot \gamma_1^{(i)} s_1^{(i)}(T_1)
+ \sum_{t=2}^T
Y^{(i)} \cdot \gamma_t^{(i)} s_t^{(i)}(T_1) \\
&\geq \Theta\left( \|\vnu\|_2 \|\vmu\|_2 \right) (1 - \epsilon) - O\left( \rho\|\vnu\|_2\|\vmu\|_2 \right) \cdot \epsilon \\
\label{eq:not_overfitting_clean_data}
&> 0.
\end{align}
Similarly, for any noise data $j \in \gN$, we have
\begin{align}
Y^{(j)} \cdot f_{T_1}(\mX^{(j)}) 
\leq -\Theta\left( \|\vnu\|_2 \|\vmu\|_2 \right) (1 - \epsilon) + O\left( \rho\|\vnu\|_2\|\vmu\|_2 \right) \cdot \epsilon 
\label{eq:not_overfitting_noisy_data}
< 0.
\end{align}
\cref{eq:not_overfitting_clean_data,eq:not_overfitting_noisy_data} hold deterministically on a good run.
Therefore, at time step $\tau = T_1$, we have that with probability at least $1 - \delta$,
\begin{gather}
\forall i \in \gC, \ f_\tau(\mX^{(i)}) = Y^{(i)}, \ 
\forall j \in \gN, \ f_\tau(\mX^{(j)}) \neq Y^{(j)}.
\end{gather}

\paragraph{Generalization.}
Let $(\mX, Y^*) \sim P^*$ be the unseen data on which we investigate generalization performance.
We first evaluate the attention values of signal vectors at time step $T_1$.
From \cref{lem:signal_update_induction_not_overfitting,lem:bound_summation_prob_terms}, we have
\begin{align}
\lambda_{+1}(T_1) - \lambda_{+1}(0)
&\geq \sum_{\tau = 0}^{T_1-1} s_1^{(i)}(\tau) ( 1 - s_1^{(i)}(\tau) ) \cdot c \alpha \|\vnu\|_2 \|\vmu\|_2^3 d \max\{ \sigma_w^2, \sigma_p^2 \} \\
&\gtrsim T_1 \cdot \alpha \|\vnu\|_2 \|\vmu\|_2^3 d \max\{ \sigma_w^2, \sigma_p^2 \}  \\
\label{eq:not_overfitting_signal_1_lower}
&\gtrsim 1,
\end{align}
which follows from the definition of $T_1$. 
From \cref{lem:attention_gap_initialization}, we have $|\lambda_{+1}(0)| = o(1)$, which implies $\lambda_{+1}(T_1) \gtrsim 1$.
Similarly, we have $\lambda_{-1}(T_1) \gtrsim 1$.

Next, we show that the attention scores of the noise vectors $\{\vepsilon_t\}_{t \in [T]}$ in the unseen data, i.e., $\vepsilon_t^\top \mW(T_1)^\top \vp(T_1)$, become sufficiently small on a good run.
While it is natural to evaluate $\| \mW(T_1)^\top \vp(T_1) \|_2$ and use a concentration inequality, it is challenging to track the evolution of $\| \mW(\tau)^\top \vp(\tau) \|_2$.
Therefore, following induction proof in \cref{lem:induction_not_overfitting}, we apply a concentration inequality at time step $0$ and show that the result does not change at time step $T_1$.
Let us define $\gE$ as the event that the following inequalities are satisfied:
\begin{gather}
\forall t \in [T], \  
\left( 1 - o(1) \right) \sigma_\epsilon \sqrt{d}
\leq \|\vepsilon_t\|_2
\leq \left( 1 + o(1) \right) \sigma_\epsilon \sqrt{d}, \\
\forall t \in [T], \forall k \in \{\pm 1\}, \  
|\langle \vepsilon_t, \vmu_k  \rangle| < c_2 \sigma_\epsilon \|\vmu\|_2 \sqrt{\log(Tn/\delta)}, \\
\forall i \in [n], \forall t, u \in [T], \ 
| \langle \vepsilon_t, \vepsilon_u^{(i)} \rangle | < c_1 \sigma_\epsilon^2 \sqrt{d} \log(Tn/\delta), \\
\forall t \in [T], \forall k \in \{\pm 1\}, \  
|\langle \mW(0) \vepsilon_t, \mW(0) \vmu_k  \rangle| < c_1 \sigma_w^2 \|\vmu\|_2 \|\vepsilon_t \|_2 \sqrt{d} \log(Tn/\delta), \\
\forall i \in [n], \forall t,u \in [T], \ 
|\langle \mW(0) \vepsilon_t, \mW(0) \vepsilon_u^{(i)} \rangle| < c_1 \sigma_w^2 \|\vepsilon_t \|_2 \|\vepsilon_u^{(j)} \|_2 \sqrt{d} \log(Tn/\delta), \\
\forall t \in [T], \ 
|\langle \mW(0) \vepsilon_t, \vp(0) \rangle| < c_1 \sigma_w \sigma_p \|\vepsilon_t \|_2 \sqrt{d} \log(Tn/\delta), \\
\forall t \in [T], \ 
|\langle \vnu, \vepsilon_t \rangle| < c_2 \sigma_\epsilon \|\vnu\|_2 \sqrt{\log(Tn/\delta)},
\end{gather}
where the constants $c_1, c_2$ are the same ones appeared in \cref{lem:good_run}.
Applying a union bound on the modified versions of \cref{lem:norm_concentration,lem:inner_product_concentration,lem:inner_product_concentration_signal_noise}, the probability of the occurrence of $\gE$ can be evaluated.
Since there is no need to apply the union bound over the additional $n$ training data points, the outlier probability can be reduced by $1/n$ compared to the original lemma.
Therefore, we have
\begin{align}
\label{eq:not_overfitting_e_prob}
\Pr \left[ \gE \right] > 1 - \delta / n > 1 - \delta.
\end{align}

In the following, using the results of \cref{lem:induction_not_overfitting}, we will prove that the next proposition holds for all $\tau \in [0, T_1]$ under the condition $\gE$: 
\begin{description}
\item[$H(\tau)$: ]
\begin{gather*}
|\langle \mW(\tau) \vepsilon_t, \vp(\tau) \rangle| < O\left( \sigma_w \sigma_p \sigma_\epsilon d \log (Tn/\delta) \right), \\
|\langle \mW(\tau) \vepsilon_t, \mW(\tau) \vmu_{+1}  \rangle| < O\left( \sigma_w^2 \sigma_\epsilon \|\vmu\|_2 d \log(Tn/\delta) \right), \\
|\langle \mW(\tau) \vepsilon_t, \mW(\tau) \vmu_{-1}  \rangle| < O\left( \sigma_w^2 \sigma_\epsilon \|\vmu\|_2 d \log(Tn/\delta) \right), \\
|\langle \mW(\tau) \vepsilon_t, \mW(\tau) \vepsilon_u^{(i)} \rangle| < O\left( \sigma_w^2 \sigma_\epsilon^2 d^{3/2} \log(Tn/\delta) \right),
\end{gather*}
for all $i \in [n]$ and $t,u \in [T]$.
\end{description}
\begin{proof}[Proof of $H(\tau)$]
We proceed with the proof by induction.
The base case holds from the condition $\gE$.
In the following, we suppose that $H(\tau^\prime)$ holds for any $\tau^\prime \in [0, \tau]$.
By a calculation similar to that in the proof of \cref{lem:update_signal_and_noise_attention}, we have
\begin{align}
&\langle \mW(\tau+1) \vepsilon_t, \vp(\tau+1) \rangle - \langle \mW(0) \vepsilon_t, \vp(0) \rangle \nonumber \\ 
&= \sum_{\tau^\prime = 0}^\tau \left( \langle \mW(\tau^\prime+1) \vepsilon_t, \vp(\tau^\prime+1) \rangle - \langle \mW(\tau^\prime) \vepsilon_t, \vp(\tau^\prime) \rangle \right) \\
\label{eq:not_overfitting_generalization_wp_update}
&= \frac{\alpha}{n} \sum_{\tau^\prime=0}^\tau \sum_{i=1}^n (-\ell_i^\prime(\tau^\prime)) \cdot Y^{(i)} \cdot \sum_{t=1}^T s_t^{(i)}(\tau^\prime) \left( \gamma_t^{(i)} - \sum_{u=1}^T s_u^{(i)}(\tau^\prime) \gamma_u^{(i)} \right) \left( \langle \mW(\tau^\prime) \vepsilon_t , \mW(\tau^\prime) \vx_t^{(i)} \rangle + \|\vp(\tau^\prime)\|_2^2 \langle \vepsilon_t, \vx_t^{(i)} \rangle \right) \nonumber \\ 
&\qquad + \sum_{\tau^\prime=0}^\tau \alpha^2 \vepsilon_t^\top \nabla_{\mW^\top} \widehat{\Ls}(\tau^\prime) \nabla_\vp \widehat{\Ls}(\tau^\prime).
\end{align}
Using the induction hypothesis, the condition $\gE$, and $A(\tau^\prime)$ in \cref{lem:induction_not_overfitting}, the first term is bounded as follows:
\begin{align}
&\frac{\alpha}{n} \sum_{\tau^\prime=0}^\tau \sum_{i=1}^n (-\ell_i^\prime(\tau^\prime)) \cdot Y^{(i)} \cdot \sum_{t=1}^T s_t^{(i)}(\tau^\prime) \left( \gamma_t^{(i)} - \sum_{u=1}^T s_u^{(i)}(\tau^\prime) \gamma_u^{(i)} \right) \left( \langle \mW(\tau^\prime) \vepsilon_t , \mW(\tau^\prime) \vx_t^{(i)} \rangle + \|\vp(\tau^\prime)\|_2^2 \langle \vepsilon_t, \vx_t^{(i)} \rangle \right) \nonumber \\ 
&\lesssim \alpha \sum_{\tau^\prime=0}^\tau s_1^{(i)}(\tau^\prime)(1 - s_1^{(i)}(\tau^\prime)) \cdot \max_{i \in [n], u \in [T]}\{ \gamma_u^{(i)} \} \cdot \sigma_w^2 \sigma_\epsilon d \max\left\{ \|\vmu\|_2, \sigma_\epsilon \sqrt{d} \right\} \log(Tn/\delta) \nonumber \\
&\qquad + \alpha \sum_{\tau^\prime=0}^\tau s_1^{(i)}(\tau^\prime)(1 - s_1^{(i)}(\tau^\prime)) \cdot \max_{i \in [n], u \in [T]}\{ \gamma_u^{(i)} \} \cdot \sigma_p^2 d \cdot \sigma_\epsilon \max\left\{ \|\vmu\|_2 \sqrt{\log(Tn/\delta)}, \sigma_\epsilon \sqrt{d} \log(Tn/\delta) \right\} \\
\label{eq:not_overfitting_generalization_wp_first_term_mid_eq}
&\lesssim \alpha \cdot \frac{1}{\alpha \|\vnu\|_2\|\vmu\|_2^3 d \max\{\sigma_w^2, \sigma_p^2 \}} \cdot \|\vnu\|_2\|\vmu\|_2 \cdot \max\{\sigma_w^2, \sigma_p^2 \} \sigma_\epsilon d \cdot \max\{ \|\vmu\|_2, \sigma_\epsilon\sqrt{d} \} \log(Tn/\delta)  \\
\label{eq:not_overfitting_generalization_wp_first_term_second_last}
&\lesssim \max\{ \|\vmu\|_2^{-1}, \sigma_\epsilon\sqrt{d} \|\vmu\|_2^{-2} \} \cdot \sigma_\epsilon \log(Tn/\delta) \\
\label{eq:not_overfitting_generalization_wp_first_term}
&= O\left( \sigma_w \sigma_p \sigma_\epsilon d \log(Tn/\delta) \right),
\end{align}
where the first line follows from $F(\tau^\prime)$, in the same manner as \cref{eq:induction_a_not_overfitting_i_p_i}, and the second line follows from \cref{lem:bound_summation_prob_terms}.
The last line is derived from the parameter assumptions in \cref{sec:assumption}; specifically, we have
\begin{align}
\sigma_w \sigma_p \|\vmu\|_2 d
&\gtrsim \min\left\{ \sqrt{d}, \frac{\|\vmu\|_2}{\sigma_\epsilon} \right\} \cdot \frac{1}{\log^2(Tn/\delta)}
= \Omega( 1 ), \\
\sigma_w \sigma_p \sigma_\epsilon^{-1} \|\vmu\|_2^2 \sqrt{d}
&\gtrsim \min\left\{ \frac{\|\vmu\|_2}{\sigma_\epsilon}, \frac{\|\vmu\|_2^2}{\sigma_\epsilon^2 \sqrt{d}} \right\} \cdot \frac{1}{\log^2(Tn/\delta)}
\gtrsim \min\left\{ \frac{\|\vmu\|_2}{\sigma_\epsilon \log^2(Tn/\delta)}, C^2 d^{1/4} \right\}
= \Omega( 1 ).
\end{align}
For the quadratic term in \cref{eq:not_overfitting_generalization_wp_update}, we have
\begin{align}
& \sum_{\tau^\prime=0}^\tau \alpha^2 \vepsilon_t^\top \nabla_{\mW^\top}\widehat{\Ls}(\tau^\prime) \nabla_\vp\widehat{\Ls}(\tau^\prime) \nonumber \\
&\leq \sum_{\tau^\prime=0}^\tau \alpha^2 \| \nabla_{\mW}\widehat{\Ls}(\tau) \vepsilon_t \|_2 \| \nabla_\vp\widehat{\Ls}(\tau) \|_2 \\
&\lesssim \alpha^2 \sum_{\tau^\prime=0}^\tau \left( s_1^{(i)}(\tau^\prime)(1 - s_1^{(i)}(\tau^\prime)) \cdot \max_{i\in[n],t\in[T]} \{ | \gamma_t^{(i)} | \} \right)^2 \cdot \max_{i\in[n],t\in[n]} \{ \vepsilon_t^\top \vx_t^{(i)} \} \|\vp(\tau)\|_2 \cdot \max_{i \in [n], t\in[T]} \{ \| \mW(\tau) \vx_t^{(i)} \|_2 \} \\
&\lesssim \alpha^2 \frac{1}{\alpha \|\vnu\|_2\|\vmu\|_2^3 d \max\{\sigma_w^2, \sigma_p^2 \}} \cdot \left( \|\vnu\|_2\|\vmu\|_2 \right)^2 \\
&\qquad \cdot \sigma_\epsilon \max\{ \|\vmu\|_2\sqrt{\log(Tn/\delta)}, \sigma_\epsilon\sqrt{d}\log(Tn/\delta)  \} \cdot \sigma_p \sqrt{d} \cdot \sigma_w \max\{ \|\vmu\|_2 \sqrt{d}, \sigma_\epsilon d \} \\
\label{eq:not_overfitting_generalization_wp_second_term_second_last}
&\lesssim \left( \max\{ \|\vmu\|_2, \sigma_\epsilon\sqrt{d} \} \cdot \sigma_\epsilon \log(Tn/\delta) \right) \cdot \left( \alpha \max\{ \|\vmu\|_2, \sigma_\epsilon\sqrt{d} \} \right) \\
\label{eq:not_overfitting_generalization_wp_second_term}
&= o(\sigma_w \sigma_p \sigma_\epsilon d \log(Tn/\delta)),
\end{align}
where the first line is the result of the Cauchy-Schwarz inequality, and the second one follows from the gradient updates provided in \cref{eq:gradient_p,eq:gradient_w}. 
The third inequality follows from the same discussion as in \cref{eq:not_overfitting_generalization_wp_first_term_mid_eq}, the condition $\gE$, and $A(\tau^\prime)$ and $B(\tau^\prime)$ in \cref{lem:induction_not_overfitting}.
The last inequality follows from $\|\vnu\|_2 = O(1/\|\vmu\|_2)$.
Using the parameter assumption on $\alpha$, \cref{eq:not_overfitting_generalization_wp_second_term_second_last} is absorbed in \cref{eq:not_overfitting_generalization_wp_first_term_second_last}, and the discussion is reduced to the first term analysis.
Therefore, substituting \cref{eq:not_overfitting_generalization_wp_first_term,eq:not_overfitting_generalization_wp_second_term} to \cref{eq:not_overfitting_generalization_wp_update} leads to
\begin{align}
\langle \mW(\tau+1) \vepsilon_t, \vp(\tau+1) \rangle 
= \langle \mW(0) \vepsilon_t, \vp(0) \rangle + O\left( \sigma_w \sigma_p \sigma_\epsilon d \log(Tn/\delta) \right)
= O\left( \sigma_w \sigma_p \sigma_\epsilon d \log(Tn/\delta) \right).
\end{align}

The proof for the three inequalities below in the proposition $H(\tau)$ is omitted because they can be shown in the same manner as $B(\tau+1)$ in \cref{lem:induction_not_overfitting}, under the induction hypothesis and the results of \cref{lem:induction_not_overfitting}. 
Since $H(\tau+1)$ holds under the condition $H(\tau)$ is valid, from induction argument, the proposition $H(\tau)$ holds for $\tau \in [0, T_1]$.
\end{proof}

From $H(T_1)$ and Assumption~\ref{assump_variance_wp}, we have
\begin{align}
\label{eq:not_overfitting_noise_wp_upper}
| \langle \mW(\tau) \vepsilon_t, \vp(\tau) \rangle |
\lesssim \sigma_w \sigma_p \sigma_\epsilon d \log (Tn/\delta)
\leq \frac{c_3^\prime}{\log(Tn/\delta) },
\end{align}
for some constant $c_3^\prime > 1$.
Using \cref{eq:not_overfitting_signal_1_lower} and taking sufficiently large $T_1$, we have $\lambda_{+1}(T_1) > 2c_3^\prime$ and $\lambda_{-1}(T_1) > 2c_3^\prime$.
From \cref{eq:not_overfitting_noise_wp_upper}, we have that for $t \in [T] \setminus \{1\}$,
\begin{align}
\left(\vx_t - \vx_1 \right)^\top \mW(T_1)^\top \vp(T_1)
&\leq -(1 - \rho) \max\{ \lambda_{+1}(T_1), \lambda_{-1}(T_1) \} + \left(\vepsilon_t - \vepsilon_1 \right)^\top \mW(T_1)^\top \vp(T_1) \\
&\leq -(1-1/C) \cdot 2c_3^\prime + \frac{2c_3^\prime}{\log(Tn/\delta) } \\
&< - c_3^\prime,
\end{align}
where the second inequality holds by $\rho < 1/C$.
Then, the softmax probability of the relevant token is lower-bounded as:
\begin{align}
s_1(T_1)
= \frac{ 1 }{ 1 + \sum_{t=2}^T \exp\left( \left(\vx_t - \vx_1 \right)^\top \mW(T_1)^\top \vp(T_1) \right)}
> 1 - \frac{T-1}{\exp(c_3^\prime)}
= 1 - \epsilon,
\end{align}
for sufficiently small $\epsilon > 0$.
Consequently, we have
\begin{align}
Y^* \cdot f_{T_1}(\mX)
&= Y^* \cdot \gamma_1 s_1(T_1) + \sum_{t=2}^T Y^* \cdot \gamma_t s_t(T_1) \\
&\geq \Theta\left( \|\vnu\|_2\|\vmu\|_2 \right) \cdot \left( 1 - \epsilon \right)  - O\left( \rho \|\vnu\|_2\|\vmu\|_2 \right) \cdot \epsilon
> 0.
\end{align}
Under the conditioning on $\gE$, the output $f_{T_1}(\mX)$ deterministically takes the same sign as the true label $Y^*$.
Thus, the generalization error is bounded as:
\begin{align}
\Pr_{(\mX, Y^*) \sim P^*} \left[ \sign (f_{T_1}(\mX)) \neq Y^* \right] 
&= 
\Pr_{(\mX, Y^*) \sim P^*} \left[ \sign (f_{T_1}(\mX)) \neq Y^* \mid \gE \right] 
+ 
\Pr_{(\mX, Y^*) \sim P^*} \left[ \gE^c \right] 
< \delta,
\end{align}
where we used $\Pr(A) \leq \Pr(A|B^C) + \Pr(B)$ and the result of \cref{eq:not_overfitting_e_prob}.
This concludes the proof.

\subsection[Analysis of Benign Overfitting Case]{Analysis of Benign Overfitting Case ($\snr^2 = o(n^{-1})$)}
\label{sec:proof_benign_overfitting}

Next, we proceed to the analysis of benign overfitting.
Although the proof by induction follows a similar structure to \cref{lem:induction_not_overfitting}, the proof is a two-stage analysis divided into $[0, T_1]$ and $[T_1, T_2]$ to address the behavior of noisy data.
In Stage 1 ($\tau \in [0, T_1]$), we demonstrate that the probability of selecting the relevant token $\vx_1^{(j)}$ for noisy data $j \in \gN$ becomes sufficiently small.
In Stage 2 ($\tau \in [T_1, T_2]$), we show that the model selects confusing weakly relevant tokens that can fit the noisy labels.
While the coefficients differ from \cref{lem:induction_not_overfitting}, many arguments are shared between the not-overfitting case and benign overfitting case.
To avoid redundancy and unnecessary page length, the proof repetitions are referred to and omitted.

\subsubsection{Main Lemma}
\begin{lemma}[Benign Overfitting, Stage 1]
\label{lem:induction_benign_overfitting_stage_1}
Suppose that the signal-to-noise ratio satisfies $\snr^2 = o(n^{-1})$.
For some time step $T_1 = \Theta\left( \frac{ \rho^{-1} }{\alpha n^{-1} \sigma_\epsilon^2 \|\vnu\|_2 \|\vmu\|_2 d^2 \max\{ \sigma_w^2, \sigma_p^2 \} } \right)$, on a good run, the following propositions hold for all time step $\tau \in [0, T_1]$:
\begin{description}
\item[$A(\tau)$: ]
There exists a constant $C_1 > 1$ such that
\begin{gather*}
\left(1 - 1/C_1 \right) \cdot \sigma_p \sqrt{d}
\leq \|\vp(\tau)\|_2
\leq \left( 1 + 1/C_1 \right) \cdot \sigma_p \sqrt{d}.
\end{gather*}

\item[$B(\tau)$: ] 
There exists a constant $C_2 > 1$ such that
\begin{gather*}
\left(1 - 1/C_2 \right) \cdot \sigma_w \|\vmu\|_2 \sqrt{d}
\leq \|\mW(\tau) \vmu_{+1} \|_2 
\leq \left(1 + 1/C_2 \right) \cdot \sigma_w \|\vmu\|_2 \sqrt{d}, \\ 
\left(1 - 1/C_2 \right) \cdot \sigma_w \|\vmu\|_2 \sqrt{d}
\leq \|\mW(\tau) \vmu_{-1} \|_2 
\leq \left(1 + 1/C_2 \right) \cdot \sigma_w \|\vmu\|_2 \sqrt{d}, \\ 
\left(1 - 1/C_2 \right) \cdot \sigma_w \sigma_\epsilon d
\leq \|\mW(\tau) \vepsilon_t^{(i)} \|_2 
\leq \left(1 + 1/C_2 \right) \cdot \sigma_w \sigma_\epsilon d,
\end{gather*}
for all $i \in [n]$ and $t \in [T]$.
\begin{align*}
|\langle \mW(\tau) \vmu_{+1}, \mW(\tau) \vmu_{-1} \rangle| &< O( \sigma_w^2 \|\vmu\|_2^2 \sqrt{d} \log(Tn/\delta) ), \\
|\langle \mW(\tau) \vmu_{+1}, \mW(\tau) \vepsilon_t^{(i)} \rangle| &< O( \sigma_w^2 \sigma_\epsilon \|\vmu\|_2 d \log(Tn/\delta) ), \\
|\langle \mW(\tau) \vmu_{-1}, \mW(\tau) \vepsilon_t^{(i)} \rangle| &< O( \sigma_w^2 \sigma_\epsilon \|\vmu\|_2 d \log(Tn/\delta) ), \\
|\langle \mW(\tau) \vepsilon_{t}^{(i)}, \mW(\tau) \vepsilon_{u}^{(j)} \rangle| &< O( \sigma_w^2 \sigma_\epsilon^2 d^{3/2} \log(Tn/\delta) ),
\end{align*}
for all $i, j \in [n]$ and $t,u \in [T]$ such that $(i,t) \neq (j,u)$.

\item[$C(\tau)$: ] 
Let $g$ be a monotonically increasing function $g(x) = 2x + 2\sinh\left( x - \log T \right)$.
Then, there exist constants $c_3, c_4 > 0$ with $c_3 < c_4$ such that
\begin{align*}
g\left( \Lambda_{i,t}(\tau) \right)
&\geq 
g\left( \Lambda_{i,t}(0) \right)
+ \tau \cdot c_3 \alpha n^{-1} \sigma_\epsilon^2 \|\vnu\|_2 \|\vmu\|_2 d^2 \max\{ \sigma_w^2, \sigma_p^2 \}, \\
g\left( \Lambda_{i,t}(\tau) \right)
&\leq 
g\left( \Lambda_{i,t}(0) \right)
+ \tau \cdot c_4 \alpha n^{-1} \sigma_\epsilon^2 \|\vnu\|_2 \|\vmu\|_2 d^2 \max\{ \sigma_w^2, \sigma_p^2 \},
\end{align*}
for any clean data $i \in \gC$ and $t \in [T] \setminus \{1\}$.
Additionally, we have that for some constants $c_5 < c_6$,
\begin{align*}
g\left( \Lambda_{j,t}(\tau) \right)
&\leq 
g\left( \Lambda_{j,t}(0) \right)
- \tau \cdot c_5 \alpha n^{-1} \sigma_\epsilon^2 \|\vnu\|_2 \|\vmu\|_2 d^2 \max\{ \sigma_w^2, \sigma_p^2 \}, \\
g\left( \Lambda_{j,t}(\tau) \right)
&\geq 
g\left( \Lambda_{j,t}(0) \right)
- \tau \cdot c_6 \alpha n^{-1} \sigma_\epsilon^2 \|\vnu\|_2 \|\vmu\|_2 d^2 \max\{ \sigma_w^2, \sigma_p^2 \},
\end{align*}
for any noisy data $j \in \gN$ and $t \in [T] \setminus \{1\}$.

\item[$D(\tau)$: ] 
\begin{gather*}
s_1^{(i)}(\tau) \geq \Theta(1),
\ 
s_1^{(j)}(\tau) \geq \Theta(\rho),
\end{gather*}
for all clean data $i \in \gC$ and noisy data $j \in \gN$.

\item[$E(\tau)$: ] 
There exists a constant $c_7 > 1$ such that
\begin{gather*}
\exp\left( \Lambda_{i,t}(\tau) - \Lambda_{j, u}(\tau) \right) < c_7, \ 
\exp\left( \Lambda_{k,t}(\tau) - \Lambda_{l, u}(\tau) \right) < c_7,
\end{gather*}
for all $i, j \in \gC$, $k, l \in \gN$ and $t, u \in [T] \setminus \{1\}$.
Additionally, we have
\begin{gather*}
\exp\left( \Lambda_{i,t}(\tau) + \Lambda_{k, u}(\tau) \right) < c_7, \ 
\exp\left( -\Lambda_{i,t}(\tau) - \Lambda_{k, u}(\tau) \right) < c_7,
\end{gather*}
for all $i \in \gC, k \in \gN$ and $t, u \in [T] \setminus \{1\}$.

\item[$F(\tau)$: ] 
There exists a constant $c_8 > 1$ such that
\begin{gather*}
\max_{i, j \in [n]} \frac{s_1^{(i)}(\tau) (1 - s_1^{(i)}(\tau))}{s_1^{(j)}(\tau) (1 - s_1^{(j)}(\tau))} < c_8.
\end{gather*}
Additionally, there exist constants $c_9, c_{10} > 1$ such that
\begin{gather*}
\max_{t, u \in [T] \setminus \{1\}} \frac{s_t^{(i)}(\tau)}{s_u^{(i)}(\tau)} < c_9, \ 
\frac{s_t^{(i^\prime)}(\tau) (1 - s_t^{(i^\prime)}(\tau)) }{s_1^{(i^\prime)}(\tau) ( 1 - s_1^{(i^\prime)}(\tau) ) } < c_{10},
\end{gather*}
for any $i \in [n]$, $i^\prime \in \gC$, and $t \in [T] \setminus \{1\}$.

\item[$G(\tau)$: ]
\begin{gather*}
| \lambda_{+1}(\tau) | = o( \log \rho^{-1} ), \ 
| \lambda_{-1}(\tau) | = o( \log \rho^{-1} ), \ 
| \rho_{i,t}(\tau) | = O( \log \rho^{-1} ),
\end{gather*}
for any $i \in [n]$ and $t \in [T]$.
\end{description}
\end{lemma}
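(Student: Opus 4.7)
The plan is to establish all seven propositions $A(\tau),\ldots,G(\tau)$ simultaneously by strong induction on $\tau \in [0,T_1]$, mirroring the scaffold of \cref{lem:induction_not_overfitting} but adapted to the regime $\snr^2 = o(n^{-1})$, in which the noise-memorization scale $n^{-1}\sigma_\epsilon^2 d$ dominates the signal-learning scale $\|\vmu\|_2^2$. The base case $\tau = 0$ is immediate from \cref{lem:good_run}, \cref{lem:softmax_probability_initialization}, and \cref{lem:attention_gap_initialization}. In the inductive step the order is to close $C(\tau+1)$ first, then read off $D$, $E$, $F$ from $C$, and finally close $G$, $A$, $B$ by summing one-step updates over $\tau^\prime \in [0,\tau]$.

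For $C(\tau+1)$, I would expand $\Lambda_{i,t}(\tau+1) - \Lambda_{i,t}(\tau)$ using the one-step updates of $\lambda_{\pm 1}$ and $\rho_{i,t}$ from \cref{lem:update_signal_and_noise_attention}, together with the benign-overfitting analogs of the per-step estimates used in the proof of \cref{lem:induction_not_overfitting}. Under $\snr^2 = o(n^{-1})$, the noise-memorization contribution of order $\alpha n^{-1}\sigma_\epsilon^2 \|\vnu\|_2\|\vmu\|_2 d^2 \max\{\sigma_w^2,\sigma_p^2\}$ dominates the signal contribution of order $\alpha\|\vnu\|_2\|\vmu\|_2^3 d \max\{\sigma_w^2,\sigma_p^2\}$. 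For clean $i \in \gC$, the two contributions are both positive and $\Lambda_{i,t}$ grows, whereas for noisy $j \in \gN$ they have opposite signs, the noise term wins, and $\Lambda_{j,t}$ shrinks. In either case, upper-bounding $s_1(1-s_1)$ by $1/(2+2\cosh(\Lambda-\log T))$ turns the telescoped recursion into a Riemann sum for $\int (2+2\cosh(x-\log T))\,dx = g(\Lambda)$; convexity of $\cosh$ together with the per-step smallness $|\Lambda(\tau+1)-\Lambda(\tau)|=o(1)$ lets me replace $\Lambda(\tau)$ by $\Lambda(\tau+1)$ where needed, splitting the sub-cases $\Lambda \leq \log T$ and $\log T < \Lambda$ exactly as in the not-overfitting proof.

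Given $C(\tau+1)$, the proposition $D(\tau+1)$ is immediate for clean data by monotonicity of $\Lambda_{i,t}$; for noisy data, the calibration $T_1 = \Theta(\rho^{-1}/(\alpha n^{-1}\sigma_\epsilon^2 \|\vnu\|_2\|\vmu\|_2 d^2 \max\{\sigma_w^2,\sigma_p^2\}))$ is chosen so that $g(\Lambda_{j,t}(\tau))$ decreases by at most $O(\rho^{-1})$, hence $\Lambda_{j,t}(\tau) \geq -\log\rho^{-1} + O(1)$ and $s_1^{(j)}(\tau) \geq \Theta(\rho)$. For $E(\tau+1)$ I would run a proof by contradiction analogous to the not-overfitting case, but now dispatching four sign combinations: clean--clean, noisy--noisy, and the two mixed bounds $\exp(\pm(\Lambda_{i,t}+\Lambda_{k,u}))$ for $i \in \gC, k \in \gN$. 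The mixed bounds are the genuinely new ingredient and are controlled precisely because the growth rate of $g(\Lambda_{i,t})$ and the decay rate of $-g(\Lambda_{k,u})$ in $C(\tau+1)$ are of matching order, so their sum drifts only by an $O(1)$ amount from initialization. $F(\tau+1)$ then follows from $E(\tau+1)$ via the mediant-inequality technique; crucially, $F$ constrains $s_t(1-s_t)/[s_1(1-s_1)]$ only at clean indices, reflecting that $s_1^{(j)}(1-s_1^{(j)})=\Theta(\rho)$ on noisy $j$ is no longer comparable to $s_t^{(j)}(1-s_t^{(j)})=\Theta(1)$.

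Finally $G(\tau+1)$, $A(\tau+1)$, $B(\tau+1)$ close the induction by summing one-step updates and bounding $\sum_{\tau^\prime \leq \tau} s_1^{(i)}(1-s_1^{(i)})$; parameter assumptions~\ref{assump_d}--\ref{assump_variance_wp} guarantee that the accumulated increments are $o(\sigma_p^2 d)$, $o(\sigma_w^2\|\vmu\|_2^2 d)$, $o(\sigma_w^2\sigma_\epsilon^2 d^2)$, and $o(\log\rho^{-1})$ respectively, where the $\log\rho^{-1}$ budget in $G$ is what exactly accommodates the $T_1$ horizon. The hardest part is the coupling between clean and noisy samples in $C$ and $E$: because the two groups drive $\Lambda$ in opposite directions, the induction must simultaneously maintain matched lower and upper rates $c_3 < c_4$ on clean data and $c_5 < c_6$ on noisy data, and the contradiction argument for the cross-sign bounds in $E$ genuinely exceeds the not-overfitting analysis, depending essentially on the smallness of the signal term relative to the noise term in the per-step update, which is where the hypothesis $\snr^2 = o(n^{-1})$ enters in a non-cosmetic way.
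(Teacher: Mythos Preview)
Your proposal is correct and follows essentially the same approach as the paper: the same induction scaffold, the same order of closing the propositions ($C$ first, then $D$, $E$, $F$ as consequences, then $G$, $A$, $B$ by telescoping), the same dominance argument based on $\snr^2=o(n^{-1})$ to get opposite signs for clean and noisy data in $C$, and the same quadrature-via-$g$ technique inherited from \cref{lem:induction_not_overfitting}. You also correctly single out the mixed bounds in $E(\tau)$ and the restriction of the last inequality in $F(\tau)$ to clean indices as the genuinely new features of Stage~1; the paper handles these exactly as you describe.
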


We will prove the above propositions by induction argument for $\tau \in [0, T_1]$.
\paragraph{Base case.}
It is obvious that $C(0)$ holds.
By \cref{lem:good_run,lem:softmax_probability_initialization,lem:attention_gap_initialization}, the all other base cases $A(0)$, $B(0)$, $D(0)$, $E(0)$, $F(0)$ and $G(0)$ hold.

\paragraph{Proof of $\land_{\tau^\prime \leq \tau} \left( A(\tau^\prime) \land B(\tau^\prime) \land C(\tau^\prime) \land D(\tau^\prime) \land E(\tau^\prime) \land F(\tau^\prime) \right) \Rightarrow C(\tau+1)$.}

We only show the case of $Y^{(i)} = 1$, i.e., $i \in \gC_+ \cup \gN$.
The same argument can be applied to the case of $i \in \gC_- \cup \gN_+$.
In the following, we bound the attention gap introduced in \cref{def:attention_gap} from both above and below, as in the proof of \cref{lem:induction_not_overfitting} for the not-overfitting case.
The conditions of \cref{lem:signal_update_induction_benign_overfitting_stage_1,lem:noise_update_induction_benign_overfitting_stage_1} are satisfied from $A(\tau) \land B(\tau) \land D(\tau) \land F(\tau)$.
From these lemmas and the definition of $\Lambda_{i,t}$, we have that for any $i \in \gC_+$ and $t \in \gI^{(i)}$,
\begin{align}
\Lambda_{i,t}(\tau+1) - \Lambda_{i,t}(\tau)
&= \left( \lambda_{+1}(\tau+1) - \lambda_{+1}(\tau) \right) + \left( \rho_{i,1}(\tau+1) - \rho_{i,1}(\tau) \right) - \left( \rho_{i,t}(\tau+1) - \rho_{i,t}(\tau) \right) \\
&\geq s_1^{(i)}(\tau)(1 - s_1^{(i)}(\tau)) \cdot \alpha \|\vnu\|_2 \|\vmu\|_2 d \left( c_1^\prime \|\vmu\|_2^2 + c_2^\prime n^{-1}\sigma_\epsilon^2 d \right) \max\{ \sigma_w^2, \sigma_p^2 \} \nonumber \\
&\qquad + s_1^{(i)}(\tau) s_t^{(i)}(\tau) \cdot c_2^\prime \alpha n^{-1} \sigma_\epsilon^2 \|\vnu\|_2 \|\vmu\|_2 d^2 \max\{ \sigma_w^2, \sigma_p^2 \}  \\
\label{eq:induction_c_benign_overfitting_stage_1_lambda_lower}
&\geq s_1^{(i)}(\tau)(1 - s_1^{(i)}(\tau)) \cdot c_2^\prime \alpha n^{-1} \sigma_\epsilon^2 \|\vnu\|_2 \|\vmu\|_2 d^2 \max\{ \sigma_w^2, \sigma_p^2 \}.
\end{align}
Here, we bound with the noise memorization term because it is dominant from the condition $\snr^2 = o(n^{-1})$, i.e., $\|\vmu\|_2^2 = o(n^{-1} \sigma_\epsilon^2 d)$.
For noisy data $j \in \gN_-$, using $s_1^{(i)}(\tau)(1 - s_1^{(i)}(\tau)) \lesssim s_1^{(j)}(\tau)(1 - s_1^{(j)}(\tau))$ for any clean data $i \in \gC$ from $F(\tau)$, we have
\begin{align}
\Lambda_{j,t}(\tau+1) - \Lambda_{j,t}(\tau)
&\leq s_1^{(i)}(\tau)(1 - s_1^{(i)}(\tau)) \cdot c_3^\prime \alpha \|\vnu\|_2 \|\vmu\|_2^3 d \nonumber \\
&\qquad - s_1^{(j)}(\tau)(1 - s_1^{(j)}(\tau)) \cdot c_4^\prime \alpha n^{-1}\sigma_\epsilon^2 \|\vnu\|_2 \|\vmu\|_2 d^2 \max\{ \sigma_w^2, \sigma_p^2 \} \nonumber \\
&\qquad + s_1^{(j)}(\tau) s_t^{(j)}(\tau) \cdot c_4^\prime \alpha n^{-1} \sigma_\epsilon^2 \|\vnu\|_2 \|\vmu\|_2 d^2 \max\{ \sigma_w^2, \sigma_p^2 \} \\
&\leq s_1^{(j)}(\tau)(1 - s_1^{(j)}(\tau)) \cdot \alpha \|\vnu\|_2 \|\vmu\|_2 d \left( c_5^\prime \|\vmu\|_2^2 - c_4^\prime n^{-1}\sigma_\epsilon^2 d \right) \max\{ \sigma_w^2, \sigma_p^2 \} \nonumber \\
&\qquad + s_1^{(j)}(\tau) s_t^{(j)}(\tau) \cdot c_4^\prime \alpha n^{-1} \sigma_\epsilon^2 \|\vnu\|_2 \|\vmu\|_2 d^2 \max\{ \sigma_w^2, \sigma_p^2 \} \\
\label{eq:induction_c_benign_overfitting_stage_1_lambda_noisy_lower}
&\leq -s_1^{(j)}(\tau)(1 - s_1^{(j)}(\tau)) \cdot c_6^\prime \alpha n^{-1} \sigma_\epsilon^2 \|\vnu\|_2 \|\vmu\|_2 d^2 \max\{ \sigma_w^2, \sigma_p^2 \}.
\end{align}

We note that we have the same result for weakly relevant tokens $t \in \gW_{+1}^{(i)} \cup \gW_{-1}^{(i)}$.
When $t \in \gW_{+1}^{(i)}$, the update of $\lambda_{+1}$ is multiplied by a factor of $(1 - \rho)$.
Since the noise update is dominant in this case, it does not affect the above bounds.
For $t \in \gW_{-1}^{(i)}$, we also have to consider the update of $\lambda_{-1}$.
However, by similarly applying \cref{lem:signal_update_induction_benign_overfitting_stage_1,lem:noise_update_induction_benign_overfitting_stage_1}, along with the result for the softmax probability ratio $F(\tau)$, the case for $t \in \gW_{-1}^{(i)}$ reduces to that of $t \in \gW_{+1}^{(i)}$.
Therefore, the above inequalities hold for any $t \in [T] \setminus \{1\}$.
Similarly, from \cref{lem:signal_update_induction_benign_overfitting_stage_1,lem:noise_update_induction_benign_overfitting_stage_1}, we obtain the following upper bound:
\begin{align}
\Lambda_{i,t}(\tau+1) - \Lambda_{i,t}(\tau)
\label{eq:induction_c_benign_overfitting_stage_1_lambda_upper}
&\leq s_1^{(i)}(\tau)(1 - s_1^{(i)}(\tau)) \cdot c_7^\prime \alpha n^{-1} \sigma_\epsilon^2 \|\vnu\|_2 \|\vmu\|_2 d^2 \max\{ \sigma_w^2, \sigma_p^2 \}, \\
\Lambda_{j,t}(\tau+1) - \Lambda_{j,t}(\tau)
\label{eq:induction_c_benign_overfitting_stage_1_lambda_noisy_upper}
&\geq -s_1^{(j)}(\tau)(1 - s_1^{(j)}(\tau)) \cdot c_8^\prime \alpha n^{-1} \sigma_\epsilon^2 \|\vnu\|_2 \|\vmu\|_2 d^2 \max\{ \sigma_w^2, \sigma_p^2 \},
\end{align}
for $i \in \gC_+, j \in \gN_-$ and $t \in [T] \setminus \{1\}$.

The evaluation of the evolution of $\Lambda$ is identical to the proof of the not overfitting case in \cref{lem:induction_not_overfitting}, differing only in coefficients; therefore, we omit the discussion.
For noisy data, since $\Lambda_{j,t}(\tau)$ is monotonically decreasing, the evaluation in Case 1 ($\Lambda_{j,t}(\tau) \leq \log T$) suffices to complete the argument.

\paragraph{Proof of $C(\tau) \Rightarrow D(\tau)$.}
The lower bound of clean data is derived from $C(\tau)$ and $D(0)$.
To derive the lower bound for noisy data $j \in \gN$, we use a similar technique to \cref{eq:lower_bound_summmation_prob_terms_benign_overfitting_noisy} in the proof of \cref{lem:bound_summation_prob_terms}.
From $C(\tau)$ and \cref{lem:attention_gap_initialization}, we have $\Lambda_{j,t}(\tau) < o(1)$ holds for any $t \in [T] \setminus \{1\}$.
Since we can confirm that the inequality $2 + 2\cosh(x - \log T) < - 2x - 2\sinh(x - \log T) + 2 = -g(x) + 2$ holds for $x < o(1)$ by rearranging terms, \cref{lem:bounds_softmax} and $C(\tau)$ give us
\begin{align}
s_1^{(j)}(\tau)
> s_1^{(j)}(\tau)( 1 - s_1^{(j)}(\tau) )
&> \frac{c^{-1}}{2 + 2\cosh\left( \Lambda_{j,t}(\tau^\prime) - \log T \right)} \\
&> \frac{c^{-1}}{- g\left( \Lambda_{j,t}(\tau^\prime) \right) + 2} \\
&> \frac{c^{-1}}{- g\left( \Lambda_{j,t}(0) \right) + \tau \cdot c_6 \alpha n^{-1} \sigma_\epsilon^2 \|\vnu\|_2 \|\vmu\|_2 d^2 \max\{ \sigma_w^2, \sigma_p^2 \} + 2} \\
&= \Theta(\rho),
\end{align}
where the last line follows from the definition of $T_1$ and $-g(\Lambda_{j,t}(0)) = (1+o(1)) T$.
This completes the proof.

\paragraph{Proof of $C(\tau) \Rightarrow E(\tau)$.}
The proof for the clean data is the same as the not-benign overfitting case in \cref{lem:induction_not_overfitting}; therefore, we omit the discussion.
We first provide the analysis of the noisy data with proof by contradiction.
Suppose that $E(\tau)$ does not hold, i.e., for any constant $c > 0$, there exist $k, l \in \gN$ and $t, u \in [T] \setminus \{1\}$ such that
\begin{align}
\label{eq:induction_e_benign_overfitting_stage_1_noisy_large_gap_condition}
\exp\left( \Lambda_{k,t}(\tau) - \Lambda_{l,u}(\tau) \right) \geq c.
\end{align}
From $C(\tau)$, we have for any $k, l \in \gN$ and $t, u \in [T] \setminus \{1\}$ that
\begin{align}
\label{eq:induction_e_benign_overfitting_stage_1_noisy_g_ratio}
\frac{ g\left( \Lambda_{l,u}(\tau) \right) - g\left( \Lambda_{l,u}(0) \right)  }{ g\left( \Lambda_{k,t}(\tau) \right) - g\left( \Lambda_{k,t}(0) \right) }
&\leq \frac{ c_6 }{ c_5 },
\end{align}
for $\tau \geq 1$.
Additionally, we have $g\left( \Lambda_{l,u}(0) \right) / g\left( \Lambda_{k,t}(0) \right) < 2$ by the same argument as in \cref{eq:induction_e_g_init_ratio}.
Also, $\Lambda_{k,t}(\tau) < o(1)$ and $\Lambda_{l,t}(\tau) < o(1)$ are implied by \cref{lem:attention_gap_initialization} and the fact that $\Lambda(\tau)$ is monotonically decreasing as a result of $C(\tau)$.
By a simple calculation, we obtain the following for $x < o(1) - \log c$:
\begin{align}
2\sinh\left( x + \log c - \log T \right) 
&= \frac{c}{T}\exp(x) - \frac{T}{c}\exp(-x) \\
&= \frac{3}{c} \cdot 2\sinh\left( x - \log T \right) 
+ \frac{2T}{c}\exp(-x)
+ \left( c - \frac{3}{c} \right) \frac{\exp(x)}{T}  \\
\label{eq:induction_e_benign_overfitting_stage_1_noisy_sinh_decomposition}
&> \frac{3}{c} \cdot 2\sinh\left( x - \log T \right) - 2T x,
\end{align}
for $c$ satisfying $c > 3/c$.
In the last line, we used $\exp(-x) > -cx$ for $x < o(1) - \log c$.
By using \cref{eq:induction_e_benign_overfitting_stage_1_noisy_large_gap_condition,eq:induction_e_benign_overfitting_stage_1_noisy_sinh_decomposition}, we have 
\begin{align}
g\left( \Lambda_{k,t}(\tau) \right) - g\left( \Lambda_{k,t}(0) \right) 
&= 2 \Lambda_{k,t}(\tau) + 2\sinh\left( \Lambda_{k,t}(\tau) - \log T \right) - g\left( \Lambda_{k,t}(0) \right) \\
&\geq 2 \left( \Lambda_{l,u}(\tau) + \log c \right) + 2\sinh\left( \Lambda_{l,u}(\tau) + \log c - \log T \right) - \frac{1}{2} g\left( \Lambda_{l,u}(0) \right)   \\
&> \left( 2  - 2T \right) \Lambda_{l,u}(\tau)
+ \frac{3}{c} \cdot 2 \sinh\left( \Lambda_{l,u}(\tau) - \log T \right)
+ \frac{1}{2} \left( - g\left( \Lambda_{l,u}(0) \right) \right) \\
\label{eq:induction_e_benign_overfitting_stage_1_noisy_g_ratio_contradiction}
&> \min\{ c_5 / c_6, 1/2 \} \cdot \left( g\left( \Lambda_{l,u}(\tau) \right) - g\left( \Lambda_{l,u}(0) \right) \right).
\end{align}
In the second inequality, please note that $\Lambda_{l,u}(\tau) < \Lambda_{k,t}(\tau) - \log c = o(1) - \log c < 0$, $\sinh(\Lambda_{l, u}(\tau) - \log T) < 0$, and $-g(\Lambda_{l,u}(0)) = (1 \pm o(1)) T > 0$.
Thus, the last line follows from a sufficiently large $c > 0$ satisfying $3/c < \min\{ c_5/c_6, 1/2\}$.
Since the left-hand side of \cref{eq:induction_e_benign_overfitting_stage_1_noisy_g_ratio_contradiction} is negative by $C(\tau)$, \cref{eq:induction_e_benign_overfitting_stage_1_noisy_g_ratio_contradiction} leads to
\begin{align}
\frac{ g\left( \Lambda_{l,u}(\tau) \right) - g\left( \Lambda_{l,u}(0) \right)  }{ g\left( \Lambda_{k,t}(\tau) \right) - g\left( \Lambda_{k,t}(0) \right) }
&> \max\left\{ \frac{c_6}{c_5}, 2 \right\}
\geq \frac{c_6}{c_5},
\end{align}
which contradicts \cref{eq:induction_e_benign_overfitting_stage_1_noisy_g_ratio}.

The remaining is to show $\exp(\Lambda_{i,t}(\tau) + \Lambda_{k,u}(\tau)) < c_7$ and $\exp(-\Lambda_{i,t}(\tau) - \Lambda_{k,u}(\tau)) < c_7$.
This can be shown in the same manner as the proof of \cref{lem:induction_not_overfitting} for the not-overfitting case.
Specifically, we divide the analysis into the initial learning phase (Case 1) and the subsequent phase (Case 2).
In Case 1, the difference between $g(\Lambda_{i,t}(\tau)$ and $g(\Lambda_{k,u}(\tau))$ is shown to be of constant order in a similar way.
In Case 2, since they have the same order of time evolution with opposite signs by $C(\tau)$, a similar discussion can be applied.
This completes the proof.

\paragraph{Proof of $E(\tau) \Rightarrow F(\tau)$.}
The ratios of $s_1(\tau) (1 - s_1(\tau))$ within the clean data and within the noisy data are respectively bounded by a constant, by the same argument as in \cref{eq:induction_f_not_overfitting_s_1-s_ratio} for the not-overfitting case.
We discuss the ratio between clean data and noisy data.
From the definition of $\Lambda_{i,t}$, we have
\begin{align}
\frac{1}{s_1^{(i)}(\tau) (1 - s_1^{(i)}(\tau))}
&= 
\left( 1 + \sum_{t = 2}^T \exp \left( - \Lambda_{i,t}(\tau) \right) \right)
\cdot
\frac{ 1 + \sum_{t = 2}^T \exp \left( - \Lambda_{i,t}(\tau) \right) }{ \sum_{t = 2}^T \exp\left( - \Lambda_{i,t}(\tau) \right) } \\
&= 2 + \sum_{t = 2}^T \exp \left( - \Lambda_{i,t}(\tau) \right) + \frac{ 1  }{ \sum_{t = 2}^T \exp\left( - \Lambda_{i,t}(\tau) \right) }.
\end{align}
Thus, for any $i \in \gC$ and $j \in \gN$, there exists a constant $c_1^\prime$ such that
\begin{align}
\frac{s_1^{(j)}(\tau) (1 - s_1^{(j)}(\tau))}{s_1^{(i)}(\tau) (1 - s_1^{(i)}(\tau))}
&<
\frac{s_1^{(j)}(\tau)}{s_1^{(i)}(\tau) (1 - s_1^{(i)}(\tau))} \\
&= 
\frac{ 2 + \sum_{t=2}^T \exp\left( - \Lambda_{i,t}(\tau) \right) }{1 + \sum_{t = 2}^T \exp\left( - \Lambda_{j,t}(\tau) \right) }
+ 
\frac{1}{\left( 1 + \sum_{t = 2}^T \exp\left( - \Lambda_{j,t}(\tau) \right) \right) \cdot \sum_{t = 2}^T \exp\left( - \Lambda_{i,t}(\tau) \right) } \\
\label{eq:induction_f_benign_overfitting_s_1_ratio_upper}
&\leq \frac{2+(T-1)o(1)}{1+(T-1)o(1)} + \exp\left( \Lambda_{i,u}(\tau) + \Lambda_{j,v}(\tau) \right) < c_1^\prime,
\end{align}
for any $u, v \in [T] \setminus \{1\}$.
In the last line, we used that $\Lambda_{i,t}(\tau) > -o(1)$, $\Lambda_{j,t}(\tau) < o(1)$ and the result of $E(\tau)$.
Similarly, we have
\begin{align}
\frac{s_1^{(i)}(\tau) (1 - s_1^{(i)}(\tau))}{s_1^{(j)}(\tau) (1 - s_1^{(j)}(\tau))}
&<
\frac{ 1- s_1^{(i)}(\tau)}{s_1^{(j)}(\tau) (1 - s_1^{(j)}(\tau))} \\
&= 
\frac{\sum_{t=2}^T \exp\left( - \Lambda_{i,t}(\tau) \right)}{ 1 + \sum_{t=2}^T \exp\left( - \Lambda_{i,t}(\tau) \right)} \cdot
\left( 2 + \sum_{t = 2}^T \exp \left( - \Lambda_{j,t}(\tau) \right) + \frac{1}{ \sum_{t = 2}^T \exp\left( - \Lambda_{j,t}(\tau) \right) } \right) \\
\label{eq:induction_f_benign_overfitting_s_1_ratio_lower}
&< 2 + T^2 \max_{u,v\in[T]\setminus\{1\}}\exp\left( -\Lambda_{i,u}(\tau) - \Lambda_{j,v}(\tau) \right) + \frac{(T-1)o(1)}{(T-1)o(1)} < c_2^\prime,
\end{align}
which follows from $\Lambda_{i,t}(\tau) > -o(1)$, $\Lambda_{j,t}(\tau) < o(1)$ and the parameter assumption $T = \Theta(1)$.

Finally, $E(\tau)$ implies
\begin{align}
\max_{t, u \in [T] \setminus \{1\}} \frac{s_t^{(i)}(\tau)}{s_u^{(i)}(\tau)}
&= \max_{t, u \in [T] \setminus \{1\}} \frac{\exp\left( \vx_t^{(i)\top}\mW(\tau)^\top\vp(\tau) \right)}{\exp\left( \vx_u^{(i)\top}\mW(\tau)^\top\vp(\tau) \right)}
= \max_{t, u \in [T] \setminus \{1\}} \frac{\exp\left( - \Lambda_{i,t}(\tau) \right)}{\exp\left( - \Lambda_{i,u}(\tau) \right)}
< c_7,
\end{align}
for any $i \in [n]$.
The dominance of $s_1^{(i)}(\tau)( 1 - s_1^{(i)}(\tau))$ for $i \in \gC$, relative to all tokens $t \in [T] \setminus\{1\}$, is shown by the same argument as in \cref{lem:induction_not_overfitting}.

\paragraph{Proof of $\land_{\tau^\prime \leq \tau} \left( A(\tau^\prime) \land B(\tau^\prime) \land C(\tau^\prime) \land D(\tau^\prime) \land F(\tau^\prime) \right) \Rightarrow G(\tau+1)$.}
The conditions of \cref{lem:signal_update_induction_benign_overfitting_stage_1} are satisfied from $A(\tau) \land B(\tau) \land D(\tau) \land F(\tau)$.
From \cref{lem:signal_update_induction_benign_overfitting_stage_1}, we have
\begin{align}
\lambda_{+1}(\tau+1) - \lambda_{+1}(0)
\label{eq:induction_g_benign_overfitting_stage_1_signal_1_upper}
\leq \sum_{\tau^\prime = 0}^{\tau} s_1^{(i)}(\tau^\prime) ( 1 - s_1^{(i)}(\tau^\prime) ) \cdot c^\prime \alpha \|\vnu\|_2 \|\vmu\|_2^3 d \max\{ \sigma_w^2, \sigma_p^2 \},
\end{align}
for any clean data $i \in \gC$.
To derive the upper bound of the summation of softmax probability terms, we use \cref{lem:bound_summation_prob_terms}.
Since \cref{lem:bound_summation_prob_terms} follows from $\land_{\tau^\prime \leq \tau} C(\tau^\prime)$ and the definition of $T_1$, we have
\begin{align}
&\sum_{\tau^\prime = 0}^{\tau} s_1^{(i)}(\tau^\prime) ( 1 - s_1^{(i)}(\tau^\prime) ) \nonumber \\
&\lesssim \max\left\{ \frac{1}{\alpha n^{-1} \sigma_\epsilon^2 \|\vnu\|_2 \|\vmu\|_2 d^2 \max\{ \sigma_w^2, \sigma_p^2 \}}, \frac{ \log \left( \tau \cdot \alpha n^{-1} \sigma_\epsilon^2 \|\vnu\|_2 \|\vmu\|_2 d^2 \max\{ \sigma_w^2, \sigma_p^2 \} \right)}{\alpha n^{-1} \sigma_\epsilon^2 \|\vnu\|_2 \|\vmu\|_2 d^2 \max\{ \sigma_w^2, \sigma_p^2 \}} \right\} \\
&\leq \max\left\{ \frac{1}{\alpha n^{-1} \sigma_\epsilon^2 \|\vnu\|_2 \|\vmu\|_2 d^2 \max\{ \sigma_w^2, \sigma_p^2 \}}, \frac{ \log \left( T_1 \cdot \alpha n^{-1} \sigma_\epsilon^2 \|\vnu\|_2 \|\vmu\|_2 d^2 \max\{ \sigma_w^2, \sigma_p^2 \} \right)}{\alpha n^{-1} \sigma_\epsilon^2 \|\vnu\|_2 \|\vmu\|_2 d^2 \max\{ \sigma_w^2, \sigma_p^2 \}} \right\} \\
&= \frac{ \log \rho^{-1} }{\alpha n^{-1} \sigma_\epsilon^2 \|\vnu\|_2 \|\vmu\|_2 d^2 \max\{ \sigma_w^2, \sigma_p^2 \}}.
\end{align}
Substituting this to \cref{eq:induction_g_benign_overfitting_stage_1_signal_1_upper} and using the current SNR condition $\snr^2 = o(n^{-1})$, we have
\begin{align}
\label{eq:induction_g_benign_overfitting_stage_1_signal_1}
\lambda_{+1}(\tau+1) - \lambda_{+1}(0) = o(\log \rho^{-1}).
\end{align}
Since $|\lambda_{+1}(0)| = o(1)$ from \cref{lem:attention_gap_initialization}, \cref{eq:induction_g_benign_overfitting_stage_1_signal_1} lead to $\lambda_{+1}(\tau+1) \leq o(\log \rho^{-1})$.

In the same way, we have $\lambda_{-1}(\tau+1) \leq o(\log \rho^{-1})$.
For noise memorization terms, from \cref{lem:noise_update_induction_benign_overfitting_stage_1}, we have for any $i \in [n]$ that
\begin{align}
| \rho_{i,1}(\tau+1) - \rho_{i,1}(0) |
&\leq \sum_{\tau^\prime = 0}^{\tau} s_1^{(i)}(\tau^\prime) ( 1 - s_1^{(i)}(\tau^\prime) ) \cdot c^\prime \alpha n^{-1} \sigma_\epsilon^2 \|\vnu\|_2 \|\vmu\|_2 d^2 \max\{ \sigma_w^2, \sigma_p^2 \}.
\end{align}
We proceed with the same argument as the previously discussed signal update case and obtain the upper bound $|\rho_{i,1}(\tau+1)| \leq O(\log \rho^{-1})$.
The bound of $\rho_{i,t}$ is obtained as well.

\paragraph{Proof of $\land_{\tau^\prime\leq\tau} \left( B(\tau^\prime) \land C(\tau^\prime) \land F(\tau^\prime) \land G(\tau^\prime) \right) \Rightarrow A(\tau+1)$.}
The proof is basically the same as the not-overfitting case, but we newly have to care about the behavior of noisy samples.
It follows from \cref{lem:update_of_p} that
\begin{align}
\|\vp(\tau+1)\|_2^2 - \|\vp(0)\|_2^2
\label{eq:induction_a_benign_overfitting_stage_1_p_norm}
&= \frac{2\alpha}{n} \sum_{\tau^\prime=0}^{\tau} \sum_{i=1}^n (-\ell_i^\prime(\tau^\prime)) \cdot Y^{(i)} \cdot I_{i}^p(\tau^\prime) + \alpha^2 \sum_{\tau^\prime=0}^{\tau} \|\nabla_\vp\widehat{\Ls}(\tau^\prime) \|_2^2.
\end{align}
By definition, we have
\begin{align}
I_i^p(\tau^\prime) 
&= \sum_{t=1}^T s_t^{(i)}(\tau^\prime) \left( \gamma_t^{(i)} - \sum_{u=1}^T s_u^{(i)}(\tau^\prime) \gamma_u^{(i)} \right) \langle \mW(\tau^\prime) \vx_t^{(i)}, \vp(\tau^\prime) \rangle \\
\label{eq:induction_a_benign_overfitting_stage_1_i_p_i}
&\leq \sum_{t=1}^T \left| s_t^{(i)}(\tau^\prime) \left( \gamma_t^{(i)} - \sum_{u=1}^T s_u^{(i)}(\tau^\prime) \gamma_u^{(i)} \right) \right| \cdot \max\left\{ |\lambda_{+1}(\tau^\prime)|, |\lambda_{-1}(\tau^\prime)|, |\rho_{i,t}(\tau^\prime)| \right\}.
\end{align}
For clean data $i \in \gC$, we use the argument similar to \cref{eq:induction_a_not_overfitting_i_p_i}, which follows from $F(\tau^\prime)$, and combine it with $G(\tau^\prime)$, \cref{lem:token_score,lem:bound_summation_prob_terms}.
Then, we have
\begin{align}
\sum_{\tau^\prime=0}^{\tau}
I_i^p(\tau^\prime) 
&\lesssim 
\sum_{\tau^\prime=0}^{\tau}
s_1^{(i)}(\tau^\prime) ( 1 - s_1^{(i)}(\tau^\prime)) \cdot \max_{t \in [T]}\left\{ | \gamma_t^{(i)} | \right\} \cdot \max\left\{ | \lambda_{+1}(\tau^\prime) |, | \lambda_{-1}(\tau^\prime) |, | \rho_{i,t}(\tau^\prime) | \right\} \\
\label{eq:induction_a_benign_overfitting_stage_1_clean_i_p_i}
&\lesssim 
\frac{\log^2 \rho^{-1}}{\alpha n^{-1} \sigma_\epsilon^2 d^2 \max\{\sigma_w^2, \sigma_p^2\}}.
\end{align}
For noisy data $j \in \gN$, the softmax probability $s_1^{(j)}(\tau^\prime)( 1 - s_1^{(j)}(\tau^\prime) )$ is not dominant over the other tokens.
Instead, we have
\begin{align}
& \sum_{\tau^\prime=0}^{\tau} \sum_{t=1}^T \left| s_t^{(j)}(\tau^\prime) \left( \gamma_t^{(j)} - \sum_{u=1}^T s_u^{(j)}(\tau^\prime) \gamma_u^{(j)} \right) \right| \nonumber \\
&\leq \sum_{\tau^\prime=0}^{\tau} \left( \left| s_1^{(j)}(\tau^\prime) \left( \gamma_1^{(j)} - \sum_{u=1}^T s_u^{(j)}(\tau^\prime) \gamma_u^{(j)} \right) \right|
+  \sum_{t=2}^T \left| s_t^{(j)}(\tau^\prime) s_1^{(j)}(\tau^\prime) \gamma_1^{(j)} \right| 
+ \sum_{t=2}^T \left| s_t^{(j)}(\tau^\prime) \left( \gamma_t^{(j)} - \sum_{u=2}^T s_u^{(j)}(\tau^\prime) \gamma_u^{(j)} \right) \right| \right) \\
\label{eq:induction_a_benign_overfitting_stage_1_noisy_probability}
&\lesssim \sum_{\tau^\prime=0}^{\tau} \left( s_1^{(j)}(\tau^\prime) ( 1 - s_1^{(j)}(\tau^\prime)) \cdot |\gamma_1^{(j)}| + \max_{t \in [T] \setminus \{1\}} \{ |\gamma_t^{(j)}| \} \right)  \\
\label{eq:induction_a_benign_overfitting_stage_1_noisy_softmax}
&\lesssim \frac{\log \rho^{-1}}{\alpha n^{-1} \sigma_\epsilon^2 d^2 \max\{\sigma_w^2, \sigma_p^2\}}
+ \frac{\rho}{\alpha n^{-1} \sigma_\epsilon^2 d^2 \max\{\sigma_w^2, \sigma_p^2\}}
\lesssim \frac{\log \rho^{-1}}{\alpha n^{-1} \sigma_\epsilon^2 d^2 \max\{\sigma_w^2, \sigma_p^2\}},
\end{align}
where the last line follows from \cref{lem:bound_summation_prob_terms}, $\max_{t \in [T] \setminus \{1\}} \{ |\gamma_t^{(j)}| \} = O(\rho \|\vnu\|_2\|\vmu\|_2)$, and the definition of $T_1$.
Therefore, we have
\begin{align}
\sum_{\tau^\prime=0}^{\tau}
I_j^p(\tau^\prime) 
\label{eq:induction_a_benign_overfitting_stage_1_noisy_i_p_i}
&\lesssim 
\frac{\log^2 \rho^{-1}}{\alpha n^{-1} \sigma_\epsilon^2 d^2 \max\{\sigma_w^2, \sigma_p^2\}}.
\end{align}
Now we have \cref{eq:induction_a_benign_overfitting_stage_1_clean_i_p_i,eq:induction_a_benign_overfitting_stage_1_noisy_i_p_i}, and the remainder of the proof follows from the same discussion as in the not-overfitting case.

\paragraph{Proof of $\land_{\tau^\prime \leq \tau} \left( A(\tau^\prime) \land C(\tau^\prime) \land G(\tau^\prime) \right) \Rightarrow B(\tau+1)$.}
As with the proof of $A(\tau+1)$, the contribution of the noisy data can be evaluated in the same order as the clean data, similarly to \cref{eq:induction_a_benign_overfitting_stage_1_noisy_i_p_i}. 
The proof is essentially the same as the not-overfitting case, so we omit the proof.

\begin{proof}[Proof of \cref{lem:induction_benign_overfitting_stage_1}]
At time step $\tau = 0$, all propositions hold from the proof for the base case.
For the next time step, $A(\tau+1), B(\tau+1), C(\tau+1)$ and $G(\tau+1)$ are proved based on the propositions up to time step $\tau$.
As for $D(\tau+1)$, $E(\tau+1)$, and $F(\tau+1)$, they are derived from $C(\tau+1)$.
Thus, the proof is completed by induction.
\end{proof}

\begin{lemma}[Benign Overfitting, Stage 2]
\label{lem:induction_benign_overfitting_stage_2}
Let $T_1$ be the time step in \cref{lem:induction_benign_overfitting_stage_1}.
For any time step $T_2 = \Theta\left( \frac{ \exp( n^{-1} \snr^{-2} ) }{\alpha n^{-1} \sigma_\epsilon^2 \|\vnu\|_2 \|\vmu\|_2 d^2 \max\{ \sigma_w^2, \sigma_p^2 \} } \right)$, on a good run, the following propositions hold for all time step $\tau \in [T_1, T_2]$:
\begin{description}
\item[$A(\tau)$: ]
There exists a constant $C_1 > 1$ such that
\begin{gather*}
\left(1 - 1/C_1 \right) \cdot \sigma_p \sqrt{d}
\leq \|\vp(\tau)\|_2
\leq \left( 1 + 1/C_1 \right) \cdot \sigma_p \sqrt{d}.
\end{gather*}

\item[$B(\tau)$: ] 
There exists a constant $C_2 > 1$ such that
\begin{gather*}
\left(1 - 1/C_2 \right) \cdot \sigma_w \|\vmu\|_2 \sqrt{d}
\leq \|\mW(\tau) \vmu_{+1} \|_2 
\leq \left(1 + 1/C_2 \right) \cdot \sigma_w \|\vmu\|_2 \sqrt{d}, \\ 
\left(1 - 1/C_2 \right) \cdot \sigma_w \|\vmu\|_2 \sqrt{d}
\leq \|\mW(\tau) \vmu_{-1} \|_2 
\leq \left(1 + 1/C_2 \right) \cdot \sigma_w \|\vmu\|_2 \sqrt{d}, \\ 
\left(1 - 1/C_2 \right) \cdot \sigma_w \sigma_\epsilon d
\leq \|\mW(\tau) \vepsilon_t^{(i)} \|_2 
\leq \left(1 + 1/C_2 \right) \cdot \sigma_w \sigma_\epsilon d,
\end{gather*}
for all $i \in [n]$ and $t \in [T]$.
\begin{align*}
|\langle \mW(\tau) \vmu_{+1}, \mW(\tau) \vmu_{-1} \rangle| &< O( \sigma_w^2 \|\vmu\|_2^2 \sqrt{d} \log(Tn/\delta) ), \\
|\langle \mW(\tau) \vmu_{+1}, \mW(\tau) \vepsilon_t^{(i)} \rangle| &< O( \sigma_w^2 \sigma_\epsilon \|\vmu\|_2 d \log(Tn/\delta) ), \\
|\langle \mW(\tau) \vmu_{-1}, \mW(\tau) \vepsilon_t^{(i)} \rangle| &< O( \sigma_w^2 \sigma_\epsilon \|\vmu\|_2 d \log(Tn/\delta) ), \\
|\langle \mW(\tau) \vepsilon_{t}^{(i)}, \mW(\tau) \vepsilon_{u}^{(j)} \rangle| &< O( \sigma_w^2 \sigma_\epsilon^2 d^{3/2} \log(Tn/\delta) ),
\end{align*}
for all $i, j \in [n]$ and $t,u \in [T]$ such that $(i,t) \neq (j,u)$.

\item[$C(\tau)$: ] 
Let $g$ be a monotonically increasing function $g(x) = 2x + 2\sinh\left( x - \log T \right)$.
Then, there exist constants $c_3, c_4 > 0$ with $c_3 < c_4$ such that
\begin{align*}
g\left( \Lambda_{i,t}(\tau) \right)
&\geq 
g\left( \Lambda_{i,t}(0) \right)
+ \tau \cdot c_3 \alpha n^{-1} \sigma_\epsilon^2 \|\vnu\|_2 \|\vmu\|_2 d^2 \max\{ \sigma_w^2, \sigma_p^2 \}, \\
g\left( \Lambda_{i,t}(\tau) \right)
&\leq 
g\left( \Lambda_{i,t}(0) \right)
+ \tau \cdot c_4 \alpha n^{-1} \sigma_\epsilon^2 \|\vnu\|_2 \|\vmu\|_2 d^2 \max\{ \sigma_w^2, \sigma_p^2 \},
\end{align*}
for any clean data $i \in \gC$ and $t \in [T] \setminus \{1\}$.
Additionally, we have that for some constants $c_5 < c_6$,
\begin{align*}
g\left( \Gamma_{j,1}(\tau) - \log \rho^{-1} \right)
&\geq 
g\left( \Gamma_{j,1}(0) - \log \rho^{-1} \right)
+ \tau \cdot \rho \cdot c_5 \alpha n^{-1} \sigma_\epsilon^2 \|\vnu\|_2 \|\vmu\|_2 d^2 \max\{ \sigma_w^2, \sigma_p^2 \}, \\
g\left( \Gamma_{j,1}(\tau) - \log \rho^{-1} \right)
&\leq 
g\left( \Gamma_{j,1}(0) - \log \rho^{-1} \right)
+ \tau \cdot \rho \cdot c_6 \alpha n^{-1} \sigma_\epsilon^2 \|\vnu\|_2 \|\vmu\|_2 d^2 \max\{ \sigma_w^2, \sigma_p^2 \},
\end{align*}
for any noisy data $j \in \gN$, and
\begin{align*}
g\left( \Gamma_{j,t}(\tau) \right)
&\geq 
g\left( \Gamma_{j,t}(0) \right)
+ \tau \cdot \rho \cdot c_5 \alpha n^{-1} \sigma_\epsilon^2 \|\vnu\|_2 \|\vmu\|_2 d^2 \max\{ \sigma_w^2, \sigma_p^2 \}, \\
g\left( \Gamma_{j,t}(\tau) \right)
&\leq 
g\left( \Gamma_{j,t}(0) \right)
+ \tau \cdot \rho \cdot c_6 \alpha n^{-1} \sigma_\epsilon^2 \|\vnu\|_2 \|\vmu\|_2 d^2 \max\{ \sigma_w^2, \sigma_p^2 \},
\end{align*}
for any noisy data $j \in \gN$ and $t \in [T] \setminus \{ 1, 2\}$.

\item[$D(\tau)$: ] 
\begin{gather*}
s_1^{(i)}(\tau) \geq \Theta(1),
\ 
s_2^{(j)}(\tau) \geq \Theta(1),
\end{gather*}
for all clean data $i \in \gC$ and noisy data $j \in \gN$.

\item[$E(\tau)$: ] 
There exists a constant $c_7 > 1$ such that
\begin{gather*}
\exp\left( \Lambda_{i,t}(\tau) - \Lambda_{j,u}(\tau) \right) < c_7, \
\exp\left( \Gamma_{k,v}(\tau) - \Gamma_{l,w}(\tau) \right) < c_7, \\
\exp\left( \Gamma_{k,1}(\tau) - \Gamma_{l,w}(\tau) \right) < \rho^{-1} c_7, \ 
\exp\left( \Gamma_{k,v}(\tau) - \Gamma_{l,1}(\tau) \right) < \rho c_7, \\
\exp\left( \Lambda_{i,t}(\tau) - \Gamma_{k,v}(\tau) \right) < \rho^{-1} c_7, \ 
\exp\left( \Gamma_{k,v}(\tau) - \Lambda_{i,t}(\tau)  \right) < \rho c_7, \\
\exp\left( \Lambda_{i,t}(\tau) - \Gamma_{k,1}(\tau) \right) < c_7, \ 
\exp\left( \Gamma_{k,1}(\tau) - \Lambda_{i,t}(\tau) \right) < c_7,
\end{gather*}
for all $i, j \in \gC$, $k, l \in \gN$, $t, u \in [T] \setminus \{1\}$ and $v, w \in [T] \setminus \{1, 2\}$.

\item[$F(\tau)$: ] 
There exists a constant $c_8 > 1$ such that
\begin{gather*}
\max_{i, j \in [n]} 
\frac{s_1^{(i)}(\tau) (1 - s_1^{(i)}(\tau))}{s_1^{(j)}(\tau) (1 - s_1^{(j)}(\tau))} < c_8,
\ 
\max_{k, l \in \gN} 
\frac{s_2^{(k)}(\tau) (1 - s_2^{(k)}(\tau))}{s_2^{(l)}(\tau) (1 - s_2^{(l)}(\tau))} < c_8, \\
\max_{i \in \gC, k \in \gN} 
\frac{s_1^{(i)}(\tau) (1 - s_1^{(i)}(\tau))}{s_2^{(k)}(\tau) (1 - s_2^{(k)}(\tau))} < \rho c_8,
\ 
\max_{i \in \gC, k \in \gN} 
\frac{s_2^{(k)}(\tau) (1 - s_2^{(k)}(\tau))}{s_1^{(i)}(\tau) (1 - s_1^{(i)}(\tau))} < \rho^{-1} c_8.
\end{gather*}
Additionally, there exist constants $c_9, c_{10} > 1$ such that 
\begin{gather*}
\max_{t, u \in [T] \setminus \{1\}} \frac{s_t^{(i)}(\tau)}{s_u^{(i)}(\tau)} < c_9, \ 
\max_{v, w \in [T] \setminus \{1,2\}} \frac{s_v^{(j)}(\tau)}{s_w^{(j)}(\tau)} < c_9,  \ 
\max_{v \in [T] \setminus \{2\}} \frac{s_1^{(j)}(\tau)}{s_v^{(j)}(\tau)} < \rho c_9,  \ 
\max_{v \in [T] \setminus \{2\}} \frac{s_v^{(j)}(\tau)}{s_1^{(j)}(\tau)} < \rho^{-1} c_9,  \\
\frac{s_t^{(i)}(\tau)( 1 - s_t^{(i)}(\tau) )}{s_1^{(i)}(\tau)( 1 - s_1^{(i)}(\tau) )} < c_{10}, \ 
\frac{s_v^{(j)}(\tau)( 1 - s_v^{(j)}(\tau) )}{s_2^{(j)}(\tau)( 1 - s_2^{(j)}(\tau) )} < c_{10},
\end{gather*}
for any clean data $i \in \gC$, noisy data $j \in \gN$, $t \in [T] \setminus \{1\}$, and $v \in [T] \setminus \{2\}$.

\item[$G(\tau)$: ]
\begin{gather*}
| \lambda_{+1}(\tau) | = O(1), \ 
| \lambda_{-1}(\tau) | = O(1), \ 
| \rho_{i,t}(\tau) | = O( n^{-1} \snr^{-2} ),
\end{gather*}
for any $i \in [n]$ and $t \in [T]$.
\end{description}
\end{lemma}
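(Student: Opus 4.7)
The plan is to prove Lemma~B.11 by induction on $\tau \in [T_1, T_2]$, mirroring the template of Lemma~B.10 but now tracking the confusing-token attention gap $\Gamma_{j,u}(\tau)$ (and its $-\log\rho^{-1}$ shifted version for the relevant position) instead of $\Lambda_{j,u}(\tau)$ for noisy samples. The base case at $\tau = T_1$ is inherited from Lemma~B.10: in particular, at the end of Stage~1 we have $s_1^{(j)}(T_1) = \Theta(\rho)$ for $j \in \gN$, and the norm/inner-product bounds in $A(T_1), B(T_1)$, the signal/noise bounds in $G(T_1)$, and the appropriate initial values of $\Gamma_{j,u}(T_1)$ all follow from the Stage~1 lemma combined with $|\lambda(T_1)|, |\rho_{i,t}(T_1)| = O(\log\rho^{-1})$. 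The plan is then to establish the induction step $A(\tau)\land B(\tau)\land C(\tau)\land D(\tau)\land E(\tau)\land F(\tau) \Rightarrow C(\tau+1)$, and subsequently $C(\tau+1) \Rightarrow D(\tau+1)$, $C(\tau+1) \Rightarrow E(\tau+1)$, $E(\tau+1) \Rightarrow F(\tau+1)$, and finally $G(\tau+1)$, $A(\tau+1)$, $B(\tau+1)$ from the accumulated hypotheses.

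For the heart of the argument, $C(\tau+1)$, I would invoke the one-step gradient update lemmas from Section~C (the Stage~2 counterparts of Lemmas~\ref{lem:signal_update_induction_benign_overfitting_stage_1} and~\ref{lem:noise_update_induction_benign_overfitting_stage_1}) to obtain, for noisy data $j \in \gN$, that $\Gamma_{j,u}(\tau+1) - \Gamma_{j,u}(\tau)$ is both lower- and upper-bounded by a constant times $\rho \cdot s_2^{(j)}(\tau)(1-s_2^{(j)}(\tau)) \cdot \alpha n^{-1}\sigma_\epsilon^2 \|\vnu\|_2\|\vmu\|_2 d^2 \max\{\sigma_w^2,\sigma_p^2\}$, the extra $\rho$ factor coming from the token score $|\gamma_2^{(j)}| = \Theta(\rho \|\vnu\|_2\|\vmu\|_2)$ of the confusing weakly relevant token. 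Applying Lemma~\ref{lem:bounds_softmax} to convert $s_2^{(j)}(1-s_2^{(j)})$ into $(2+2\cosh(\Gamma_{j,u}(\tau)-\log T))^{-1}$, and then running the two-case quadrature method (before and after $\Gamma_{j,u}(\tau) = \log T$) exactly as in the Stage~1 proof, yields the claimed bounds on $g(\Gamma_{j,u}(\tau))$ and $g(\Gamma_{j,1}(\tau)-\log\rho^{-1})$; the clean-data bound on $g(\Lambda_{i,t}(\tau))$ is carried over from the Stage~1 template unchanged. The proofs of $D$, $E$, $F$ then reduce to bookkeeping: $D(\tau)$ now lower-bounds $s_2^{(j)}(\tau) \geq \Theta(1)$ (symmetric to $s_1^{(i)}(\tau)\geq\Theta(1)$ from the clean-data evolution), the cross ratios in $E$ and $F$ acquire the additional $\rho$ or $\rho^{-1}$ factors because $\Gamma$-updates are slowed by $\rho$ relative to $\Lambda$-updates, and the contradiction/mediant arguments go through verbatim with updated constants.

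The main obstacle will be the coupled control of the $\rho$-ratios in $E(\tau)$ and $F(\tau)$ between clean and noisy samples across an exponentially long horizon. Because the weight updates depend on $s_t^{(i)}(\tau)(\gamma_t^{(i)} - \sum_u s_u^{(i)}(\tau)\gamma_u^{(i)})$ summed over all training examples, the dominance structure in the gradient changes: for noisy samples the effective signal-learning contribution is $\rho$-smaller, and any slippage in the ratios $s_1^{(i)}(1-s_1^{(i)})/[s_2^{(j)}(1-s_2^{(j)})] \asymp \rho$ would break the induction. The delicate point is the analogue of Lemma~\ref{lem:bound_summation_prob_terms}: the sum $\sum_{\tau'=0}^{T_2} s_1^{(i)}(\tau')(1-s_1^{(i)}(\tau'))$ must be shown to be of order $(n^{-1}\snr^{-2})/(\alpha n^{-1}\sigma_\epsilon^2\|\vnu\|_2\|\vmu\|_2 d^2\max\{\sigma_w^2,\sigma_p^2\})$, which is exactly what produces the exponential factor $\exp(n^{-1}\snr^{-2})$ in $T_2$ through the inverse of $g$. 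This is why the time scale must be exponential: the quadrature method gives $g(\Lambda_{i,t}(\tau)) \sim \tau \cdot \alpha n^{-1}\sigma_\epsilon^2\|\vnu\|_2\|\vmu\|_2 d^2\max\{\sigma_w^2,\sigma_p^2\}$, but signal learning contributes only $\Theta(\|\vmu\|_2^2/(\sigma_\epsilon^2 d/n)) \cdot \log\tau = n\snr^2 \cdot \log\tau$ to $\lambda_{+1}(\tau)$ (as in Lemma~\ref{lem:attention_value_class_signal}), so $\tau$ must reach $\exp(n^{-1}\snr^{-2})$ before the signal attention beats the noise contribution $\rho_{i,t}(\tau) = O(n^{-1}\snr^{-2})$ in $G(\tau)$.

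Once all propositions in the lemma are established, the main theorem for the benign overfitting case follows readily: $D(T_2)$ together with $E(T_2)$ yields $s_1^{(i)}(T_2) \to 1$ for clean $i$ and $s_2^{(j)}(T_2) \to 1$ for noisy $j$, giving $Y^{(i)} \cdot f_{T_2}(\mX^{(i)}) > 0$ for every training example (overfitting); and the $G(T_2)$ bound $|\lambda_{\pm 1}(T_2)| = O(1)$ combined with the generalization argument identical to Section~\ref{sec:proof_main_theorem_not_overfitting} (concentration on a fresh $\vepsilon$ through the high-probability event $\gE$ and an inductive proposition $H(\tau)$ bounding $\langle \mW(\tau)\vepsilon_t,\vp(\tau)\rangle$) ensures the test loss is below $\delta$.
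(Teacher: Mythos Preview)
Your proposal is correct and follows essentially the same induction template as the paper: base case inherited from Stage~1, the core step $C(\tau+1)$ driven by Stage~2 one-step update lemmas with the $\rho$ factor arising from $|\gamma_2^{(j)}| = \Theta(\rho\|\vnu\|_2\|\vmu\|_2)$, quadrature through $g$, and the bookkeeping implications $C\Rightarrow D\Rightarrow E\Rightarrow F$ together with $G$, $A$, $B$ via the summation bound. The only refinement the paper adds that you glossed over is that the softmax bound for $s_2^{(j)}(1-s_2^{(j)})$ requires a dedicated variant of Lemma~\ref{lem:bounds_softmax} (their Lemma~\ref{lem:bounds_softmax_confusing}), because the relevant token $u=1$ sits at an offset of $\log\rho^{-1}$ in the $\Gamma$ scale; this is exactly why the $g(\Gamma_{j,1}(\tau)-\log\rho^{-1})$ version appears in $C(\tau)$, and the paper states the two bounds separately rather than folding them into a single cosh formula.
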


We will prove the above propositions by induction argument for $\tau \in [T_1, T_2]$.
\paragraph{Base case.}
The propositions $A(T_1)$ and $B(T_1)$ hold from the results in \cref{lem:induction_benign_overfitting_stage_1}.

We show that $C(T_1)$ holds.
It is obvious from \cref{lem:induction_benign_overfitting_stage_1} that the inequalities for clean data $i \in \gC$ hold.
For noisy data, since $T_1$ is given by $\Theta\left( \frac{ \rho^{-1} }{\alpha n^{-1} \sigma_\epsilon^2 \|\vnu\|_2 \|\vmu\|_2 d^2 \max\{ \sigma_w^2, \sigma_p^2 \} } \right)$, it is sufficient to show $g(\Gamma_{j,1}(T_1) - \log \rho^{-1}) = \Theta(1)$ and $g(\Gamma_{j,t}(T_1)) = \Theta(1)$ for $t \in [T] \setminus \{1, 2\}$.
From the results in \cref{lem:induction_benign_overfitting_stage_1}, we have $g\left( \Lambda_{j,2}(T_1) \right) = -\Theta( \rho^{-1} )$.
Thus, we have $\Gamma_{j,1}(T_1) = - \Lambda_{j,2}(T_1) \in (1 \pm o(1))\log \rho^{-1}$, leading to $g(\Gamma_{j,1}(T_1) - \log \rho^{-1}) = \Theta(1)$.
Additionally, from $E(\tau)$ in \cref{lem:induction_benign_overfitting_stage_1}, we have
\begin{align}
\Gamma_{j,t}(T_1)
= \left( \vx_2^{(j)} - \vx_t^{(j)} \right)^\top \mW(T_1)^\top \vp(T_1)
= \Lambda_{j,t}(T_1) - \Lambda_{j,2}(T_1) < \log c_7,
\end{align}
where $c_7$ is a constant in \cref{lem:induction_benign_overfitting_stage_1}.
Thus, we have $g(\Lambda_{j,t}(T_1)) = \Theta(1)$, which is the desired result.

Regarding $G(T_1)$, the result in \cref{lem:induction_benign_overfitting_stage_1} and the current SNR condition $n^{-1} \snr^{-2} = \omega(1)$ conclude the base case.


\paragraph{Proof of $\land_{\tau^\prime \leq \tau} \left( A(\tau^\prime) \land B(\tau^\prime) \land C(\tau^\prime) \land D(\tau^\prime) \land E(\tau^\prime) \land F(\tau^\prime) \right) \Rightarrow C(\tau+1)$.}

We only show the case of $Y^{(i)} = 1$, i.e., $i \in \gC_+ \cup \gN_-$.
The same argument can be applied to the case of $i \in \gC_- \cup \gN_+$.
The conditions of \cref{lem:signal_update_induction_benign_overfitting_stage_2,lem:noise_update_induction_benign_overfitting_stage_2} are satisfied from $A(\tau) \land B(\tau) \land D(\tau) \land F(\tau)$.
For any clean data $i \in \gC$, since we have the same bounds for signal and noise updates as in \cref{lem:induction_benign_overfitting_stage_1}, the same argument leads to 
\begin{align}
\Lambda_{i,t}(\tau+1) - \Lambda_{i,t}(\tau)
&\geq s_1^{(i)}(\tau)(1 - s_1^{(i)}(\tau)) \cdot c_1^\prime \alpha n^{-1} \sigma_\epsilon^2 \|\vnu\|_2 \|\vmu\|_2 d^2 \max\{ \sigma_w^2, \sigma_p^2 \}, \\
\Lambda_{i,t}(\tau+1) - \Lambda_{i,t}(\tau)
&\leq s_1^{(i)}(\tau)(1 - s_1^{(i)}(\tau)) \cdot c_2^\prime \alpha n^{-1} \sigma_\epsilon^2 \|\vnu\|_2 \|\vmu\|_2 d^2 \max\{ \sigma_w^2, \sigma_p^2 \},
\end{align}
for any $t \in [T] \setminus \{1\}$.
Applying the same integral analysis, we obtain the desired inequalities for $g(\Lambda_{i,t}(\tau))$.

For noisy data $j \in \gN_-$, from these lemmas and the definition of $\Gamma_{j,t}$, we have that for any $t \in \gI^{(j)}$ and $i \in \gC_+$,
\begin{align}
\Gamma_{j,t}(\tau+1) - \Gamma_{j,t}(\tau)
&= \rho\left( \lambda_{+1}(\tau+1) - \lambda_{+1}(\tau) \right) + \left( \rho_{j,2}(\tau) - \rho_{j,2}(\tau) \right) - \left( \rho_{j,t}(\tau+1) - \rho_{j,t}(\tau) \right) \\
&\geq s_1^{(i)}(\tau)(1 - s_1^{(i)}(\tau)) \cdot c_3^\prime \alpha\|\vnu\|_2\|\vmu\|_2^3 d \max\{\sigma_w^2, \sigma_p^2\} \\
&\qquad + s_2^{(j)}(\tau)(1 - s_2^{(j)}(\tau)) \cdot c_4^\prime \rho \alpha  n^{-1} \sigma_\epsilon^2 \|\vnu\|_2 \|\vmu\|_2 d^2 \max\{ \sigma_w^2, \sigma_p^2 \} \nonumber \\
&\qquad + s_2^{(j)}(\tau) s_t^{(j)}(\tau) \cdot c_4^\prime \rho \alpha n^{-1} \sigma_\epsilon^2 \|\vnu\|_2 \|\vmu\|_2 d^2 \max\{ \sigma_w^2, \sigma_p^2 \}  \\
\label{eq:induction_c_benign_overfitting_stage_2_lambda_lower}
&\geq s_2^{(j)}(\tau)(1 - s_2^{(j)}(\tau)) \cdot c_4^\prime \rho \alpha n^{-1} \sigma_\epsilon^2 \|\vnu\|_2 \|\vmu\|_2 d^2 \max\{ \sigma_w^2, \sigma_p^2 \}.
\end{align}
In the last line, $s_1^{(i)}(\tau) (1 - s_1^{(i)}(\tau)) \lesssim \rho s_2^{(j)}(\tau) (1 - s_2^{(j)}(\tau))$, which is derived from $F(\tau)$, and the current SNR condition $\snr^2 = o(n^{-1})$ allow us to ignore this signal update.
The same result holds for the relevant token $t \in \gR = \{1\}$ and weakly relevant tokens $t \in \gW^{(j)}$, as the signal update can be neglected by the same argument.
In the same manner, the upper bound of $\Gamma_{j,t}(\tau+1) - \Gamma_{j,t}(\tau)$ is derived.
Using the evaluations on $s_2^{(j)}(\tau)( 1 - s_2^{(j)}(\tau) )$ in \cref{lem:bounds_softmax_confusing} and applying the same integral analysis as $\Lambda_{i,t}$, which appeared in the proof of \cref{lem:induction_not_overfitting}, we conclude the desired results of $\Gamma_{j,t}$.

\paragraph{Proof of $C(\tau) \Rightarrow D(\tau)$.}
For any clean data $i \in \gC$, we have
\begin{align}
s_1^{(i)}(\tau)
= \frac{1}{1 + \sum_{t=2}^T \exp \left( - \Lambda_{i,t}(\tau) \right) } 
> \frac{1}{1 + \sum_{t=2}^T \exp \left( - \Lambda_{i,t}(T_1) \right) }
= s_1^{(i)}(0) \geq \Theta(1),
\end{align}
where the inequality follows from $C(\tau)$, which states that $\Lambda_{i,t}(\tau)$ is monotonically increasing.
Similarly, since $\Gamma_{j,t}(\tau)$ for $j \in \gN$ is monotonically increasing, we have $s_2^{(j)}(\tau) > s_2^{(j)}(T_1) \geq \Theta(1)$.

\paragraph{Proof of $C(\tau) \Rightarrow E(\tau)$.}
By the same argument as in \cref{lem:induction_not_overfitting}, and using the inequalities of $C(\tau)$, we can show that $\exp\left( \Lambda_{i,t}(\tau) - \Lambda_{j,u}(\tau) \right)$ is bounded by a constant for any $i, j \in \gC$ and $t,u \in [T]\setminus\{1\}$.
Similarly to clean data, it can be shown that $\exp\left( \Gamma_{k,v}(\tau) - \Gamma_{l,w}(\tau) \right)$ is bounded by a constant for any $k,l\in\gN$ and $v,w\in[T]\setminus\{1,2\}$ as well.
At this time, since we have the result for $\Gamma_{j,1}(\tau) - \log \rho^{-1}$ instead of $\Gamma_{j,1}(\tau)$, the second line in the statement follows by factoring $\log \rho^{-1}$ out. 
Moreover, from $C(\tau)$, the growth order of $g(\Lambda)$ and $g(\Gamma)$ differs by $\rho$, and this ratio also appears in $\exp$ following the same proof as \cref{lem:induction_not_overfitting}.
The last line in the statement follows similarly by factoring $\log \rho$ out for $\Gamma_{j,1}$.

\paragraph{Proof of $E(\tau) \Rightarrow F(\tau)$.}
We first consider the ratio of $s_1^{(i)}(\tau)(1 - s_1^{(i)}(\tau))$ across all pairs $i, j \in [n]$.
Using the same argument as \cref{lem:induction_not_overfitting}, we can show that this ratio is bounded by a constant among clean data and noisy data, respectively.
For the ratio between clean data and noisy data, from $E(\tau)$, we have
\begin{gather}
\label{eq:induction_f_benign_overfitting_case_2_ratio_exp_1}
\exp\left( \Lambda_{i,t}(\tau) + \Lambda_{k,2}(\tau) \right) 
=
\exp\left( \Lambda_{i,t}(\tau) - \Gamma_{k,1}(\tau) \right) 
< c_7, \\
\label{eq:induction_f_benign_overfitting_case_2_ratio_exp_2}
\exp\left( - \Lambda_{k,2}(\tau) - \Lambda_{i,t}(\tau) \right) 
=
\exp\left( \Gamma_{k,1}(\tau) - \Lambda_{i,t}(\tau) \right) 
< c_7,
\end{gather}
for any $i \in \gC, t \in [T] \setminus \{1\}$, and $k \in \gN$.
Additionally, from $E(\tau)$, we have that for any $v \in [T] \setminus \{1, 2\}$,
\begin{align}
\Lambda_{k,v}(\tau)
&=
\Gamma_{k,v}(\tau) - \Gamma_{k,1}(\tau)
<
\log (\rho c_7) 
< 0, \\
\label{eq:induction_f_benign_overfitting_case_2_lamda_2_upper}
\Lambda_{k,2}(\tau)
&<
\Gamma_{k,v}(\tau) + \Lambda_{k,2}(\tau)
= \Lambda_{k,v}(\tau) < 0,
\end{align}
where the first line follows from the parameter assumption $\rho < 1/C$ for a sufficiently large constant $C$, and the second line is by $\Gamma_{k,v}(\tau) > 0$.
Thus, we have $\Gamma_{k,t} < 0$ for any $t \in [T] \setminus \{1\}$.
Combining this with \cref{eq:induction_f_benign_overfitting_case_2_ratio_exp_1,eq:induction_f_benign_overfitting_case_2_ratio_exp_2}, and proceeding similarly to \cref{eq:induction_f_benign_overfitting_s_1_ratio_upper}, we obtain that there exists a constant $c_1^\prime$ such that
\begin{align}
\frac{s_1^{(k)}(\tau) (1 - s_1^{(k)}(\tau))}{s_1^{(i)}(\tau) (1 - s_1^{(i)}(\tau))}
&< 
\frac{ 2 + \sum_{t=2}^T \exp\left( - \Lambda_{i,t}(\tau) \right) }{1 + \sum_{t = 2}^T \exp\left( - \Lambda_{k,t}(\tau) \right) }
+ 
\frac{1}{\left( 1 + \sum_{t = 2}^T \exp\left( - \Lambda_{k,t}(\tau) \right) \right) \cdot \sum_{t = 2}^T \exp\left( - \Lambda_{i,t}(\tau) \right) } \\
\label{eq:induction_f_benign_overfitting_stage_2_s_1_ratio_upper}
&< \frac{2+(T-1)o(1)}{1+(T-1)o(1)} + \exp\left( \Lambda_{i,u}(\tau) + \Lambda_{k,2}(\tau) \right) < c_1^\prime,
\end{align}
for any $u \in [T] \setminus \{1\}$.
Similarly, by the same argument as in \cref{eq:induction_f_benign_overfitting_s_1_ratio_lower}, we have that there exists a constant $c_2^\prime$ such that
\begin{align}
\frac{s_1^{(i)}(\tau) (1 - s_1^{(i)}(\tau))}{s_1^{(k)}(\tau) (1 - s_1^{(k)}(\tau))}
&< 
\frac{\sum_{t=2}^T \exp\left( - \Lambda_{i,t}(\tau) \right)}{ 1 + \sum_{t=2}^T \exp\left( - \Lambda_{i,t}(\tau) \right)} \cdot
\left( 2 + \sum_{t = 2}^T \exp \left( - \Lambda_{k,t}(\tau) \right) + \frac{ 1  }{ \sum_{t = 2}^T \exp\left( - \Lambda_{k,t}(\tau) \right) } \right) \\
\label{eq:induction_f_benign_overfitting_stage_2_s_1_ratio_lower}
&< 2 + T \cdot \sum_{t = 2}^T \exp\left( -\Lambda_{i,t}(\tau) - \Lambda_{k,2}(\tau) \right) + \frac{(T-1)o(1)}{(T-1)o(1)} < c_2^\prime,
\end{align}
where the last line follows from \cref{eq:induction_f_benign_overfitting_case_2_lamda_2_upper} and the parameter assumption $T = \Theta(1)$.
\cref{eq:induction_f_benign_overfitting_stage_2_s_1_ratio_upper,eq:induction_f_benign_overfitting_stage_2_s_1_ratio_lower} completes the proof.
Similarly, the ratio of $s_2^{(k)}(\tau) (1 - s_2^{(k)}(\tau))$ across $k \in \gN$ is shown to be bounded by a constant.

We now consider the ratio between $s_1^{(i)}(\tau)(1 - s_1^{(i)}(\tau))$ and $s_2^{(k)}(\tau)(1 - s_2^{(k)}(\tau))$ for $i \in \gC$ and $k \in \gN$.
For any $i \in \gC$ and $k \in \gN$, we have
\begin{align}
s_1^{(i)}(\tau) (1 - s_1^{(i)}(\tau))
&= 
\frac{ \sum_{t = 2}^T \exp\left( - \Lambda_{i,t}(\tau) \right) }{ \left( 1 + \sum_{t = 2}^T \exp \left( - \Lambda_{i,t}(\tau) \right) \right)^2 }, \ 
s_2^{(k)}(\tau) (1 - s_2^{(k)}(\tau))
= 
\frac{ \sum_{t \in [T] \setminus\{2\} } \exp\left( - \Gamma_{k,t}(\tau) \right) }{ \left( 1 + \sum_{t \in [T] \setminus \{2\}} \exp \left( - \Gamma_{k,t}(\tau) \right) \right)^2 }.
\end{align}
Thus, there exists a constant $c_3^\prime > 0$ such that
\begin{align}
\frac{s_1^{(i)}(\tau) (1 - s_1^{(i)}(\tau))}{s_2^{(k)}(\tau) (1 - s_2^{(k)}(\tau))}
&= \frac{ \sum_{t = 2}^T \exp\left( - \Lambda_{i,t}(\tau) \right) }{ \exp\left( - \Gamma_{k,1}(\tau) \right) + \sum_{t=3}^T \exp\left( - \Gamma_{k,t}(\tau) \right) } \cdot \left( \frac{1 + \exp\left( - \Gamma_{k,1}(\tau) \right) + \sum_{t =3}^T \exp\left( - \Gamma_{k,t}(\tau) \right) }{1 + \sum_{t = 2}^T \exp\left( - \Lambda_{i,t}(\tau) \right) } \right)^2 \\
&< \max_{t \in [T] \setminus \{1\}, u \in [T] \setminus \{1, 2\} } \left\{ \exp\left( -\Lambda_{i,t}(\tau) + \Gamma_{k,u}(\tau) \right) \right\} \cdot \left( 1 + (T-1)o(1) \right)^2
< \rho c_3^\prime,
\end{align}
where in the first inequality, we use the mediant inequality excluding $\Lambda_{k,1}(\tau)$, along with the bounds $\Lambda_{i,t}(\tau), \Gamma_{k,1}(\tau), \Gamma_{k,t}(\tau) > -o(1)$.
The last inequality follows from $E(\tau)$ and the parameter assumption $T = \Theta(1)$.
Similarly, the desired result for the reciprocal is obtained by $E(\tau)$.

Finally, from $E(\tau)$, we have
\begin{align}
\max_{t, u \in [T] \setminus \{1\}} \frac{s_t^{(i)}(\tau)}{s_u^{(i)}(\tau)}
&= \max_{t, u \in [T] \setminus \{1\}} \frac{\exp\left( \vx_t^{(i)\top}\mW(\tau)^\top\vp(\tau) \right)}{\exp\left( \vx_u^{(i)\top}\mW(\tau)^\top\vp(\tau) \right)}
= \max_{t, u \in [T] \setminus \{1\}} \frac{\exp\left( - \Lambda_{i,t}(\tau) \right)}{\exp\left( - \Lambda_{i,u}(\tau) \right)}
\leq c_7,
\end{align}
for $i \in \gC$.
Similarly, for noisy data $j \in \gN$, the ratio of $s_v^{(j)}(\tau)$ across $v \in [T] \setminus \{2\}$ can be evaluated using the result of $E(\tau)$.
The last two inequalities are derived by combining the fact that $\Lambda_{i,t}$ and $\Gamma_{j,t}$ are monotonically increasing from $C(\tau)$, and \cref{lem:s_1-s_inequality}.

\paragraph{Proof of $\land_{\tau^\prime \leq \tau} \left( A(\tau^\prime) \land B(\tau^\prime) \land C(\tau^\prime) \land D(\tau^\prime) \land F(\tau^\prime) \right) \Rightarrow G(\tau+1)$.}
The conditions of \cref{lem:signal_update_induction_benign_overfitting_stage_2} are satisfied from $A(\tau) \land B(\tau) \land D(\tau) \land F(\tau)$.
Combining \cref{lem:signal_update_induction_benign_overfitting_stage_1,lem:signal_update_induction_benign_overfitting_stage_2}, we have
\begin{align}
\lambda_{+1}(\tau+1) - \lambda_{+1}(0)
\label{eq:induction_g_benign_overfitting_signal_2_upper}
\leq \sum_{\tau^\prime = 0}^{\tau} s_1^{(i)}(\tau^\prime) ( 1 - s_1^{(i)}(\tau^\prime) ) \cdot c^\prime \alpha \|\vnu\|_2 \|\vmu\|_2^3 d \max\{ \sigma_w^2, \sigma_p^2 \},
\end{align}
for any clean data $i \in \gC$.
By using \cref{lem:bound_summation_prob_terms}, which is derived from $\land_{\tau^\prime \leq \tau} C(\tau^\prime)$, and the definition of $T_2$, we have
\begin{align}
& \sum_{\tau^\prime = 0}^{\tau} s_1^{(i)}(\tau^\prime) ( 1 - s_1^{(i)}(\tau^\prime) ) \\
&\lesssim \max\left\{ \frac{1}{\alpha n^{-1} \sigma_\epsilon^2 \|\vnu\|_2 \|\vmu\|_2 d^2 \max\{ \sigma_w^2, \sigma_p^2 \}}, \frac{ \log \left( \tau \cdot \alpha n^{-1} \sigma_\epsilon^2 \|\vnu\|_2 \|\vmu\|_2 d^2 \max\{ \sigma_w^2, \sigma_p^2 \} \right)}{\alpha n^{-1} \sigma_\epsilon^2 \|\vnu\|_2 \|\vmu\|_2 d^2 \max\{ \sigma_w^2, \sigma_p^2 \}} \right\} \\
&\leq \max\left\{ \frac{1}{\alpha n^{-1} \sigma_\epsilon^2 \|\vnu\|_2 \|\vmu\|_2 d^2 \max\{ \sigma_w^2, \sigma_p^2 \}}, \frac{ \log \left( T_2 \cdot \alpha n^{-1} \sigma_\epsilon^2 \|\vnu\|_2 \|\vmu\|_2 d^2 \max\{ \sigma_w^2, \sigma_p^2 \} \right)}{\alpha n^{-1} \sigma_\epsilon^2 \|\vnu\|_2 \|\vmu\|_2 d^2 \max\{ \sigma_w^2, \sigma_p^2 \}} \right\} \\
&\lesssim \frac{ n^{-1} \snr^{-2} }{\alpha n^{-1} \sigma_\epsilon^2 \|\vnu\|_2 \|\vmu\|_2 d^2 \max\{ \sigma_w^2, \sigma_p^2 \}}.
\end{align}
Substituting this to \cref{eq:induction_g_benign_overfitting_signal_2_upper}, we have
\begin{align}
\label{eq:induction_g_benign_overfitting_signal_2}
\lambda_{+1}(\tau+1) - \lambda_{+1}(0) 
\lesssim n^{-1} \snr^{-2} \cdot n \snr^2 
= O(1).
\end{align}

Since $|\lambda_{+1}(0)| = o(1)$ from \cref{lem:attention_gap_initialization}, \cref{eq:induction_g_benign_overfitting_signal_2} leads to $\lambda_{+1}(\tau+1) \leq O(1)$.
In the same way, we have $\lambda_{-1}(\tau+1) \leq O(1)$.

Next, we move on to the analysis of noise memorization terms.
For clean data $i \in \gC$, from \cref{lem:noise_update_induction_benign_overfitting_stage_1,lem:noise_update_induction_benign_overfitting_stage_2}, similarly we have
\begin{align}
| \rho_{i,1}(\tau+1) - \rho_{i,1}(0) |
&\leq \sum_{\tau^\prime = 0}^{\tau} s_1^{(i)}(\tau^\prime) ( 1 - s_1^{(i)}(\tau^\prime) ) \cdot c^\prime \alpha n^{-1} \sigma_\epsilon^2 \|\vnu\|_2 \|\vmu\|_2 d^2 \max\{ \sigma_w^2, \sigma_p^2 \} = O(n^{-1} \snr^{-2}).
\end{align}
The same upper bound is obtained for $\rho_{i,t}$ with $t \in [T] \setminus \{1\}$.
For noisy data $j \in \gN$, the upper bound for $\rho_{j,1}$ is derived in the same manner, based on \cref{lem:noise_update_induction_benign_overfitting_stage_1,lem:noise_update_induction_benign_overfitting_stage_2}.
For other tokens, we have
\begin{align}
| \rho_{j,2}(\tau+1) - \rho_{j,2}(0) |
&\leq |\rho_{j,2}(T_1) - \rho_{j,2}(0)| + |\rho_{j,2}(\tau+1) - \rho_{j,2}(T_1)| \\
&\leq O(\log \rho^{-1}) +  \sum_{\tau^\prime = T_1}^{\tau} s_2^{(j)}(\tau^\prime) ( 1 - s_2^{(j)}(\tau^\prime) ) \cdot c^\prime \rho \alpha n^{-1} \sigma_\epsilon^2 \|\vnu\|_2 \|\vmu\|_2 d^2 \max\{ \sigma_w^2, \sigma_p^2 \} \\ 
&\lesssim O(\log \rho^{-1}) +  \log \left( T_2 \cdot \rho \alpha n^{-1} \sigma_\epsilon^2 \|\vnu\|_2 \|\vmu\|_2 d^2 \max\{ \sigma_w^2, \sigma_p^2 \} \right) \\ 
&\lesssim O(\log \rho^{-1}) + \log \rho  + n^{-1} \snr^{-2} \\
&\lesssim n^{-1} \snr^{-2},
\end{align}
which leads to the same upper bound as in clean data.
The upper bounds for $t \in [T] \setminus \{1,2\}$ are derived in the same manner.

\paragraph{Proof of $\land_{\tau^\prime\leq\tau} \left( B(\tau^\prime) \land C(\tau^\prime) \land F(\tau^\prime) \land G(\tau^\prime) \right) \Rightarrow A(\tau+1)$.}
It follows from \cref{lem:update_of_p} that
\begin{align}
\|\vp(\tau+1)\|_2^2 - \|\vp(T_1)\|_2^2
\label{eq:induction_a_benign_overfitting_stage_2_p_norm}
&= \frac{2\alpha}{n} \sum_{\tau^\prime=T_1}^{\tau} \sum_{i=1}^n (-\ell_i^\prime(\tau^\prime)) \cdot Y^{(i)} \cdot I_{i}^p(\tau^\prime) + \alpha^2 \sum_{\tau^\prime=T_1}^{\tau} \|\nabla_\vp\widehat{\Ls}(\tau^\prime) \|_2^2.
\end{align}
By definition, we have
\begin{align}
I_i^p(\tau^\prime) 
&= \sum_{t=1}^T s_t^{(i)}(\tau^\prime) \left( \gamma_t^{(i)} - \sum_{u=1}^T s_u^{(i)}(\tau^\prime) \gamma_u^{(i)} \right) \langle \mW(\tau^\prime) \vx_t^{(i)}, \vp(\tau^\prime) \rangle \\
\label{eq:induction_a_benign_overfitting_stage_2_i_p_i}
&\leq \sum_{t=1}^T \left| s_t^{(i)}(\tau^\prime) \left( \gamma_t^{(i)} - \sum_{u=1}^T s_u^{(i)}(\tau^\prime) \gamma_u^{(i)} \right) \right| \cdot \max\left\{ |\lambda_{+1}(\tau^\prime)|, |\lambda_{-1}(\tau^\prime)|, |\rho_{i,t}(\tau^\prime)| \right\}.
\end{align}
For the clean data $i \in \gC$, we have the following upper bound through the same argument as in \cref{eq:induction_a_benign_overfitting_stage_1_clean_i_p_i}:
\begin{align}
\sum_{\tau^\prime=T_1}^\tau \sum_{t=1}^T \left| s_t^{(i)}(\tau^\prime) \left( \gamma_t^{(i)} - \sum_{u=1}^T s_u^{(i)}(\tau^\prime) \gamma_u^{(i)} \right) \right|
\label{eq:induction_a_benign_overfitting_stage_2_softmax_clean_coefficient}
&\lesssim s_1^{(i)}(\tau^\prime) (1 - s_1^{(i)}(\tau^\prime)) \cdot \max_{i\in[n], t\in[T]}\{ \gamma_t^{(i)} \} \\
&\lesssim \frac{ \log \left( T_2 \cdot \alpha n^{-1} \sigma_\epsilon^2 \|\vnu\|_2 \|\vmu\|_2 d^2 \max\{ \sigma_w^2, \sigma_p^2 \} \right)}{\alpha n^{-1} \sigma_\epsilon^2 \|\vnu\|_2 \|\vmu\|_2 d^2 \max\{ \sigma_w^2, \sigma_p^2 \}} \cdot \|\vnu\|_2\|\vmu\|_2 \\
\label{eq:induction_a_benign_overfitting_stage_2_softmax_clean}
&\lesssim \frac{ n^{-1} \snr^{-2} }{\alpha n^{-1} \sigma_\epsilon^2 d^2 \max\{ \sigma_w^2, \sigma_p^2 \}}.
\end{align}
For the noisy data $j \in \gN$, we evaluate separately for the token score of the relevant token $\gamma_1^{(i)}$.
We have
\begin{align}
& \sum_{t=1}^T \left| s_t^{(j)}(\tau^\prime) \left( \gamma_t^{(j)} - \sum_{u=1}^T s_u^{(j)}(\tau^\prime) \gamma_u^{(j)} \right) \right| \nonumber \\
&\leq \left| s_1^{(j)}(\tau^\prime) \left( \gamma_1^{(j)} - \sum_{u=1}^T s_u^{(j)}(\tau^\prime) \gamma_u^{(j)} \right) \right|
+  \sum_{t=2}^T \left| s_t^{(j)}(\tau^\prime) s_1^{(j)}(\tau^\prime) \gamma_1^{(j)} \right| 
+ \sum_{t=2}^T \left| s_t^{(j)}(\tau^\prime) \left( \gamma_t^{(j)} - \sum_{u=2}^T s_u^{(j)}(\tau^\prime) \gamma_u^{(j)} \right) \right| \\
&\lesssim 3 s_1^{(j)}(\tau^\prime) | \gamma_1^{(j)} | + (T-1) \cdot s_2^{(j)}(\tau^\prime) ( 1 - s_2^{(j)}(\tau^\prime) ) \cdot \max_{i\in[n],t\in[T] \setminus \{1\}}\{ | \gamma_t^{(i)} | \} \\
\label{eq:induction_a_benign_overfitting_stage_2_softmax_noisy_coefficient}
&\lesssim s_2^{(j)}(\tau^\prime) ( 1 - s_2^{(j)}(\tau^\prime) ) \cdot \left( \rho | \gamma_1^{(j)} | + \max_{i\in[n],t\in[T]\setminus\{1\}}\{ | \gamma_t^{(i)} | \} \right),
\end{align}
where the last two inequalities follow from the ratio of softmax terms in $F(\tau^\prime)$, together with the same manipulation as in \cref{eq:induction_a_not_overfitting_i_p_i}.
By summing up both sides and applying \cref{lem:token_score,lem:bound_summation_prob_terms}, we have
\begin{align}
\sum_{\tau^\prime=T_1}^\tau \sum_{t=1}^T \left| s_t^{(i)}(\tau^\prime) \left( \gamma_t^{(i)} - \sum_{u=1}^T s_u^{(i)}(\tau^\prime) \gamma_u^{(i)} \right) \right|
&\lesssim \frac{ \log \left( T_2 \cdot \rho \cdot \alpha n^{-1} \sigma_\epsilon^2 \|\vnu\|_2 \|\vmu\|_2 d^2 \max\{ \sigma_w^2, \sigma_p^2 \} \right)}{\rho \cdot \alpha n^{-1} \sigma_\epsilon^2 \|\vnu\|_2 \|\vmu\|_2 d^2 \max\{ \sigma_w^2, \sigma_p^2 \}} \cdot \rho \|\vnu\|_2\|\vmu\|_2 \\
\label{eq:induction_a_benign_overfitting_stage_2_softmax_noisy}
&\lesssim \frac{ n^{-1} \snr^{-2} }{\alpha n^{-1} \sigma_\epsilon^2 d^2 \max\{ \sigma_w^2, \sigma_p^2 \}},
\end{align}
where the first line follows from $\max_{i\in[n],t\in[T]\setminus\{1\}}\{ \gamma_t^{(i)} \} = O(\rho \|\vnu\|_2\|\vmu\|_2)$ from \cref{lem:token_score}.
In the last line, we ignore the small order term $\log \rho$ in the numerator.

By substituting \cref{eq:induction_a_benign_overfitting_stage_2_i_p_i,eq:induction_a_benign_overfitting_stage_2_softmax_clean,eq:induction_a_benign_overfitting_stage_2_softmax_noisy} into the first term of \cref{eq:induction_a_benign_overfitting_stage_2_p_norm}, we obtain
\begin{align}
\frac{2\alpha}{n} \sum_{\tau^\prime=T_1}^{\tau} \sum_{i=1}^n (-\ell_i^\prime(\tau^\prime)) \cdot Y^{(i)} \cdot I_{i}^p(\tau^\prime)
\label{eq:induction_a_benign_overfitting_stage_2_first_term_mid}
&\lesssim \alpha \cdot \frac{ n^{-1} \snr^{-2} }{\alpha n^{-1} \sigma_\epsilon^2 d^2 \max\{ \sigma_w^2, \sigma_p^2 \}} \cdot \max \left\{ | \lambda_{+1}(\tau^\prime) |, | \lambda_{-1}(\tau^\prime) |, | \rho_{i,t}(\tau^\prime) | \right\} \\
&\lesssim \frac{ \snr^{-4} }{ n \sigma_\epsilon^2 d^2 \max\{\sigma_w^2, \sigma_p^2 \}} \\
\label{eq:induction_a_benign_overfitting_stage_2_first_term}
&= \frac{ \sigma_\epsilon^2 }{ n \|\vmu\|_2^4 \max\{\sigma_w^2, \sigma_p^2 \}},
\end{align}
which follows from \cref{lem:ratio_loss_derivative} and $G(\tau^\prime)$. 
We confirm that \cref{eq:induction_a_benign_overfitting_stage_2_first_term} becomes $o(\sigma_p^2 d)$.
From the parameter assumptions in \cref{sec:assumption}, we have
\begin{align}
\sigma_p^2 \max\{\sigma_w^2, \sigma_p^2 \} \cdot \sigma_\epsilon^{-2} n \|\vmu\|_2^4 d
&\gtrsim \min\left\{ \|\vmu\|_2^2 , \frac{\|\vmu\|_2^4}{\sigma_\epsilon^2 d} \right\} \cdot \frac{n}{ \sigma_\epsilon^2 \log^4 (Tn/\delta)} \\
&\gtrsim \min\left\{ \frac{C^2 n d^{3/4}}{\log^2(Tn/\delta)}, C^4 n \sqrt{d}  \right\}  \\
\label{eq:induction_a_benign_overfitting_stage_2_order_evaluation}
&= \omega( 1 ),
\end{align}
where the first inequality is by Assumption~\ref{assump_variance_wp}, and the second line follows from Assumption~\ref{assump_signal}.
Thus, \cref{eq:induction_a_benign_overfitting_stage_2_first_term} becomes $o(\sigma_p^2 d)$.

Finally, we confirm that the second term in \cref{eq:induction_a_benign_overfitting_stage_2_p_norm} can be ignored.
From the parameter assumption on the step size $\alpha$, it can be shown in the same way as \cref{eq:induction_a_not_overfitting_quad} in the proof of the not-overfitting case.
Therefore, combining \cref{eq:induction_a_benign_overfitting_stage_2_p_norm} and the results in \cref{lem:induction_benign_overfitting_stage_1}, we have $\|\vp(\tau+1)\|_2^2 \in (1 \pm 1/C_1) \sigma_p^2 d$, for some constant $C_1 > 1$, which concludes the proof.

\paragraph{Proof of $\land_{\tau^\prime \leq \tau} \left( A(\tau^\prime) \land C(\tau^\prime) \land F(\tau^\prime) \land G(\tau^\prime) \right) \Rightarrow B(\tau+1)$.}
In the proof of the not-overfitting case, we have already shown that the second-order term with respect to the step size in \cref{lem:update_of_w} can be ignored under the small step size assumption.
Therefore, we write it as $O(\alpha^2)$ in the remainder of the proof.
From \cref{lem:update_of_w}, we have
\begin{align}
\|\mW(\tau+1) \vmu_{+1} \|_2^2 - \| \mW(T_1) \vmu_{+1} \|_2^2
&= \sum_{\tau^\prime = T_1}^\tau \left( \|\mW(\tau^\prime+1) \vmu_{+1} \|_2^2 - \| \mW(\tau^\prime) \vmu_{+1} \|_2^2 \right) \\
\label{eq:induction_b_benign_overfitting_stage_2_w_mu_1}
&= \frac{2\alpha}{n} \sum_{\tau^\prime = T_1}^\tau \sum_{i=1}^n (-\ell_i^\prime(\tau^\prime)) \cdot Y^{(i)} \cdot I_{i,+}(\tau^\prime) \lambda_{+1}(\tau^\prime) 
+ O(\alpha^2).
\end{align}
Recalling the definition of $I_{i,+}$ in \cref{def:interaction_term}, and applying the same technique used in \cref{eq:induction_a_benign_overfitting_stage_2_softmax_clean,eq:induction_a_benign_overfitting_stage_2_softmax_noisy} in the proof of $A(\tau+1)$, we have
\begin{align}
\sum_{\tau^\prime=T_1}^\tau I_{i, +}(\tau^\prime) 
&= \sum_{\tau^\prime=T_1}^\tau \sum_{t=1}^T s_t^{(i)}(\tau^\prime) \left( \gamma_t^{(i)} - \sum_{u=1}^T s_u^{(i)}(\tau^\prime) \gamma_u^{(i)} \right) \langle \vx_t^{(i)}, \vmu_{+1} \rangle \\
&\lesssim \frac{ n^{-1} \snr^{-2} }{\alpha n^{-1} \sigma_\epsilon^2 d^2 \max\{ \sigma_w^2, \sigma_p^2 \}} \cdot \max_{i \in [n], t \in [T]} \{ | \langle \vx_t^{(i)}, \vmu_{+1} \rangle | \}
\lesssim \frac{ \snr^{-2} \|\vmu\|_2^2 }{\alpha \sigma_\epsilon^2 d^2 \max\{ \sigma_w^2, \sigma_p^2 \}}.
\end{align}
Since we have $\lambda_{+1}(\tau^\prime) = O(1)$ from $G(\tau^\prime)$, we have
\begin{align}
\frac{2 \alpha}{n} \sum_{\tau^\prime = T_1}^\tau \sum_{i = 1}^n (-\ell_i^\prime(\tau^\prime)) \cdot Y^{(i)} \cdot I_{i,+}(\tau^\prime) \lambda_{+1}(\tau^\prime)
&\lesssim \frac{ \snr^{-2} \|\vmu\|_2^2 }{\sigma_\epsilon^2 d^2 \max\{ \sigma_w^2, \sigma_p^2 \}} \\
\label{eq:induction_b_benign_overfitting_stage_1_w_mu_1_first_term}
&= \frac{ 1 }{ d \max\{ \sigma_w^2, \sigma_p^2 \} } 
= o(\sigma_w^2 \|\vmu\|_2^2 d),
\end{align}
where the last inequality follows from the parameter assumptions.
Combining this result with the base case $\tau = 0$, we have $\|\mW(\tau+1)\vmu_{+1}\|_2^2 \in (1 \pm 1 / C_2) \sigma_w^2 \|\vmu\|_2^2 d$ for some constant $C_2 > 1$, which completes the proof.
The same result holds for $\| \mW(\tau+1) \vmu_{-1} \|_2$.
Similarly, it follows from \cref{lem:update_of_w} and $G(\tau^\prime)$ that
\begin{align}
\|\mW(\tau+1) \epsilon_u^{(j)} \|_2^2 - \| \mW(T_1) \vepsilon_u^{(j)} \|_2^2
&= \frac{2\alpha}{n} \sum_{\tau^\prime = T_1}^\tau \sum_{i=1}^n (-\ell_i^\prime(\tau^\prime)) \cdot Y^{(i)} \cdot I_{i,j,u}(\tau^\prime) \rho_{j,u}(\tau^\prime) 
+ O(\alpha^2) \\
&\lesssim   
\alpha \cdot \frac{ n^{-1} \snr^{-2} \cdot n^{-1} \sigma_\epsilon^2 d }{\alpha n^{-1} \sigma_\epsilon^2 d^2 \max\{ \sigma_w^2, \sigma_p^2 \}} \cdot n^{-1} \snr^{-2} + O(\alpha^2) \\
&\lesssim 
\frac{ \sigma_\epsilon^4 d  }{n^2 \|\vmu\|_2^4 \max\{ \sigma_w^2, \sigma_p^2 \}} + O(\alpha^2)
= o(\sigma_w^2 \sigma_\epsilon^2 d^2),
\end{align}
which follows from exactly the same analysis as in \cref{eq:induction_a_benign_overfitting_stage_2_first_term}.

Additionally, we will show the case of the inner products as well.
From \cref{lem:update_of_w} and the above discussion, we have
\begin{align}
&\langle \mW(\tau+1) \vmu_{+1}, \mW(\tau+1) \vmu_{-1} \rangle - \langle \mW(T_1) \vmu_{+1}, \mW(T_1) \vmu_{-1} \rangle  \nonumber \\
\label{eq:induction_b_benign_overfitting_stage_2_w_mu_1_w_mu_-1}
&= \frac{\alpha}{n} \sum_{\tau^\prime=T_1}^\tau \sum_{i=1}^n (-\ell_i^\prime(\tau^\prime)) \cdot Y^{(i)} \cdot \left( I_{i,-}(\tau^\prime) \lambda_{+1}(\tau^\prime) + I_{i,+}(\tau^\prime) \lambda_{-1}(\tau^\prime) \right) + O(\alpha^2) \\
&\lesssim \alpha \cdot \frac{ n^{-1} \snr^{-2} \cdot \|\vmu\|_2^2 }{\alpha n^{-1} \sigma_\epsilon^2 d^2 \max\{ \sigma_w^2, \sigma_p^2 \}} \cdot O(1) + O(\alpha^2) \\
&\lesssim \frac{ 1 }{d \max\{ \sigma_w^2, \sigma_p^2 \}} + O(\alpha^2)
= O( \sigma_w^2 \|\vmu\|_2^2 \sqrt{d} \log(Tn/\delta) ),
\end{align}
where the last line follows from the parameter assumptions; specifically, we have
\begin{align}
\sigma_w^2 \max\{\sigma_w^2, \sigma_p^2 \} \cdot \|\vmu\|_2^2 d^{3/2} \log(Tn/\delta)
&\gtrsim \min\left\{ \sqrt{d} , \frac{\|\vmu\|_2^2}{\sigma_\epsilon^2 \sqrt{d}} \right\} \cdot \frac{1}{ \log^3 (Tn/\delta)} \\
&\gtrsim \min\left\{ \frac{\sqrt{d}}{\log^3 (Tn/\delta)}, \frac{C^2 d^{1/4}}{\log(Tn/\delta)} \right\}  \\
&= \Omega( 1 ).
\end{align}
Similarly, we have
\begin{align}
&\langle \mW(\tau+1) \vmu_{+1}, \mW(\tau+1) \vepsilon_u^{(j)} \rangle - \langle \mW(T_1) \vmu_{+1}, \mW(T_1) \vepsilon_u^{(j)} \rangle  \nonumber \\
&= \frac{\alpha}{n} \sum_{\tau^\prime=T_1}^\tau \sum_{i=1}^n (-\ell_i^\prime(\tau^\prime)) \cdot Y^{(i)} \cdot \left( I_{i,j,u}(\tau^\prime) \lambda_{+1}(\tau^\prime) + I_{i,+}(\tau^\prime) \rho_{j,u}(\tau^\prime) \right)
+ O(\alpha^2) \\
&\lesssim \alpha \cdot \left( \frac{ n^{-1} \snr^{-2} \cdot n^{-1} \sigma_\epsilon^2 d }{\alpha n^{-1} \sigma_\epsilon^2 d^2 \max\{ \sigma_w^2, \sigma_p^2 \}} \cdot O(1) + \frac{ n^{-1} \snr^{-2} \cdot \|\vmu\|_2^2 }{\alpha n^{-1} \sigma_\epsilon^2 d^2 \max\{ \sigma_w^2, \sigma_p^2 \}} \cdot n^{-1}\snr^{-2} \right)
+ O(\alpha^2) \\
&\lesssim \frac{ \sigma_\epsilon^2 }{ n\|\vmu\|_2^2 \max\{ \sigma_w^2, \sigma_p^2 \}} + \frac{ \sigma_\epsilon^2 }{ n \|\vmu\|_2^2 \max\{ \sigma_w^2, \sigma_p^2 \}} + O(\alpha^2)
= O( \sigma_w^2 \sigma_\epsilon \|\vmu\|_2 d \log(Tn/\delta) ),
\end{align}
which follows from the parameter assumptions; specifically, we have
\begin{align}
\sigma_w^2 \max\{\sigma_w^2, \sigma_p^2 \} \cdot \sigma_\epsilon^{-1} n \|\vmu\|_2^3 d \log(Tn/\delta)
&\gtrsim \min\left\{ \frac{\|\vmu\|_2}{\sigma_\epsilon} , \frac{\|\vmu\|_2^3}{\sigma_\epsilon^3 d} \right\} \cdot \frac{n}{ \log^3 (Tn/\delta)} \\
&\gtrsim \min\left\{ \frac{ n \|\vmu\|_2}{ \sigma_\epsilon \log^3(Tn/\delta) }, C^3 n d^{1/8}  \right\}  \\
&= \Omega( 1 ).
\end{align}
The same result holds for $\langle \mW(\tau+1) \vmu_{-1}, \mW(\tau+1) \vepsilon_u^{(j)} \rangle$.
Finally, we have
\begin{align}
&\langle \mW(\tau+1) \vepsilon_u^{(j)}, \mW(\tau+1) \vepsilon_v^{(k)} \rangle - \langle \mW(T_1) \vepsilon_u^{(j)}, \mW(T_1) \vepsilon_v^{(k)} \rangle  \nonumber \\
&= \frac{\alpha}{n} \sum_{\tau^\prime=T_1}^\tau \sum_{i=1}^n (-\ell_i^\prime(\tau^\prime)) \cdot Y^{(i)} \cdot \left( I_{i,k,v}(\tau^\prime) \rho_{j,u} (\tau^\prime) + I_{i,j,u}(\tau^\prime) \rho_{k,v}(\tau^\prime) \right) + O(\alpha^2) \\
&\lesssim \alpha \cdot \frac{ n^{-1} \snr^{-2} \cdot n^{-1} \sigma_\epsilon^2 d }{\alpha n^{-1} \sigma_\epsilon^2 d^2 \max\{ \sigma_w^2, \sigma_p^2 \}} \cdot n^{-1} \snr^{-2} + O(\alpha^2) \\
&\lesssim \frac{ \sigma_\epsilon^4 d }{ n^2 \|\vmu\|_2^4 \max\{ \sigma_w^2, \sigma_p^2 \}} + O(\alpha^2) = O( \sigma_w^2 \sigma_\epsilon^2 d^{3/2} \log(Tn/\delta) ),
\end{align}
which follows from the parameter assumptions; specifically, we have
\begin{align}
\sigma_w^2 \max\{\sigma_w^2, \sigma_p^2 \} \cdot \sigma_\epsilon^{-2} n^2 \|\vmu\|_2^4 \sqrt{d} \log(Tn/\delta)
&\gtrsim \min\left\{ \frac{\|\vmu\|_2^2}{\sigma_\epsilon^2 \sqrt{d}} , \frac{\|\vmu\|_2^4}{\sigma_\epsilon^4 d^{3/2}} \right\} \cdot \frac{n^2 }{ \log^3 (Tn/\delta)} \\
&\gtrsim \min\left\{ \frac{C^2 n^2 d^{1/4}}{\log(Tn/\delta)} , C^4 n^2 \log(Tn/\delta) \right\}  \\
&= \Omega( 1 ),
\end{align}
which completes the proof.

\begin{proof}[Proof of \cref{lem:induction_benign_overfitting_stage_2}]
At time step $\tau = T_1$, $A(T_1)$, $B(T_1)$, $C(T_1)$, and $G(T_1)$ hold from the proof for the base case.
As for $D(\tau)$, $E(\tau)$, and $F(\tau)$, they are derived from $C(\tau)$.
For the next time step, $A(\tau+1), B(\tau+1), C(\tau+1)$, and $G(\tau+1)$ are proved based on the propositions up to time step $\tau$.
Thus, the proof is completed by induction.
\end{proof}

\subsubsection[Proof of Benign Overfitting Case in Main Theorem]{Proof of Benign Overfitting Case in \cref{thm:convergence}}
\label{sec:proof_main_theorem_benign_overfitting}
In this section, we provide proof of the benign overfitting case in the main theorem.
We divide the proof into two parts: behavior on the training data and generalization performance.

\paragraph{Training data.}
The conditions of \cref{lem:induction_benign_overfitting_stage_1,lem:induction_benign_overfitting_stage_2} are satisfied with the current SNR condition $\snr^2 = o(n^{-1})$. 
The proposition $C(\tau)$ in \cref{lem:induction_benign_overfitting_stage_2} states that $T_2 = \Theta\left( \frac{ \exp(n^{-1} \snr^{-2}) }{\alpha n^{-1} \sigma_\epsilon^2 \|\vnu\|_2\|\vmu\|_2 d^2 \max\{\sigma_w^2, \sigma_p^2 \} } \right)$ satisfies
\begin{align}
g( \Lambda_{i,t}(T_2) )
&\geq 
g( \Lambda_{i,t}(0) ) + \Theta\left( \exp\left( n^{-1}\snr^{-2} \right) \right)
= \omega(1),
\end{align}
for any clean data $i \in \gC$ and $t \in [T] \setminus \{1\}$.
Here, we used $g(\Lambda_{i,t}(0)) = -(1 \pm o(1))T = -\Theta(1)$ from \cref{lem:attention_gap_initialization}.
Recall that $g(x) = 2x + 2\sinh(x - \log T)$, and there exists a constant $c_1^\prime > 1$ such that $c_1^\prime \exp(x) > g(x)$.
Therefore, for all $i \in \gC$, we have
\begin{align}
\label{eq:benign_overfitting_s_1_lower}
s_1^{(i)}(T_2) 
= \frac{1}{1 + \sum_{t=2}^T \exp\left( -\Lambda_{i,t}(T_2) \right)}
> 1 - \sum_{t=2}^T \exp\left( -\Lambda_{i,t}(T_2) \right)
> 1 - (T-1) c_1^\prime \cdot o(1)
> 1 - \epsilon,
\end{align}
for sufficiently small constant $\epsilon > 0$.
By using \cref{eq:benign_overfitting_s_1_lower,lem:token_score}, we have for any clean data $i \in \gC$ that
\begin{align}
Y^{(i)} \cdot f_{T_2}(\mX^{(i)}) 
&= Y^{(i)} \cdot \vnu^\top \mX^{(i)\top} \mathbb{S}\left( \mX^{(i)}\mW(T_2)^\top \vp(T_2) \right) \\
&= Y^{(i)} \cdot \gamma_1^{(i)} s_1^{(i)}(T_2) 
+ \sum_{t=2}^T Y^{(i)} \cdot \gamma_t^{(i)} s_t^{(i)}(T_2) \\
&\geq \Theta\left( \|\vnu\|_2 \|\vmu\|_2 \right) (1 - \epsilon) - O\left( \rho\|\vnu\|_2\|\vmu\|_2 \right) \cdot \epsilon \\
\label{eq:benign_overfitting_clean_data}
&> 0.
\end{align}
For noisy data, again from $C(\tau)$ in \cref{lem:induction_benign_overfitting_stage_2}, we have
\begin{gather}
g( \Gamma_{j,1}(\tau) ) >
g( \Gamma_{j,1}(\tau) - \log \rho^{-1} ) \geq g( \Gamma_{j,1}(0) - \log \rho^{-1} ) + \Theta \left( \rho \exp\left( n^{-1}\snr^{-2} \right) \right) = \omega(\rho^{-1}), \\
g( \Gamma_{j,t}(\tau) ) 
\geq g( \Gamma_{j,t}(0) ) + \Theta \left( \rho \exp\left( n^{-1}\snr^{-2} \right) \right)
= \omega(\rho^{-1}),
\end{gather}
for $t \in [T] \setminus \{1, 2\}$, which follows from $n^{-1}\snr^{-2} = \omega(1)$ and \cref{remark:two_stage_analysis}.
Similarly, we have
\begin{align}
\label{eq:benign_overfitting_s_2_lower}
s_2^{(j)}(T_2) 
= \frac{1}{1 + \sum_{t=[T]\setminus\{2\}} \exp\left( -\Gamma_{j,t}(T_2) \right)}
> 1 - \sum_{t=[T]\setminus\{2\}} \exp\left( -\Gamma_{j,t}(T_2) \right)
> 1 - (T-1) c_1^\prime \cdot o(\rho)
> 1 - \rho \epsilon.
\end{align}
Thus, we have
\begin{align}
Y^{(j)} \cdot f_{T_2}(\mX^{(j)}) 
&= Y^{(j)} \cdot \vnu^\top \mX^{(j)\top} \mathbb{S}\left( \mX^{(j)}\mW(T_2)^\top \vp(T_2) \right) \\
&= Y^{(j)} \cdot \gamma_2^{(j)} s_2^{(j)}(T_2) 
+ \sum_{t \in [T]\setminus\{2\}} Y^{(j)} \cdot \gamma_t^{(j)} s_t^{(j)}(T_2) \\
&\geq \Theta\left( \rho \|\vnu\|_2 \|\vmu\|_2 \right) (1 - \rho \epsilon) - O\left( \|\vnu\|_2\|\vmu\|_2 \right) \cdot \rho \epsilon \\
\label{eq:benign_overfitting_noisy_data}
&> 0.
\end{align}
\cref{eq:benign_overfitting_clean_data,eq:benign_overfitting_noisy_data} hold deterministically on a good run.
Therefore, at time step $\tau = T_2$, we have that with probability at least $1 - \delta$,
\begin{gather}
\forall i \in \gC, \ f_\tau(\mX^{(i)}) = Y^{(i)}, \ 
\forall j \in \gN, \ f_\tau(\mX^{(j)}) = Y^{(j)}.
\end{gather}

\paragraph{Generalization.}
Let $(\mX, Y^*) \sim P^*$ be the unseen data on which we investigate generalization performance.
We first evaluate the attention values of signal vectors at time step $T_2$.
From \cref{lem:signal_update_induction_benign_overfitting_stage_1,lem:signal_update_induction_benign_overfitting_stage_2,lem:bound_summation_prob_terms}, we have
\begin{align}
\lambda_{+1}(T_2) - \lambda_{+1}(0)
&\geq \sum_{\tau = 0}^{T_2-1} s_1^{(i)}(\tau) ( 1 - s_1^{(i)}(\tau) ) \cdot c \alpha \|\vnu\|_2 \|\vmu\|_2^3 d \max\{ \sigma_w^2, \sigma_p^2 \} \\
&\gtrsim \log \left( (T_2 - 1) \cdot \alpha n^{-1} \sigma_\epsilon^2 \|\vnu\|_2 \|\vmu\|_2 d^2 \max\{ \sigma_w^2, \sigma_p^2 \} \right) \cdot n \snr^2 \\
\label{eq:benign_overfitting_signal_1_lower}
&\gtrsim 1,
\end{align}
which follows from the definition of $T_2$. 
From \cref{lem:attention_gap_initialization}, we have $|\lambda_{+1}(0)| = o(1)$, which implies $\lambda_{+1}(T_2) \gtrsim 1$.
The same result holds for $\lambda_{-1}(T_2)$.

Next, we show that the attention scores of the noise vectors $\{\vepsilon_t\}_{t \in [T]}$ in the unseen data, i.e., $\vepsilon_t^\top \mW(T_2)^\top \vp(T_2)$, become sufficiently small on a good run.
While it is natural to evaluate $\| \mW(T_2)^\top \vp(T_2) \|_2$ and use a concentration inequality, it is challenging to track the evolution of $\| \mW(\tau)^\top \vp(\tau) \|_2$.
Therefore, following induction proof in \cref{lem:induction_benign_overfitting_stage_1,lem:induction_benign_overfitting_stage_2}, we apply a concentration inequality at time step $0$ and show that the result does not change at time step $T_2$.
Let us define $\gE$ as the event that the following inequalities are satisfied:
\begin{gather}
\forall t \in [T], \  
\left( 1 - o(1) \right) \sigma_\epsilon \sqrt{d}
\leq \|\vepsilon_t\|_2
\leq \left( 1 + o(1) \right) \sigma_\epsilon \sqrt{d}, \\
\forall t \in [T], \forall k \in \{\pm 1\}, \  
|\langle \vepsilon_t, \vmu_k  \rangle| < c_2 \sigma_\epsilon \|\vmu\|_2 \sqrt{\log(Tn/\delta)}, \\
\forall i \in [n], \forall t, u \in [T], \ 
| \langle \vepsilon_t, \vepsilon_u^{(i)} \rangle | < c_1 \sigma_\epsilon^2 \sqrt{d} \log(Tn/\delta), \\
\forall t \in [T], \forall k \in \{\pm 1\}, \  
|\langle \mW(0) \vepsilon_t, \mW(0) \vmu_k  \rangle| < c_1 \sigma_w^2 \|\vmu\|_2 \|\vepsilon_t \|_2 \sqrt{d} \log(Tn/\delta), \\
\forall i \in [n], \forall t,u \in [T], \ 
|\langle \mW(0) \vepsilon_t, \mW(0) \vepsilon_u^{(i)} \rangle| < c_1 \sigma_w^2 \|\vepsilon_t \|_2 \|\vepsilon_u^{(j)} \|_2 \sqrt{d} \log(Tn/\delta), \\
\forall t \in [T], \ 
|\langle \mW(0) \vepsilon_t, \vp(0) \rangle| < c_1 \sigma_w \sigma_p \|\vepsilon_t \|_2 \sqrt{d} \log(Tn/\delta), \\
\forall t \in [T], \ 
|\langle \vnu, \vepsilon_t \rangle| < c_2 \sigma_\epsilon \|\vnu\|_2 \sqrt{\log(Tn/\delta)},
\end{gather}
where the constants $c_1, c_2$ are the same ones appeared in \cref{lem:good_run}.
Applying union bound on the modified versions of \cref{lem:norm_concentration,lem:inner_product_concentration,lem:inner_product_concentration_signal_noise}, the probability of the occurrence of $\gE$ can be evaluated.
Since there is no need to apply the union bound over the additional $n$ training data points, the outlier probability can be reduced by $1/n$ compared to the original lemma.
Therefore, we have
\begin{align}
\label{eq:benign_overfitting_e_prob}
\Pr \left[ \gE \right] > 1 - \delta / n > 1 - \delta.
\end{align}

In the following, using the results of \cref{lem:induction_benign_overfitting_stage_1,lem:induction_benign_overfitting_stage_2}, we will prove that the next proposition holds for all $\tau \in [0, T_2]$ under the condition $\gE$: 
\begin{description}
\item[$H(\tau)$: ]
\begin{gather*}
|\langle \mW(\tau) \vepsilon_t, \vp(\tau) \rangle| < O\left( \sigma_w \sigma_p \sigma_\epsilon d \log (Tn/\delta) \right), \\
|\langle \mW(\tau) \vepsilon_t, \mW(\tau) \vmu_{+1}  \rangle| < O\left( \sigma_w^2 \sigma_\epsilon \|\vmu\|_2 d \log(Tn/\delta) \right), \\
|\langle \mW(\tau) \vepsilon_t, \mW(\tau) \vmu_{-1}  \rangle| < O\left( \sigma_w^2 \sigma_\epsilon \|\vmu\|_2 d \log(Tn/\delta) \right), \\
|\langle \mW(\tau) \vepsilon_t, \mW(\tau) \vepsilon_u^{(i)} \rangle| < O\left( \sigma_w^2 \sigma_\epsilon^2 d^{3/2} \log(Tn/\delta) \right),
\end{gather*}
for all $i \in [n]$ and $t,u \in [T]$.
\end{description}
\begin{proof}[Proof of $H(\tau)$]
We proceed with the proof by induction.
The base case holds from the condition $\gE$.
In the following, suppose that $H(\tau^\prime)$ holds for any $\tau^\prime \in [0, \tau]$.
By a calculation similar to that in the proof of \cref{lem:update_signal_and_noise_attention}, we have
\begin{align}
&\langle \mW(\tau+1) \vepsilon_t, \vp(\tau+1) \rangle - \langle \mW(0) \vepsilon_t, \vp(0) \rangle \nonumber \\ 
&= \sum_{\tau^\prime = 0}^\tau \left( \langle \mW(\tau^\prime+1) \vepsilon_t, \vp(\tau^\prime+1) \rangle - \langle \mW(\tau^\prime) \vepsilon_t, \vp(\tau^\prime) \rangle \right) \\
\label{eq:benign_overfitting_generalization_wp_update}
&= \frac{\alpha}{n} \sum_{\tau^\prime=0}^\tau \sum_{i=1}^n (-\ell_i^\prime(\tau^\prime)) \cdot Y^{(i)} \cdot \sum_{t=1}^T s_t^{(i)}(\tau^\prime) \left( \gamma_t^{(i)} - \sum_{u=1}^T s_u^{(i)}(\tau^\prime) \gamma_u^{(i)} \right) \left( \langle \mW(\tau^\prime) \vepsilon_t , \mW(\tau^\prime) \vx_t^{(i)} \rangle + \|\vp(\tau^\prime)\|_2^2 \langle \vepsilon_t, \vx_t^{(i)} \rangle \right) \nonumber \\ 
&\qquad + \sum_{\tau^\prime=0}^\tau \alpha^2 \vepsilon_t^\top \nabla_{\mW^\top} \widehat{\Ls}(\tau^\prime) \nabla_\vp \widehat{\Ls}(\tau^\prime).
\end{align}
We bound the first term using the induction hypothesis, the condition $\gE$, and $A(\tau^\prime)$ in \cref{lem:induction_benign_overfitting_stage_1,lem:induction_benign_overfitting_stage_2}.
We first evaluate the softmax probability part.
For $\tau \leq T_1$, using a similar argument to \cref{eq:induction_a_benign_overfitting_stage_1_clean_i_p_i,eq:induction_a_benign_overfitting_stage_1_noisy_softmax}, we have that for any $i \in [n]$,
\begin{align}
\label{eq:benign_overfitting_generalization_softmax_upper_stage_1}
\sum_{\tau^\prime=0}^\tau \sum_{t=1}^T s_t^{(i)}(\tau^\prime) \left( \gamma_t^{(i)} - \sum_{u=1}^T s_u^{(i)}(\tau^\prime) \gamma_u^{(i)} \right)
&\lesssim \frac{ \log \rho^{-1} }{\alpha n^{-1} \sigma_\epsilon^2 d^2 \max\{ \sigma_w^2, \sigma_p^2 \} }.
\end{align}
For $\tau \geq T_1$, it follows from \cref{lem:induction_benign_overfitting_stage_2} and the same technique used in \cref{eq:induction_a_benign_overfitting_stage_2_softmax_clean,eq:induction_a_benign_overfitting_stage_2_softmax_noisy} that
\begin{align}
\label{eq:benign_overfitting_generalization_softmax_upper_stage_2}
\sum_{\tau^\prime=T_1}^\tau \sum_{t=1}^T s_t^{(i)}(\tau^\prime) \left( \gamma_t^{(i)} - \sum_{u=1}^T s_u^{(i)}(\tau^\prime) \gamma_u^{(i)} \right)
&\lesssim \frac{n^{-1} \snr^{-2}}{\alpha n^{-1} \sigma_\epsilon^2 d^2 \max\{ \sigma_w^2, \sigma_p^2 \}},
\end{align}
for any $i \in [n]$.
Substituting \cref{eq:benign_overfitting_generalization_softmax_upper_stage_1,eq:benign_overfitting_generalization_softmax_upper_stage_2} to \cref{eq:benign_overfitting_generalization_wp_update}, we have
\begin{align}
&\frac{\alpha}{n} \sum_{\tau^\prime=0}^\tau \sum_{i=1}^n (-\ell_i^\prime(\tau^\prime)) \cdot Y^{(i)} \cdot \left( \sum_{t=1}^T s_t^{(i)}(\tau^\prime) \left( \gamma_t^{(i)} - \sum_{u=1}^T s_u^{(i)}(\tau^\prime) \gamma_u^{(i)} \right) \left( \langle \mW(\tau^\prime) \vepsilon_t , \mW(\tau^\prime) \vx_t^{(i)} \rangle + \|\vp(\tau^\prime)\|_2^2 \langle \vepsilon_t, \vx_t^{(i)} \rangle \right) \right) \nonumber \\ 
&\lesssim \frac{ n^{-1} \snr^{-2} }{n^{-1} \sigma_\epsilon^2 d^2 \max\{ \sigma_w^2, \sigma_p^2 \}} \cdot \sigma_w^2 \sigma_\epsilon d \max\left\{ \|\vmu\|_2, \sigma_\epsilon \sqrt{d} \right\} \log(Tn/\delta) \nonumber \\
&\qquad + \frac{ n^{-1} \snr^{-2} }{n^{-1} \sigma_\epsilon^2 d^2 \max\{ \sigma_w^2, \sigma_p^2 \}} \cdot \sigma_p^2 d \cdot \sigma_\epsilon \max\left\{ \|\vmu\|_2 \sqrt{\log(Tn/\delta)}, \sigma_\epsilon \sqrt{d} \log(Tn/\delta) \right\} \\
\label{eq:benign_overfitting_generalization_wp_first_term_mid_eq}
&\lesssim \frac{ n^{-1} \snr^{-2} }{n^{-1} \sigma_\epsilon^2 d^2 \max\{ \sigma_w^2, \sigma_p^2 \}} \cdot \max\{\sigma_w^2, \sigma_p^2 \} \sigma_\epsilon d \cdot \max\{ \|\vmu\|_2, \sigma_\epsilon\sqrt{d} \} \log(Tn/\delta)  \\
&\lesssim \max\{ \|\vmu\|_2^{-1}, \sigma_\epsilon \sqrt{d} \|\vmu\|_2^{-2} \} \cdot \sigma_\epsilon \log(Tn/\delta)
\label{eq:benign_overfitting_generalization_wp_first_term}
= O\left( \sigma_w \sigma_p \sigma_\epsilon d \log(Tn/\delta) \right),
\end{align}
where the last line follows from the parameter assumptions, using the same argument as in \cref{eq:not_overfitting_generalization_wp_first_term} for the not-overfitting case.
For the quadratic term in \cref{eq:benign_overfitting_generalization_wp_update}, we can show that this term is small enough to ignore, similarly to \cref{eq:not_overfitting_generalization_wp_second_term}.
Therefore, substituting \cref{eq:benign_overfitting_generalization_wp_first_term} to \cref{eq:benign_overfitting_generalization_wp_update} leads to
\begin{align}
\langle \mW(\tau+1) \vepsilon_t, \vp(\tau+1) \rangle 
= \langle \mW(0) \vepsilon_t, \vp(0) \rangle + O\left( \sigma_w \sigma_p \sigma_\epsilon d \log(Tn/\delta) \right)
= O\left( \sigma_w \sigma_p \sigma_\epsilon d \log(Tn/\delta) \right).
\end{align}

The proof for the three inequalities below in the proposition $H(\tau)$ is omitted because they can be shown in the same manner as $B(\tau+1)$ in \cref{lem:induction_benign_overfitting_stage_1,lem:induction_benign_overfitting_stage_2}, under the induction hypothesis and the results of \cref{lem:induction_benign_overfitting_stage_1,lem:induction_benign_overfitting_stage_2}. 
Since $H(\tau+1)$ holds under the condition $H(\tau)$ is valid, from induction argument, the proposition $H(\tau)$ holds for $\tau \in [0, T_2]$.
\end{proof}

From $H(T_2)$ and Assumption~\ref{assump_variance_wp}, we have
\begin{align}
\label{eq:benign_overfitting_noise_wp_upper}
| \langle \mW(\tau) \vepsilon_t, \vp(\tau) \rangle |
\lesssim \sigma_w \sigma_p \sigma_\epsilon d \log (Tn/\delta)
\leq \frac{c_2^\prime}{\log(Tn/\delta) },
\end{align}
for some constant $c_2^\prime > 0$.
Using \cref{eq:benign_overfitting_signal_1_lower} and taking sufficiently large $T_2$, we have $\lambda_{+1}(T_2) > 2c_2^\prime$ and $\lambda_{-1}(T_2) > 2c_2^\prime$.
Thus, we have that for $t \in [T] \setminus \{1\}$,
\begin{align}
\left(\vx_t - \vx_1 \right)^\top \mW(T_2)^\top \vp(T_2)
&\leq -(1-\rho) \max\{ \lambda_{+1}(T_2), \lambda_{-1}(T_2) \} + \left(\vepsilon_t - \vepsilon_1 \right)^\top \mW(T_2)^\top \vp(T_2) \\
&\leq -(1-1/C) \cdot 2c_2^\prime + \frac{2c_2^\prime}{\log(Tn/\delta)} \\
&< - c_2^\prime,
\end{align}
where the second inequality holds by $\rho < 1/C$.
Then, the softmax probability of the relevant token is lower-bounded as:
\begin{align}
s_1(T_2)
= \frac{ 1 }{ 1 + \sum_{t=2}^T \exp\left( \left(\vx_t - \vx_1 \right)^\top \mW(T_2)^\top \vp(T_2) \right)}
> 1 - \epsilon,
\end{align}
for sufficiently small $\epsilon > 0$.
Consequently, we have
\begin{align}
Y^* \cdot f_{T_2}(\mX)
&= Y^* \cdot \gamma_1 s_1(T_2) + \sum_{t=2}^T Y^* \cdot \gamma_t s_t(T_2) \\
&\geq \Theta\left( \|\vnu\|_2\|\vmu\|_2 \right) \cdot \left( 1 - \epsilon \right)  - O\left( \rho \|\vnu\|_2\|\vmu\|_2 \right) \cdot \epsilon
> 0.
\end{align}
Under the conditioning on $\gE$, the output $f_{T_2}(\mX)$ deterministically takes the same sign as the true label $Y^*$.
Thus, the generalization error is bounded as:
\begin{align}
\Pr_{(\mX, Y^*) \sim P^*} \left[ \sign (f_{T_2}(\mX)) \neq Y^* \right] 
&= 
\Pr_{(\mX, Y^*) \sim P^*} \left[ \sign (f_{T_2}(\mX)) \neq Y^* \mid \gE \right] 
+ 
\Pr_{(\mX, Y^*) \sim P^*} \left[ \gE^c \right] 
< \delta,
\end{align}
where we used $\Pr(A) \leq \Pr(A|B^C) + \Pr(B)$ and the result of \cref{eq:benign_overfitting_e_prob}.
This concludes the proof.

\section{Technical Calculations}
\label{sec:technical_calculations}
In this section, we provide the small lemmas that are necessary for the proof of the main theorem.
The lemmas concerning the softmax probabilities are given in \cref{sec:softmax_probability}.
In \cref{sec:attention_updates_not_overfitting_case,sec:attention_updates_benign_overfitting_case}, we provide lemmas for the attention updates in the not-overfitting case and the benign overfitting case, respectively.

\subsection{Softmax Probability}
\label{sec:softmax_probability}
In the analysis of attention dynamics, the values of $s_1(\tau)(1 - s_1(\tau))$ and $s_2(\tau)(1 - s_2(\tau))$ play a significant role. 
We have the following lemma on the evaluation of this value.

\begin{lemma}[Bounds for relevant token probability]
\label{lem:bounds_softmax}
Fix arbitrary $i \in [n]$, and suppose that there exists a constant $c^\prime > 1$ such that for all $t, u \in [T] \setminus \{1\}$, we have $\exp\left( \Lambda_{i,t} - \Lambda_{i,u} \right) < c^\prime$.
Then, there exists a constant $c > 1$ such that for any $t \in [T] \setminus \{1\}$,
\begin{gather*}
\frac{ c^{-1} }{2 + 2\cosh\left( \Lambda_{i,t}(\tau) - \log T \right)}
<
s_1^{(i)}(\tau) (1 - s_1^{(i)}(\tau))
<
\frac{ c }{2 + 2\cosh\left( \Lambda_{i,t}(\tau) - \log T \right)}.
\end{gather*}
\end{lemma}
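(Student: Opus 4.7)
}
The plan is to rewrite $s_1^{(i)}(1-s_1^{(i)})$ as a one-dimensional function of the quantity $S_i(\tau) \coloneqq \sum_{t=2}^T \exp(-\Lambda_{i,t}(\tau))$, and then use the bounded-ratio hypothesis to show that $S_i(\tau)$ is pinned to $\exp(-\Lambda_{i,t}(\tau))$ (for any chosen $t \in [T]\setminus\{1\}$) up to a constant. A direct comparison with $2\cosh(\Lambda_{i,t}-\log T) = T^{-1}\exp(\Lambda_{i,t}) + T\exp(-\Lambda_{i,t})$ then yields both bounds.

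First I would observe that
\[
s_1^{(i)}(\tau) = \frac{1}{1+S_i(\tau)}, \qquad 1-s_1^{(i)}(\tau) = \frac{S_i(\tau)}{1+S_i(\tau)},
\]
so that
\[
\frac{1}{s_1^{(i)}(\tau)(1-s_1^{(i)}(\tau))} = \frac{(1+S_i(\tau))^2}{S_i(\tau)} = \frac{1}{S_i(\tau)} + 2 + S_i(\tau).
\]
It therefore suffices to show that $1/S_i(\tau) + 2 + S_i(\tau)$ and $2 + 2\cosh(\Lambda_{i,t}(\tau)-\log T)$ are within a constant factor of each other.

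Next, the hypothesis $\exp(\Lambda_{i,t}-\Lambda_{i,u}) < c'$ for all $t,u \in [T]\setminus\{1\}$ means that the $T-1$ summands defining $S_i(\tau)$ all lie within a constant multiplicative factor of $\exp(-\Lambda_{i,t}(\tau))$ for any fixed $t$. Concretely,
\[
\frac{T-1}{c'}\exp(-\Lambda_{i,t}(\tau)) \;\leq\; S_i(\tau) \;\leq\; (T-1)c'\exp(-\Lambda_{i,t}(\tau)),
\]
and consequently $1/S_i(\tau)$ is sandwiched between constant multiples of $\exp(\Lambda_{i,t}(\tau))$. Since $T = \Theta(1)$ by Assumption~\ref{assump_t}, the factors $T-1$ and $T$ are interchangeable up to constants. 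Matching
\[
2 + 2\cosh(\Lambda_{i,t}(\tau)-\log T) = T^{-1}\exp(\Lambda_{i,t}(\tau)) + 2 + T\exp(-\Lambda_{i,t}(\tau))
\]
term by term with $1/S_i(\tau) + 2 + S_i(\tau)$ then produces constants $C_1 > 1$ (depending only on $T$ and $c'$) for which
\[
C_1^{-1}\bigl(2 + 2\cosh(\Lambda_{i,t}(\tau)-\log T)\bigr) \;\leq\; \tfrac{1}{S_i(\tau)} + 2 + S_i(\tau) \;\leq\; C_1\bigl(2 + 2\cosh(\Lambda_{i,t}(\tau)-\log T)\bigr).
\]
Inverting this chain of inequalities and setting $c = C_1$ yields the claimed two-sided bound.

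There is no real obstacle here; the step that deserves the most care is simply verifying that the $+2$ constant term survives both directions of the comparison (rather than being swallowed into the exponentials), which is why we must write out the sum $1/S_i(\tau)+2+S_i(\tau)$ explicitly rather than merely bounding $S_i(\tau)$ from one side. The bound is independent of the sign of $\Lambda_{i,t}(\tau)$, which is precisely what makes $g(x) = 2x + 2\sinh(x-\log T)$ the correct antiderivative controlling the induction argument in \cref{sec:proof_of_convergence}.
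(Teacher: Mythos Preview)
Your proposal is correct and follows essentially the same approach as the paper: both express $s_1^{(i)}(1-s_1^{(i)})$ in terms of $S_i=\sum_{t=2}^T\exp(-\Lambda_{i,t})$, use the bounded-ratio hypothesis to pin $S_i$ to $(T-1)\exp(-\Lambda_{i,t})$ up to constants, and identify the result with the $\cosh$ expression. The only cosmetic difference is that you take the reciprocal and match $1/S_i+2+S_i$ term-by-term with $T^{-1}e^{\Lambda}+2+Te^{-\Lambda}$, whereas the paper bounds the product $\frac{S_i}{(1+S_i)^2}$ directly and rewrites it as $\frac{Te^{-\Lambda}}{(1+Te^{-\Lambda})^2}$; both routes yield the same constant $c=\Theta(c'T/(T-1))$.
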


\begin{proof}[Proof of \cref{lem:bounds_softmax}]
Using notations introduced in \cref{def:attention_gap}, we have
\begin{align}
s_1^{(i)}(\tau) (1 - s_1^{(i)}(\tau))
&= \frac{\exp\left( \vx_1^{(i)\top}\mW(\tau)^\top\vp(\tau) \right)}{\sum_{t=1}^T \exp\left( \vx_t^{(i)\top}\mW(\tau)^\top\vp(\tau) \right)} \cdot \frac{\sum_{t = 2}^T \exp\left( \vx_t^{(i)\top}\mW(\tau)^\top\vp(\tau) \right)}{\sum_{t=1}^T \exp\left( \vx_t^{(i)\top}\mW(\tau)^\top\vp(\tau) \right)} \\
&= \frac{1}{1 + \sum_{t = 2}^T \exp(-\Lambda_{i,t}(\tau))} \cdot \frac{\sum_{t = 2}^T \exp(-\Lambda_{i,t}(\tau))}{1 + \sum_{t = 2}^T \exp(-\Lambda_{i,t}(\tau)) }.
\end{align}
Using the condition of the lemma, we have
\begin{align}
s_1^{(i)}(\tau) (1 - s_1^{(i)}(\tau))
&\leq \frac{1}{1 + c^{\prime -1} (T-1) \exp(-\Lambda_{i,t}(\tau))} \cdot \frac{ c^{\prime} (T-1) \exp(-\Lambda_{i,t}(\tau))}{1 + c^{\prime-1} (T-1) \exp(-\Lambda_{i,t}(\tau)) } \nonumber \\
&= \frac{c^{\prime 3} T}{T-1} \cdot \frac{1}{(c^\prime T)/(T-1) + T \exp(-\Lambda_{i,t}(\tau))} \cdot \frac{  T \exp(-\Lambda_{i,t}(\tau))}{(c^\prime T)/(T-1) + T \exp(-\Lambda_{i,t}(\tau)) }  \\
&< c \cdot
\frac{ T\exp\left( -\Lambda_{i,t}(\tau) \right) }{ \left( 1 + T\exp\left( -\Lambda_{i,t}(\tau) \right) \right)^2 } \\
\label{eq:bound_softmax_s_1-s_upper_bound}
&= c \cdot
\frac{ 1 }{ 2 + 2 \cosh\left( \Lambda_{i,t}(\tau) - \log T \right) },
\end{align}
where the constant $c^{\prime 3} T / (T-1)$ is replaced with $c > 0$.
This gives the desired result.
The lower bound is shown in a similar way.
\end{proof}

Next, we show a similar lemma that is used in the proof of the benign overfitting case.
\begin{lemma}[Bounds for confusing token probability]
\label{lem:bounds_softmax_confusing}
Fix arbitrary $j \in \gN$, and suppose that there exists a constant $c^\prime > 1$ such that for all $t, u \in [T] \setminus \{1, 2\}$, we have
\begin{align*}
\exp\left( \Gamma_{j,t}(\tau) - \Gamma_{j,u}(\tau) \right) < c^\prime, \ 
\exp\left( \Gamma_{j,1}(\tau) - \Gamma_{j,t}(\tau) \right) < \rho^{-1} c^\prime, \ 
\exp\left( \Gamma_{j,t}(\tau) - \Gamma_{j,1}(\tau) \right) < \rho c^\prime.
\end{align*}
Then, there exists a constant $c > 1$ such that for any $t \in [T] \setminus \{1, 2\}$,
\begin{gather*}
\frac{ c^{-1} }{2 + 2\cosh\left( \Gamma_{j,t}(\tau) - \log T \right)}
< s_2^{(j)}(\tau) (1 - s_2^{(j)}(\tau))
< \frac{ c }{2 + 2\cosh\left( \Gamma_{j,t}(\tau) - \log T \right)}, \\
\frac{ c^{-1} }{2 + 2\cosh\left( \Gamma_{j,1}(\tau) - \log \rho^{-1} - \log T \right)}
< s_2^{(j)}(\tau) (1 - s_2^{(j)}(\tau))
< \frac{ c }{2 + 2\cosh\left( \Gamma_{j,1}(\tau) - \log \rho^{-1} - \log T \right)}.
\end{gather*}
\end{lemma}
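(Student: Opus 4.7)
The plan is to mirror the proof of Lemma \ref{lem:bounds_softmax}, with the key twist that we must separately treat the $t=1$ term in the denominator of $s_2^{(j)}(\tau)(1 - s_2^{(j)}(\tau))$, because on noisy data the relevant token $\vx_1^{(j)}$ carries the full-strength signal $\vmu_{-Y^{(j)}}$ and therefore lives on a different exponential scale than the weakly relevant and irrelevant tokens in $[T]\setminus\{1,2\}$.

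First I would write the identity
\begin{align*}
s_2^{(j)}(\tau) (1 - s_2^{(j)}(\tau)) = \frac{ \Sigma }{ (1 + \Sigma)^2 }, \qquad \Sigma := \exp(-\Gamma_{j,1}(\tau)) + \sum_{t=3}^T \exp(-\Gamma_{j,t}(\tau)),
\end{align*}
which reduces the problem to sandwiching $\Sigma$. Once $\Sigma$ is pinned down as $\Theta(T e^{-x})$ for the appropriate $x$, the calculation $\Sigma/(1+\Sigma)^2 = 1/(\Sigma^{-1} + 2 + \Sigma)$ combined with $Te^{-x} + (Te^{-x})^{-1} = 2\cosh(x - \log T)$ yields the target form $1/(2 + 2\cosh(x - \log T))$ directly, absorbing the multiplicative constants into $c$ exactly as in the last few lines of the proof of Lemma \ref{lem:bounds_softmax}.

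For the first inequality I would apply $\exp(\Gamma_{j,t}-\Gamma_{j,u}) < c'$ on $[T]\setminus\{1,2\}$ to sandwich $\sum_{t=3}^T \exp(-\Gamma_{j,t}(\tau)) \in \Theta(T \exp(-\Gamma_{j,t}(\tau)))$ for a fixed reference $t \in [T]\setminus\{1,2\}$, and then use $\exp(\Gamma_{j,t}-\Gamma_{j,1}) < \rho c'$ to bound $\exp(-\Gamma_{j,1}(\tau)) < \rho c' \exp(-\Gamma_{j,t}(\tau))$, so the $t=1$ contribution is a factor $O(\rho)$ smaller than the rest. Since $\rho < 1/C$, this gives $\Sigma \in \Theta(T \exp(-\Gamma_{j,t}(\tau)))$ and the stated bound with $x = \Gamma_{j,t}(\tau)$ follows. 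For the second inequality I would instead combine both $\exp(\Gamma_{j,1}-\Gamma_{j,t}) < \rho^{-1}c'$ and $\exp(\Gamma_{j,t}-\Gamma_{j,1}) < \rho c'$, which together pin $\exp(-\Gamma_{j,t}(\tau)) \in \Theta(\rho^{-1}\exp(-\Gamma_{j,1}(\tau)))$ for every $t\in[T]\setminus\{1,2\}$. Summing yields $\sum_{t=3}^T \exp(-\Gamma_{j,t}(\tau)) \in \Theta(\rho^{-1}T\exp(-\Gamma_{j,1}(\tau)))$, and since $\rho^{-1}T \gg 1$ the $t=1$ contribution is absorbed so that $\Sigma \in \Theta(\rho^{-1}T\exp(-\Gamma_{j,1}(\tau))) = \Theta(\exp(-(\Gamma_{j,1}(\tau)-\log\rho^{-1}-\log T)))$, producing the claim with $x = \Gamma_{j,1}(\tau) - \log\rho^{-1}$.

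The main obstacle is purely bookkeeping: the two bounds treat the $t=1$ term oppositely --- negligible in the first (up to a factor $\rho$) and dominant after rescaling in the second --- so the absolute constant $c$ must be chosen to absorb both the factor $c'^{3}T/(T-1)$ (as in the original proof of Lemma \ref{lem:bounds_softmax}) and the $\Theta(1)$ slack arising from the $\rho^{-1}T$ dominance in Step~3. No genuinely new technique is required beyond Lemma \ref{lem:bounds_softmax}; everything reduces to the same $T e^{-x}/(1+Te^{-x})^2$ manipulation applied to two different effective scales.
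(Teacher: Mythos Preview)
Your proposal is correct and matches the paper's proof essentially line for line: the paper writes the same $\Sigma/(1+\Sigma)^2$ identity, uses the hypotheses to pin $\exp(-\Gamma_{j,1})$ at scale $\rho\exp(-\Gamma_{j,t})$ (so it is negligible for the first bound under $\rho<1/C$) and conversely $\exp(-\Gamma_{j,t})$ at scale $\rho^{-1}\exp(-\Gamma_{j,1})$ for the second bound, then invokes the calculation of Lemma~\ref{lem:bounds_softmax} to reach the $\cosh$ form.
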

\begin{proof}[Proof of \cref{lem:bounds_softmax_confusing}]
From the definition of $\Gamma_{j,t}$ in \cref{def:attention_gap}, we have
\begin{align}
s_2^{(j)}(\tau) (1 - s_2^{(j)}(\tau))
&= \frac{1}{1 + \sum_{t \in [T] \setminus \{2\}} \exp(-\Gamma_{j,t}(\tau))} \cdot \frac{\sum_{t \in [T] \setminus \{2\}} \exp(-\Gamma_{j,t}(\tau))}{1 + \sum_{t \in [T] \setminus \{2\}} \exp(-\Gamma_{j,t}(\tau)) }.
\end{align}
For $t \in [T] \setminus \{1, 2\}$, since $c^{\prime -1} \rho \exp(-\Gamma_{j,t}(\tau)) \leq \exp(-\Gamma_{j,1}(\tau)) \leq c^\prime \rho \exp(-\Gamma_{j,t}(\tau))$ and $c^{\prime -1} \exp(-\Gamma_{j,t}(\tau)) \leq \exp(-\Gamma_{j,u}(\tau)) \leq c^\prime \exp(-\Gamma_{j,t}(\tau))$ for any $u \in [T]\setminus\{1,2\}$, using the same discussion as in \cref{lem:bounds_softmax} and the parameter assumption $\rho < 1/C$, we have the desired result.
For the relevant token, since $c^{\prime -1} \rho^{-1} \exp(-\Gamma_{j,1}(\tau)) \leq \exp(-\Gamma_{j,t}(\tau)) \leq c^\prime \rho^{-1} \exp(-\Gamma_{j,1}(\tau))$ for any $t \in [T] \setminus \{1, 2\}$, similarly to \cref{eq:bound_softmax_s_1-s_upper_bound}, we have
\begin{align}
s_2^{(j)}(\tau) (1 - s_2^{(j)}(\tau))
&< c \cdot
\frac{ 1 }{ 2 + 2 \cosh\left( \Gamma_{j,1}(\tau) - \log \rho^{-1} - \log T \right) }.
\end{align}
The lower bound is derived in the same way, which completes the proof.
\end{proof}

In the next lemma, we confirm that the significant term in tracking the gradient descent dynamics, $s(1-s)$, is dominated by the token with the highest assigned probability.
\begin{lemma}[Inequality for $s(1-s)$]
\label{lem:s_1-s_inequality}
Let $\vs \in \R^T$ be a probability vector, and let $t \in [T]$ be such that $s_t = \max_{t^\prime \in [T]} s_{t^\prime}$.
Then, we have
\begin{align*}
s_u \left( 1 - s_u \right) \leq s_t \left( 1 - s_t \right), \; \forall u \in [T] \setminus \{t\}.
\end{align*}
\end{lemma}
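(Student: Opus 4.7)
The plan is to reduce the inequality to a single algebraic identity and then check signs of the two factors that appear. First I would write
\begin{align*}
s_t(1 - s_t) - s_u(1 - s_u) = (s_t - s_u) - (s_t^2 - s_u^2) = (s_t - s_u)(1 - s_t - s_u),
\end{align*}
which is the only computation needed.

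Next I would argue that both factors on the right-hand side are nonnegative. The first factor $s_t - s_u \geq 0$ holds by the defining property of $t$ as an index achieving the maximum. The second factor $1 - s_t - s_u \geq 0$ holds because $\vs$ is a probability vector, so
\begin{align*}
s_t + s_u \leq \sum_{t' \in [T]} s_{t'} = 1,
\end{align*}
where we used $s_{t'} \geq 0$ for all remaining indices $t' \in [T] \setminus \{t,u\}$.

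Multiplying the two nonnegative factors yields $s_t(1-s_t) - s_u(1 - s_u) \geq 0$, which is exactly the claimed inequality. There is no real obstacle here; the only thing one has to notice is the factorization, after which both signs follow immediately from the definitions of ``maximum coordinate'' and ``probability vector''.
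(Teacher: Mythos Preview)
Your proof is correct and arguably cleaner than the paper's. The paper does not use the factorization $(s_t-s_u)(1-s_t-s_u)$; instead it argues via properties of the parabola $f(x)=x(1-x)$: $f$ is increasing on $[0,1/2]$ and symmetric about $1/2$, and then splits into two cases depending on whether $s_t\le 1/2$ (where monotonicity alone suffices since $s_u\le s_t$) or $s_t\ge 1/2$ (where one uses $s_u\le 1-s_t\le 1/2$ together with $f(s_t)=f(1-s_t)$). Both arguments ultimately rely on the same key fact $s_t+s_u\le 1$ coming from the probability-vector constraint; your route just packages it into a single algebraic identity and avoids the case split, while the paper's route is more geometric but slightly longer.
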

\begin{proof}[Proof of \cref{lem:s_1-s_inequality}]
The function $f(x) = x - x^2$ defined on $x \in [0, 1]$ is monotonically increasing in $[0, 1/2]$, and $f(x) = f(1 - x)$ holds by the symmetry at $x = 1/2$.
When $0 \leq s_t \leq 1/2$, the claim holds from the monotonicity over $[0, 1/2]$.
For the remaining case $1/2 \leq s_t \leq 1$, since $f(s_t) = f(1 - s_t)$ and
\begin{align}
s_u \leq \sum_{v \in [T] \setminus \{t\}} s_v = 1 - s_t
\end{align}
hold, the claim follows from the monotonicity over $[0, 1/2]$ again.
\end{proof}

In the next lemma, we will see that with the assumption of a sufficiently small step size, the attention values do not change significantly in a single step of gradient descent.
\begin{lemma}[One-step update of attention]
\label{lem:one_step_update_attention}
Suppose that the assumptions in \cref{thm:convergence} are satisfied.
Then, under the condition $A(\tau)$ and $B(\tau)$, which appear in \cref{lem:induction_not_overfitting,lem:induction_benign_overfitting_stage_1,lem:induction_benign_overfitting_stage_2}, we have
\begin{align*}
|\lambda_{+1}(\tau+1) - \lambda_{+1}(\tau)| = o(1), \ 
|\lambda_{-1}(\tau+1) - \lambda_{-1}(\tau)| = o(1), \ 
|\rho_{i, t}(\tau+1) - \rho_{i, t}(\tau)| = o(1), 
\end{align*}
for any $i \in [n]$ and $t \in [T]$.
\end{lemma}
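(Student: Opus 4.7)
The plan is to invoke \cref{lem:update_signal_and_noise_attention} to get explicit formulae for the one-step increments, and then bound each factor using the induction hypotheses $A(\tau)$ and $B(\tau)$ together with the high-probability events of \cref{lem:good_run}, \cref{lem:ratio_loss_derivative}, and \cref{lem:token_score}. To unify the bookkeeping, let $M \coloneqq \max\{\|\vmu\|_2\sqrt{d},\sigma_\epsilon d\}$, so that Assumption~\ref{assump_step_size} gives $\alpha \lesssim M^{-1}$ and Assumption~\ref{assump_variance_wp} gives $\max\{\sigma_w^2,\sigma_p^2\} \lesssim M^{-1}\log^{-2}(Tn/\delta)$.

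First I would treat $\lambda_{+1}(\tau+1)-\lambda_{+1}(\tau)$. By \cref{lem:update_signal_and_noise_attention}, this equals a first-order term of the form $\frac{\alpha}{n}\sum_i(-\ell'_i)Y^{(i)}(I_{i,+}^W(\tau)+\|\vp(\tau)\|_2^2 I_{i,+}(\tau))$ plus a quadratic correction. Since $|\ell'_i|\leq O(1)$ by \cref{lem:ratio_loss_derivative} and $|\gamma_t^{(i)}|\lesssim \|\vnu\|_2\|\vmu\|_2$ by \cref{lem:token_score}, bounding the softmax coefficients trivially by $1$ gives $|I_{i,+}(\tau)|\lesssim \|\vnu\|_2\|\vmu\|_2\max_{i,t}|\langle \vx_t^{(i)},\vmu_{+1}\rangle|\lesssim \|\vnu\|_2\|\vmu\|_2^3$ using \cref{lem:good_run}, and $|I_{i,+}^W(\tau)|\lesssim \|\vnu\|_2\|\vmu\|_2 \cdot \sigma_w^2\|\vmu\|_2 M$ using $B(\tau)$. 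Combining with $\|\vp(\tau)\|_2^2=\Theta(\sigma_p^2 d)$ from $A(\tau)$ and the assumption $\|\vnu\|_2=O(1/\|\vmu\|_2)$, the entire first-order contribution is at most a constant times $\alpha \max\{\sigma_w^2,\sigma_p^2\}M^2 \lesssim \alpha M/\log^2(Tn/\delta) \lesssim 1/\log^2(Tn/\delta)=o(1)$.

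Next I would handle the quadratic remainder $\alpha^2 \vmu_{+1}^\top \nabla_{\mW^\top}\widehat{\Ls}(\tau)\nabla_\vp \widehat{\Ls}(\tau)$ by the Cauchy--Schwarz inequality and a coarse bound on the two gradient norms using \cref{eq:gradient_w,eq:gradient_p}; under $A(\tau)$ and $B(\tau)$, this is of order $\alpha^2 \|\vnu\|_2^2\|\vmu\|_2^2 \cdot \sigma_p\sqrt{d}\cdot \sigma_w M \cdot \|\vmu\|_2^2$, which simplifies under $\|\vnu\|_2=O(1/\|\vmu\|_2)$ to $\alpha^2 \sigma_w\sigma_p \sqrt{d}\,\|\vmu\|_2^2 M$ and, using the assumption bounds, is $\lesssim \|\vmu\|_2^2\sqrt{d}/(M^2\log^2)\leq 1/(\sqrt{d}\log^2)=o(1)$. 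The argument for $\lambda_{-1}$ is verbatim with $\vmu_{+1}$ replaced by $\vmu_{-1}$.

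Finally, for $\rho_{i,t}(\tau+1)-\rho_{i,t}(\tau)$ the only substitutions are $\vmu_{+1}\rightarrow\vepsilon_t^{(i)}$ and the inner-product estimates $|\langle \vx_u^{(j)},\vepsilon_t^{(i)}\rangle|\lesssim \sigma_\epsilon^2 d$ (worst-case, when $(j,u)=(i,t)$) and $\|\mW(\tau)\vepsilon_t^{(i)}\|_2\lesssim \sigma_w \sigma_\epsilon d$ from $B(\tau)$; the resulting first-order bound is $\alpha\max\{\sigma_w^2,\sigma_p^2\}\sigma_\epsilon^2 d^2\lesssim \alpha M^2/\log^2(Tn/\delta)=o(1)$, and the quadratic term is controlled exactly as before. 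The only mildly delicate step is ensuring that the $\rho_{i,t}$ bound uses $\sigma_\epsilon^2 d^2\leq M^2$, which is immediate from the definition of $M$; aside from this, the lemma follows entirely from mechanical bookkeeping within the envelopes provided by Assumptions~\ref{assump_step_size} and~\ref{assump_variance_wp}.
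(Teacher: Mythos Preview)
Your proposal is correct and mirrors the paper's proof: both invoke \cref{lem:update_signal_and_noise_attention}, bound the first-order term via $A(\tau)$, $B(\tau)$, \cref{lem:token_score}, and \cref{lem:good_run}, control the quadratic remainder by Cauchy--Schwarz on $\|\nabla_\mW\widehat{\Ls}(\tau)\vmu_{+1}\|_2\|\nabla_\vp\widehat{\Ls}(\tau)\|_2$, and conclude using Assumptions~\ref{assump_step_size} and~\ref{assump_variance_wp}. One minor slip: your intermediate estimate $|I_{i,+}^W(\tau)|\lesssim \|\vnu\|_2\|\vmu\|_2\cdot\sigma_w^2\|\vmu\|_2 M$ underestimates the diagonal contribution $\|\mW(\tau)\vmu_{+1}\|_2^2\sim\sigma_w^2\|\vmu\|_2^2 d$ (which is what the paper uses), but since $\|\vmu\|_2^2 d\leq M^2$ your final envelope $\alpha\max\{\sigma_w^2,\sigma_p^2\}M^2\lesssim 1/\log^2(Tn/\delta)$ remains valid and the argument goes through.
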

\begin{proof}[Proof of \cref{lem:one_step_update_attention}]
From \cref{lem:update_signal_and_noise_attention,lem:ratio_loss_derivative}, we have
\begin{align}
| \lambda_{+1}(\tau+1) - \lambda_{+1}(\tau) |
\label{eq:bound_lambda_update_one_step_update}
&\lesssim \alpha \max \{ I_{i,+}^W(\tau), \|\vp(\tau) \|_2^2 I_{i,+}(\tau) \} + \alpha^2 \vmu_{+1}^\top \nabla_{\mW^\top} \widehat{\Ls}(\tau) \nabla_\vp \widehat{\Ls}(\tau).
\end{align}
For the first term, from \cref{def:interaction_term}, we have
\begin{align}
&\alpha \max \{ I_{i,+}^W(\tau), \|\vp(\tau) \|_2^2 I_{i,+}(\tau) \} \nonumber \\
&\lesssim \alpha \cdot \max_{i \in [n], t \in [T]} \{ | \gamma_t^{(i)} | \} \cdot \left( \max_{i \in [n], t \in [T]} \left\{ \langle \mW(\tau) \vx_t^{(i)}, \mW(\tau)\vmu_{+1} \rangle \right\} +  \|\vp(\tau) \|_2^2 \max_{i\in[n],t\in[T]} \left\{ \langle \vx_t^{(i)}, \vmu_{+1} \rangle \right\} \right) \\
&\lesssim \alpha \|\vnu\|_2 \|\vmu\|_2 \cdot \max\{\sigma_w^2, \sigma_p^2\} \| \vmu \|_2^2 d \\
\label{eq:bound_lambda_update_linear_one_step_update}
&= o(1),
\end{align}
where the third inequality follows from the propositions $A(\tau)$ and $B(\tau)$ in the conditions.
The last line is by $\|\vnu\|_2 = O(\|\vmu\|_2)$ and the parameter assumptions in \cref{sec:assumption}; specifically, $\alpha \leq \max\{ \|\vmu\|_2 \sqrt{d}, \sigma_\epsilon d \}^{-1} / C$ and $\max\{ \sigma_w^2, \sigma_p^2 \} = \Theta \left( \max\{ \|\vmu\|_2 \sqrt{d}, \sigma_\epsilon d \}^{-1} \log^{-2}(Tn/\delta) \right)$.

For the quadratic term, we have from \cref{eq:gradient_w,eq:gradient_p} that
\begin{align}
& \alpha^2 \vmu_{+1}^\top \nabla_{\mW^\top} \widehat{\Ls}(\tau) \nabla_\vp \widehat{\Ls}(\tau) \nonumber \\
&\lesssim \alpha^2 \cdot \left( \max_{i \in [n], t \in [T]} \{ | \gamma_t^{(i)} | \} \right)^2 \cdot \max_{i \in [n], t \in [T]} \left\{ \langle \vx_t^{(i)}, \vmu_{+1} \rangle \right\} \cdot  \|\vp(\tau) \|_2 \cdot \max_{i\in[n],t\in[T]} \left\{ \| \mW(\tau) \vx_t^{(i)} \|_2 \right\}  \\
&\lesssim \alpha^2 \|\vnu\|_2^2 \|\vmu\|_2^2 \cdot \|\vmu\|_2^2 \cdot \sigma_p \sqrt{d} \cdot \sigma_w \max\{ \|\vmu\|_2 \sqrt{d}, \sigma_\epsilon d \} \\
\label{eq:bound_lambda_update_quad_one_step_update}
&= o(1),
\end{align}
which similarly follows from parameter assumptions.
Substituting \cref{eq:bound_lambda_update_linear_one_step_update,eq:bound_lambda_update_quad_one_step_update} to \cref{eq:bound_lambda_update_one_step_update} leads to the desired result.
The other inequalities for $\lambda_{-1}$ and $\rho_{i,t}$ are shown in a similar discussion.
\end{proof}

Finally, we provide the lemma regarding the bounds for the summation of probability terms $s(\tau)(1-s(\tau))$.
This result is necessary to evaluate the attention updates.

\begin{lemma}[Bounds for summation of probability terms]
\label{lem:bound_summation_prob_terms}
\leavevmode 
\begin{itemize}[leftmargin=*]
\item (Not Overfitting Case)
Let $T_1$ be the time step defined in \cref{lem:induction_not_overfitting}, and consider $\tau \leq T_1$.
Suppose that the propositions $C(\tau^\prime)$ and $E(\tau^\prime)$ in \cref{lem:induction_not_overfitting} hold for any $\tau^\prime \in [0, \tau]$.
Then, we have that for any $i \in [n]$,
\begin{align*}
\sum_{\tau^\prime=0}^{\tau} s_1^{(i)}(\tau^\prime)(1 - s_1^{(i)}(\tau^\prime))
=
\Theta(\tau).
\end{align*}

\item (Benign Overfitting Case)
Let $T_2$ be the time step defined in \cref{lem:induction_benign_overfitting_stage_2}, and consider $\tau \leq T_2$.
Suppose that the propositions $C(\tau^\prime)$ and $E(\tau^\prime)$ in \cref{lem:induction_benign_overfitting_stage_1,lem:induction_benign_overfitting_stage_2} hold for any $\tau^\prime \in [0, \tau]$.
Then, we have that for any clean data $i \in \gC$,
\begin{align*}
\sum_{\tau^\prime=0}^{\tau} s_1^{(i)}(\tau^\prime)(1 - s_1^{(i)}(\tau^\prime))
=
\begin{cases}
\Theta(\tau) & \text{ if } \tau = O\left( \frac{1}{\alpha n^{-1} \sigma_\epsilon^2 \|\vnu\|_2 \|\vmu\|_2 d^2 \max\{ \sigma_w^2, \sigma_p^2 \}} \right), \\
\Theta\left( \frac{ \log \left( \tau \cdot \alpha n^{-1} \sigma_\epsilon^2 \|\vnu\|_2 \|\vmu\|_2 d^2 \max\{ \sigma_w^2, \sigma_p^2 \} \right)}{\alpha n^{-1} \sigma_\epsilon^2 \|\vnu\|_2 \|\vmu\|_2 d^2 \max\{ \sigma_w^2, \sigma_p^2 \}} \right) & \text{ if } \tau = \Omega\left( \frac{1}{\alpha n^{-1} \sigma_\epsilon^2 \|\vnu\|_2 \|\vmu\|_2 d^2 \max\{ \sigma_w^2, \sigma_p^2 \}} \right).
\end{cases}
\end{align*}
For any noisy data $j \in \gN$, we have
\begin{align*}
\sum_{\tau^\prime=0}^{\tau} s_1^{(j)}(\tau^\prime)(1 - s_1^{(j)}(\tau^\prime))
=
\Theta\left( \frac{ \log \left( \tau \cdot \alpha n^{-1} \sigma_\epsilon^2 \|\vnu\|_2 \|\vmu\|_2 d^2 \max\{ \sigma_w^2, \sigma_p^2 \} \right)}{\alpha n^{-1} \sigma_\epsilon^2 \|\vnu\|_2 \|\vmu\|_2 d^2 \max\{ \sigma_w^2, \sigma_p^2 \}} \right)
\quad
\text{ if }
\tau \leq T_1,
\end{align*}
and
\begin{align*}
\sum_{\tau^\prime=T_1}^{\tau} s_2^{(j)}(\tau^\prime)(1 - s_2^{(j)}(\tau^\prime))
=
\Theta\left( \frac{ \log \left( \tau \cdot \rho \cdot \alpha n^{-1} \sigma_\epsilon^2 \|\vnu\|_2 \|\vmu\|_2 d^2 \max\{ \sigma_w^2, \sigma_p^2 \} \right)}{\rho \cdot \alpha n^{-1} \sigma_\epsilon^2 \|\vnu\|_2 \|\vmu\|_2 d^2 \max\{ \sigma_w^2, \sigma_p^2 \}} \right)
\quad
\text{ if }
\tau \geq T_1,
\end{align*}
where $T_1$ is the time step defined in \cref{lem:induction_benign_overfitting_stage_1}.
\end{itemize}
\end{lemma}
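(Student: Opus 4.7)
The plan is to reduce each summation to a telescoping identity in terms of the function $g(x)=2x+2\sinh(x-\log T)$, and then read off the asymptotics of $\Lambda(\tau)-\Lambda(0)$ (respectively $\Gamma(\tau)-\Gamma(0)$) in each regime. First, applying \cref{lem:bounds_softmax}, whose hypothesis is supplied by the proposition $E(\tau')$ assumed in the statement, gives $s_1^{(i)}(\tau')(1-s_1^{(i)}(\tau'))=\Theta(1/g'(\Lambda_{i,t}(\tau')))$ with $g'(x)=2+2\cosh(x-\log T)$; \cref{lem:bounds_softmax_confusing} plays the analogous role for $s_2^{(j)}(\tau')(1-s_2^{(j)}(\tau'))$ in Stage 2, using $\Gamma_{j,t}$ for $t\in[T]\setminus\{1,2\}$ or $\Gamma_{j,1}-\log\rho^{-1}$.

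Second, I will use $C(\tau')$, which provides $g(\Lambda(\tau'+1))-g(\Lambda(\tau'))=\Theta(\beta)$ for the relevant step coefficient $\beta$, together with $|\Lambda(\tau'+1)-\Lambda(\tau')|=o(1)$ from \cref{lem:one_step_update_attention}, to conclude (via mean-value theorem and near-constancy of $g'$ over an $o(1)$ window) that $\Lambda(\tau'+1)-\Lambda(\tau')=\Theta(\beta/g'(\Lambda(\tau')))$. Telescoping yields
\begin{equation*}
\sum_{\tau'=0}^{\tau-1}\frac{1}{g'(\Lambda(\tau'))}=\Theta\!\left(\frac{\Lambda(\tau)-\Lambda(0)}{\beta}\right),
\end{equation*}
and the same manipulation with $\Gamma$ in place of $\Lambda$ and $\rho\beta$ in place of $\beta$ handles Stage 2. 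Third, I will evaluate the right-hand side regime by regime using the asymptotics $g(x)\sim -T e^{-x}$ for $x\ll\log T$, $g(x)\sim e^{x}/T$ for $x\gg\log T$, and $g'(x)=\Theta(1)$ for $x$ in a bounded window around $\log T$. In the not-overfitting case and in the early phase $\tau=O(1/\beta)$ of the benign-overfitting clean case, $\Lambda$ stays in the bounded window so $g'(\Lambda)=\Theta(1)$ and $\Lambda(\tau)-\Lambda(0)=\Theta(\tau\beta)$, giving a sum of $\Theta(\tau)$. In the late phase $\tau=\Omega(1/\beta)$, $g(\Lambda(\tau))=\Theta(\tau\beta)$ forces $\Lambda(\tau)=\Theta(\log(\tau\beta))$, producing $\Theta(\log(\tau\beta)/\beta)$. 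For noisy data in Stage 1 the sign flips: $\Lambda_{j,t}(\tau)=-\Theta(\log(\tau\beta))$, yielding the same logarithmic bound. In Stage 2, repeating the calculation with $\Gamma_{j,t}$ and step coefficient $\rho\beta$ yields $\Theta(\log(\tau\rho\beta)/(\rho\beta))$.

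The main technical obstacle will be promoting the MVT-based identity $\Lambda(\tau'+1)-\Lambda(\tau')=\Theta(\beta/g'(\Lambda(\tau')))$ from a heuristic into matching upper and lower bounds, since $g'$ itself varies rapidly once $\Lambda$ moves far from $\log T$. I will handle this by applying the same quadrature argument used in the proof of $C(\tau+1)$ in \cref{sec:proof_not_overfitting}, splitting each sum at the crossover time where $\Lambda$ crosses $\log T$ (respectively $\Gamma$ crosses $\log T$), and using the monotonicity of $g'$ on either side of that point to sandwich the Riemann sum by the integral. A secondary subtlety in Stage 2 is that the two descriptions of $s_2^{(j)}(\tau')(1-s_2^{(j)}(\tau'))$ through $\Gamma_{j,1}-\log\rho^{-1}$ and through $\Gamma_{j,t}$ must produce consistent bounds; this is resolved by $E(\tau')$ in \cref{lem:induction_benign_overfitting_stage_2}, which constrains these quantities to agree up to absolute constants and identical growth rate $\Theta(\rho\beta)$ per step.
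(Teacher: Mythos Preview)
Your telescoping route is genuinely different from the paper's and is arguably cleaner. The paper never forms the identity $\sum_{\tau'} 1/g'(\Lambda(\tau'))=\Theta\bigl((\Lambda(\tau)-\Lambda(0))/\beta\bigr)$; instead it bounds $1/g'(\Lambda)$ above and below by $\Theta(1/|g(\Lambda)|)$ via explicit elementary inequalities (e.g.\ $2+2\cosh(x-\log T)<g(x)$ for $x>\log T$ and $2+2\cosh(x-\log T)<-g(x)+2$ for $x<o(1)$), uses $C(\tau')$ to sandwich $g(\Lambda(\tau'))$ between two affine functions of $\tau'$, and then compares the resulting Riemann sum $\sum 1/(a+b\tau')$ with the integral $\int d\tau'/(a+b\tau')=\Theta((1/\beta)\log(\tau\beta))$. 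Your approach trades those ad hoc $g'$-vs-$g$ comparisons for a single MVT step and a direct inversion of $g$ at the endpoint; the paper's approach has the advantage of relying only on the cumulative statement of $C(\tau')$ without needing per-step control.

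One point to tighten: you write that ``$C(\tau')$ provides $g(\Lambda(\tau'+1))-g(\Lambda(\tau'))=\Theta(\beta)$,'' but the statement of $C(\tau')$ only gives $g(\Lambda(\tau'))\in[g(\Lambda(0))+c_3\tau'\beta,\,g(\Lambda(0))+c_4\tau'\beta]$, from which the one-step increment could in principle lie anywhere in $[(c_3-c_4)\tau'+c_3,\,(c_4-c_3)\tau'+c_4]\beta$. The $\Theta(\beta)$ per-step bound you need is actually the intermediate inequality established \emph{inside} the proof of $C(\tau+1)$ (for instance \cref{eq:induction_c_not_overfitting_lambda_lower_case_1} and its analogues), namely $g'(\Lambda(\tau'))\bigl(\Lambda(\tau'+1)-\Lambda(\tau')\bigr)=\Theta(\beta)$, which follows from combining the signal/noise update lemmas with \cref{lem:bounds_softmax}. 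Once you source that step correctly, your MVT argument (using $|\Lambda(\tau'+1)-\Lambda(\tau')|=o(1)$ from \cref{lem:one_step_update_attention} to get $g'(\xi_{\tau'})=(1+o(1))g'(\Lambda(\tau'))$) goes through cleanly, and the quadrature fallback you describe becomes unnecessary. The paper itself is a bit loose on this point too---it invokes monotonicity of $\Lambda_{i,t}(\tau)$ ``by $C(\tau')$'' in the not-overfitting case, which likewise is really a byproduct of the proof of $C$ rather than its statement.
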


\begin{proof}[Proof of \cref{lem:bound_summation_prob_terms}]
We proceed with the proof separately for the not-overfitting and benign overfitting cases.

\paragraph{Not overfitting case:}
Fix a single $t$ from $[T]\setminus \{1\}$. 
Since $\Lambda_{i,t}(\tau)$ is monotonically increasing and $|\Lambda_{i,t}(0)| = o(1)$ by $C(\tau^\prime)$ and \cref{lem:attention_gap_initialization}, it follows from \cref{lem:bounds_softmax} and $T = \Theta(1)$ that $s_1^{(i)}(\tau^\prime) (1 - s_1^{(i)}(\tau^\prime)) = \Theta(1)$ for any $\tau^\prime \in [0, \tau]$.
This immediately leads to the conclusion.

\paragraph{Benign overfitting case:}
We first discuss the case of clean data $i \in \gC$.
Fix $t \in [T] \setminus \{1\}$, and let $T_0$ be a first-time step where $\Lambda_{i,t}(\tau)$ exceeds $\log T$.
From $C(\tau)$ and the parameter assumption $T=\Theta(1)$, we have $T_0 = \Theta\left( \frac{1}{\alpha n^{-1} \sigma_\epsilon^2 \|\vnu\|_2 \|\vmu\|_2 d^2 \max\{ \sigma_w^2, \sigma_p^2 \}} \right)$.
We first consider the case of $\tau < T_0$.
In this case, the conclusion follows from the same argument as in the not-overfitting case.
This leads to the first line in the statement.

Next, we consider the case of $T_0 \leq \tau$ and first derive the lower bound.
From \cref{lem:bounds_softmax}, we have
\begin{align}
\label{eq:lower_bound_summmation_prob_terms}
\sum_{\tau^\prime=0}^{\tau} s_1^{(i)}(\tau^\prime)(1 - s_1^{(i)}(\tau^\prime))
> \sum_{\tau^\prime=0}^{\tau} \frac{c^{-1}}{2 + 2\cosh\left( \Lambda_{i,t}(\tau^\prime) - \log T \right)}
> 
\sum_{\tau^\prime=T_0}^{\tau} \frac{ c^{-1} }{2 + 2\cosh\left( \Lambda_{i,t}(\tau^\prime) - \log T \right)}
\end{align}
for any $i \in [n]$.
Combining the inequality $2 + 2 \cosh\left( x - \log T \right) < 2x + 2 \sinh\left( x - \log T \right)$ for $x > \log T$ and the proposition $C(\tau)$ in \cref{lem:induction_benign_overfitting_stage_1,lem:induction_benign_overfitting_stage_2}, we have
\begin{align}
\frac{1}{2 + 2\cosh\left( \Lambda_{i,t}(\tau^\prime) - \log T \right)}
&>
\frac{1}{2 \Lambda_{i,t}(\tau^\prime) + 2\sinh\left( \Lambda_{i,t}(\tau^\prime) - \log T \right)} \\
&=
\frac{1}{ g\left( \Lambda_{i,t}(\tau^\prime) \right) } \\
&\geq \frac{1}{g\left( \Lambda_{i,t}(0) \right) + \tau^\prime \cdot c_4 \alpha n^{-1} \sigma_\epsilon^2 \|\vnu\|_2 \|\vmu\|_2 d^2 \max\{ \sigma_w^2, \sigma_p^2 \}}.
\end{align}
Since the function $1 / (b + ax)$ for $a > 0$ is convex and monotonically decreasing for $b + ax > 0$, we have
\begin{align}
& \sum_{\tau^\prime=T_0}^{\tau} \frac{1}{2 + 2\cosh\left( \Lambda_{i,t}(\tau^\prime) - \log T \right)} \nonumber \\
&>
\int_{T_0}^{\tau + 1} \frac{d\tau^\prime}{g\left( \Lambda_{i,t}(0) \right) + \tau^\prime \cdot c_4 \alpha n^{-1} \sigma_\epsilon^2 \|\vnu\|_2 \|\vmu\|_2 d^2 \max\{ \sigma_w^2, \sigma_p^2 \}} \\
&=
\frac{1}{c_4 \alpha n^{-1} \sigma_\epsilon^2 \|\vnu\|_2 \|\vmu\|_2 d^2 \max\{ \sigma_w^2, \sigma_p^2 \}} \log \left( \frac{g\left( \Lambda_{i,t}(0) \right) + (\tau+1) \cdot c_4 \alpha n^{-1} \sigma_\epsilon^2 \|\vnu\|_2 \|\vmu\|_2 d^2 \max\{ \sigma_w^2, \sigma_p^2 \}}{g\left( \Lambda_{i,t}(0) \right) + T_0 \cdot c_4 \alpha n^{-1} \sigma_\epsilon^2 \|\vnu\|_2 \|\vmu\|_2 d^2 \max\{ \sigma_w^2, \sigma_p^2 \}} \right) \\
\label{eq:lower_bound_summation_prob_terms_integral}
&\gtrsim
\frac{ \log \left( \tau \cdot \alpha n^{-1} \sigma_\epsilon^2 \|\vnu\|_2 \|\vmu\|_2 d^2 \max\{ \sigma_w^2, \sigma_p^2 \}\right) }{\alpha n^{-1} \sigma_\epsilon^2 \|\vnu\|_2 \|\vmu\|_2 d^2 \max\{ \sigma_w^2, \sigma_p^2 \}},
\end{align}
where the last line follows from the fact that the denominator inside the log is $\Theta(1)$.
This is ensured by the definition of $T_0$, which guarantees that the denominator exceeds $g(\log T) = 2\log T = \Theta(1)$, and by the fact that the one-step update of attention score is $o(1)$, as shown in \cref{lem:one_step_update_attention}. 
Consequently, by substituting this result into \cref{eq:lower_bound_summmation_prob_terms}, we obtain the desired lower bound.

Similarly, we provide an upper bound using the function $g$ and the result of $C(\tau)$.
From \cref{lem:bounds_softmax}, we have
\begin{align}
\sum_{\tau^\prime=0}^{\tau} s_1^{(i)}(\tau^\prime)(1 - s_1^{(i)}(\tau^\prime))
&< 
\sum_{\tau^\prime=0}^{T_0} s_1^{(i)}(\tau^\prime)(1 - s_1^{(i)}(\tau^\prime))
+
\sum_{\tau^\prime=T_0+1}^{\tau} \frac{ c }{2 + 2\cosh\left( \Lambda_{i,t}(\tau^\prime) - \log T \right)} \\
&\leq 
\frac{T_0}{4}
+
\sum_{\tau^\prime=T_0+1}^{\tau} \frac{ c (2\log T + 1) }{ g\left( \Lambda_{i,t}(\tau^\prime) \right)} \\
\label{eq:upper_bound_summmation_prob_terms}
&\leq 
\frac{T_0}{4}
+
\sum_{\tau^\prime=T_0+1}^{\tau} \frac{ c^\prime }{ g\left( \Lambda_{i,t}(0) \right) + \tau^\prime \cdot c_3 \alpha n^{-1} \sigma_\epsilon^2 \|\vnu\|_2 \|\vmu\|_2 d^2 \max\{ \sigma_w^2, \sigma_p^2 \} },
\end{align}
where the second inequality comes from $s(1-s) \leq 1/4$, $\sinh(x) < \cosh(x)$, and the inequality $2\log T \cosh(x - \log T) > x$ for all $x$, which can be verified through a straightforward evaluation of the minimum.
In the last line, we used the result of $C(\tau)$, and the coefficient is replaced with $c^\prime > 0$ from the parameter assumption $T = \Theta(1)$.
Similarly to the lower bound, we have
\begin{align}
& \sum_{\tau^\prime=T_0+1}^{\tau} \frac{1}{g\left( \Lambda_{i,t}(0) \right) + \tau^\prime \cdot c_3 \alpha n^{-1} \sigma_\epsilon^2 \|\vnu\|_2 \|\vmu\|_2 d^2 \max\{ \sigma_w^2, \sigma_p^2 \}} \nonumber \\
&<
\int_{T_0}^{\tau} \frac{d\tau^\prime}{g\left( \Lambda_{i,t}(0) \right) + \tau^\prime \cdot c_3 \alpha n^{-1} \sigma_\epsilon^2 \|\vnu\|_2 \|\vmu\|_2 d^2 \max\{ \sigma_w^2, \sigma_p^2 \}} \\
&=
\frac{1}{c_3 \alpha n^{-1} \sigma_\epsilon^2 \|\vnu\|_2 \|\vmu\|_2 d^2 \max\{ \sigma_w^2, \sigma_p^2 \}} \log \left( \frac{g\left( \Lambda_{i,t}(0) \right) + \tau \cdot c_3 \alpha n^{-1} \sigma_\epsilon^2 \|\vnu\|_2 \|\vmu\|_2 d^2 \max\{ \sigma_w^2, \sigma_p^2 \}}{g\left( \Lambda_{i,t}(0) \right) + T_0 \cdot c_3 \alpha n^{-1} \sigma_\epsilon^2 \|\vnu\|_2 \|\vmu\|_2 d^2 \max\{ \sigma_w^2, \sigma_p^2 \}} \right) \\
&\lesssim
\frac{ \log \left( \tau \cdot \alpha n^{-1} \sigma_\epsilon^2 \|\vnu\|_2 \|\vmu\|_2 d^2 \max\{ \sigma_w^2, \sigma_p^2 \}\right) }{ \alpha n^{-1} \sigma_\epsilon^2 \|\vnu\|_2 \|\vmu\|_2 d^2 \max\{ \sigma_w^2, \sigma_p^2 \}}.
\end{align}
Combining this result with $T_0 = \Theta(1 / \alpha n^{-1} \sigma_\epsilon^2 \|\vnu\|_2 \|\vmu\|_2 d^2 \max\{ \sigma_w^2, \sigma_p^2 \})$ from $C(\tau)$, \cref{eq:upper_bound_summmation_prob_terms} completes the proof.

For noisy data $j \in \gN$, we first derive the lower bound for $\tau \leq T_1$.
From \cref{lem:attention_gap_initialization} and the proposition $C(\tau)$ in \cref{lem:induction_benign_overfitting_stage_1}, we have $\Lambda_{j,t}(\tau) < o(1)$ for any $t \in [T] \setminus \{1\}$.
Since we can confirm that the inequality $2 + 2\cosh(x - \log T) < - 2x - 2\sinh(x - \log T) + 2 = -g(x) + 2$ holds for $x < o(1)$ by rearranging terms, \cref{lem:bounds_softmax} and \cref{lem:induction_benign_overfitting_stage_1} gives us
\begin{align}
\sum_{\tau^\prime=0}^{\tau} s_1^{(j)}(\tau^\prime)(1 - s_1^{(j)}(\tau^\prime))
&> \sum_{\tau^\prime=0}^{\tau} \frac{c^{-1}}{2 + 2\cosh\left( \Lambda_{j,t}(\tau^\prime) - \log T \right)} \\
&> \sum_{\tau^\prime=0}^{\tau} \frac{c^{-1}}{- g\left( \Lambda_{j,t}(\tau^\prime) \right) + 2} \\
\label{eq:lower_bound_summmation_prob_terms_benign_overfitting_noisy}
&> \sum_{\tau^\prime=0}^{\tau} \frac{c^{-1}}{- g\left( \Lambda_{j,t}(0) \right) + \tau \cdot c_6 \alpha n^{-1} \sigma_\epsilon^2 \|\vnu\|_2 \|\vmu\|_2 d^2 \max\{ \sigma_w^2, \sigma_p^2 \} + 2}.
\end{align}
Applying the same discussion as in \cref{eq:lower_bound_summation_prob_terms_integral}, we have
\begin{align}
\sum_{\tau^\prime=0}^{\tau} s_1^{(j)}(\tau^\prime)(1 - s_1^{(j)}(\tau^\prime))
&\gtrsim
\frac{\log \left( \tau \cdot \alpha n^{-1} \sigma_\epsilon^2 \|\vnu\|_2 \|\vmu\|_2 d^2 \max\{ \sigma_w^2, \sigma_p^2 \} \right)}{\alpha n^{-1} \sigma_\epsilon^2 \|\vnu\|_2 \|\vmu\|_2 d^2 \max\{ \sigma_w^2, \sigma_p^2 \}}.
\end{align}
As for the upper bound, since we obtain the inequality $2 + 2\cosh(x - \log T) > - x -\sinh(x - \log T) = -g(x) / 2$ for $x < o(1)$ by rearranging terms, similarly we have
\begin{align}
\sum_{\tau^\prime=0}^{\tau} s_1^{(j)}(\tau^\prime)(1 - s_1^{(j)}(\tau^\prime))
&< \sum_{\tau^\prime=0}^{\tau} \frac{2c}{- g\left( \Lambda_{j,t}(0) \right) + \tau \cdot c_5 \alpha n^{-1} \sigma_\epsilon^2 \|\vnu\|_2 \|\vmu\|_2 d^2 \max\{ \sigma_w^2, \sigma_p^2 \} }.
\end{align}
Using a similar evaluation with integral, we have
\begin{align}
\sum_{\tau^\prime=0}^{\tau} s_1^{(j)}(\tau^\prime)(1 - s_1^{(j)}(\tau^\prime))
&\lesssim 
\frac{\log \left( \tau \cdot \alpha n^{-1} \sigma_\epsilon^2 \|\vnu\|_2 \|\vmu\|_2 d^2 \max\{ \sigma_w^2, \sigma_p^2 \} \right)}{\alpha n^{-1} \sigma_\epsilon^2 \|\vnu\|_2 \|\vmu\|_2 d^2 \max\{ \sigma_w^2, \sigma_p^2 \}},
\end{align}
which completes the proof for $\tau \leq T_1$.

Finally, for the case of $\tau \geq T_1$, the desired result follows by repeating the same arguments as in the clean data case, using $C(\tau)$ in \cref{lem:induction_benign_overfitting_stage_2} and \cref{lem:bounds_softmax_confusing}.
\end{proof}

\subsection[Attention Updates for Not Overfitting Case]{Attention Updates for Not Overfitting Case ($\snr^2 = \omega(n^{-1})$)}
\label{sec:attention_updates_not_overfitting_case}

\begin{lemma}[Signal updates in \cref{lem:induction_not_overfitting}]
\label{lem:signal_update_induction_not_overfitting}
Let $T_1$ be the time step defined in \cref{lem:induction_not_overfitting}, and let $\tau \in [0, T_1]$. 
Suppose that the conditions in \cref{thm:convergence} and $A(\tau)$, $B(\tau)$, $D(\tau)$, and $F(\tau)$ in \cref{lem:induction_not_overfitting} are satisfied.
Then, on a good run, there exists some constant $c > 0$ such that
\begin{align*}
\lambda_{+1}(\tau+1) - \lambda_{+1}(\tau)
&\geq s_1^{(i)}(\tau) ( 1 - s_1^{(i)}(\tau) ) \cdot c \alpha \|\vnu\|_2 \|\vmu\|_2^3 d \max\{ \sigma_w^2, \sigma_p^2 \}, \\
\lambda_{-1}(\tau+1) - \lambda_{-1}(\tau)
&\geq s_1^{(i)}(\tau) ( 1 - s_1^{(i)}(\tau) ) \cdot c \alpha \|\vnu\|_2 \|\vmu\|_2^3 d \max\{ \sigma_w^2, \sigma_p^2 \},
\end{align*}
for any $i \in [n]$.
Additionally, for some constant $c^\prime > c$, we have
\begin{align*}
\lambda_{+1}(\tau+1) - \lambda_{+1}(\tau)
&\leq s_1^{(i)}(\tau) ( 1 - s_1^{(i)}(\tau) ) \cdot c^\prime \alpha \|\vnu\|_2 \|\vmu\|_2^3 d \max\{ \sigma_w^2, \sigma_p^2 \}, \\
\lambda_{-1}(\tau+1) - \lambda_{-1}(\tau)
&\leq s_1^{(i)}(\tau) ( 1 - s_1^{(i)}(\tau) ) \cdot c^\prime \alpha \|\vnu\|_2 \|\vmu\|_2^3 d \max\{ \sigma_w^2, \sigma_p^2 \},
\end{align*}
for any $i \in [n]$.
\end{lemma}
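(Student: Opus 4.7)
The plan is to start from Lemma \ref{lem:update_signal_and_noise_attention}, which writes
\[
\lambda_{+1}(\tau+1) - \lambda_{+1}(\tau) = \frac{\alpha}{n}\sum_{i=1}^n (-\ell_i'(\tau)) Y^{(i)}\bigl(I_{i,+}^W(\tau) + \|\vp(\tau)\|_2^2 I_{i,+}(\tau)\bigr) + \alpha^2 \vmu_{+1}^\top \nabla_{\mW^\top}\widehat{\Ls}(\tau)\nabla_\vp \widehat{\Ls}(\tau),
\]
and expand each $I_{i,+}$ and $I_{i,+}^W$ using Definition \ref{def:data}. In $I_{i,+}(\tau)$, the inner product $\langle \vx_t^{(i)}, \vmu_{+1}\rangle$ equals $\|\vmu\|_2^2$ (up to sign according to $Y^{*(i)}$) for the relevant/weakly relevant tokens along $\vmu_{+1}$ and is of order $\sigma_\epsilon \|\vmu\|_2 \sqrt{\log(Tn/\delta)}$ on irrelevant tokens by Lemma \ref{lem:good_run}. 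The analogous expansion applied to $I_{i,+}^W$, using $B(\tau)$, yields inner products at scale $\sigma_w^2 \|\vmu\|_2^2 d$ on relevant/weakly relevant tokens and the off-diagonal cross terms of order $\sigma_w^2 \sigma_\epsilon \|\vmu\|_2 d \log(Tn/\delta)$, which are subdominant under Assumptions \ref{assump_d}--\ref{assump_signal}.

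Next, I will factor $s_1^{(i)}(\tau)(1-s_1^{(i)}(\tau))$ out of the interaction sums. For each $t \neq 1$, the coefficient $s_t^{(i)}(\tau)(\gamma_t^{(i)} - \sum_u s_u^{(i)}\gamma_u^{(i)})$ can be bounded in absolute value by $s_t^{(i)}(\tau)(1-s_t^{(i)}(\tau)) \cdot \max_{u}|\gamma_t^{(i)} - \gamma_u^{(i)}|$, which proposition $F(\tau)$ controls by a constant multiple of $s_1^{(i)}(\tau)(1-s_1^{(i)}(\tau)) \cdot \max_u|\gamma_u^{(i)}|$; the parameter assumption $T=\Theta(1)$ absorbs token counts. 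Lemma \ref{lem:token_score} then gives $|Y^{(i)} \gamma_1^{(i)}| = \Theta(\|\vnu\|_2\|\vmu\|_2)$ with sign $\sign(Y^{(i)} Y^{*(i)})$, while $|\gamma_t^{(i)}| = O(\rho \|\vnu\|_2 \|\vmu\|_2)$ on $\gW^{(i)}$ and $o(\|\vnu\|_2\|\vmu\|_2)$ on $\gI^{(i)}$. Combining $\|\vp(\tau)\|_2^2 = \Theta(\sigma_p^2 d)$ from $A(\tau)$ with $B(\tau)$, each sample's contribution to $Y^{(i)}(I_{i,+}^W + \|\vp\|_2^2 I_{i,+})$ takes the form $Y^{*(i)} \cdot s_1^{(i)}(\tau)(1-s_1^{(i)}(\tau)) \cdot \Theta(\|\vnu\|_2\|\vmu\|_2^3 d \max\{\sigma_w^2,\sigma_p^2\})$, up to subdominant perturbations (the $\rho$-scaled weakly relevant contributions and the noise-signal cross terms are smaller by a factor of at least $\rho$ or $\sigma_\epsilon\sqrt{\log}/\|\vmu\|_2$).

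Summing over $i$ and invoking Lemma \ref{lem:good_run}\ref{enum:good_run_label_noise} together with the noise rate bound $\eta \leq 1/C$ (Assumption \ref{assump_noise_rate}), the clean contribution from $\gC_{Y^{*(i)}=+1}$ dominates, since $F(\tau)$ ensures that all $s_1^{(i)}(\tau)(1-s_1^{(i)}(\tau))$ lie within a constant factor of each other for every $i \in [n]$, and $D(\tau)$ keeps the reference quantity bounded below. Lemma \ref{lem:ratio_loss_derivative} then pins $-\ell_i'(\tau)$ between positive constants. This produces the two-sided bound with some constants $c<c'$ of the form $s_1^{(i)}(\tau)(1-s_1^{(i)}(\tau))\cdot \alpha \|\vnu\|_2\|\vmu\|_2^3 d \max\{\sigma_w^2,\sigma_p^2\}$. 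Finally, the quadratic term is disposed of exactly as in Lemma \ref{lem:one_step_update_attention}: Assumption \ref{assump_step_size} on $\alpha$ makes it of strictly smaller order than the linear update, so it can be absorbed by slightly widening the constants. The bound for $\lambda_{-1}$ is completely analogous with $\vmu_{+1}$ replaced by $\vmu_{-1}$.

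The main obstacle will be the bookkeeping in the first step, namely cleanly separating the dominant signal-carrying contribution from the three kinds of subdominant noise it is tangled with in $I_{i,+}^W$: (a) the opposing weakly-relevant token $\vx_2^{(i)}$ whose score has the wrong sign but enters with a factor $\rho$; (b) the irrelevant-token cross inner products $\langle \mW \vepsilon_t^{(i)}, \mW \vmu_{+1}\rangle$ controlled only by $B(\tau)$; and (c) the difference between $s_t^{(i)}(1-s_t^{(i)})$ and $s_1^{(i)}(1-s_1^{(i)})$ for $t\neq 1$. The uniform constant-factor control provided by $F(\tau)$, together with $T=\Theta(1)$ and the parameter assumptions that make the noise-signal cross terms smaller than $\|\vmu\|_2^2$, is exactly what is needed to collapse all of these into one clean factor and preserve a positive sign when summing over $i\in \gC$ against the $\eta$-fraction of opposing noisy samples.
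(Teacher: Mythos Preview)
Your overall strategy matches the paper's proof: start from Lemma~\ref{lem:update_signal_and_noise_attention}, isolate the dominant $t=1$ contribution in $I_{i,+}$ and $I_{i,+}^W$ via the token-score hierarchy and $B(\tau)$, pull out the common factor $s_1^{(i)}(\tau)(1-s_1^{(i)}(\tau))$ using $F(\tau)$, sum over $i$ with the class-balance estimates and Lemma~\ref{lem:ratio_loss_derivative}, and kill the $\alpha^2$ term by Assumption~\ref{assump_step_size}. The final paragraph, where you list the three subdominant noise sources to be absorbed, is accurate.

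There is, however, a genuine error in your per-sample accounting. You write that $\langle \vx_1^{(i)},\vmu_{+1}\rangle$ equals $\|\vmu\|_2^2$ ``up to sign according to $Y^{*(i)}$'', and then that each sample's contribution to $Y^{(i)}(I_{i,+}^W+\|\vp\|_2^2 I_{i,+})$ takes the form $Y^{*(i)}\cdot s_1^{(i)}(1-s_1^{(i)})\cdot\Theta(\cdot)$. But in the data model (Definition~\ref{def:data}) the signals are \emph{orthogonal}, $\langle\vmu_{+1},\vmu_{-1}\rangle=0$, so for $Y^{*(i)}=-1$ one has $\langle \vx_1^{(i)},\vmu_{+1}\rangle=\langle\vepsilon_1^{(i)},\vmu_{+1}\rangle=O(\sigma_\epsilon\|\vmu\|_2\sqrt{\log(Tn/\delta)})$, not $-\|\vmu\|_2^2$. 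If your claim were true the sum over $i$ would collapse to $|\{Y^*=+1\}|-|\{Y^*=-1\}|\approx 0$, and the small-$\eta$ assumption would be irrelevant.

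The paper's split is different: samples with $Y^{*(i)}=+1$ contribute at full scale with sign $Y^{(i)}$ (so $\gC_+$ positive, $\gN_-$ negative), while samples with $Y^{*(i)}=-1$ contribute only $O(\max\{\rho,\log(Tn/\delta)/\sqrt{d}\})$ of the dominant order, because every inner product $\langle \mW\vx_t^{(j)},\mW\vmu_{+1}\rangle$ and $\langle \vx_t^{(j)},\vmu_{+1}\rangle$ is off-diagonal for such $j$ (cf.\ \eqref{eq:w_signal_concentration_1} and \eqref{eq:upper_w_j_plus_not_overfitting}, \eqref{eq:upper_i_j_plus_not_overfitting} in the paper). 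The small-$\eta$ assumption then enters to make the $\gC_+$ contribution dominate the $\gN_-$ contribution, and the $\gC_-\cup\gN_+$ contribution is subdominant independently of $\eta$. Once you correct the sign/magnitude bookkeeping to reflect orthogonality of the class signals, the rest of your argument goes through exactly as you outlined.
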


\begin{proof}[Proof of \cref{lem:signal_update_induction_not_overfitting}]
From \cref{lem:update_signal_and_noise_attention}, the update of $\lambda_{+1}$ is given by
\begin{align}
\label{eq:lambda_update_not_overfitting}
\lambda_{+1}(\tau+1) - \lambda_{+1}(\tau)
&= \frac{\alpha}{n} \sum_{i=1}^n (-\ell_i^\prime(\tau)) \cdot Y^{(i)} \cdot \left( I_{i,+}^W(\tau) + \|\vp(\tau)\|_2^2 I_{i,+}(\tau) \right) + \alpha^2 \vmu_{+1}^\top \nabla_{\mW^\top}\widehat{\Ls}(\tau) \nabla_\vp\widehat{\Ls}(\tau).
\end{align}

Using $B(\tau)$, together with the parameter assumption $\|\vmu\|_2 = \omega(\sigma_\epsilon \log(Tn/\delta))$, we have that for $i \in \gC_+ \cup \gN_- = \{ i \in [n] \mid Y^{*(i)} = 1 \}$,
\begin{gather}
\label{eq:w_i_plus_not_overfitting_inner_product_order}
(1 - o(1)) \cdot \sigma_w^2 \|\vmu\|_2^2 d
\leq \langle \mW(\tau) \vx_1^{(i)}, \mW(\tau) \vmu_{+1} \rangle 
\leq (1 + o(1)) \cdot \sigma_w^2 \|\vmu\|_2^2 d,
\end{gather}
and other terms are bounded as follows:
\begin{align}
| \langle \mW(\tau) \vx_t^{(i)}, \mW(\tau) \vmu_{+1} \rangle |
&= O\left(\rho \sigma_w^2 \|\vmu\|_2^2 d \right), \\
| \langle \mW(\tau) \vx_u^{(i)}, \mW(\tau) \vmu_{+1} \rangle |
&= O\left( \max \left\{\rho \|\vmu\|_2^2 \sqrt{d} , \sigma_\epsilon \|\vmu\|_2 d \right\} \cdot \sigma_w^2 \log(Tn/\delta) \right) = O\left( \rho\sigma_w^2 \|\vmu\|_2^2 d \right), \\
| \langle \mW(\tau) \vx_v^{(i)}, \mW(\tau) \vmu_{+1} \rangle |
&= O\left( \sigma_w^2 \sigma_\epsilon \|\vmu\|_2 d \log(Tn/\delta) \right) = O\left( \rho \sigma_w^2 \|\vmu\|_2^2 d \right),
\end{align}
for $t \in \gW^{(i)}_{+1}$, $u \in \gW^{(i)}_{-1}$, and $v \in \gI^{(i)}$, which follows from the lower bound of the weak signal strength $\rho$.
Additionally, for any $i \in \gC_+ \cup \gN_-$, we see that $s_1^{(i)}(\tau) \geq \Theta(1)$ holds from $D(\tau)$.
Then, we have
\begin{align}
I_{i, +}^W(\tau)
&= 
\sum_{t = 1}^T s_t^{(i)}(\tau) \left( \gamma_t^{(i)} - \sum_{u = 1}^T s_u^{(i)}(\tau) \gamma_u^{(i)} \right) \langle \mW(\tau) \vx_t^{(i)}, \mW(\tau) \vmu_{+1} \rangle \\
&= 
\sum_{t = 1}^T s_t^{(i)}(\tau) \sum_{u \in [T] \setminus \{t\}} s_u^{(i)}(\tau) \left( \gamma_t^{(i)} -  \gamma_u^{(i)} \right) \langle \mW(\tau) \vx_t^{(i)}, \mW(\tau) \vmu_{+1} \rangle \\
&= 
s_1^{(i)}(\tau) (1 - s_1^{(i)}(\tau)) \cdot \gamma_1^{(i)} \cdot \langle \mW(\tau) \vx_1^{(i)}, \mW(\tau) \vmu_{+1} \rangle \nonumber \\
&\quad - 
\sum_{t = 2}^T s_1^{(i)}(\tau) s_t^{(i)}(\tau) \cdot \gamma_t^{(i)} \cdot \langle \mW(\tau) \vx_1^{(i)}, \mW(\tau) \vmu_{+1} \rangle \nonumber \\
&\quad +
\sum_{t = 2}^T s_t^{(i)}(\tau) s_1^{(i)}(\tau) \cdot (\gamma_t^{(i)} - \gamma_1^{(i)}) \cdot \langle \mW(\tau) \vx_t^{(i)}, \mW(\tau) \vmu_{+1} \rangle \nonumber \\
&\quad + 
\sum_{t = 2}^T s_t^{(i)}(\tau) \sum_{u\in[T]\setminus\{1,t\}} s_u^{(i)}(\tau) \cdot (\gamma_t^{(i)} - \gamma_u^{(i)}) \cdot \langle \mW(\tau) \vx_t^{(i)}, \mW(\tau) \vmu_{+1} \rangle \\
&\geq 
s_1^{(i)}(\tau) (1 - s_1^{(i)}(\tau)) \cdot \gamma_1^{(i)} \cdot \Theta\left( \sigma_w^2 \|\vmu\|_2^2 d \right) \nonumber \\
&\quad -
s_1^{(i)}(\tau) (1 - s_1^{(i)}(\tau)) \cdot O( \rho \gamma_1^{(i)} ) \cdot \Theta\left( \sigma_w^2 \|\vmu\|_2^2 d \right) \nonumber \\
&\quad -
s_1^{(i)}(\tau) (1 - s_1^{(i)}(\tau)) \cdot \left( 1 + O(\rho) \right) \gamma_1^{(i)} \cdot O\left( \rho \sigma_w^2 \|\vmu\|_2^2 d \right) \nonumber \\
&\quad - 
\Theta(1) \cdot s_1^{(i)}(\tau) (1 - s_1^{(i)}(\tau)) \cdot O( \rho \gamma_1^{(i)} ) \cdot O\left( \rho \sigma_w^2 \|\vmu\|_2^2 d \right) \\
\label{eq:lower_w_i_plus_not_overfitting_second_last}
&\geq
\frac{1}{2} \cdot s_1^{(i)}(\tau) (1 - s_1^{(i)}(\tau)) \cdot \gamma_1^{(i)} \cdot \Theta\left( \sigma_w^2 \|\vmu\|_2^2 d \right)  \\
\label{eq:lower_w_i_plus_not_overfitting}
&=
s_1^{(i)}(\tau) (1 - s_1^{(i)}(\tau)) \cdot c_1^\prime \sigma_w^2 \|\vnu\|_2\|\vmu\|_2^3 d,
\end{align}
for a constant $c_1^\prime > 0$.
At the last term in the first inequality, we used $\Theta(1) \cdot s_1^{(i)}(\tau) \geq 1 \geq 1 - s_1^{(i)}(\tau) - s_t^{(i)}(\tau)$ from the condition $D(\tau)$.
The second inequality follows from \cref{lem:token_score} and $\rho < 1/C$ for sufficiently large $C$.
Applying \cref{lem:token_score} provides the last equation.
Using the same reasoning, we also have the following upper bound for $i \in \gC_+ \cup \gN_-$:
\begin{align}
\label{eq:upper_w_i_plus_not_overfitting}
I_{i,t}^W(\tau)
&\leq
s_1^{(i)}(\tau) (1 - s_1^{(i)}(\tau)) \cdot c_2^\prime \sigma_w^2 \|\vnu\|_2 \|\vmu\|_2^3 d,
\end{align}
for a constant $c_2^\prime > 0$ such that $c_2^\prime > c_1^\prime$.
In a similar way, we will provide the evaluation for $j \in \gC_- \cup \gN_+ = \{i \in [n] \mid Y^{*(i)} = -1 \}$.
The condition $B(\tau)$ gives us that
\begin{align}
\label{eq:w_signal_concentration_1}
| \langle \mW(\tau) \vx_1^{(j)}, \mW(\tau) \vmu_{+1} \rangle |
&= O\left(\max \left\{ \|\vmu\|_2^2 \sqrt{d}, \sigma_\epsilon \|\vmu\|_2 d \right\} \cdot \sigma_w^2 \log(Tn/\delta) \right) = O\left( \max\left\{ \rho, \frac{\log(Tn/\delta)}{\sqrt{d}} \right\} \sigma_w^2 \|\vmu\|_2^2 d \right), \\
| \langle \mW(\tau) \vx_t^{(j)}, \mW(\tau) \vmu_{+1} \rangle  |
&= O\left( \rho \sigma_w^2 \|\vmu\|_2^2 d \right), \\
| \langle \mW(\tau) \vx_u^{(j)}, \mW(\tau) \vmu_{+1} \rangle  |
&= O\left( \max \left\{\rho \|\vmu\|_2^2 \sqrt{d}, \sigma_\epsilon \|\vmu\|_2 d \right\} \cdot \sigma_w^2 \log(Tn/\delta) \right) = O\left( \rho \sigma_w^2 \|\vmu\|_2^2 d \right), \\
| \langle \mW(\tau) \vx_v^{(j)}, \mW(\tau) \vmu_{+1} \rangle  |
&= O\left( \sigma_w^2 \sigma_\epsilon \|\vmu\|_2 d \log(Tn/\delta) \right) = O\left( \rho \sigma_w^2 \|\vmu\|_2^2 d \right),
\end{align}
for $t \in \gW^{(j)}_{+1}$, $u \in \gW^{(j)}_{-1}$, and $v \in \gI^{(j)}$.
Then, we have
\begin{align}
| I_{j, +}^W(\tau) |
&= 
\Bigg| \sum_{t = 1}^T s_t^{(j)}(\tau) \left( \gamma_t^{(j)} - \sum_{u = 1}^T s_u^{(j)}(\tau) \gamma_u^{(j)} \right) \langle \mW(\tau) \vx_t^{(j)}, \mW(\tau) \vmu_{+1} \rangle \Bigg| \\
&\leq 
\sum_{t = 1}^T \Bigg| s_t^{(j)}(\tau) ( 1 - s_t^{(j)}(\tau) ) \cdot \max_{u \in [T]} \{ \gamma_t^{(j)} - \gamma_u^{(j)} \} \cdot \langle \mW(\tau) \vx_t^{(j)}, \mW(\tau) \vmu_{+1} \rangle \Bigg| \\
&\leq
T \max_{t \in [T]} \{ s_t^{(j)}(\tau) ( 1 - s_t^{(j)}(\tau) ) \} \cdot O\left( \max\left\{ \rho, \frac{\log(Tn/\delta)}{\sqrt{d}} \right\} \sigma_w^2 \|\vnu\|_2\|\vmu\|_2^3 d \right) \\
&\leq
\label{eq:upper_w_j_plus_not_overfitting}
s_1^{(j)}(\tau) ( 1 - s_1^{(j)}(\tau) ) \cdot O\left( \max\left\{ \rho, o(1) \right\} \sigma_w^2 \|\vnu\|_2\|\vmu\|_2^3 d \right),
\end{align}
where the first inequality is the result of the triangle inequality, and the second line follows from \cref{lem:token_score} and the evaluations just before.
In the last line, we used the dominance of $s_1^{(j)}(\tau) (1 - s_1^{(j)}(\tau))$, which follows from $F(\tau)$, and the parameter assumptions $T = \Theta(1)$ and $\sqrt{d} = \omega(\log(Tn/\delta))$.

Using these evaluations, \cref{lem:ratio_loss_derivative}, \cref{eq:good_run_clean,eq:good_run_noisy}, we have
\begin{align}
&\frac{\alpha}{n} \sum_{i=1}^n (-\ell_i^\prime(\tau)) \cdot Y^{(i)} \cdot I_{i,+}^W(\tau)  \nonumber \\
&=
\frac{\alpha}{n} \sum_{i \in \gC_+ \cup \gN_- \cup (\gC_- \cup \gN_+)} (-\ell_i^\prime(\tau)) \cdot Y^{(i)} \cdot I_{i,+}^W(\tau) \\
&\geq \alpha \cdot (2-3\eta)/4 \cdot \min_{i\in\gC_+}\left\{ (-\ell_i^\prime(\tau)) \right\} \cdot \min_{i\in\gC_+}\left\{ s_1^{(i)}(\tau) (1 - s_1^{(i)}(\tau)) \right\} \cdot c_1^\prime \sigma_w^2 \|\vnu\|_2 \|\vmu\|_2^3 d  \nonumber \\
&\quad- 
\alpha \cdot (3\eta)/4 \cdot c_\ell \min_{i\in\gC_+}\left\{ (-\ell_i^\prime(\tau)) \right\} \cdot \max_{j\in\gN_-}\left\{ s_1^{(j)}(\tau) (1 - s_1^{(j)}(\tau)) \right\} \cdot c_2^\prime \sigma_w^2 \|\vnu\|_2 \|\vmu\|_2^3 d \nonumber \\
&\quad - \alpha \cdot (1 + \eta)/2 \cdot c_\ell \min_{i\in\gC_+}\left\{ (-\ell_i^\prime(\tau)) \right\} \cdot \max_{j\in\gC_- \cup \gN_+}\left\{ s_1^{(j)}(\tau) (1 - s_1^{(j)}(\tau)) \right\} \cdot O\left( \max\left\{ \rho, o(1) \right\} \sigma_w^2 \|\vnu\|_2 \|\vmu\|_2^3 d \right) \\
\label{eq:lower_sum_w_i_plus_not_overfitting}
&\geq s_1^{(i)}(\tau) (1 - s_1^{(i)}(\tau)) \cdot c_3^\prime \alpha \sigma_w^2 \|\vnu\|_2 \|\vmu\|_2^3 d,
\end{align}
for any $i \in [n]$ and some constant $c_3^\prime > 0$.
The last inequality follows from the balance of the softmax probabilities over $i \in [n]$, as stated in $F(\tau)$.
We also used $\eta < 1/C$ and $\rho < 1/C$ in the parameter assumptions.
Similarly, we have
\begin{align}
\frac{\alpha}{n} \sum_{i=1}^n (-\ell_i^\prime(\tau)) \cdot Y^{(i)} \cdot I_{i,+}^W(\tau) 
\label{eq:upper_sum_w_i_plus_not_overfitting}
&\leq s_1^{(i)}(\tau)(1 - s_1^{(i)}(\tau)) \cdot c_4^\prime \alpha \sigma_w^2 \|\vnu\|_2 \|\vmu\|_2^3 d,
\end{align}
for any $i \in [n]$ and some constant $c_4^\prime > 0$ such that $c_4^\prime > c_3^\prime$.

We now turn to the analysis of $I_{i,+}(\tau)$.
\cref{lem:good_run} states that, for $i \in \gC_+ \cup \gN_- = \{ i \in [n] \mid Y^{*(i)} = 1 \}$, we have
\begin{align}
\label{eq:signal_concentration_1}
\left( 1 - o(1) \right) \|\vmu\|_2^2
\leq \langle \vx_1^{(i)}, \vmu_{+1} \rangle 
\leq \left( 1 + o(1) \right) \|\vmu\|_2^2,
\end{align}
and
\begin{align}
| \langle \vx_t^{(i)}, \vmu_{+1} \rangle | &< \rho \|\vmu\|_2^2 + c_2 \sigma_\epsilon \|\vmu\|_2 \sqrt{\log(Tn/\delta)} = O\left( \rho\|\vmu\|_2^2 \right), \\
| \langle \vx_u^{(i)}, \vmu_{+1} \rangle | &< c_2 \sigma_\epsilon \|\vmu\|_2 \sqrt{\log(Tn/\delta) } = O\left( \rho\|\vmu\|_2^2 \right), \\
| \langle \vx_v^{(i)}, \vmu_{+1} \rangle | &< c_2 \sigma_\epsilon \|\vmu\|_2 \sqrt{\log(Tn/\delta) } = O\left( \rho\|\vmu\|_2^2 \right),
\end{align}
for $t \in \gW^{(i)}_{+1}$, $u \in \gW^{(i)}_{-1}$, and $v \in \gI^{(i)}$.
Here, we used $\langle \vmu_{+1}, \vmu_{-1} \rangle = 0$ in our data setup.

Then, for any $i \in \gC_+ \cup \gN_-$, using a calculation similar to that for $I_{i,+}^W$, there exists a constant $c_5^\prime > 0$ such that
\begin{align}
I_{i, +}(\tau)
&= \sum_{t = 1}^T s_t^{(i)}(\tau) \left( \gamma_t^{(i)} - \sum_{u = 1}^T s_u^{(i)}(\tau) \gamma_u^{(i)} \right) \langle \vx_t^{(i)}, \vmu_{+1} \rangle
\label{eq:lower_i_plus_not_overfitting}
\geq
s_1^{(i)}(\tau) (1 - s_1^{(i)}(\tau)) \cdot c_5^\prime \|\vnu\|_2 \|\vmu\|_2^3.
\end{align}
Similarly, we have the following upper bound:
\begin{align}
\label{eq:upper_i_plus_not_overfitting}
I_{i, +}(\tau)
&\leq
s_1^{(i)}(\tau) (1 - s_1^{(i)}(\tau)) \cdot c_6^\prime \|\vnu\|_2 \|\vmu\|_2^3,
\end{align}
for some constant $c_6^\prime > 0$ such that $c_6^\prime > c_5^\prime$.

Additionally, using a similar argument as in \cref{eq:upper_w_j_plus_not_overfitting}, we give an upper bound for data with different label, i.e., for $j \in \gC_- \cup \gN_+ = \{ i \in [n] \mid Y^{*(i)} = -1 \}$, as follows:
\begin{align}
\label{eq:upper_i_j_plus_not_overfitting}
| I_{j,+}(\tau) | &\leq s_1^{(j)}(\tau) (1 - s_1^{(j)}(\tau)) \cdot O\left( \rho \|\vnu\|_2 \|\vmu\|_2^3 \right).
\end{align}
Note that the term $\log(Tn/\delta) / \sqrt{d}$ does not appear in this case, due to the orthogonality of $\vmu_{+1}$ and $\vmu_{-1}$.

Thus, combining these bounds and $\left( 1 - 1/C_1 \right) \sigma_p \sqrt{d} \leq \|\vp(\tau)\|_2 \leq \left( 1 + 1/C_1 \right) \sigma_p \sqrt{d}$ in $A(\tau)$, we have
\begin{align}
&\frac{\alpha}{n} \sum_{i=1}^n (-\ell_i^\prime(\tau)) \cdot Y^{(i)} \cdot \|\vp(\tau)\|_2^2 I_{i,+}(\tau)  \nonumber \\
&=
\frac{\alpha}{n} \|\vp(\tau)\|_2^2 \sum_{i \in \gC_+ \cup \gN_- \cup (\gC_- \cup \gN_+)} (-\ell_i^\prime(\tau)) \cdot Y^{(i)} \cdot I_{i,+}(\tau) \\
&\geq \alpha \|\vp(\tau)\|_2^2 \cdot (2 - 3\eta)/4 \cdot \min_{i\in\gC_+}\left\{ (-\ell_i^\prime(\tau)) \right\} \cdot \min_{i\in\gC_+}\left\{ s_1^{(i)}(\tau) (1 - s_1^{(i)}(\tau)) \right\}
 \cdot c_5^\prime \|\vnu\|_2 \|\vmu\|_2^3 \nonumber \\
&\quad- 
\alpha \|\vp(\tau)\|_2^2 \cdot (3\eta)/4 \cdot c_\ell \min_{i\in\gC_+}\left\{ (-\ell_i^\prime(\tau)) \right\} \cdot \max_{j\in\gN_-}\left\{ s_1^{(j)}(\tau) (1 - s_1^{(j)}(\tau)) \right\} \cdot c_6^\prime \|\vnu\|_2 \|\vmu\|_2^3 \nonumber \\
&\quad - \alpha \|\vp(\tau)\|_2^2 \cdot (1+\eta)/2 \cdot c_\ell \min_{i\in\gC_+}\left\{ (-\ell_i^\prime(\tau) \right\} \cdot \max_{j\in\gC_- \cup \gN_+}\left\{ s_1^{(j)}(\tau) (1 - s_1^{(j)}(\tau)) \right\} \cdot O\left(\rho \|\vnu\|_2 \|\vmu\|_2^3 \right) \\
\label{eq:lower_sum_i_plus_not_overfitting}
&\geq s_1^{(i)}(\tau) ( 1 - s_1^{(i)}(\tau) ) \cdot c_7^\prime \alpha \sigma_p^2 \|\vnu\|_2 \|\vmu\|_2^3 d,
\end{align}
for any $i \in [n]$ and some constant $c_7^\prime > 0$.
We again used the results of $F(\tau)$ and $\eta, \rho < 1/C$ in the parameter assumptions.
Similarly, we have the following upper bound:
\begin{align}
\frac{\alpha}{n} \sum_{i=1}^n (-\ell_i^\prime(\tau)) \cdot Y_i \cdot \|\vp(\tau)\|_2^2 I_{i,+}(\tau)
\label{eq:upper_sum_i_plus_not_overfitting}
&\leq s_1^{(i)}(\tau) ( 1 - s_1^{(i)}(\tau) ) \cdot c_8^\prime \alpha \sigma_p^2 \|\vnu\|_2 \|\vmu\|_2^3 d,
\end{align}
for any $i \in [n]$ and some constant $c_8^\prime > 0$ such that $c_8^\prime > c_7^\prime$.
By substituting \cref{eq:lower_sum_w_i_plus_not_overfitting,eq:lower_sum_i_plus_not_overfitting} to \cref{eq:lambda_update_not_overfitting}, we have that for any $i \in [n]$,
\begin{align}
\label{eq:lower_lambda_update_not_overfitting_with_quad}
&\lambda_{+1}(\tau+1) - \lambda_{+1}(\tau) \nonumber \\
&\geq s_1^{(i)}(\tau) ( 1 - s_1^{(i)}(\tau) ) \cdot \min\{c_3^\prime, c_7^\prime\} \alpha \|\vnu\|_2 \|\vmu\|_2^3 d \max\{ \sigma_w^2, \sigma_p^2 \} + \alpha^2 \vmu_{+1}^\top \nabla_{\mW^\top}\widehat{\Ls}(\tau) \nabla_\vp\widehat{\Ls}(\tau).
\end{align}

Finally, we show that the second term can be ignored.
We have
\begin{align}
& \alpha^2 \vmu_{+1}^\top \nabla_{\mW^\top}\widehat{\Ls}(\tau) \nabla_\vp\widehat{\Ls}(\tau) \nonumber \\
&\leq \alpha^2 \| \nabla_{\mW}\widehat{\Ls}(\tau) \vmu_{+1} \|_2 \| \nabla_\vp\widehat{\Ls}(\tau) \|_2 \\
&\lesssim \alpha^2 \left( \max_{i\in[n],t\in[T]} \{ s_t^{(i)}(\tau) ( 1 - s_t^{(i)}(\tau) ) \} \max_{i\in[n],t\in[T]} \{ | \gamma_t^{(i)} | \} \right)^2 \cdot \max_{i\in[n],t\in[n]} \{ \vmu_{+1}^\top \vx_t^{(i)} \} \|\vp(\tau)\|_2 \cdot \max_{i \in [n], t\in[T]} \{ \| \mW(\tau) \vx_t^{(i)} \|_2 \} \\
&\lesssim s_1^{(i)}(\tau) ( 1 - s_1^{(i)}(\tau) ) \cdot \alpha^2 \|\vnu\|_2^2 \|\vmu\|_2^2 \cdot \|\vmu\|_2^2 \cdot \sigma_p \sqrt{d} \cdot \sigma_w \max\{ \|\vmu\|_2 \sqrt{d}, \sigma_\epsilon d \} \\
\label{eq:quad_lambda_update_not_overfitting_second_last}
&= s_1^{(i)}(\tau) ( 1 - s_1^{(i)}(\tau) ) \cdot \alpha \|\vnu\|_2 \|\vmu\|_2^3 d \max\{ \sigma_w^2, \sigma_p^2 \} \cdot \left( \alpha \|\vnu\|_2 \|\vmu\|_2  \max\{ \|\vmu\|_2 , \sigma_\epsilon \sqrt{d} \} \right) \\
\label{eq:quad_lambda_update_not_overfitting}
&= s_1^{(i)}(\tau) ( 1 - s_1^{(i)}(\tau) ) \cdot o\left( \alpha \|\vnu\|_2 \|\vmu\|_2^3 d \max\{ \sigma_w^2, \sigma_p^2 \} \right),
\end{align}
where the first inequality is the result of the Cauchy-Schwarz inequality, and the second one follows from \cref{eq:gradient_w,eq:gradient_p}.
Here, the probability term appears in the bound from a similar argument to \cref{eq:upper_w_j_plus_not_overfitting}.
In the third line, we used \cref{lem:token_score}, the concentration inequalities in $A(\tau), B(\tau)$, and the balance of softmax probabilities in $F(\tau)$.
Using $\|\vnu\|_2 = O(1 / \|\vmu\|_2)$ and the learning rate assumption $\alpha = O(\max\{ \|\vmu\|_2 \sqrt{d}, \sigma_\epsilon d \}^{-1})$, the latter part of \cref{eq:quad_lambda_update_not_overfitting_second_last} becomes $o(1)$.
Therefore, the quadratic term in \cref{eq:lower_lambda_update_not_overfitting_with_quad} can be absorbed in the first term.

Thus, there exists a constant $c > 0$ such that
\begin{align}
\lambda_{+1}(\tau+1) - \lambda_{+1}(\tau)
&\geq s_1^{(i)}(\tau) ( 1 - s_1^{(i)}(\tau) ) \cdot c \alpha \|\vnu\|_2 \|\vmu\|_2^3 d \max\{ \sigma_w^2, \sigma_p^2 \},
\end{align}
for any $i \in [n]$.
In the same way, from \cref{eq:upper_sum_w_i_plus_not_overfitting,eq:upper_sum_i_plus_not_overfitting}, there exists a constant $c^\prime > c$ such that
\begin{align}
\lambda_{+1}(\tau+1) - \lambda_{+1}(\tau)
&\leq s_1^{(i)}(\tau) ( 1 - s_1^{(i)}(\tau) ) \cdot c^\prime \alpha \|\vnu\|_2 \|\vmu\|_2^3 d \max\{ \sigma_w^2, \sigma_p^2 \},
\end{align}
for any $i \in [n]$.
By applying the same argument to $\lambda_{-1}$, we obtain the desired result.
\end{proof}

Next, we analyze noise memorization in the attention update.
Many equations in the proof are similar to those in \cref{lem:signal_update_induction_not_overfitting}, so we provide the proof avoiding the redundant repetition.
\begin{lemma}[Noise updates in \cref{lem:induction_not_overfitting}]
\label{lem:noise_update_induction_not_overfitting}
Let $T_1$ be the time step defined in \cref{lem:induction_not_overfitting}, and let $\tau \in [0, T_1]$. 
Suppose that the conditions in \cref{thm:convergence} and $A(\tau)$, $B(\tau)$, $D(\tau)$, and $F(\tau)$ in \cref{lem:induction_not_overfitting} are satisfied.
Then, on a good run, there exists some constant $c > 0$ such that
\begin{align*}
\rho_{i,1}(\tau+1) - \rho_{i,1}(\tau)
&\geq s_1^{(i)}(\tau) (1 - s_1^{(i)}(\tau)) \cdot c \alpha n^{-1} \sigma_\epsilon^2 \|\vnu\|_2 \|\vmu\|_2 d^2 \max\{ \sigma_w^2, \sigma_p^2 \}, \\
\rho_{i,t}(\tau+1) - \rho_{i,t}(\tau)
&\leq - s_1^{(i)}(\tau) s_t^{(i)}(\tau) \cdot c \alpha n^{-1} \sigma_\epsilon^2 \|\vnu\|_2 \|\vmu\|_2 d^2 \max\{ \sigma_w^2, \sigma_p^2 \},
\end{align*}
for any clean data $i \in \gC$ and $t \in [T] \setminus \{1\}$, and
\begin{align*}
\rho_{j,1}(\tau+1) - \rho_{j,1}(\tau)
&\leq -s_1^{(j)}(\tau) (1 - s_1^{(j)}(\tau)) \cdot c \alpha n^{-1} \sigma_\epsilon^2 \|\vnu\|_2 \|\vmu\|_2 d^2 \max\{ \sigma_w^2, \sigma_p^2 \}, \\
\rho_{j,t}(\tau+1) - \rho_{j,t}(\tau)
&\geq s_1^{(j)}(\tau) s_t^{(j)}(\tau) \cdot c \alpha n^{-1} \sigma_\epsilon^2 \|\vnu\|_2 \|\vmu\|_2 d^2 \max\{ \sigma_w^2, \sigma_p^2 \},
\end{align*}
for any noisy data $j \in \gN$, and $t \in [T] \setminus \{1\}$.
Additionally, for some constant $c^\prime > c$, we have
\begin{align*}
\rho_{i,1}(\tau+1) - \rho_{i,1}(\tau)
&\leq s_1^{(i)}(\tau) (1 - s_1^{(i)}(\tau)) \cdot c^\prime \alpha n^{-1} \sigma_\epsilon^2 \|\vnu\|_2 \|\vmu\|_2 d^2 \max\{ \sigma_w^2, \sigma_p^2 \}, \\
\rho_{i,t}(\tau+1) - \rho_{i,t}(\tau)
&\geq - s_1^{(i)}(\tau) s_t^{(i)}(\tau) \cdot c^\prime \alpha n^{-1} \sigma_\epsilon^2 \|\vnu\|_2 \|\vmu\|_2 d^2 \max\{ \sigma_w^2, \sigma_p^2 \},
\end{align*}
for any clean data $i \in \gC$ and $t \in [T] \setminus \{1\}$, and
\begin{align*}
\rho_{j,1}(\tau+1) - \rho_{j,1}(\tau)
&\geq -s_1^{(j)}(\tau) (1 - s_1^{(j)}(\tau)) \cdot c^\prime \alpha n^{-1} \sigma_\epsilon^2 \|\vnu\|_2 \|\vmu\|_2 d^2 \max\{ \sigma_w^2, \sigma_p^2 \}, \\
\rho_{j,t}(\tau+1) - \rho_{j,t}(\tau)
&\leq s_1^{(j)}(\tau) s_t^{(j)}(\tau) \cdot c^\prime \alpha n^{-1} \sigma_\epsilon^2 \|\vnu\|_2 \|\vmu\|_2 d^2 \max\{ \sigma_w^2, \sigma_p^2 \},
\end{align*}
for any noisy data $j \in \gN$, and $t \in [T] \setminus \{1\}$.
\end{lemma}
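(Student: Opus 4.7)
My plan is to mirror the structure of \cref{lem:signal_update_induction_not_overfitting}, but with $\vmu_{+1}$ replaced by the noise vector $\vepsilon_u^{(j)}$, and to carefully exploit that ``on-sample'' noise inner products (those involving $\vepsilon_u^{(j)}$ paired with $\vepsilon_u^{(j)}$) are larger than ``cross-sample'' ones by a factor of $\sqrt{d}$. Starting from \cref{lem:update_signal_and_noise_attention}, write
\begin{align*}
\rho_{j,u}(\tau+1)-\rho_{j,u}(\tau)
&= \frac{\alpha}{n}\sum_{i=1}^n(-\ell_i'(\tau))\,Y^{(i)}\bigl(I_{i,j,u}^W(\tau)+\|\vp(\tau)\|_2^2 I_{i,j,u}(\tau)\bigr) \\
&\qquad + \alpha^2\vepsilon_u^{(j)\top}\nabla_{\mW^\top}\widehat{\Ls}(\tau)\nabla_\vp\widehat{\Ls}(\tau),
\end{align*}
and split the $i$-sum into the \emph{diagonal} term $i=j$ and the \emph{off-diagonal} remainder $i\neq j$.

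For the diagonal term, I would use $B(\tau)$ to argue that among $\{\langle \mW(\tau)\vepsilon_u^{(j)},\mW(\tau)\vx_t^{(j)}\rangle\}_{t\in[T]}$ the single entry with $t=u$ has magnitude $\Theta(\sigma_w^2\sigma_\epsilon^2 d^2)$ coming from $\|\mW(\tau)\vepsilon_u^{(j)}\|_2^2$, while every other $t$ is at most $O(\sigma_w^2\sigma_\epsilon\|\vmu\|_2 d\log(Tn/\delta))$ or $O(\sigma_w^2\sigma_\epsilon^2 d^{3/2}\log(Tn/\delta))$. By Assumption~\ref{assump_d} and \ref{assump_signal} these are negligible relative to the on-sample norm, and an analogous dominance holds for the $\|\vp\|_2^2 I_{j,j,u}$ piece via $\|\vp(\tau)\|_2^2\|\vepsilon_u^{(j)}\|_2^2=\Theta(\sigma_p^2\sigma_\epsilon^2 d^2)$. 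Then for $u=1$ the double sum defining $I_{j,j,1}$ is dominated by the pair $(t,u)=(1,1)$ and produces the $s_1^{(j)}(1-s_1^{(j)})\gamma_1^{(j)}$ coefficient, whereas for $u\in[T]\setminus\{1\}$ the dominant pair is $(t,u)=(1,u)$ through the cross term $-s_1^{(j)}s_u^{(j)}\gamma_1^{(j)}\|\mW(\tau)\vepsilon_u^{(j)}\|_2^2$, yielding the claimed $s_1^{(j)}s_u^{(j)}$ factor with opposite sign to the $u=1$ case. Combining with \cref{lem:token_score}, the sign $Y^{(j)}\gamma_1^{(j)}$ distinguishes clean from noisy data and accounts for the orientation flip between the two blocks of inequalities in the statement.

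For the off-diagonal terms $i\neq j$, the concentration bounds in $B(\tau)$ and \cref{lem:good_run} ensure every inner product $\langle \mW(\tau)\vepsilon_u^{(j)},\mW(\tau)\vx_t^{(i)}\rangle$ and $\langle \vepsilon_u^{(j)},\vx_t^{(i)}\rangle$ is a factor $\sqrt{d}/\log(Tn/\delta)$ smaller than the on-sample counterpart. Using $F(\tau)$ to balance the softmax probabilities across samples and \cref{lem:ratio_loss_derivative} to balance the loss derivatives, the total off-diagonal contribution (summed over all $n-1$ such indices and divided by $n$) is shown to be absorbed into the diagonal scale $n^{-1}\sigma_\epsilon^2\|\vnu\|_2\|\vmu\|_2 d^2\max\{\sigma_w^2,\sigma_p^2\}$. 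Finally I dismiss the $\alpha^2$ term by the same Cauchy--Schwarz estimate as in \cref{eq:quad_lambda_update_not_overfitting}, using $\|\vnu\|_2=O(1/\|\vmu\|_2)$ and Assumption~\ref{assump_step_size}.

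The main obstacle I anticipate is establishing that the summed off-diagonal mass does not swallow the $n^{-1}$ factor carried by the diagonal: after averaging, the off-diagonal bound scales like $\sigma_w^2\sigma_\epsilon^2 d^{3/2}\log(Tn/\delta)$ without an $n^{-1}$ saving, and this must be shown small against $n^{-1}\sigma_\epsilon^2 d^2\max\{\sigma_w^2,\sigma_p^2\}$, i.e.\ $\sqrt{d}\gtrsim n\log(Tn/\delta)$, which is exactly where Assumption~\ref{assump_d} gets used. A secondary subtlety is identifying, inside the double sum $\sum_t s_t^{(j)}(\gamma_t^{(j)}-\sum_u s_u^{(j)}\gamma_u^{(j)})\langle \mW\vx_t^{(j)},\mW\vepsilon_u^{(j)}\rangle$, the correct dominant pair for each $u$ so that the resulting probability prefactor is $s_1^{(j)}(1-s_1^{(j)})$ when $u=1$ but $s_1^{(j)}s_u^{(j)}$ when $u\neq 1$; this asymmetry between the two blocks of inequalities is what drives $C(\tau)$ later.
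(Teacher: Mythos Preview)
Your proposal is correct and matches the paper's proof almost exactly: the same diagonal/off-diagonal split of the $i$-sum, the same identification of the dominant on-sample term $\|\mW(\tau)\vepsilon_u^{(j)}\|_2^2=\Theta(\sigma_w^2\sigma_\epsilon^2 d^2)$ versus cross terms of order $O(\sigma_w^2\sigma_\epsilon^2 d^{3/2}\log(Tn/\delta))$, the same asymmetry $s_1(1-s_1)$ for $u=1$ versus $s_1 s_u$ for $u\neq 1$, and the same dismissal of the $\alpha^2$ term. One small refinement: the inequality $\sqrt{d}\gtrsim n\log(Tn/\delta)$ that you flag as the key obstacle is obtained in the paper by combining Assumptions~\ref{assump_d} and~\ref{assump_signal} (substituting the lower bound on $\|\vmu\|_2$ into the lower bound on $d$ yields $d=\omega(n^2\log^2(Tn/\delta))$), not from~\ref{assump_d} alone.
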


\begin{proof}[Proof of \cref{lem:noise_update_induction_not_overfitting}]
We first analyze the noise learning in the relevant token of clean data $i \in \gC$.
From \cref{lem:update_signal_and_noise_attention},  we have
\begin{align}
\rho_{i,1}(\tau+1) - \rho_{i,1}(\tau)
&= \frac{\alpha}{n} \sum_{k=1}^n (-\ell_k^\prime(\tau)) \cdot Y^{(k)} \cdot \left( I_{k,i,1}^W(\tau) + \|\vp(\tau)\|_2^2 I_{k,i,1}(\tau) \right) + \alpha^2 \vepsilon_1^{(i)\top} \nabla_{\mW^\top} \widehat{\Ls}(\tau) \nabla_\vp \widehat{\Ls}(\tau).
\end{align}
Combining Assumptions~\ref{assump_d} and \ref{assump_signal}, we have $d \geq C \hat{\sigma}_\epsilon n \left( C \sigma_\epsilon d^{3/8} \log(Tn/\delta) \right)^{4/3} \log^3(Tn/\delta)$, from which it follows that
\begin{align}
\label{eq:noise_update_d_lower_bound_1}
d \geq (C \sigma_\epsilon)^{14/3} n^2 \log^{26/3}(Tn/\delta)
= \omega(n^2\log^2(Tn/\delta)).
\end{align}
Additionally, from Assumption~\ref{assump_d}, we have
\begin{align}
\label{eq:noise_update_d_lower_bound_2}
d
&= \omega\left( \sigma_\epsilon^{-1} n \|\vmu\|_2 \log(Tn/\delta) \right).
\end{align}
Therefore, applying \cref{eq:noise_update_d_lower_bound_1,eq:noise_update_d_lower_bound_2} to the results in $B(\tau)$, we have
\begin{align}
\label{eq:w_noise_concentration_1_not_overfitting}
\| \mW(\tau) \vepsilon_t^{(i)} \|_2^2 &= \Theta(\sigma_w^2 \sigma_\epsilon^2 d^2), \\
\label{eq:w_noise_concentration_2_not_overfitting}
| \langle \mW(\tau) \vmu_{+1}, \mW(\tau) \vepsilon_t^{(i)} \rangle | &= O(\sigma_w^2 \sigma_\epsilon \|\vmu\|_2 d \log(Tn/\delta) ) = o( n^{-1} \sigma_w^2 \sigma_\epsilon^2 d^2 ), \\
\label{eq:w_noise_concentration_3_not_overfitting}
| \langle \mW(\tau) \vmu_{-1}, \mW(\tau) \vepsilon_t^{(i)} \rangle | &= O(\sigma_w^2 \sigma_\epsilon \|\vmu\|_2 d \log(Tn/\delta) ) = o( n^{-1} \sigma_w^2 \sigma_\epsilon^2 d^2 ), \\
\label{eq:w_noise_concentration_4_not_overfitting}
| \langle \mW(\tau) \vepsilon_t^{(i)}, \mW(\tau) \vepsilon_u^{(j)} \rangle | &= O(\sigma_w^2 \sigma_\epsilon^2 d^{3/2} \log(Tn/\delta) ) = o( n^{-1} \sigma_w^2 \sigma_\epsilon^2 d^2 ),
\end{align}
for any $i, j \in [n]$ and $t, u \in [T]$ such that $(i,t) \neq (j,u)$.
In the case of $k = i$, using a similar argument to \cref{eq:lower_w_i_plus_not_overfitting}, we have
\begin{align}
I_{k,i,1}^W(\tau)
= I_{i,i,1}^W(\tau)
&= \sum_{t=1}^T s_t^{(i)}(\tau) \left( \gamma_t^{(i)} - \sum_{u=1}^T s_u^{(i)}(\tau) \gamma_u^{(i)} \right) \langle \mW(\tau) \vx_t^{(i)}, \mW(\tau) \vepsilon_1^{(i)} \rangle \\
&\geq \frac{1}{2} \cdot s_1^{(i)}(\tau) (1 - s_1^{(i)}(\tau)) \cdot \gamma_1^{(i)} \cdot \Theta\left( \sigma_w^2 \sigma_\epsilon^2 d^2 \right) \\
\label{eq:lower_i_w_k_i_1_not_overfitting}
&=
s_1^{(i)}(\tau) (1 - s_1^{(i)}(\tau)) \cdot c_1^\prime \sigma_w^2 \sigma_\epsilon^2 \|\vnu\|_2 \|\vmu\|_2 d^2,
\end{align}
for a constant $c_1^\prime > 0$.
This follows from \cref{lem:token_score} and the dominance of \cref{eq:w_noise_concentration_1_not_overfitting}.
Using the same argument, we also have the following upper bound for some constant $c_2^\prime > 0$:
\begin{align}
\label{eq:upper_i_w_k_i_1_not_overfitting}
I_{k,i,1}^W(\tau) =
I_{i,i,1}^W(\tau) \leq s_1^{(i)}(\tau) ( 1 - s_1^{(i)}(\tau) ) \cdot c_2^\prime \sigma_w^2 \sigma_\epsilon^2 \|\vnu\|_2 \|\vmu\|_2 d^2.
\end{align}

In contrast, we have that for $k \neq i$,
\begin{align}
| I_{k,i,1}^W(\tau) |
&= \Bigg| \sum_{t=1}^T s_t^{(k)}(\tau) \left( \gamma_t^{(k)} - \sum_{u=1}^T s_u^{(k)}(\tau) \gamma_u^{(k)} \right) \langle \mW(\tau) \vx_t^{(k)}, \mW(\tau) \vepsilon_1^{(i)} \rangle \Bigg|, \\
&\leq \sum_{t=1}^T \Bigg| s_t^{(k)}(\tau) (1 - s_t^{(k)}(\tau)) \cdot \max_{u\in[T]} \left\{ \gamma_t^{(k)} - \gamma_u^{(k)} \right\} \cdot \langle \mW(\tau) \vx_t^{(k)}, \mW(\tau) \vepsilon_1^{(i)} \rangle \Bigg|, \\
&\leq T \max_{t \in [T]} \{  s_t^{(k)}(\tau) (1 - s_t^{(k)}(\tau)) \} \cdot o\left( n^{-1} \sigma_w^2 \sigma_\epsilon^2 \|\vnu\|_2 \|\vmu\|_2 d^2 \right), \\
\label{eq:bound_i_w_k_i_1_not_overfitting}
&<
s_1^{(k)}(\tau) (1 - s_1^{(k)}(\tau)) \cdot
o( n^{-1} \sigma_w^2 \sigma_\epsilon^2 \|\vnu\|_2 \|\vmu\|_2 d^2),
\end{align}
where the first inequality is the result of the triangle inequality, and the second one follows from \cref{lem:token_score} and \cref{eq:w_noise_concentration_2_not_overfitting,eq:w_noise_concentration_3_not_overfitting,eq:w_noise_concentration_4_not_overfitting}.
In the last line, we used the dominance of $s_1^{(k)}(\tau)(1 - s_1^{(k)}(\tau))$ from $F(\tau)$ and the parameter assumption $T = \Theta(1)$.
Thus, we have
\begin{align}
\frac{\alpha}{n} \sum_{k=1}^n (-\ell_k^\prime(\tau)) \cdot Y^{(k)} \cdot I_{k,i,1}^W(\tau)
&> 
\frac{\alpha}{n} (-\ell_i^\prime(\tau)) \cdot s_1^{(i)}(\tau) (1 - s_1^{(i)}(\tau)) \cdot c_1^\prime \sigma_w^2 \sigma_\epsilon^2 \|\vnu\|_2 \|\vmu\|_2 d^2 \nonumber \\
&- 
\frac{\alpha}{n} \sum_{k \neq i} (-\ell_k^\prime(\tau)) \cdot s_k^{(i)}(\tau) (1 - s_k^{(i)}(\tau)) \cdot o\left( n^{-1}\sigma_w^2 \sigma_\epsilon^2 \|\vnu\|_2 \|\vmu\|_2 d^2 \right) \\
&>
\label{eq:lower_sum_i_w_k_i_1_not_overfitting}
s_1^{(i)}(\tau) (1 - s_1^{(i)}(\tau)) \cdot c_3^\prime \alpha n^{-1} \sigma_w^2 \sigma_\epsilon^2 \|\vnu\|_2 \|\vmu\|_2 d^2,
\end{align}
for some constant $c_3^\prime > 0$, which follows from the balance of loss derivative and softmax probabilities over training samples, as established in \cref{lem:ratio_loss_derivative} and $F(\tau)$.
We also have the upper bound for $c_4^\prime > 0$ as follows:
\begin{align}
\frac{\alpha}{n} \sum_{k=1}^n (-\ell_k^\prime(\tau)) \cdot Y^{(k)} \cdot I_{k,i,1}^W(\tau)
&<
\label{eq:upper_sum_i_w_k_i_1_not_overfitting}
s_1^{(i)}(\tau) (1 - s_1^{(i)}(\tau)) \cdot c_4^\prime \alpha n^{-1} \sigma_w^2 \sigma_\epsilon^2 \|\vnu\|_2 \|\vmu\|_2 d^2.
\end{align}

Similarly, we evaluate the terms related to $I_{k,i,1}(\tau)$.
\cref{lem:good_run} and \cref{eq:noise_update_d_lower_bound_1,eq:noise_update_d_lower_bound_2} lead to
\begin{align}
\label{eq:noise_concentration_1_not_overfitting}
\| \vepsilon_t^{(i)} \|_2^2 &= \Theta( \sigma_\epsilon^2 d ), \\
\label{eq:noise_concentration_2_not_overfitting}
| \langle \vmu_{+1}, \vepsilon_t^{(i)} \rangle | &= O( \sigma_\epsilon \|\vmu\|_2 \sqrt{\log(Tn/\delta)} ) = o( n^{-1} \sigma_\epsilon^2 d ), \\
\label{eq:noise_concentration_3_not_overfitting}
| \langle \vmu_{-1}, \vepsilon_t^{(i)} \rangle | &= O( \sigma_\epsilon \|\vmu\|_2 \sqrt{\log(Tn/\delta)} ) = o( n^{-1} \sigma_\epsilon^2 d ), \\
\label{eq:noise_concentration_4_not_overfitting}
| \langle \vepsilon_t^{(i)}, \vepsilon_u^{(j)} \rangle | &= O( \sigma_\epsilon^2 \sqrt{d} \log(Tn/\delta) ) = o( n^{-1} \sigma_\epsilon^2 d ),
\end{align}
for any $i, j \in [n]$ and $t, u \in [T]$ such that $(i,t) \neq (j,u)$.
Similarly to \cref{eq:lower_i_w_k_i_1_not_overfitting,eq:upper_i_w_k_i_1_not_overfitting}, we have that for $k = i$,
\begin{align}
I_{k,i,1}(\tau)
= I_{i,i,1}(\tau)
\label{eq:lower_i_k_i_1_not_overfitting}
&\geq
s_1^{(i)}(\tau) (1 - s_1^{(i)}(\tau)) \cdot c_5^\prime \sigma_\epsilon^2 \|\vnu\|_2 \|\vmu\|_2 d, \\
I_{k,i,1}(\tau)
= I_{i,i,1}(\tau)
\label{eq:upper_i_k_i_1_not_overfitting}
&\leq
s_1^{(i)}(\tau) (1 - s_1^{(i)}(\tau)) \cdot c_6^\prime \sigma_\epsilon^2 \|\vnu\|_2 \|\vmu\|_2 d,
\end{align}
for some constants $c_5^\prime, c_6^\prime > 0$.
In contrast, by the same argument as in \cref{eq:bound_i_w_k_i_1_not_overfitting}, we have that for $k \neq i$,
\begin{align}
\label{eq:bound_i_k_i_1_not_overfitting}
| I_{k,i,1}(\tau) | < s_1^{(k)}(\tau) ( 1 - s_1^{(k)}(\tau) ) \cdot o( n^{-1} \sigma_\epsilon^2 \|\vnu\|_2 \|\vmu\|_2 d ),
\end{align}
and similarly to \cref{eq:lower_sum_i_w_k_i_1_not_overfitting,eq:upper_sum_i_w_k_i_1_not_overfitting}, we have
\begin{align}
\label{eq:lower_sum_i_k_i_1_not_overfitting}
\frac{\alpha}{n} \sum_{k=1}^n (-\ell_k^\prime(\tau)) \cdot Y^{(k)} \cdot I_{k,i,1}(\tau)
&\geq s_1^{(i)}(\tau) (1 - s_1^{(i)}(\tau)) \cdot c_7^\prime \alpha n^{-1} \sigma_\epsilon^2 \|\vnu\|_2 \|\vmu\|_2 d, \\
\label{eq:upper_sum_i_k_i_1_not_overfitting}
\frac{\alpha}{n} \sum_{k=1}^n (-\ell_k^\prime(\tau)) \cdot Y^{(k)} \cdot I_{k,i,1}(\tau)
&\leq s_1^{(i)}(\tau) (1 - s_1^{(i)}(\tau)) \cdot c_8^\prime \alpha n^{-1} \sigma_\epsilon^2 \|\vnu\|_2 \|\vmu\|_2 d,
\end{align}
for some constants $c_7^\prime, c_8^\prime > 0$.
Therefore, using \cref{eq:lower_sum_i_w_k_i_1_not_overfitting,eq:lower_sum_i_k_i_1_not_overfitting}, and $\|\vp(\tau)\|_2 = \Theta(\sigma_p \sqrt{d})$ in $A(\tau)$, we have
\begin{align}
& \rho_{i,1}(\tau+1) - \rho_{i,1}(\tau) \nonumber \\
\label{eq:lower_clean_relevant_update_not_overfitting_with_quad}
&\geq s_1^{(i)}(\tau) ( 1 - s_1^{(i)}(\tau) ) \cdot \min\{c_3^\prime, c_7^\prime \} \alpha n^{-1} \sigma_\epsilon^2 \|\vnu\|_2 \|\vmu\|_2 d^2 \max\{ \sigma_w^2, \sigma_p^2 \} + \alpha^2 \vepsilon_1^{(i)\top} \nabla_{\mW^\top} \widehat{\Ls}(\tau) \nabla_\vp \widehat{\Ls}(\tau).
\end{align}
For the quadratic term in \cref{eq:lower_clean_relevant_update_not_overfitting_with_quad}, a similar argument to that in \cref{eq:quad_lambda_update_not_overfitting} shows that it can be neglected under the small learning rate assumption.
Thus, by appropriately redefining the constants, we obtain the desired lower bound for $\rho_{i,1}$.
The upper bound is derived in the same way.

We now turn to the noise memorization term for $t \in [T] \setminus \{1\}$ of clean data $i \in \gC$.
Since the proof is essentially the same as for $\rho_{i,1}(\tau)$, we show only the different parts to avoid repetition.
From \cref{lem:update_signal_and_noise_attention}, we have
\begin{align}
\rho_{i,t}(\tau+1) - \rho_{i,t}(\tau)
&= \frac{\alpha}{n} \sum_{k=1}^n (-\ell_k^\prime(\tau)) \cdot Y^{(k)} \cdot \left( I_{k,i,t}^W(\tau) + \|\vp(\tau)\|_2^2 I_{k,i,t}(\tau) \right) + \alpha^2 \vepsilon_t^{(i)\top} \nabla_{\mW^\top} \widehat{\Ls}(\tau) \nabla_\vp \widehat{\Ls}(\tau).
\end{align}
When $k = i$, we have
\begin{align}
I_{k,i,t}^W(\tau)
= I_{i,i,t}^W(\tau)
&= 
\sum_{v = 1}^T s_v^{(i)}(\tau) \left( \gamma_v^{(i)} - \sum_{u = 1}^T s_u^{(i)}(\tau) \gamma_u^{(i)} \right) \langle \mW(\tau) \vx_v^{(i)}, \mW(\tau) \vepsilon_t^{(i)} \rangle \\
&= 
s_1^{(i)}(\tau) (1 - s_1^{(i)}(\tau)) \cdot \gamma_1^{(i)} \cdot \langle \mW(\tau) \vx_1^{(i)}, \mW(\tau) \vepsilon_t^{(i)} \rangle \nonumber \\
&\quad - 
\sum_{u = 2}^T s_1^{(i)}(\tau) s_u^{(i)}(\tau) \cdot \gamma_u^{(i)} \cdot \langle \mW(\tau) \vx_1^{(i)}, \mW(\tau) \vepsilon_t^{(i)} \rangle \nonumber \\
&\quad +
\sum_{v = 2}^T s_v^{(i)}(\tau) s_1^{(i)}(\tau) \cdot (\gamma_v^{(i)} - \gamma_1^{(i)}) \cdot \langle \mW(\tau) \vx_v^{(i)}, \mW(\tau) \vepsilon_t^{(i)} \rangle \nonumber \\
&\quad + 
\sum_{v = 2}^T s_v^{(i)}(\tau) \sum_{u\in[T]\setminus\{1,v\}} s_u^{(i)}(\tau) \cdot (\gamma_v^{(i)} - \gamma_u^{(i)}) \cdot \langle \mW(\tau) \vx_v^{(i)}, \mW(\tau) \vepsilon_t^{(i)} \rangle \\
&\leq
s_1^{(i)}(\tau) (1 - s_1^{(i)}(\tau)) \cdot \gamma_1^{(i)} \cdot o( n^{-1} \sigma_w^2 \sigma_\epsilon^2 d^2 ) \nonumber \\
&\quad +
s_1^{(i)}(\tau) (1 - s_1^{(i)}(\tau)) \cdot O( \rho\gamma_1^{(i)} ) \cdot o( n^{-1} \sigma_w^2 \sigma_\epsilon^2 d^2 ) \nonumber \\
&\quad + \Bigg( -
s_1^{(i)}(\tau) s_t^{(i)}(\tau) \cdot \left( 1 - O(\rho) \right) \gamma_1^{(i)} \cdot \Theta( \sigma_w^2 \sigma_\epsilon^2 d^2 ) \nonumber \\
&\qquad +
s_1^{(i)}(\tau) (1 - s_1^{(i)}(\tau) - s_t^{(i)}(\tau)) \cdot \left( 1 + O(\rho) \right) \gamma_1^{(i)} \cdot o( n^{-1} \sigma_w^2 \sigma_\epsilon^2 d^2 ) \Bigg) \nonumber \\
&\quad + \Bigg(
s_t^{(i)}(\tau) (1 - s_1^{(i)}(\tau) - s_t^{(i)}(\tau)) \cdot O(\rho \gamma_1^{(i)}) \cdot \Theta( \sigma_w^2 \sigma_\epsilon^2 d^2 ) \nonumber \\
&\qquad +
\Theta(1) \cdot s_1^{(i)}(\tau) (1 - s_1^{(i)}(\tau)) \cdot O(\rho \gamma_1^{(i)}) \cdot o( n^{-1} \sigma_w^2 \sigma_\epsilon^2 d^2 ) \Bigg) \\
&\leq - s_1^{(i)}(\tau) s_t^{(i)}(\tau) \cdot c_9^\prime \sigma_w^2 \sigma_\epsilon^2 \|\vnu\|_2 \|\vmu\|_2 d^2,
\end{align}
for a constant $c_9^\prime > 0$.
The first inequality follows from \cref{eq:w_noise_concentration_1_not_overfitting,eq:w_noise_concentration_2_not_overfitting,eq:w_noise_concentration_3_not_overfitting,eq:w_noise_concentration_4_not_overfitting}. 
In the second last line, we used $\Theta(1) \cdot s_1^{(i)}(\tau) \geq 1 \geq 1 - s_t^{(i)}(\tau)$ from the condition $D(\tau)$.
The last line follows from the comparison between $s_t^{(i)}(\tau)$ and $1 - s_1^{(i)}(\tau)$, as shown in $F(\tau)$, \cref{lem:token_score}, and the parameter assumption $\rho < 1/C$ for sufficiently large $C$.

Using the same argument, we also have the following lower bound for some constant $c_{10}^\prime > 0$:
\begin{align}
I_{k,i,t}^W \geq - s_1^{(i)}(\tau) s_t^{(i)}(\tau) \cdot c_{10}^\prime \sigma_w^2 \sigma_\epsilon^2 \|\vnu\|_2 \|\vmu\|_2 d^2.
\end{align}
From a similar argument, we have that for $k = i$,
\begin{align}
I_{k,i,t}(\tau)
= I_{i,i,t}(\tau)
&\leq - s_1^{(i)}(\tau) s_t^{(i)}(\tau) \cdot c_{11}^\prime \sigma_\epsilon^2 \|\vnu\|_2 \|\vmu\|_2 d, \\
I_{k,i,t}(\tau)
= I_{i,i,t}(\tau)
&\geq - s_1^{(i)}(\tau) s_t^{(i)}(\tau) \cdot c_{12}^\prime \sigma_\epsilon^2 \|\vnu\|_2 \|\vmu\|_2 d,
\end{align}
for constants $c_{11}^\prime, c_{12}^\prime > 0$.

For $k \neq i$, $|I_{k,i,t}^W(\tau)|$ and $|I_{k,i,t}(\tau)|$ are bounded as discussed in \cref{eq:bound_i_w_k_i_1_not_overfitting,eq:bound_i_k_i_1_not_overfitting}.
Thus, using $\|\vp(\tau)\|_2 = \Theta(\sigma_p \sqrt{d})$ in $A(\tau)$, we have
\begin{align}
& \rho_{i,t}(\tau+1) - \rho_{i,t}(\tau) \nonumber \\
&\leq - s_1^{(i)}(\tau) s_t^{(i)}(\tau) \cdot \min\{ c_9^\prime, c_{11}^\prime \} \alpha n^{-1} \sigma_\epsilon^2 \|\vnu\|_2 \|\vmu\|_2 d^2 \max\{ \sigma_w^2, \sigma_p^2 \} + \alpha^2 \vepsilon_t^{(i)\top} \nabla_{\mW^\top} \widehat{\Ls}(\tau) \nabla_\vp \widehat{\Ls}(\tau).
\end{align}
We can show that the quadratic term can be ignored as in the case of $\rho_{i,1}(\tau)$.
Consequently, there exists constants $c, c^\prime > 0$ such that
\begin{align}
\rho_{i,t}(\tau+1) - \rho_{i,t}(\tau)
&\leq - s_1^{(i)}(\tau) s_t^{(i)}(\tau) \cdot c \alpha n^{-1} \sigma_\epsilon^2 \|\vnu\|_2 \|\vmu\|_2 d^2 \max\{ \sigma_w^2, \sigma_p^2 \}, \\
\rho_{i,t}(\tau+1) - \rho_{i,t}(\tau)
&\geq - s_1^{(i)}(\tau) s_t^{(i)}(\tau) \cdot c^\prime \alpha n^{-1} \sigma_\epsilon^2 \|\vnu\|_2 \|\vmu\|_2 d^2 \max\{ \sigma_w^2, \sigma_p^2 \}.
\end{align}
So far, we have discussed the updates for clean data. 
For noisy data $j \in \gN$, the update equations differ only in the sign due to the flipping of $Y$.
Thus, we conclude the proof.
\end{proof}

\subsection[Attention Updates for Benign Overfitting Case]{Attention Updates for Benign Overfitting Case ($\snr^2 = o(n^{-1})$)}
\label{sec:attention_updates_benign_overfitting_case}

\subsubsection{Analysis for Stage 1}
We first present a lemma on the signal update under the current SNR setting.
Since most of the discussion overlaps with \cref{lem:signal_update_induction_not_overfitting} for the not-overfitting case, particularly in basic concentration inequalities and equality evaluations, we only present the different parts.
The main difference from \cref{lem:signal_update_induction_not_overfitting} lies in the behavior of noisy data, which shows a monotonic decrease in $s_1(\tau)$ rather than a monotonic increase as in the clean data.

\begin{lemma}[Signal updates in \cref{lem:induction_benign_overfitting_stage_1}]
\label{lem:signal_update_induction_benign_overfitting_stage_1}
Let $T_1$ be the time step defined in \cref{lem:induction_benign_overfitting_stage_1}, and let $\tau \in [0, T_1]$. 
Suppose that the conditions in \cref{thm:convergence} and $A(\tau)$, $B(\tau)$, $D(\tau)$, and $F(\tau)$ in \cref{lem:induction_benign_overfitting_stage_1} are satisfied.
Then, on a good run, there exists some constant $c > 0$ such that
\begin{align*}
\lambda_{+1}(\tau+1) - \lambda_{+1}(\tau)
&\geq s_1^{(i)}(\tau) ( 1 - s_1^{(i)}(\tau) ) \cdot c \alpha \|\vnu\|_2 \|\vmu\|_2^3 d \max\{ \sigma_w^2, \sigma_p^2 \}, \\
\lambda_{-1}(\tau+1) - \lambda_{-1}(\tau)
&\geq s_1^{(i)}(\tau) ( 1 - s_1^{(i)}(\tau) ) \cdot c \alpha \|\vnu\|_2 \|\vmu\|_2^3 d \max\{ \sigma_w^2, \sigma_p^2 \},
\end{align*}
for any $i \in \gC$.
Additionally, for some constant $c^\prime > c$, we have
\begin{align*}
\lambda_{+1}(\tau+1) - \lambda_{+1}(\tau)
&\leq s_1^{(i)}(\tau) ( 1 - s_1^{(i)}(\tau) ) \cdot c^\prime \alpha \|\vnu\|_2 \|\vmu\|_2^3 d \max\{ \sigma_w^2, \sigma_p^2 \}, \\
\lambda_{-1}(\tau+1) - \lambda_{-1}(\tau)
&\leq s_1^{(i)}(\tau) ( 1 - s_1^{(i)}(\tau) ) \cdot c^\prime \alpha \|\vnu\|_2 \|\vmu\|_2^3 d \max\{ \sigma_w^2, \sigma_p^2 \},
\end{align*}
for any $i \in \gC$.
\end{lemma}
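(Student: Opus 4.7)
The plan is to closely mirror the argument of \cref{lem:signal_update_induction_not_overfitting}, exploiting the fact that the concentration hypotheses $B(\tau)$, the weight norm hypothesis $A(\tau)$, the relevant-token probability bound $D(\tau)$, and the softmax-ratio bound $F(\tau)$ in \cref{lem:induction_benign_overfitting_stage_1} are stated identically (up to which subset $s_1(1-s_1)$ dominates). First I would start from \cref{lem:update_signal_and_noise_attention} to write
\begin{align*}
\lambda_{+1}(\tau+1)-\lambda_{+1}(\tau) = \tfrac{\alpha}{n}\sum_{i=1}^{n}(-\ell_i'(\tau))\, Y^{(i)}\bigl(I_{i,+}^W(\tau) + \|\vp(\tau)\|_2^{2}\, I_{i,+}(\tau)\bigr) + \alpha^{2}\,\vmu_{+1}^\top\nabla_{\mW^\top}\widehat{\Ls}(\tau)\nabla_\vp\widehat{\Ls}(\tau),
\end{align*}
and partition $[n]$ into $\gC_+\cup\gN_-$ (samples with true class $+1$) and $\gC_-\cup\gN_+$ (samples with true class $-1$).

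Second, for the true-$+1$ subpopulation the concentration computation of the not-overfitting case carries over without modification: $B(\tau)$ gives $\langle \mW(\tau)\vx_1^{(i)},\mW(\tau)\vmu_{+1}\rangle = (1\pm o(1))\sigma_w^{2}\|\vmu\|_2^{2}d$ and the other tokens contribute $O(\rho\sigma_w^{2}\|\vmu\|_2^{2}d)$; combined with \cref{lem:token_score}, the bound $s_1^{(i)}(\tau)\ge\Theta(1)$ from $D(\tau)$ for $i\in\gC$, and the dominance of $s_1^{(i)}(1-s_1^{(i)})$ over the remaining $s_t^{(i)}(1-s_t^{(i)})$ from $F(\tau)$, the calculation in \eqref{eq:lower_w_i_plus_not_overfitting}–\eqref{eq:upper_w_i_plus_not_overfitting} yields two-sided bounds $I_{i,+}^W(\tau)\asymp s_1^{(i)}(1-s_1^{(i)})\,\sigma_w^{2}\|\vnu\|_2\|\vmu\|_2^{3}d$, and analogously $I_{i,+}(\tau)\asymp s_1^{(i)}(1-s_1^{(i)})\,\|\vnu\|_2\|\vmu\|_2^{3}$. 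For the true-$-1$ subpopulation the inner products with $\vmu_{+1}$ collapse to $O(\max\{\rho,\log(Tn/\delta)/\sqrt d\}\,\sigma_w^{2}\|\vmu\|_2^{2}d)$ because of $\langle\vmu_{+1},\vmu_{-1}\rangle=0$, so as in \eqref{eq:upper_w_j_plus_not_overfitting} we get $|I_{j,+}^W(\tau)|\le s_1^{(j)}(1-s_1^{(j)})\cdot o(\sigma_w^{2}\|\vnu\|_2\|\vmu\|_2^{3}d)$ and similarly for $I_{j,+}(\tau)$.

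Third, I would aggregate over $i\in[n]$ using \cref{lem:ratio_loss_derivative} to replace each $-\ell_i'(\tau)$ by a common constant-order factor, invoke the class-balance bounds \eqref{eq:good_run_clean}–\eqref{eq:good_run_noisy} with $\eta\le 1/C$, and use the universal ratio bound $\max_{i,j\in[n]} s_1^{(i)}(1-s_1^{(i)})/s_1^{(j)}(1-s_1^{(j)}) < c_8$ from $F(\tau)$ to pull out a common factor $s_1^{(i)}(1-s_1^{(i)})$ for any fixed $i\in\gC$. Combining with $\|\vp(\tau)\|_2^{2}=\Theta(\sigma_p^{2}d)$ from $A(\tau)$ merges the $I^W$ and $\|\vp\|_2^{2}I$ contributions into the single rate $\alpha\|\vnu\|_2\|\vmu\|_2^{3}d\max\{\sigma_w^{2},\sigma_p^{2}\}$. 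The quadratic remainder is dispatched by Cauchy--Schwarz and the step-size assumption $\alpha\le\max\{\|\vmu\|_2\sqrt d,\sigma_\epsilon d\}^{-1}/C$ together with $\|\vnu\|_2=O(1/\|\vmu\|_2)$, exactly as in \eqref{eq:quad_lambda_update_not_overfitting}. The same argument applied to $\vmu_{-1}$ gives the $\lambda_{-1}$ bounds by symmetry.

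The main obstacle — and the reason the statement is restricted to $i\in\gC$ rather than arbitrary $i\in[n]$ — is step three: in Stage~1 the noisy samples have decreasing $s_1^{(j)}(\tau)$ while clean samples have increasing $s_1^{(i)}(\tau)$, so we must be careful that the $Y^{(j)}=-Y^{*(j)}$ contributions from $\gN$ (which enter with a sign opposite to the dominant $\gC_+$ contribution when proving the lower bound on $\lambda_{+1}(\tau+1)-\lambda_{+1}(\tau)$) do not overwhelm the positive clean contribution. Here the constant $c_8$ in $F(\tau)$ — bounding the ratio of $s_1(1-s_1)$ between \emph{any} two training samples regardless of clean/noisy status, independent of the SNR regime — together with the smallness of $\eta$, is precisely what ensures the surviving sign is positive and the lower bound holds uniformly over $i\in\gC$. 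Once this bookkeeping is carried out, the matching upper bound and the analogous statement for $\lambda_{-1}$ require no additional ideas.
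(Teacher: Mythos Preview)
Your approach matches the paper's closely: start from \cref{lem:update_signal_and_noise_attention}, bound $I_{i,+}^W$ and $I_{i,+}$ per sample, aggregate via \cref{lem:ratio_loss_derivative}, the class-balance bounds, and the $c_8$-ratio in $F(\tau)$, then kill the quadratic term with the step-size assumption. However, you understate the work needed in step two for \emph{noisy} samples and mislocate the obstacle as lying entirely in the aggregation step.

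When you say the per-sample calculation for the true-$+1$ subpopulation ``carries over without modification,'' this fails for $j\in\gN_-$: the not-overfitting bound on the fourth term of the decomposition (the line preceding \eqref{eq:lower_w_i_plus_not_overfitting_second_last}) uses $\Theta(1)\cdot s_1^{(i)}\ge 1$, but in Stage~1 $D(\tau)$ only gives $s_1^{(j)}\ge\Theta(\rho)$ for noisy $j$. The paper replaces this by $\Theta(\rho^{-1})\cdot s_1^{(j)}\ge 1$ and then observes that the extra $\rho^{-1}$ is harmless because that term already carries two factors of $\rho$ (one from the token-score gap, one from the inner product), so the conclusion $I_{j,+}^W\asymp s_1^{(j)}(1-s_1^{(j)})\,\sigma_w^2\|\vnu\|_2\|\vmu\|_2^3 d$ still holds. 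Likewise, your cross-class bound $|I_{j,+}^W|\le s_1^{(j)}(1-s_1^{(j)})\cdot o(\cdots)$ for the true-$-1$ subpopulation invokes \eqref{eq:upper_w_j_plus_not_overfitting}, which relies on the dominance of $s_1^{(j)}(1-s_1^{(j)})$ over the other $s_t^{(j)}(1-s_t^{(j)})$; but in Stage~1, $F(\tau)$ only asserts this dominance for \emph{clean} $i'\in\gC$. For $j\in\gN_+$ the paper instead isolates the $\gamma_1^{(j)}$ contribution, bounds the remainder by $\max_{t\ne 1}|\gamma_t^{(j)}|=O(\rho\|\vnu\|_2\|\vmu\|_2)$, and uses $\rho^{-1}s_1^{(i)}(1-s_1^{(i)})>\Theta(1)$ for clean $i$ together with the universal $c_8$-ratio to absorb the constant term back into $s_1^{(i)}(1-s_1^{(i)})\cdot O(\max\{\rho,o(1)\}\cdots)$. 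Once these two per-sample fixes are in place, your step three works exactly as written.
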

\begin{proof}[Proof of \cref{lem:signal_update_induction_benign_overfitting_stage_1}]
The full proof is omitted because it follows from the reasoning similar to that in \cref{lem:signal_update_induction_not_overfitting}.
In the analysis of \cref{eq:lambda_update_not_overfitting}, for clean data, the conditions $A(\tau)$, $B(\tau)$, and $D(\tau)$ yield the same results as in \cref{eq:lower_w_i_plus_not_overfitting,eq:lower_i_plus_not_overfitting}.
For noisy data, extra care is required because $s_1^{(j)}(\tau) > \Theta(1)$ does not hold; however, noting that $s_1^{(j)}(\tau) > \Theta(\rho)$ holds from $D(\tau)$, the same evaluation as in \cref{eq:lower_w_i_plus_not_overfitting} is obtained as follows.
Let $j \in \gN_-$, and we have
\begin{align}
I_{j, +}^W(\tau)
&= 
\sum_{t = 1}^T s_t^{(j)}(\tau) \left( \gamma_t^{(j)} - \sum_{u = 1}^T s_u^{(j)}(\tau) \gamma_u^{(j)} \right) \langle \mW(\tau) \vx_t^{(j)}, \mW(\tau) \vmu_{+1} \rangle \\
&= 
s_1^{(j)}(\tau) (1 - s_1^{(j)}(\tau)) \cdot \gamma_1^{(j)} \cdot \langle \mW(\tau) \vx_1^{(j)}, \mW(\tau) \vmu_{+1} \rangle \nonumber \\
&\quad - 
\sum_{t = 2}^T s_1^{(j)}(\tau) s_t^{(j)}(\tau) \cdot \gamma_t^{(j)} \cdot \langle \mW(\tau) \vx_1^{(j)}, \mW(\tau) \vmu_{+1} \rangle \nonumber \\
&\quad +
\sum_{t = 2}^T s_t^{(j)}(\tau) s_1^{(j)}(\tau) \cdot (\gamma_t^{(j)} - \gamma_1^{(j)}) \cdot \langle \mW(\tau) \vx_t^{(j)}, \mW(\tau) \vmu_{+1} \rangle \nonumber \\
&\quad + 
\sum_{t = 2}^T s_t^{(j)}(\tau) \sum_{u\in[T]\setminus\{1,t\}} s_u^{(j)}(\tau) \cdot (\gamma_t^{(j)} - \gamma_u^{(j)}) \cdot \langle \mW(\tau) \vx_t^{(j)}, \mW(\tau) \vmu_{+1} \rangle \\
&\geq 
s_1^{(j)}(\tau) (1 - s_1^{(j)}(\tau)) \cdot \gamma_1^{(j)} \cdot \Theta\left( \sigma_w^2 \|\vmu\|_2^2 d \right) \nonumber \\
&\quad -
s_1^{(j)}(\tau) (1 - s_1^{(j)}(\tau)) \cdot O( \rho \gamma_1^{(j)} ) \cdot \Theta\left( \sigma_w^2 \|\vmu\|_2^2 d \right) \nonumber \\
&\quad -
s_1^{(j)}(\tau) (1 - s_1^{(j)}(\tau)) \cdot \left( 1 + O(\rho) \right) \gamma_1^{(j)} \cdot O\left( \rho \sigma_w^2 \|\vmu\|_2^2 d  \right) \nonumber \\
\label{eq:lower_w_i_plus_benign_overfitting_stage_1_second_last}
&\quad - 
\Theta(\rho^{-1}) \cdot s_1^{(j)}(\tau) (1 - s_1^{(j)}(\tau)) \cdot O( \rho \gamma_1^{(j)} ) \cdot O\left( \rho \sigma_w^2 \|\vmu\|_2^2 d \right) \\
\label{eq:lower_w_i_plus_benign_overfitting_stage_1}
&\geq
s_1^{(j)}(\tau) (1 - s_1^{(j)}(\tau)) \cdot c_1^\prime \sigma_w^2 \|\vnu\|_2\|\vmu\|_2^3 d,
\end{align}
where the fourth term of \cref{eq:lower_w_i_plus_benign_overfitting_stage_1_second_last} is applied the result of $D(\tau)$, and since both the token score and the inner-product term have a small order, this term is negligible compared to the first term.
Therefore, the same evaluation is obtained as in the clean data case.
The same argument is applied to $I_{j,+}(\tau)$.
Regarding data from the different class, i.e., $j \in [n]$ such that $Y^{*(j)} = -1$, the influence can be bounded similarly to \cref{eq:upper_w_j_plus_not_overfitting,eq:upper_i_plus_not_overfitting}.
Here, the influence of noisy data $j \in \gN_+$ can be shown to be in the same order as that of clean data, using a similar argument to \cref{eq:induction_a_benign_overfitting_stage_1_noisy_probability}. 
For example, the part corresponding \cref{eq:upper_w_j_plus_not_overfitting} becomes the following:
\begin{align}
| I_{j, +}^W(\tau) |
&= 
\Bigg| \sum_{t = 1}^T s_t^{(j)}(\tau) \left( \gamma_t^{(j)} - \sum_{u = 1}^T s_u^{(j)}(\tau) \gamma_u^{(j)} \right) \langle \mW(\tau) \vx_t^{(j)}, \mW(\tau) \vmu_{+1} \rangle \Bigg| \\
&\lesssim \left( s_1^{(j)}(\tau) ( 1 - s_1^{(j)}(\tau)) \cdot |\gamma_1^{(j)}| + \max_{t \in [T] \setminus \{1\}} \{ |\gamma_t^{(j)}| \} \right) \cdot O\left( \max\left\{ \rho, o(1) \right\} \sigma_w^2 \|\vmu\|_2^2 d \right) \\
\label{eq:upper_w_j_plus_benign_overfitting_stage_1}
&\lesssim s_1^{(i)}(\tau) ( 1 - s_1^{(i)}(\tau)) \cdot O\left( \max\left\{ \rho, o(1) \right\} \sigma_w^2 \|\vnu\|_2\|\vmu\|_2^3 d \right),
\end{align}
for any $i \in \gC$.
Here, we used \cref{lem:token_score}, the fact that $\rho^{-1} \cdot s_1^{(i)}(\tau)( 1 - s_1^{(i)}(\tau) ) > \Theta(1)$ from $D(\tau)$, and that the balance of $s_1^{(k)}(\tau)(1 - s_1^{(k)}(\tau))$ holds over all examples $k \in [n]$ as provided in $F(\tau)$.
Consequently, the desired result follows from the balance between the number of clean and noisy data.
\end{proof}

The following lemma on noise memorization is basically the same as in the not-overfitting case, but the behavior of the noisy samples $j \in \gN$ differs.
In particular, $\rho_{j,t}(\tau)$ for $t \in [T] \setminus \{1, 2\}$ initially evolves in the same manner as $\rho_{j,2}(\tau)$ but gradually decreases, changing sign around $T_1$.
As a result, we obtain the following evaluation that spans both positive and negative values.
\begin{lemma}[Noise updates in \cref{lem:induction_benign_overfitting_stage_1}]
\label{lem:noise_update_induction_benign_overfitting_stage_1}
Let $T_1$ be the time step defined in \cref{lem:induction_benign_overfitting_stage_1}, and let $\tau \in [0, T_1]$. 
Suppose that the conditions in \cref{thm:convergence} and $A(\tau)$, $B(\tau)$, $D(\tau)$, and $F(\tau)$ in \cref{lem:induction_benign_overfitting_stage_1} are satisfied.
Then, on a good run, there exists some constant $c, c^\prime > 0$ such that $c^\prime > c$, and
\begin{align*}
\rho_{i,1}(\tau+1) - \rho_{i,1}(\tau)
&\geq s_1^{(i)}(\tau) (1 - s_1^{(i)}(\tau)) \cdot c \alpha n^{-1} \sigma_\epsilon^2 \|\vnu\|_2 \|\vmu\|_2 d^2 \max\{ \sigma_w^2, \sigma_p^2 \}, \\
\rho_{i,t}(\tau+1) - \rho_{i,t}(\tau)
&\leq - s_1^{(i)}(\tau) s_t^{(i)}(\tau) \cdot c \alpha n^{-1} \sigma_\epsilon^2 \|\vnu\|_2 \|\vmu\|_2 d^2 \max\{ \sigma_w^2, \sigma_p^2 \}, \\
\rho_{i,1}(\tau+1) - \rho_{i,1}(\tau)
&\leq s_1^{(i)}(\tau) (1 - s_1^{(i)}(\tau)) \cdot c^\prime \alpha n^{-1} \sigma_\epsilon^2 \|\vnu\|_2 \|\vmu\|_2 d^2 \max\{ \sigma_w^2, \sigma_p^2 \}, \\
\rho_{i,t}(\tau+1) - \rho_{i,t}(\tau)
&\geq - s_1^{(i)}(\tau) s_t^{(i)}(\tau) \cdot c^\prime \alpha n^{-1} \sigma_\epsilon^2 \|\vnu\|_2 \|\vmu\|_2 d^2 \max\{ \sigma_w^2, \sigma_p^2 \},
\end{align*}
for any clean data $i \in \gC$ and $t \in [T] \setminus \{1\}$.
For any noisy data $j \in \gN$, we have
\begin{align*}
\rho_{j,1}(\tau+1) - \rho_{j,1}(\tau)
&\leq -s_1^{(j)}(\tau) (1 - s_1^{(j)}(\tau)) \cdot c \alpha n^{-1} \sigma_\epsilon^2 \|\vnu\|_2 \|\vmu\|_2 d^2 \max\{ \sigma_w^2, \sigma_p^2 \}, \\
\rho_{j,1}(\tau+1) - \rho_{j,1}(\tau)
&\geq -s_1^{(j)}(\tau) (1 - s_1^{(j)}(\tau)) \cdot c^\prime \alpha n^{-1} \sigma_\epsilon^2 \|\vnu\|_2 \|\vmu\|_2 d^2 \max\{ \sigma_w^2, \sigma_p^2 \},
\end{align*}
and
\begin{align*}
\rho_{j,2}(\tau+1) - \rho_{j,2}(\tau)
&\geq s_1^{(j)}(\tau) s_t^{(j)}(\tau) \cdot c \alpha n^{-1} \sigma_\epsilon^2 \|\vnu\|_2 \|\vmu\|_2 d^2 \max\{ \sigma_w^2, \sigma_p^2 \}, \\
\rho_{j,2}(\tau+1) - \rho_{j,2}(\tau)
&\leq s_1^{(j)}(\tau) s_t^{(j)}(\tau) \cdot c^\prime \alpha n^{-1} \sigma_\epsilon^2 \|\vnu\|_2 \|\vmu\|_2 d^2 \max\{ \sigma_w^2, \sigma_p^2 \},
\end{align*}
and
\begin{align*}
\rho_{j,t}(\tau+1) - \rho_{j,t}(\tau)
&\geq -s_1^{(j)}(\tau) s_t^{(j)}(\tau) \cdot c \alpha n^{-1} \sigma_\epsilon^2 \|\vnu\|_2 \|\vmu\|_2 d^2 \max\{ \sigma_w^2, \sigma_p^2 \}, \\
\rho_{j,t}(\tau+1) - \rho_{j,t}(\tau)
&\leq s_1^{(j)}(\tau) s_t^{(j)}(\tau) \cdot c^\prime \alpha n^{-1} \sigma_\epsilon^2 \|\vnu\|_2 \|\vmu\|_2 d^2 \max\{ \sigma_w^2, \sigma_p^2 \},
\end{align*}
for any $t \in [T] \setminus \{1, 2\}$.
\end{lemma}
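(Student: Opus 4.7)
The plan is to follow the template of \cref{lem:noise_update_induction_not_overfitting} closely, adapting each step to the new SNR regime and the two distinct roles played by the relevant token $\vx_1^{(j)}$ and the confusing weakly relevant token $\vx_2^{(j)}$ for noisy samples. Starting from \cref{lem:update_signal_and_noise_attention}, the one-step update
\begin{align*}
\rho_{i,t}(\tau+1)-\rho_{i,t}(\tau)
&= \frac{\alpha}{n}\sum_{k=1}^n (-\ell_k'(\tau))\cdot Y^{(k)}\cdot\bigl(I^W_{k,i,t}(\tau)+\|\vp(\tau)\|_2^2\,I_{k,i,t}(\tau)\bigr) \\
&\quad + \alpha^2 \vepsilon_t^{(i)\top}\nabla_{\mW^\top}\widehat{\Ls}(\tau)\nabla_\vp\widehat{\Ls}(\tau)
\end{align*}
is split into a self-contribution ($k=i$) and cross-contributions ($k\neq i$). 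The concentration bounds \cref{eq:w_noise_concentration_1_not_overfitting,eq:w_noise_concentration_2_not_overfitting,eq:w_noise_concentration_3_not_overfitting,eq:w_noise_concentration_4_not_overfitting} and \cref{eq:noise_concentration_1_not_overfitting,eq:noise_concentration_2_not_overfitting,eq:noise_concentration_3_not_overfitting,eq:noise_concentration_4_not_overfitting} carry over verbatim, since they follow from $B(\tau)$ together with the consequences \cref{eq:noise_update_d_lower_bound_1,eq:noise_update_d_lower_bound_2} of the parameter assumptions and are insensitive to the SNR regime.

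For clean data $i\in\gC$ the argument is essentially unchanged from the not-overfitting case: $D(\tau)$ of \cref{lem:induction_benign_overfitting_stage_1} still gives $s_1^{(i)}(\tau)\geq\Theta(1)$, and the cross-term suppression uses exactly the balance of $s_1(1-s_1)$ across all samples provided by $F(\tau)$. The dominant $v=1$ term of $I^W_{i,i,1}$ contributes a positive multiple of $s_1^{(i)}(1-s_1^{(i)})\,\sigma_w^2\sigma_\epsilon^2\|\vnu\|_2\|\vmu\|_2 d^2$ coming from $\|\mW(\tau)\vepsilon_1^{(i)}\|_2^2$ and $\gamma_1^{(i)}$, while for $t\in[T]\setminus\{1\}$ the dominant $v=t$ term yields $-s_1^{(i)}s_t^{(i)}\gamma_1^{(i)}\cdot\Theta(\sigma_w^2\sigma_\epsilon^2 d^2)$ after isolating the $u=1$ piece of $\sum_u s_u^{(i)}\gamma_u^{(i)}$. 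Summation over $k$, combined with \cref{lem:ratio_loss_derivative} and the softmax balance in $F(\tau)$, reproduces the desired two-sided bounds; the quadratic term is absorbed exactly as in \cref{eq:quad_lambda_update_not_overfitting}.

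For noisy data $j\in\gN$ the update inherits an overall sign flip from $Y^{(j)}=-Y^{*(j)}$. This immediately gives the $\rho_{j,1}$ bounds, with coefficient $s_1^{(j)}(1-s_1^{(j)})$ surviving the $k\neq j$ cross-term bounds because $F(\tau)$ now compares $s_1^{(j)}(1-s_1^{(j)})$ to $s_1^{(i)}(1-s_1^{(i)})$ up to a constant factor, even though $s_1^{(j)}\geq\Theta(\rho)$ rather than $\Theta(1)$. For $\rho_{j,2}$ I would expand $I^W_{j,j,2}$ and isolate the dominant inner product $\|\mW(\tau)\vepsilon_2^{(j)}\|_2^2=\Theta(\sigma_w^2\sigma_\epsilon^2 d^2)$; since $|\gamma_1^{(j)}|\gg|\gamma_2^{(j)}|=\Theta(\rho\|\vnu\|_2\|\vmu\|_2)$ and $Y^{(j)}\gamma_1^{(j)}<0$, the dominant coefficient is $-s_1^{(j)}s_2^{(j)}\gamma_1^{(j)}$, which multiplied by $Y^{(j)}$ is positive and of the stated order $s_1^{(j)}s_2^{(j)}\cdot\alpha n^{-1}\sigma_\epsilon^2\|\vnu\|_2\|\vmu\|_2 d^2\max\{\sigma_w^2,\sigma_p^2\}$ (I read the $s_t^{(j)}$ that appears next to $s_1^{(j)}$ in the statement as a typographical instance of $s_2^{(j)}$).

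The main obstacle is the case $t\in[T]\setminus\{1,2\}$ for noisy data, where the coefficient $s_t^{(j)}\bigl(\gamma_t^{(j)}-s_1^{(j)}\gamma_1^{(j)}-s_2^{(j)}\gamma_2^{(j)}-\cdots\bigr)$ contains two competing signal contributions whose signs are opposite after multiplying by $Y^{(j)}$. One cannot control the sign of this coefficient without further information about the relative sizes of $s_1^{(j)}$ and $s_2^{(j)}$, which is precisely why the lemma states only a two-sided magnitude bound. I would control its absolute value by
\[
|s_t^{(j)}(\gamma_t^{(j)}-\textstyle\sum_u s_u^{(j)}\gamma_u^{(j)})| \lesssim s_1^{(j)}s_t^{(j)}|\gamma_1^{(j)}| + s_2^{(j)}s_t^{(j)}|\gamma_2^{(j)}| \lesssim s_1^{(j)}s_t^{(j)}\|\vnu\|_2\|\vmu\|_2,
\]
using $|\gamma_2^{(j)}|=O(\rho\|\vnu\|_2\|\vmu\|_2)$ together with the ratio $s_2^{(j)}/s_t^{(j)}\asymp 1$ from $F(\tau)$ to absorb the second term. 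Multiplying by $\|\mW(\tau)\vepsilon_t^{(j)}\|_2^2=\Theta(\sigma_w^2\sigma_\epsilon^2 d^2)$ and summing over $k$ as before (with cross-terms bounded by $o(n^{-1}\cdot\,)$ exactly as in \cref{eq:bound_i_w_k_i_1_not_overfitting}), and repeating the analogous steps for $I_{k,j,t}$ with $\|\vp(\tau)\|_2^2$ weighting, yields the symmetric bounds claimed. The quadratic term is negligible under the small-step-size assumption by the same computation as \cref{eq:quad_lambda_update_not_overfitting}.
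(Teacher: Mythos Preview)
Your plan is essentially the paper's, and the core ideas are right: reuse the not-overfitting template for clean data, flip signs for $\rho_{j,1}$, and for the delicate $t\in[T]\setminus\{1,2\}$ case recognize that the dominant $v=t$ coefficient has two competing pieces whose balance is controlled by $s_1^{(j)}(\tau)\geq\Theta(\rho)$ from $D(\tau)$. The paper makes this last point explicit by saying that $T_1$ is \emph{chosen} so that the fifth line of the $I^W_{j,j,t}$ expansion (the $\sum_{u\notin\{1,t\}}s_u^{(j)}(\gamma_t^{(j)}-\gamma_u^{(j)})$ piece) does not overwhelm the third line (the $s_1^{(j)}s_t^{(j)}\gamma_1^{(j)}$ piece); your invocation of $s_2^{(j)}/s_t^{(j)}\asymp 1$ is not the relevant inequality here---what you actually need is $\rho\lesssim s_1^{(j)}$, which is exactly $D(\tau)$.

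One point where your argument as written would not go through: the cross-term bound ``exactly as in \cref{eq:bound_i_w_k_i_1_not_overfitting}'' uses the dominance $s_t^{(k)}(1-s_t^{(k)})\lesssim s_1^{(k)}(1-s_1^{(k)})$ for \emph{all} $k$, but in Stage~1 of the benign-overfitting regime $F(\tau)$ only asserts this dominance for clean $k\in\gC$. For noisy $k$ the paper instead uses the splitting trick of \cref{eq:upper_w_j_plus_benign_overfitting_stage_1} (equivalently \cref{eq:induction_a_benign_overfitting_stage_1_noisy_probability}): separate the $\gamma_1^{(k)}$ contribution, bound it with $s_1^{(k)}(1-s_1^{(k)})$, bound the remaining $|\gamma_u^{(k)}|\lesssim\rho|\gamma_1^{(k)}|$ pieces crudely by $\Theta(1)$ in the softmax factor, and then use $\rho\lesssim s_1^{(k)}$ to merge. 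This is a small fix, but without it the reference you cite does not apply to the noisy cross-terms.
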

\begin{proof}[Proof of \cref{lem:noise_update_induction_benign_overfitting_stage_1}]
For clean data $i \in \gC$, the same reasoning as in \cref{lem:noise_update_induction_not_overfitting} is applied using the conditions of the lemma.
Here, the influence of noisy samples in the summation can be treated in the same way as in the not-overfitting case, by applying the same argument as in \cref{eq:upper_w_j_plus_benign_overfitting_stage_1} in the proof of \cref{lem:induction_benign_overfitting_stage_1}.

For $\rho_{j,1}(\tau), j \in \gN$, we can reduce the analysis to the case of clean data, as in \cref{eq:lower_w_i_plus_benign_overfitting_stage_1}, using $s_1^{(j)}(\tau) > \Theta(\rho)$.
In the rest of the proof, we analyze the update of $\rho_{j,t}(\tau)$ for $t \in [T] \setminus \{1\}$.
From \cref{lem:update_signal_and_noise_attention}, we have
\begin{align}
\label{eq:rho_j_t_update_benign_overfitting_stage_1}
\rho_{j,t}(\tau+1) - \rho_{j,t}(\tau)
&= \frac{\alpha}{n} \sum_{k=1}^n (-\ell_k^\prime(\tau)) \cdot Y^{(k)} \cdot \left( I_{k,j,t}^W(\tau) + \|\vp(\tau)\|_2^2 I_{k,j,t}(\tau) \right) + \alpha^2 \vepsilon_t^{(j)\top} \nabla_{\mW^\top} \widehat{\Ls}(\tau) \nabla_\vp \widehat{\Ls}(\tau).
\end{align}
Without loss of generality, we consider $j \in \gN_+ = \{ i \in [n] \mid Y^{*(i)} = -1, Y^{(i)} = 1 \}$.
Then, from the data model defined in \cref{def:data} and \cref{lem:token_score}, we have
\begin{gather}
\gamma_1^{(j)} = -\Theta(\|\vnu\|_2\|\vmu\|_2),
\ 
\gamma_2^{(j)} = \Theta( \rho \|\vnu\|_2\|\vmu\|_2), \\
\gamma_t^{(j)} = -\Theta( \rho \|\vnu\|_2\|\vmu\|_2),
\ 
|\gamma_u^{(j)}| = O\left( \sigma_\epsilon \|\vnu\|_2\sqrt{\log(Tn/\delta)} \right),
\end{gather}
for $t \in \gW_{-1}^{(j)}$ and $u \in \gI^{(j)}$.
When $k = j$, using \cref{eq:w_noise_concentration_1_not_overfitting,eq:w_noise_concentration_2_not_overfitting,eq:w_noise_concentration_3_not_overfitting,eq:w_noise_concentration_4_not_overfitting}, we have
\begin{align}
I_{k,j,t}^W(\tau)
=
I_{j,j,t}^W(\tau)
&= 
\sum_{v = 1}^T s_v^{(j)}(\tau) \left( \gamma_v^{(j)} - \sum_{u = 1}^T s_u^{(j)}(\tau) \gamma_u^{(j)} \right) \langle \mW(\tau) \vx_v^{(j)}, \mW(\tau) \vepsilon_t^{(j)} \rangle \\
&= 
s_1^{(j)}(\tau) (1 - s_1^{(j)}(\tau)) \cdot \gamma_1^{(i)} \cdot \langle \mW(\tau) \vx_1^{(j)}, \mW(\tau) \vepsilon_t^{(j)} \rangle \nonumber \\
&\quad - 
\sum_{u =2}^T s_1^{(j)}(\tau) s_u^{(j)}(\tau) \cdot \gamma_u^{(j)} \cdot \langle \mW(\tau) \vx_1^{(j)}, \mW(\tau) \vepsilon_t^{(j)} \rangle \nonumber \\
&\quad +
\sum_{v =2}^T s_v^{(j)}(\tau) s_1^{(j)}(\tau) \cdot (\gamma_v^{(j)} - \gamma_1^{(j)}) \cdot \langle \mW(\tau) \vx_v^{(j)}, \mW(\tau) \vepsilon_t^{(j)} \rangle \nonumber \\
&\quad + 
\sum_{v =2}^T s_v^{(j)}(\tau) \sum_{u\in[T]\setminus\{1,v\}} s_u^{(j)}(\tau) \cdot (\gamma_v^{(j)} - \gamma_u^{(j)}) \cdot \langle \mW(\tau) \vx_v^{(j)}, \mW(\tau) \vepsilon_t^{(j)} \rangle \\
&\geq 
- s_1^{(j)}(\tau) (1 - s_1^{(j)}(\tau)) \cdot o\left( \|\vnu\|_2 \|\vmu\|_2 \cdot n^{-1} \sigma_w^2 \sigma_\epsilon^2 d^2 \right) \nonumber \\
&\quad 
- s_1^{(j)}(\tau) (1 - s_1^{(j)}(\tau)) \cdot o\left(\rho\|\vnu\|_2\|\vmu\|_2 \cdot n^{-1} \sigma_w^2 \sigma_\epsilon^2 d^2 \right) \nonumber \\
\label{eq:lower_i_w_j_j_t_benign_overfitting_stage_1_third}
&\quad
+ s_1^{(j)}(\tau) \cdot s_t^{(j)}(\tau) \cdot \Theta( \|\vnu\|_2\|\vmu\|_2 \cdot \sigma_w^2 \sigma_\epsilon^2 d^2)  \\
&\qquad
- s_1^{(j)}(\tau) ( 1 -  s_1^{(j)}(\tau) - s_t^{(j)}(\tau) ) \cdot o\left( \rho \|\vnu\|_2\|\vmu\|_2 \cdot n^{-1} \sigma_w^2 \sigma_\epsilon^2 d^2 \right) \nonumber  \\
\label{eq:lower_i_w_j_j_t_benign_overfitting_stage_1_fifth}
&\quad
+ s_t^{(j)}(\tau) \sum_{u \in [T] \setminus \{1, t\}} s_u^{(j)}(\tau) (\gamma_t^{(j)} - \gamma_u^{(j)}) \cdot \Theta( \sigma_w^2 \sigma_\epsilon^2 d^2 )  \\
&\qquad - (1 - s_1^{(j)}(\tau) - s_t^{(j)}(\tau))^2 \cdot o\left( \rho \|\vnu\|_2\|\vmu\|_2 \cdot n^{-1} \sigma_w^2 \sigma_\epsilon^2 d^2 \right).
\end{align}
For $t = 2$, \cref{eq:lower_i_w_j_j_t_benign_overfitting_stage_1_fifth} becomes positive from the evaluation of token scores, and combining this with \cref{eq:lower_i_w_j_j_t_benign_overfitting_stage_1_third} and $s_1^{(j)}(\tau) \geq \Theta(\rho)$ leads to
\begin{align}
I_{j,j,t}^W(\tau)
= I_{j,j,2}^W(\tau)
&\geq s_1^{(j)}(\tau) s_t^{(j)}(\tau) \cdot c_1^\prime \sigma_w^2 \sigma_\epsilon^2 \|\vnu\|_2 \|\vmu\|_2 d.
\end{align}
The upper bound and the evaluation for $I_{j,j,2}(\tau)$ follow similarly.
For $t \in [T] \setminus \{1, 2\}$, \cref{eq:lower_i_w_j_j_t_benign_overfitting_stage_1_fifth} is bounded from below by $-s_t^{(j)}(\tau)(1 - s_1^{(j)}(\tau) - s_t^{(j)}(\tau)) \cdot O(\rho \|\vmu\|\|\vmu\|_2 \cdot \sigma_w^2 \sigma_\epsilon^2 d^2)$.
Since $s_1^{(j)}(\tau) \geq \Theta(\rho)$ from $D(\tau)$, we choose $T_1$ such that \cref{eq:lower_i_w_j_j_t_benign_overfitting_stage_1_fifth} does not exceed \cref{eq:lower_i_w_j_j_t_benign_overfitting_stage_1_third} too much.
Specifically, we choose it so that we have
\begin{align}
\label{eq:lower_i_w_j_j_t_benign_overfitting_stage_1_t}
I_{j,j,t}^W(\tau)
&\geq -s_1^{(j)}(\tau) s_t^{(j)}(\tau) \cdot c_1^\prime \sigma_w^2 \sigma_\epsilon^2 \|\vnu\|_2 \|\vmu\|_2 d.
\end{align}
The upper bound is derived trivially.
The discussion for $I^W_{k,j,t}(\tau)$ and $I_{k,j,t}(\tau)$ for $k \neq j$ is proceeded in a similar way to \cref{eq:upper_w_j_plus_benign_overfitting_stage_1}, and we conclude the proof by repeating the argument in \cref{lem:noise_update_induction_not_overfitting}.
\end{proof}

\subsubsection{Analysis for Stage 2}
Next, we provide the results for Stage 2, i.e., $\tau \in [T_1, T_2]$ in the proof of \cref{lem:induction_benign_overfitting_stage_2}.
In Stage 2, the signal updates are dominated by the $s_1^{(i)}(\tau) (1 - s^{(i)}(\tau))$ for $i \in \gC$, as well as Stage 1 (\cref{lem:signal_update_induction_benign_overfitting_stage_2}).
The different part from Stage 1 is the behavior of noise memorization of noisy data, and the learning progresses in such a way that the confusing weakly relevant token, i.e., $\vx_2^{(j)}$, would be selected (\cref{lem:noise_update_induction_benign_overfitting_stage_2}).

\begin{lemma}[Signal updates in \cref{lem:induction_benign_overfitting_stage_2}]
\label{lem:signal_update_induction_benign_overfitting_stage_2}
Let $T_1$ and $T_2$ be the time steps in \cref{lem:induction_benign_overfitting_stage_2}, and let $\tau \in [T_1, T_2]$. 
Suppose that the conditions in \cref{thm:convergence} and $A(\tau)$, $B(\tau)$, $D(\tau)$, and $F(\tau)$ in \cref{lem:induction_benign_overfitting_stage_2} are satisfied.
Then, on a good run, there exists some constant $c > 0$ such that
\begin{align*}
\lambda_{+1}(\tau+1) - \lambda_{+1}(\tau)
&\geq s_1^{(i)}(\tau) ( 1 - s_1^{(i)}(\tau) ) \cdot c \alpha \|\vnu\|_2 \|\vmu\|_2^3 d \max\{ \sigma_w^2, \sigma_p^2 \}, \\
\lambda_{-1}(\tau+1) - \lambda_{-1}(\tau)
&\geq s_1^{(i)}(\tau) ( 1 - s_1^{(i)}(\tau) ) \cdot c \alpha \|\vnu\|_2 \|\vmu\|_2^3 d \max\{ \sigma_w^2, \sigma_p^2 \},
\end{align*}
for any $i \in \gC$.
Additionally, for some constant $c^\prime > c$, we have
\begin{align*}
\lambda_{+1}(\tau+1) - \lambda_{+1}(\tau)
&\leq s_1^{(i)}(\tau) ( 1 - s_1^{(i)}(\tau) ) \cdot c^\prime \alpha \|\vnu\|_2 \|\vmu\|_2^3 d \max\{ \sigma_w^2, \sigma_p^2 \}, \\
\lambda_{-1}(\tau+1) - \lambda_{-1}(\tau)
&\leq s_1^{(i)}(\tau) ( 1 - s_1^{(i)}(\tau) ) \cdot c^\prime \alpha \|\vnu\|_2 \|\vmu\|_2^3 d \max\{ \sigma_w^2, \sigma_p^2 \},
\end{align*}
for any $i \in \gC$.
\end{lemma}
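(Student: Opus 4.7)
The plan is to start from the one-step decomposition
\[
\lambda_{+1}(\tau+1)-\lambda_{+1}(\tau) \;=\; \frac{\alpha}{n}\sum_{k=1}^{n}(-\ell_k^{\prime}(\tau))\,Y^{(k)}\!\left(I_{k,+}^{W}(\tau)+\|\vp(\tau)\|_2^{2}\,I_{k,+}(\tau)\right)\;+\;\alpha^{2}\,\vmu_{+1}^{\top}\nabla_{\mW^{\top}}\widehat{\Ls}(\tau)\nabla_{\vp}\widehat{\Ls}(\tau),
\]
given by \cref{lem:update_signal_and_noise_attention}, and handle the clean and noisy indices separately. For a clean sample $i\in\gC$, the proposition $D(\tau)$ of \cref{lem:induction_benign_overfitting_stage_2} still provides $s_{1}^{(i)}(\tau)\ge\Theta(1)$, and the concentration inequalities $A(\tau)$, $B(\tau)$ allow the same token-by-token expansion of $I_{i,+}^{W}$ and $I_{i,+}$ as carried out in \cref{lem:signal_update_induction_not_overfitting,lem:signal_update_induction_benign_overfitting_stage_1}. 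Thus the clean contribution is bounded below by $s_{1}^{(i)}(1-s_{1}^{(i)})\cdot c\,\alpha\|\vnu\|_{2}\|\vmu\|_{2}^{3} d\,\max\{\sigma_{w}^{2},\sigma_{p}^{2}\}$ and above by the same quantity up to a larger constant, for every $i\in\gC$.

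The genuinely new step is the treatment of noisy indices $j\in\gN$, where the dominant softmax mass now sits on $\vx_{2}^{(j)}$ rather than on $\vx_{1}^{(j)}$. For $j\in\gN_{-}$ (so $\vx_{1}^{(j)}\sim\vmu_{-1}$ and $\vx_{2}^{(j)}\sim\rho\vmu_{+1}$) I would expand $I_{j,+}^{W}(\tau)$ as in the analogous displays of \cref{lem:signal_update_induction_not_overfitting,lem:signal_update_induction_benign_overfitting_stage_1}, but now grouping the pairs $(t,u)$ around the high-probability index $2$ rather than $1$. The resulting dominant term is
\[
s_{2}^{(j)}(\tau)(1-s_{2}^{(j)}(\tau))\,\gamma_{2}^{(j)}\,\langle\mW(\tau)\vx_{2}^{(j)},\mW(\tau)\vmu_{+1}\rangle \;=\; s_{2}^{(j)}(1-s_{2}^{(j)})\cdot\Theta\!\left(\rho^{2}\,\sigma_{w}^{2}\|\vnu\|_{2}\|\vmu\|_{2}^{3} d\right),
\]
and all remaining terms in $I_{j,+}^{W}$ and $I_{j,+}$ are of the same or smaller order after using \cref{lem:token_score}, the parameter assumption $\rho<1/C$, and the orthogonality $\langle\vmu_{+1},\vmu_{-1}\rangle=0$. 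Invoking $F(\tau)$ from \cref{lem:induction_benign_overfitting_stage_2}, which gives $s_{2}^{(k)}(1-s_{2}^{(k)})\le\rho^{-1}c_{8}\,s_{1}^{(i)}(1-s_{1}^{(i)})$ for any $i\in\gC,\,k\in\gN$, this noisy contribution is bounded in absolute value by $\rho\cdot s_{1}^{(i)}(1-s_{1}^{(i)})\cdot O(\sigma_{w}^{2}\|\vnu\|_{2}\|\vmu\|_{2}^{3} d)$, and thus is absorbed into the clean lower/upper bound by choosing $\rho<1/C$ sufficiently small. The same computation with $\vmu_{-1}$ in place of $\vmu_{+1}$ handles the update of $\lambda_{-1}$.

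Combining the clean and noisy contributions, using the class-balance estimates \cref{eq:good_run_clean,eq:good_run_noisy} and the loss-derivative ratio bound of \cref{lem:ratio_loss_derivative}, yields
\[
\lambda_{+1}(\tau+1)-\lambda_{+1}(\tau)\;\ge\; s_{1}^{(i)}(\tau)(1-s_{1}^{(i)}(\tau))\cdot c\,\alpha\|\vnu\|_{2}\|\vmu\|_{2}^{3} d\,\max\{\sigma_{w}^{2},\sigma_{p}^{2}\}\;+\;\alpha^{2}(\cdots),
\]
for every $i\in\gC$, and an analogous upper bound with a larger constant. The quadratic remainder is controlled exactly as in \cref{eq:quad_lambda_update_not_overfitting}: bounding $\|\nabla_{\mW}\widehat{\Ls}\,\vmu_{+1}\|_{2}\,\|\nabla_{\vp}\widehat{\Ls}\|_{2}$ via the Cauchy--Schwarz inequality together with $A(\tau)$, $B(\tau)$, $F(\tau)$, and invoking the step-size assumption $\alpha\le\max\{\|\vmu\|_{2}\sqrt{d},\sigma_{\epsilon}d\}^{-1}/C$ and $\|\vnu\|_{2}=O(1/\|\vmu\|_{2})$, this term is absorbed into the linear one.

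The main obstacle I anticipate is the noisy bookkeeping: because $s_{2}^{(j)}(1-s_{2}^{(j)})$ can exceed $s_{1}^{(i)}(1-s_{1}^{(i)})$ by a factor of $\rho^{-1}$, one must verify that every inner product $\langle\mW\vx_{t}^{(j)},\mW\vmu_{\pm1}\rangle$ that multiplies $s_{2}^{(j)}(1-s_{2}^{(j)})$ or $s_{2}^{(j)}s_{t}^{(j)}$ carries an explicit factor of at least $\rho$, so that the final noisy contribution is $O(\rho)$ relative to the clean contribution. The orthogonality of $\vmu_{+1}$ and $\vmu_{-1}$ is what makes this work; without it, an additional $\log(Tn/\delta)/\sqrt{d}$ term would appear, which remains $o(1)$ under Assumption~\ref{assump_d} and therefore still absorbable into the constant, as in \cref{eq:w_signal_concentration_1}.
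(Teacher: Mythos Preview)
Your overall strategy is right, but there is a genuine gap in how you organize and bound the noisy contributions.

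First, a labeling issue: for $j\in\gN_{-}$ one has $Y^{(j)}=-1$ and $Y^{*(j)}=+1$, so $\vx_{1}^{(j)}=\vmu_{+1}+\vepsilon_{1}^{(j)}$ and $\vx_{2}^{(j)}=\rho\vmu_{-1}+\vepsilon_{2}^{(j)}$, the opposite of what your parenthetical states. The group you actually describe is $\gN_{+}$.

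More importantly, the split you use (clean vs.\ noisy) is the wrong one for this lemma; the natural split, which the paper uses, is by the true label $Y^{*}$. For indices with $Y^{*}=-1$ (i.e.\ $\gC_{-}\cup\gN_{+}$), all inner products $\langle\mW\vx_{t}^{(k)},\mW\vmu_{+1}\rangle$ are indeed $O(\max\{\rho,o(1)\}\,\sigma_{w}^{2}\|\vmu\|_{2}^{2}d)$ and your $O(\rho)$ argument goes through. But for $j\in\gN_{-}$ (true label $+1$), the relevant token satisfies $\langle\mW\vx_{1}^{(j)},\mW\vmu_{+1}\rangle=\Theta(\sigma_{w}^{2}\|\vmu\|_{2}^{2}d)$ with \emph{no} $\rho$ factor. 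In your expansion around index $2$, this enters as $s_{2}^{(j)}s_{1}^{(j)}(\gamma_{1}^{(j)}-\gamma_{2}^{(j)})\langle\mW\vx_{1}^{(j)},\mW\vmu_{+1}\rangle$, which has magnitude $\Theta(s_{1}^{(j)})\cdot\Theta(\sigma_{w}^{2}\|\vnu\|_{2}\|\vmu\|_{2}^{3}d)$. By $F(\tau)$, $s_{1}^{(j)}\asymp s_{1}^{(j)}(1-s_{1}^{(j)})\asymp s_{1}^{(i)}(1-s_{1}^{(i)})$ for $i\in\gC$, so this term is the \emph{same order} as the clean contribution, not $O(\rho)$ times it. Your ``obstacle'' is real and your proposed resolution does not cover it.

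The fix is exactly what the paper does: show that for $j\in\gN_{-}$ one still has $I_{j,+}^{W}(\tau)\gtrsim s_{1}^{(i)}(1-s_{1}^{(i)})\cdot\sigma_{w}^{2}\|\vnu\|_{2}\|\vmu\|_{2}^{3}d$ (expanding around index $1$, with the fourth residual term controlled via $s_{2}^{(j)}(1-s_{2}^{(j)})\le\rho^{-1}c_{8}\,s_{1}^{(i)}(1-s_{1}^{(i)})$ together with the two $\rho$ factors it carries). This same-order positive quantity then enters the sum with $Y^{(j)}=-1$, so the $\gN_{-}$ block subtracts at the same scale as $\gC_{+}$ adds; it is absorbed not by a $\rho$ factor but by the sample-count ratio $|\gN_{-}|/|\gC_{+}|\lesssim\eta$ and Assumption~\ref{assump_noise_rate}.
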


\begin{proof}[Proof of \cref{lem:signal_update_induction_benign_overfitting_stage_2}]
We only provide the different part from \cref{lem:signal_update_induction_not_overfitting}.
Without loss of generality, we consider the case of $\lambda_{+1}(\tau)$.
In the summation over $i \in [n]$ in \cref{eq:lambda_update_not_overfitting}, we divide the terms into three groups: clean data $i \in \gC_+$, noisy data $j \in \gN_-$, and examples from a different class.

For clean data $i \in \gC_+$, the conditions $A(\tau)$, $B(\tau)$, and $D(\tau)$ give the same results as \cref{eq:lower_w_i_plus_not_overfitting,eq:lower_i_plus_not_overfitting}.
For noisy data $j \in \gN_-$, the ratios of softmax probabilities in $F(\tau)$ are significant. 
To derive the lower bound of signal updates, we use the results in $F(\tau)$; specifically, for any clean data $i \in \gC$, $t \in [T] \setminus \{1\}$, and $u \in [T] \setminus \{2\}$, $s_t^{(j)}(\tau)(1 - s_t^{(j)}(\tau)) \lesssim \rho^{-1} s_1^{(i)}(\tau)(1 - s_1^{(i)}(\tau))$, $s_u^{(j)}(\tau)(1 - s_u^{(j)}(\tau)) \lesssim s_2^{(j)}(\tau)(1 - s_2^{(j)}(\tau))$, and the balance between $s_1^{(j)}(\tau)(1 - s_1^{(j)}(\tau))$ and $s_1^{(i)}(\tau)(1 - s_1^{(i)}(\tau))$.
Under the conditions $A(\tau)$ and $B(\tau)$, we have the same lower bound of $j \in \gN_-$ as in \cref{eq:lower_w_i_plus_not_overfitting} as follows:
\begin{align}
I_{j, +}^W(\tau)
\label{eq:lower_w_i_plus_benign_overfitting_stage_2_first_line}
&= 
\sum_{t = 1}^T s_t^{(j)}(\tau) \left( \gamma_t^{(j)} - \sum_{u = 1}^T s_u^{(j)}(\tau) \gamma_u^{(j)} \right) \langle \mW(\tau) \vx_t^{(j)}, \mW(\tau) \vmu_{+1} \rangle \\
&= 
s_1^{(j)}(\tau) (1 - s_1^{(j)}(\tau)) \cdot \gamma_1^{(j)} \cdot \langle \mW(\tau) \vx_1^{(j)}, \mW(\tau) \vmu_{+1} \rangle \nonumber \\
&\quad - 
\sum_{t = 2}^T s_1^{(j)}(\tau) s_t^{(j)}(\tau) \cdot \gamma_t^{(j)} \cdot \langle \mW(\tau) \vx_1^{(j)}, \mW(\tau) \vmu_{+1} \rangle \nonumber \\
&\quad +
\sum_{t = 2}^T s_t^{(j)}(\tau) s_1^{(j)}(\tau) \cdot (\gamma_t^{(j)} - \gamma_1^{(j)}) \cdot \langle \mW(\tau) \vx_t^{(j)}, \mW(\tau) \vmu_{+1} \rangle \nonumber \\
&\quad + 
\sum_{t = 2}^T s_t^{(j)}(\tau) \sum_{u\in[T]\setminus\{1,t\}} s_u^{(j)}(\tau) \cdot (\gamma_t^{(j)} - \gamma_u^{(j)}) \cdot \langle \mW(\tau) \vx_t^{(j)}, \mW(\tau) \vmu_{+1} \rangle \\
&\geq 
s_1^{(j)}(\tau) (1 - s_1^{(j)}(\tau)) \cdot \gamma_1^{(j)} \cdot \Theta\left( \sigma_w^2 \|\vmu\|_2^2 d \right) \nonumber \\
&\quad -
s_1^{(j)}(\tau) (1 - s_1^{(j)}(\tau)) \cdot O( \rho \gamma_1^{(j)} ) \cdot \Theta\left( \sigma_w^2 \|\vmu\|_2^2 d \right) \nonumber \\
&\quad -
s_1^{(j)}(\tau) (1 - s_1^{(j)}(\tau)) \cdot \left( 1 + O(\rho) \right) \gamma_1^{(j)} \cdot O\left( \rho \sigma_w^2 \|\vmu\|_2^2 d  \right) \nonumber \\
\label{eq:lower_w_i_plus_benign_overfitting_stage_2_second_last}
&\quad - (T-1) \cdot s_2^{(j)}(\tau) (1 - s_2^{(j)}(\tau)) \cdot O( \rho \gamma_1^{(j)} ) \cdot O\left( \rho \sigma_w^2 \|\vmu\|_2^2 d \right) \\
\label{eq:lower_w_i_plus_benign_overfitting_stage_2}
&\gtrsim
s_1^{(i)}(\tau) (1 - s_1^{(i)}(\tau)) \cdot \sigma_w^2 \|\vnu\|_2\|\vmu\|_2^3 d,
\end{align}
where the second last line follows from $F(\tau)$ and $\rho < 1/C$ in the parameter assumptions.
The effects of data with different labels $k \in \gC_- \cup \gN_+$ are bounded similarly to \cref{eq:upper_w_j_plus_not_overfitting}.
From $B(\tau)$, we have
\begin{align}
| \langle \mW(\tau) \vx_1^{(k)}, \mW(\tau) \vmu_{+1} \rangle |
&= O\left(\max \left\{ \|\vmu\|_2^2 \sqrt{d}, \sigma_\epsilon \|\vmu\|_2 d \right\} \cdot \sigma_w^2 \log(Tn/\delta) \right) = O\left( \max\left\{ \rho, \frac{\log(Tn/\delta)}{\sqrt{d}} \right\} \sigma_w^2 \|\vmu\|_2^2 d \right), \\
| \langle \mW(\tau) \vx_t^{(k)}, \mW(\tau) \vmu_{+1} \rangle  |
&= O\left( \rho \sigma_w^2 \|\vmu\|_2^2 d \right), \\
| \langle \mW(\tau) \vx_u^{(k)}, \mW(\tau) \vmu_{+1} \rangle  |
&= O\left( \max \left\{\rho \|\vmu\|_2^2 \sqrt{d}, \sigma_\epsilon \|\vmu\|_2 d \right\} \cdot \sigma_w^2 \log(Tn/\delta) \right) = O\left( \rho \sigma_w^2 \|\vmu\|_2^2 d \right), \\
| \langle \mW(\tau) \vx_v^{(k)}, \mW(\tau) \vmu_{+1} \rangle  |
&= O\left( \sigma_w^2 \sigma_\epsilon \|\vmu\|_2 d \log(Tn/\delta) \right) = O\left( \rho \sigma_w^2 \|\vmu\|_2^2 d \right),
\end{align}
for $t \in \gW^{(k)}_{+1}$, $u \in \gW^{(k)}_{-1}$, and $v \in \gI^{(k)}$.
Similarly to \cref{eq:lower_w_i_plus_benign_overfitting_stage_2_first_line}, we have
\begin{align}
| I_{k, +}^W(\tau) |
&\leq 
s_1^{(k)}(\tau) (1 - s_1^{(k)}(\tau)) \cdot | \gamma_1^{(k)} | \cdot O\left( \max\left\{\rho, o(1) \right\} \sigma_w^2 \|\vmu\|_2^2 d  \right) \nonumber \\
&\quad +
s_1^{(k)}(\tau) (1 - s_1^{(k)}(\tau)) \cdot O( \rho | \gamma_1^{(k)} | ) \cdot O\left( \max\left\{ \rho, o(1) \right\} \sigma_w^2 \|\vmu\|_2^2 d  \right) \nonumber \\
&\quad +
s_1^{(k)}(\tau) (1 - s_1^{(k)}(\tau)) \cdot \left( 1 + O(\rho) \right) |\gamma_1^{(k)}| \cdot O\left( \rho \sigma_w^2 \|\vmu\|_2^2 d  \right) \nonumber \\
&\quad + (T-1) \cdot \max_{t \in [T] \setminus \{1\}} \left\{ s_t^{(k)}(\tau) (1 - s_t^{(k)}(\tau)) \right\} \cdot O( \rho | \gamma_1^{(k)} | ) \cdot O\left( \rho \sigma_w^2 \|\vmu\|_2^2 d  \right) \\
&\lesssim
s_1^{(i)}(\tau) (1 - s_1^{(i)}(\tau))  \cdot \max\left\{ \rho, o(1) \right\} \sigma_w^2 \|\vnu\|_2\|\vmu\|_2^3 d,
\end{align}
for any clean data $i \in \gC$, which follows from $F(\tau)$.
The same discussion is applied to $|I_{k,+}(\tau)|$, and the desired result is obtained from the balance between the numbers of clean and noisy data.
\end{proof}

\begin{lemma}[Noise updates of clean data in \cref{lem:induction_benign_overfitting_stage_2}]
\label{lem:noise_update_induction_benign_overfitting_stage_2}
Let $T_1$ and $T_2$ be the time steps in \cref{lem:induction_benign_overfitting_stage_2}, and let $\tau \in [T_1, T_2]$. 
Suppose that the conditions in \cref{thm:convergence} and $A(\tau)$, $B(\tau)$, $D(\tau)$, and $F(\tau)$ in \cref{lem:induction_benign_overfitting_stage_2} are satisfied.
Then, on a good run, there exists some constant $c, c^\prime > 0$ such that $c^\prime > c$, and
\begin{align*}
\rho_{i,1}(\tau+1) - \rho_{i,1}(\tau)
&\geq s_1^{(i)}(\tau) (1 - s_1^{(i)}(\tau)) \cdot c \alpha n^{-1} \sigma_\epsilon^2 \|\vnu\|_2 \|\vmu\|_2 d^2 \max\{ \sigma_w^2, \sigma_p^2 \}, \\
\rho_{i,t}(\tau+1) - \rho_{i,t}(\tau)
&\leq - s_1^{(i)}(\tau) s_t^{(i)}(\tau) \cdot c \alpha n^{-1} \sigma_\epsilon^2 \|\vnu\|_2 \|\vmu\|_2 d^2 \max\{ \sigma_w^2, \sigma_p^2 \}, \\
\rho_{i,1}(\tau+1) - \rho_{i,1}(\tau)
&\leq s_1^{(i)}(\tau) (1 - s_1^{(i)}(\tau)) \cdot c^\prime \alpha n^{-1} \sigma_\epsilon^2 \|\vnu\|_2 \|\vmu\|_2 d^2 \max\{ \sigma_w^2, \sigma_p^2 \}, \\
\rho_{i,t}(\tau+1) - \rho_{i,t}(\tau)
&\geq - s_1^{(i)}(\tau) s_t^{(i)}(\tau) \cdot c^\prime \alpha n^{-1} \sigma_\epsilon^2 \|\vnu\|_2 \|\vmu\|_2 d^2 \max\{ \sigma_w^2, \sigma_p^2 \}.
\end{align*}
for any clean data $i \in \gC$ and $t \in [T] \setminus \{1\}$.
For any noisy data $j \in \gN$, we have 
\begin{align*}
\rho_{j,1}(\tau+1) - \rho_{j,1}(\tau)
&\leq -s_1^{(j)}(\tau) (1 - s_1^{(j)}(\tau)) \cdot c \alpha n^{-1} \sigma_\epsilon^2 \|\vnu\|_2 \|\vmu\|_2 d^2 \max\{ \sigma_w^2, \sigma_p^2 \}, \\
\rho_{j,1}(\tau+1) - \rho_{j,1}(\tau)
&\geq -s_1^{(j)}(\tau) (1 - s_1^{(j)}(\tau)) \cdot c^\prime \alpha n^{-1} \sigma_\epsilon^2 \|\vnu\|_2 \|\vmu\|_2 d^2 \max\{ \sigma_w^2, \sigma_p^2 \},
\end{align*}
and
\begin{align*}
\rho_{j,2}(\tau+1) - \rho_{j,2}(\tau)
&\geq s_2^{(j)}(\tau) (1 - s_2^{(j)}(\tau)) \cdot c \rho \alpha n^{-1} \sigma_\epsilon^2 \|\vnu\|_2 \|\vmu\|_2 d^2 \max\{ \sigma_w^2, \sigma_p^2 \}, \\
\rho_{j,2}(\tau+1) - \rho_{j,2}(\tau)
&\leq s_2^{(j)}(\tau) (1 - s_2^{(j)}(\tau)) \cdot c^\prime \rho \alpha n^{-1} \sigma_\epsilon^2 \|\vnu\|_2 \|\vmu\|_2 d^2 \max\{ \sigma_w^2, \sigma_p^2 \},
\end{align*}
and
\begin{align*}
\rho_{j,t}(\tau+1) - \rho_{j,t}(\tau)
&\leq -s_2^{(j)}(\tau) s_t^{(j)}(\tau) \cdot c \rho \alpha n^{-1} \sigma_\epsilon^2 \|\vnu\|_2 \|\vmu\|_2 d^2 \max\{ \sigma_w^2, \sigma_p^2 \}, \\
\rho_{j,t}(\tau+1) - \rho_{j,t}(\tau)
&\geq -s_2^{(j)}(\tau) s_t^{(j)}(\tau) \cdot c^\prime \rho \alpha n^{-1} \sigma_\epsilon^2 \|\vnu\|_2 \|\vmu\|_2 d^2 \max\{ \sigma_w^2, \sigma_p^2 \},
\end{align*}
for any $t \in [T] \setminus \{1, 2\}$.
\end{lemma}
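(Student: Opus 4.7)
The proof will closely mirror the structure of \cref{lem:noise_update_induction_benign_overfitting_stage_1}, starting from the one-step update formulas in \cref{lem:update_signal_and_noise_attention}. For each quantity $\rho_{\ast,\ast}(\tau+1)-\rho_{\ast,\ast}(\tau)$, I expand as $\frac{\alpha}{n}\sum_k(-\ell'_k(\tau))Y^{(k)}(I^W_{k,\ast,\ast}(\tau)+\|\vp(\tau)\|_2^2 I_{k,\ast,\ast}(\tau))$ plus an $\alpha^2$ residual, then split the $k$-sum into the diagonal index (the one matching the noise subscript) and the off-diagonal indices. The bounds for $\rho_{i,1}$ and $\rho_{i,t}$ with $i\in\gC$, as well as the bound for $\rho_{j,1}$ with $j\in\gN$, transport verbatim from the Stage 1 proof: the dominant softmax factor is still $s_1^{(i)}(\tau)(1-s_1^{(i)}(\tau))$ or $s_1^{(j)}(\tau)(1-s_1^{(j)}(\tau))$, because the relevant-token attention gap $\Lambda_{i,t}$ continues to increase for clean data and the update for $\rho_{j,1}$ is governed by the same interaction chain that was analyzed in Stage 1. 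The quadratic-in-$\alpha$ residual is absorbed using the learning-rate assumption exactly as in \cref{lem:signal_update_induction_not_overfitting}.

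The new content is the $\rho_{j,2}$ and $\rho_{j,t}$ (for $t\in[T]\setminus\{1,2\}$) bounds, where the dominant softmax mass has transferred to $s_2^{(j)}(\tau)(1-s_2^{(j)}(\tau))$. Without loss of generality fix $j\in\gN_+$. For the diagonal piece $k=j$, I expand $I^W_{j,j,2}(\tau)$ as in \cref{lem:noise_update_induction_benign_overfitting_stage_1}, but isolate the contribution of $\vx_2^{(j)}$: by $B(\tau)$ the inner product $\langle\mW(\tau)\vx_2^{(j)},\mW(\tau)\vepsilon_2^{(j)}\rangle=\Theta(\sigma_w^2\sigma_\epsilon^2 d^2)$ is the unique $\Theta$-scale term while all other $\langle\mW(\tau)\vx_u^{(j)},\mW(\tau)\vepsilon_2^{(j)}\rangle$ with $u\neq 2$ are $o(n^{-1}\sigma_w^2\sigma_\epsilon^2 d^2)$. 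By \cref{lem:token_score}, $Y^{(j)}\cdot\gamma_2^{(j)}=\Theta(\rho\|\vnu\|_2\|\vmu\|_2)$, which is where the factor $\rho$ enters the final bound. Collecting terms gives a positive $s_2^{(j)}(1-s_2^{(j)})\cdot\gamma_2^{(j)}\cdot\Theta(\sigma_w^2\sigma_\epsilon^2 d^2)$ for $\rho_{j,2}$ and the analogous negative cross-term $-s_2^{(j)}s_t^{(j)}\gamma_2^{(j)}\cdot\Theta(\sigma_w^2\sigma_\epsilon^2 d^2)$ for $\rho_{j,t}$, $t\in[T]\setminus\{1,2\}$. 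The $\|\vp(\tau)\|_2^2 I_{j,j,2}(\tau)$ contribution is handled identically using \cref{lem:good_run} to evaluate $\langle\vx_2^{(j)},\vepsilon_2^{(j)}\rangle=\Theta(\sigma_\epsilon^2 d)$ and $A(\tau)$ to bound $\|\vp(\tau)\|_2^2=\Theta(\sigma_p^2 d)$, producing a contribution of the same order $\rho s_2^{(j)}(1-s_2^{(j)})\cdot\alpha n^{-1}\sigma_\epsilon^2\|\vnu\|_2\|\vmu\|_2 d^2\max\{\sigma_w^2,\sigma_p^2\}$.

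The main obstacle will be showing that the off-diagonal contributions from $k\neq j$ do not exceed the target $\rho$-scaled bound. Clean examples $k\in\gC$ have softmax mass concentrated on $\vx_1^{(k)}$ with the large token score $|\gamma_1^{(k)}|=\Theta(\|\vnu\|_2\|\vmu\|_2)$, so naively the factor of $\rho$ is at risk. The rescue is the second line of $F(\tau)$, which asserts $s_1^{(i)}(1-s_1^{(i)})\lesssim\rho\cdot s_2^{(j)}(1-s_2^{(j)})$; combined with the off-diagonal $B(\tau)$ bound $\langle\mW(\tau)\vx_\ast^{(k)},\mW(\tau)\vepsilon_2^{(j)}\rangle=o(n^{-1}\sigma_w^2\sigma_\epsilon^2 d^2)$ for $(k,\ast)\neq(j,2)$, each cross-example contribution is $o(\rho\cdot s_2^{(j)}(1-s_2^{(j)})\cdot\sigma_w^2\sigma_\epsilon^2\|\vnu\|_2\|\vmu\|_2 d^2)$ and is therefore absorbed after summation. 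Contributions from the remaining noisy examples $k\in\gN\setminus\{j\}$ fall under the same off-diagonal regime since their $\vx_2^{(k)}$ still satisfies $(k,2)\neq(j,2)$. Combining the diagonal and off-diagonal estimates using \cref{lem:ratio_loss_derivative} for the balance of $\ell_k'(\tau)$ and \cref{eq:good_run_clean,eq:good_run_noisy} for class balance yields the stated constants $c,c'>0$, and the $\alpha^2$ term is disposed of by the small-step-size calculation identical to \cref{eq:quad_lambda_update_not_overfitting}.
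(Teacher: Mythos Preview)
Your proposal follows the paper's approach and the overall skeleton is correct: decompose via \cref{lem:update_signal_and_noise_attention}, isolate the diagonal index, use $B(\tau)$ for inner products, $F(\tau)$ for softmax ratios, and absorb the $\alpha^2$ term. Two places require more care than you indicate.

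First, the clean-data bounds do \emph{not} transport verbatim from Stage~1. In Stage~2 the off-diagonal contributions from noisy $k\in\gN$ are concentrated on $\vx_2^{(k)}$, not $\vx_1^{(k)}$, so the Stage~1 dominance argument via $s_1^{(k)}(1-s_1^{(k)})$ does not apply. The paper fixes this with the $\rho^{-1}$ direction of $F(\tau)$: the softmax-weighted score for noisy $k$ is $\lesssim s_2^{(k)}(1-s_2^{(k)})\cdot\rho\|\vnu\|_2\|\vmu\|_2$ (cf.\ \cref{eq:induction_a_benign_overfitting_stage_2_softmax_noisy_coefficient}), and $s_2^{(k)}(1-s_2^{(k)})\lesssim\rho^{-1}s_1^{(i)}(1-s_1^{(i)})$ cancels the $\rho$ so the contribution is still $o$ of the diagonal.

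Second, and more important, your diagonal estimate for $\rho_{j,t}$ with $t\in[T]\setminus\{1,2\}$ misses a competing term of the \emph{same order but opposite sign}. The expansion of $I^W_{j,j,t}$ contains $s_t^{(j)}s_1^{(j)}(\gamma_t^{(j)}-\gamma_1^{(j)})\langle\mW(\tau)\vx_t^{(j)},\mW(\tau)\vepsilon_t^{(j)}\rangle$, which for $j\in\gN_+$ is $+s_t^{(j)}s_1^{(j)}\cdot\Theta(\|\vnu\|_2\|\vmu\|_2\sigma_w^2\sigma_\epsilon^2 d^2)$ since $\gamma_1^{(j)}<0$. With $s_1^{(j)}\sim\rho s_2^{(j)}$ this is the same order as your claimed dominant term $-s_2^{(j)}s_t^{(j)}\cdot\Theta(\rho\|\vnu\|_2\|\vmu\|_2\sigma_w^2\sigma_\epsilon^2 d^2)$, so the negative sign of the upper bound does not follow automatically. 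The paper handles this by the choice of $T_1$ made back in \cref{eq:lower_i_w_j_j_t_benign_overfitting_stage_1_t}, which ensures the implicit constant in $s_1^{(j)}\lesssim\rho s_2^{(j)}$ is small enough for the negative term to win; you should invoke this explicitly.
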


\begin{proof}[Proof of \cref{lem:noise_update_induction_benign_overfitting_stage_2}]
The proof for clean data $i \in \gC$ is essentially the same as the preceding proofs in \cref{lem:noise_update_induction_not_overfitting,lem:noise_update_induction_benign_overfitting_stage_1}.
The lower and upper bound of $I^W_{i,i,1}(\tau)$ and $I_{i,i,1}(\tau)$ is derived in the same way as \cref{eq:lower_i_w_k_i_1_not_overfitting,eq:lower_i_k_i_1_not_overfitting}.
As for $I^W_{k,i,1}(\tau)$ and $I_{k,i,1}(\tau)$ for $k \neq i$, the difficulty arises because $s_1^{(i)}(\tau) (1 - s_1^{(i)}(\tau))$ does not dominate for noisy data $k \in \gN$ like \cref{eq:bound_i_w_k_i_1_not_overfitting}.
However, it is resolved using the closer analysis as in \cref{eq:induction_a_benign_overfitting_stage_2_softmax_noisy_coefficient}.
Specifically, using the $s_2^{(k)}(\tau)(1 - s_2^{(k)}(\tau)) \lesssim \rho^{-1} s_1^{(i)}(\tau)(1 - s_1^{(i)}(\tau))$ and the balance between $s_1^{(k)}(\tau)(1 - s_1^{(k)}(\tau))$ and $s_1^{(i)}(\tau)(1 - s_1^{(i)}(\tau))$, which are derived from $F(\tau)$, we have that for $k \neq i$,
\begin{align}
| I_{k,i,1}^W(\tau) |
&\lesssim
s_1^{(i)}(\tau) (1 - s_1^{(i)}(\tau)) \cdot
o( n^{-1} \sigma_w^2 \sigma_\epsilon^2 \|\vnu\|_2 \|\vmu\|_2 d^2),
\end{align}
for any $i \in \gC$.
Therefore, we obtain the desired inequalities for $\rho_{i,1}(\tau)$ following the same discussion as in \cref{lem:noise_update_induction_not_overfitting}.
The updates of $\rho_{i,t}(\tau)$ are also derived in a similar way to \cref{lem:noise_update_induction_not_overfitting,lem:noise_update_induction_benign_overfitting_stage_1}.

For noisy data $j \in \gN$, the inequalities for $I^W_{j,j,1}(\tau)$ and $I_{j,j,1}(\tau)$ are derived using a similar argument to \cref{eq:lower_w_i_plus_benign_overfitting_stage_2}.
Therefore, the analysis is reduced to that of clean data, and the desired results for $\rho_{j,1}(\tau)$ are obtained by flipping the sign of updates.
We proceed with the analysis of $\rho_{j,t}(\tau)$, for $t \in [T] \setminus \{1\}$.
From \cref{lem:update_signal_and_noise_attention}, we have
\begin{align}
\label{eq:rho_j_t_update_benign_overfitting}
\rho_{j,t}(\tau+1) - \rho_{j,t}(\tau)
&= \frac{\alpha}{n} \sum_{k=1}^n (-\ell_k^\prime(\tau)) \cdot Y^{(k)} \cdot \left( I_{k,j,t}^W(\tau) + \|\vp(\tau)\|_2^2 I_{k,j,t}(\tau) \right) + \alpha^2 \vepsilon_t^{(j)\top} \nabla_{\mW^\top} \widehat{\Ls}(\tau) \nabla_\vp \widehat{\Ls}(\tau).
\end{align}

Without loss of generality, we consider $j \in \gN_+ = \{ i \in [n] \mid Y^{*(i)} = -1, Y^{(i)} = 1 \}$.
From the data model defined in \cref{def:data} and \cref{lem:token_score}, note that we have
\begin{gather}
\gamma_1^{(j)} = -\Theta(\|\vnu\|_2\|\vmu\|_2),
\ 
\gamma_2^{(j)} = \Theta( \rho \|\vnu\|_2\|\vmu\|_2), \\
\gamma_t^{(j)} = -\Theta( \rho \|\vnu\|_2\|\vmu\|_2),
\ 
|\gamma_u^{(j)}| = O( \sigma_\epsilon \|\vnu\|_2\sqrt{\log(Tn/\delta)}),
\end{gather}
for $t \in \gW_{-1}^{(j)}$ and $u \in \gI^{(j)}$.
Additionally, using $B(\tau)$, the current SNR condition $\snr^2 = o(n^{-1})$ and the parameter assumptions, we have
\begin{align}
\label{eq:w_noise_concentration_1_benign_overfitting_stage_2}
\| \mW(\tau) \vepsilon_t^{(j)} \|_2^2 &= \Theta(\sigma_w^2 \sigma_\epsilon^2 d^2), \\
\label{eq:w_noise_concentration_2_benign_overfitting_stage_2}
| \langle \mW(\tau) \vmu_{+1}, \mW(\tau) \vepsilon_t^{(j)} \rangle | &= O(\sigma_w^2 \sigma_\epsilon \|\vmu\|_2 d \log(Tn/\delta) ) = o( n^{-1} \rho \sigma_w^2 \sigma_\epsilon^2 d^2 ), \\
\label{eq:w_noise_concentration_3_benign_overfitting_stage_2}
| \langle \mW(\tau) \vmu_{-1}, \mW(\tau) \vepsilon_t^{(i)} \rangle | &= O(\sigma_w^2 \sigma_\epsilon \|\vmu\|_2 d \log(Tn/\delta) ) = o( n^{-1} \rho \sigma_w^2 \sigma_\epsilon^2 d^2 ), \\
\label{eq:w_noise_concentration_4_benign_overfitting_stage_2}
| \langle \mW(\tau) \vepsilon_u^{(k)}, \mW(\tau) \vepsilon_t^{(j)} \rangle | &= O(\sigma_w^2 \sigma_\epsilon^2 d^{3/2} \log(Tn/\delta) ) = o( n^{-1} \rho \sigma_w^2 \sigma_\epsilon^2 d^2 ),
\end{align}
for any $k \in [n]$ and $u \in [T]$ such that $(k,u) \neq (j,t)$.
Therefore, when $k = j$, we have
\begin{align}
I_{j,j,2}^W(\tau)
&= 
\sum_{v = 1}^T s_v^{(j)}(\tau) \left( \gamma_v^{(j)} - \sum_{u = 1}^T s_u^{(j)}(\tau) \gamma_u^{(j)} \right) \langle \mW(\tau) \vx_v^{(j)}, \mW(\tau) \vepsilon_2^{(j)} \rangle \\
\label{eq:lower_i_w_j_j_2_benign_overfitting_second_line}
&= 
s_2^{(j)}(\tau) (1 - s_2^{(j)}(\tau)) \cdot \gamma_2^{(j)} \cdot \langle \mW(\tau) \vx_2^{(j)}, \mW(\tau) \vepsilon_2^{(j)} \rangle \nonumber \\
&\quad - 
s_2^{(j)}(\tau) \cdot 
\left( s_1^{(j)}(\tau) \gamma_1^{(j)} + \sum_{u \in [T] \setminus \{1, 2\} } s_u^{(j)}(\tau) \gamma_u^{(j)} \right) \cdot \langle \mW(\tau) \vx_2^{(j)}, \mW(\tau) \vepsilon_2^{(j)} \rangle \nonumber \\
&\quad +
\sum_{v = [T] \setminus \{2\} } s_v^{(j)}(\tau) s_2^{(j)}(\tau) \cdot (\gamma_v^{(j)} - \gamma_2^{(j)}) \cdot \langle \mW(\tau) \vx_v^{(j)}, \mW(\tau) \vepsilon_2^{(j)} \rangle \nonumber \\
&\quad + 
\sum_{v = [T] \setminus \{2\} } s_v^{(j)}(\tau) \sum_{u \in[T]\setminus\{2,v\}} s_u^{(j)}(\tau) \cdot (\gamma_v^{(j)} - \gamma_u^{(j)}) \cdot \langle \mW(\tau) \vx_v^{(j)}, \mW(\tau) \vepsilon_2^{(j)} \rangle \\
&\geq 
s_2^{(j)}(\tau) (1 - s_2^{(j)}(\tau)) \cdot \Theta\left( \rho \|\vnu\|_2 \|\vmu\|_2 \sigma_w^2 \sigma_\epsilon^2 d^2 \right) \nonumber \\
&\quad - s_2^{(j)}(\tau) \cdot \left( - \rho (1 - s_2^{(j)}(\tau)) \cdot \Theta(\|\vnu\|_2\|\vmu\|_2) + (1 -  s_2^{(j)}(\tau))  \cdot O(\sigma_\epsilon \|\vnu\|_2\sqrt{\log(Tn/\delta)}) \right) \cdot \Theta(\sigma_w^2 \sigma_\epsilon^2 d^2) \nonumber \\
&\quad -
s_2^{(j)}(\tau) (1 - s_2^{(j)}(\tau)) \cdot o(\rho \|\vnu\|_2 \|\vmu\|_2 \sigma_w^2 \sigma_\epsilon^2 d^2) \\
&\geq s_2^{(j)}(\tau) (1 - s_2^{(j)}(\tau)) \cdot c_1^\prime \rho \sigma_w^2 \sigma_\epsilon^2 \|\vnu\|_2 \|\vmu\|_2 d^2,
\end{align}
for some constant $c_1^\prime > 0$.
In the first inequality, we used $s_1^{(j)}(\tau) \lesssim \rho (1 - s_2^{(j)}(\tau))$, which is the result in $F(\tau)$, and $s_2^{(j)}(\tau) \geq \Theta(1)$ from $D(\tau)$.
From the parameter assumptions, the first line and the first term of the second line in \cref{eq:lower_i_w_j_j_2_benign_overfitting_second_line} become dominant, leading to the result.
Since $s_2^{(j)}(\tau) ( 1 - s_2^{(j)}(\tau))$ dominates $\rho^{-1} s_1^{(j)}(\tau)( 1 - s_1^{(j)}(\tau) )$ and $s_t^{(i)}(\tau)( 1 - s_t^{(i)}(\tau) )$ for any $t \in [T] \setminus \{1, 2\}$, which follows from $F(\tau)$, similarly to \cref{eq:upper_w_j_plus_benign_overfitting_stage_1}, we have that for $k \neq j$,
\begin{align}
\label{eq:bound_j_w_k_j_2_benign_overfitting_stage_2}
|I_{k,j,2}^W(\tau)|
<
s_2^{(j)}(\tau) (1 - s_2^{(j)}(\tau)) \cdot
o( n^{-1} \rho \sigma_w^2 \sigma_\epsilon^2 \|\vnu\|_2 \|\vmu\|_2 d^2).
\end{align}
The arguments for $|I_{k,j,2}(\tau)|$ and the upper bounds follow similarly, and substituting them into \cref{eq:rho_j_t_update_benign_overfitting} yields the desired result for $\rho_{j,2}(\tau)$.

Finally, we analyze the update of $\rho_{j,t}(\tau)$ for $j \in \gN_-$ and $t \in [T]\setminus \{1, 2\}$.
Similarly, using \cref{eq:w_noise_concentration_1_benign_overfitting_stage_2,eq:w_noise_concentration_2_benign_overfitting_stage_2,eq:w_noise_concentration_3_benign_overfitting_stage_2,eq:w_noise_concentration_4_benign_overfitting_stage_2}, there exists a constant $c_2^\prime$ such that
\begin{align}
I_{j,j,t}^W(\tau)
&= 
\sum_{v = 1}^T s_v^{(j)}(\tau) \left( \gamma_v^{(j)} - \sum_{u = 1}^T s_u^{(j)}(\tau) \gamma_u^{(j)} \right) \langle \mW(\tau) \vx_v^{(j)}, \mW(\tau) \vepsilon_t^{(j)} \rangle \\
&= 
s_2^{(j)}(\tau) (1 - s_2^{(j)}(\tau)) \cdot \gamma_2^{(i)} \cdot \langle \mW(\tau) \vx_2^{(j)}, \mW(\tau) \vepsilon_t^{(j)} \rangle \nonumber \\
&\quad - 
\sum_{u = [T] \setminus\{2\}} s_2^{(j)}(\tau) s_u^{(j)}(\tau) \cdot \gamma_u^{(j)} \cdot \langle \mW(\tau) \vx_2^{(j)}, \mW(\tau) \vepsilon_t^{(j)} \rangle \nonumber \\
&\quad +
\sum_{v \in [T]\setminus\{2\}} s_v^{(j)}(\tau) s_2^{(j)}(\tau) \cdot (\gamma_v^{(j)} - \gamma_2^{(j)}) \cdot \langle \mW(\tau) \vx_v^{(j)}, \mW(\tau) \vepsilon_t^{(j)} \rangle \nonumber \\
&\quad + 
\sum_{v \in [T] \setminus \{2\}} s_v^{(j)}(\tau) \sum_{u\in[T]\setminus\{2,v\}} s_u^{(j)}(\tau) \cdot (\gamma_v^{(j)} - \gamma_u^{(j)}) \cdot \langle \mW(\tau) \vx_v^{(j)}, \mW(\tau) \vepsilon_t^{(j)} \rangle \\
&\leq 
s_2^{(j)}(\tau) (1 - s_2^{(j)}(\tau)) \cdot o\left( \rho \|\vnu\|_2 \|\vmu\|_2 \cdot n^{-1} \rho \sigma_w^2 \sigma_\epsilon^2 d^2 \right) \nonumber \\
&\quad 
+ s_2^{(j)}(\tau) (1 - s_2^{(j)}(\tau)) \cdot o\left(\rho\|\vnu\|_2\|\vmu\|_2 \cdot n^{-1} \rho \sigma_w^2 \sigma_\epsilon^2 d^2 \right) \nonumber \\
\label{eq:lower_i_w_j_j_t_benign_overfitting_stage_2_third}
&\quad
+ s_2^{(j)}(\tau) \cdot s_t^{(j)}(\tau) \cdot \left( -\Theta(\rho\|\vnu\|_2\|\vmu\|_2) \right) \cdot \Theta(\sigma_w^2 \sigma_\epsilon^2 d^2) \\
&\qquad
+ s_2^{(j)}(\tau) ( 1 -  s_2^{(j)}(\tau) - s_t^{(j)}(\tau) ) \cdot \left( - o\left( \|\vnu\|_2\|\vmu\|_2 \cdot n^{-1} \rho \sigma_w^2 \sigma_\epsilon^2 d^2 \right) \right) \nonumber \\
\label{eq:lower_i_w_j_j_t_benign_overfitting_stage_2_fifth}
&\quad
+ s_t^{(j)}(\tau) s_1^{(j)}(\tau) \cdot \Theta(\|\vnu\|_2\|\vmu\|_2) \cdot \Theta( \sigma_w^2 \sigma_\epsilon^2 d^2 ) \\
\label{eq:lower_i_w_j_j_t_benign_overfitting_stage_2_sixth}
&\qquad + s_t^{(j)}(\tau) ( 1 - s_1^{(j)}(\tau) - s_2^{(j)}(\tau) - s_t^{(j)}(\tau) ) \cdot \left( - \Theta(\rho \|\vnu\|_2\|\vmu\|_2) \right) \cdot \Theta( \sigma_w^2 \sigma_\epsilon^2 d^2 ) \\
&\qquad + (1 - s_2^{(j)}(\tau) - s_t^{(j)}(\tau))^2 \cdot o\left( \|\vnu\|_2\|\vmu\|_2 \cdot n^{-1} \rho \sigma_w^2 \sigma_\epsilon^2 d^2 \right) \nonumber \\
&\leq -s_2^{(j)}(\tau) s_t^{(j)}(\tau) \cdot c_2^\prime \rho \sigma_w^2 \sigma_\epsilon^2 \|\vnu\|_2 \|\vmu\|_2 d^2.
\end{align}
In the first inequality, \cref{eq:lower_i_w_j_j_t_benign_overfitting_stage_2_third,eq:lower_i_w_j_j_t_benign_overfitting_stage_2_fifth,eq:lower_i_w_j_j_t_benign_overfitting_stage_2_sixth} become dominant.
By the choice of $T_1$ in \cref{eq:lower_i_w_j_j_t_benign_overfitting_stage_1_t}, $s_1^{(j)}(\tau) \lesssim \rho s_2^{(j)}(\tau)$ and \cref{eq:lower_i_w_j_j_t_benign_overfitting_stage_2_fifth} is bounded by other two terms.
This leads to the last line.
The same argument is applied to the upper bound and inequalities for $I_{j,j,t}(\tau)$, we obtain the desired result.
\end{proof}

\section{Further Discussion}
\label{sec:further_discussion}

\subsection{Multi-class Setting}
\label{sec:multi_class}
In this section, we will see that the analysis in the multi-class setting is a straightforward extension of the binary setting studied in the main paper. 

Let $K$ be the number of classes and $\mW_V = (\vnu_1, \ldots, \vnu_K) \in \R^{d\times K}$ be the weight matrix.
The model output is given by
\begin{align}
f(\mX) = \mW_V^\top \mX^\top \mathbb{S} \left( \mX \mW^\top \vp \right) \in \R^K.
\end{align}
Let $\{\vmu_k\}_{k \in [K]}$ be signal vectors corresponding to each class, and we consider the data distribution $P$, which is modified for the multi-class setting from \cref{def:data}.
Here, we assume the orthogonality and norm equality among signal vectors.
Furthermore, we assume the pretrained linear head satisfies $\cos \theta_k > \Theta(1)$, which is modified version of \cref{eq:pretrained_head}, for $\vnu_k$ and the class signal $\vmu_k$, for all $k \in [K]$.

The concentration inequalities in \cref{lem:good_run} are derived by taking a union bound for class numbers and appropriately updating the parameter assumptions to depend on $K$.
The proof of \cref{thm:convergence} is based on the analysis of the empirical loss function on the training data $S$ sampled i.i.d. from $P$.
The empirical loss function when using cross-entropy loss with softmax output is: 
\begin{align}
\widehat{\Ls}(\mW, \vp) 
&= -\frac{1}{n}\sum_{i=1}^n \log \left( \frac{ \exp \left( f(\mX^{(i)})_{Y^{(i)}} \right) }{\sum_{k\in[K]} \exp\left( f(\mX^{(i)})_k \right) } \right) \\
\label{eq:empirical_loss_multi_class}
&= \frac{1}{n}\sum_{i=1}^n \log \left( \sum_{k=1}^K \exp \left( \left( \vnu_k - \vnu_{Y^{(i)}} \right)^\top \mX^{(i)\top} \mathbb{S}\left( \mX^{(i)} \mW^\top \vp \right) \right) \right).
\end{align}
The derivatives of the empirical loss function are given by
\begin{align}
& \nabla_{\mW^\top} \widehat{\Ls}(\mW, \vp) \nonumber \\
&= \frac{1}{n} \sum_{i=1}^n \frac{\sum_{k=1}^K \exp \left( \left( \vnu_k - \vnu_{Y^{(i)}} \right)^\top \mX^{(i)\top} \mathbb{S}\left( \mX^{(i)} \mW^\top \vp \right) \right) \nabla_{\mW^\top} \left( \vnu_k - \vnu_{Y^{(i)}} \right)^\top \mX^{(i)\top} \mathbb{S}\left( \mX^{(i)} \mW^\top \vp \right) }{\sum_{l=1}^K \exp \left( \left( \vnu_l - \vnu_{Y^{(i)}} \right)^\top \mX^{(i)\top} \mathbb{S}\left( \mX^{(i)} \mW^\top \vp \right) \right)} \\
\label{eq:grad_w_empirical_loss_multi_class}
&= \frac{1}{n} \sum_{i=1}^n \sum_{k=1}^K \frac{\mX^{(i)\top} \left( \diag(\mathbb{S}(\mX^{(i)}\mW^\top \vp)) - \mathbb{S}(\mX^{(i)}\mW^\top \vp)\mathbb{S}(\mX^{(i)}\mW^\top \vp)^\top  \right) \mX^{(i)} \left( \vnu_k - \vnu_{Y^{(i)}} \right) \vp^\top }{\sum_{l=1}^K \exp \left( \left( \vnu_l - \vnu_k \right)^\top \mX^{(i)\top} \mathbb{S}\left( \mX^{(i)} \mW^\top \vp \right) \right)},
\end{align}
and
\begin{align}
& \nabla_\vp \widehat{\Ls}(\mW, \vp) \nonumber \\
&= \frac{1}{n} \sum_{i=1}^n \frac{\sum_{k=1}^K \exp \left( \left( \vnu_k - \vnu_{Y^{(i)}} \right)^\top \mX^{(i)\top} \mathbb{S}\left( \mX^{(i)} \mW^\top \vp \right) \right) \nabla_\vp \left( \vnu_k - \vnu_{Y^{(i)}} \right)^\top \mX^{(i)\top} \mathbb{S}\left( \mX^{(i)} \mW^\top \vp \right) }{\sum_{l=1}^K \exp \left( \left( \vnu_l - \vnu_{Y^{(i)}} \right)^\top \mX^{(i)\top} \mathbb{S}\left( \mX^{(i)} \mW^\top \vp \right) \right)} \\
\label{eq:grad_p_empirical_loss_multi_class}
&= \frac{1}{n} \sum_{i=1}^n \sum_{k=1}^K \frac{\mW \mX^{(i)\top} \left( \diag(\mathbb{S}(\mX^{(i)}\mW^\top \vp)) - \mathbb{S}(\mX^{(i)}\mW^\top \vp)\mathbb{S}(\mX^{(i)}\mW^\top \vp)^\top  \right) \mX^{(i)} \left( \vnu_k - \vnu_{Y^{(i)}} \right) }{\sum_{l=1}^K \exp \left( \left( \vnu_l - \vnu_k \right)^\top \mX^{(i)\top} \mathbb{S}\left( \mX^{(i)} \mW^\top \vp \right) \right)}.
\end{align}

Since we have
\begin{align}
\frac{-1}{\sum_{l=1}^K \exp \left( \left( \vnu_l - \vnu_k \right)^\top \mX^{(i)\top} \mathbb{S}\left( \mX^{(i)} \mW^\top \vp \right) \right)}
&= 
\frac{-1}{1 + \sum_{l \in [K]\setminus\{k\}} \exp \left( \left( \vnu_l - \vnu_k \right)^\top \mX^{(i)\top} \mathbb{S}\left( \mX^{(i)} \mW^\top \vp \right) \right)},
\end{align}
this term corresponds to $\ell_i^\prime$ in \cref{eq:gradient_w,eq:gradient_p}. 
Under the scale condition of the linear head, this term becomes constant order as discussed in \cref{lem:ratio_loss_derivative}.
Additionally, using $\sum_{k=1}^K \left( \vnu_{Y^{(i)}} - \vnu_k \right) = K \left( \vnu_{Y^{(i)}} - \frac{1}{K} \sum_{k\in[K]} \vnu_k \right)$, we can confirm that \cref{eq:grad_w_empirical_loss_multi_class,eq:grad_p_empirical_loss_multi_class} correspond to \cref{eq:gradient_w,eq:gradient_p}.
The remaining proof is based on the equations of gradient updates and the results of concentration inequalities; therefore, the statement essentially does not change in the multi-class case.

\subsection{Head Optimization}
\label{sec:head_optimization}

In this section, we provide further discussion on \cref{eq:pretrained_head} in our problem setting.
For the sake of generality, we consider the $K$-class classification setting following \cref{sec:multi_class}.

The problem setting in \cref{eq:pretrained_head} specifies the alignment between the classification head $\vnu$ and the class signals.
It is necessary to appropriately assign token scores, which represent the desirability of each token for solving the task, and to formulate the token selection problem.
Since the analysis in this paper can generally be interpreted as an analysis of token selection dynamics under fixed token scores, our result may provide broader insights into attention mechanism across different problem settings, data and model architecture settings.
However, it is an interesting question to what extent the problem setting in \cref{eq:pretrained_head} is justified from the perspective of practical scenarios.

In the following proposition, we show that the gradient direction of the expected risk at zero initialization of the weights forms an equiangular tight frame (ETF) with class vectors \citep{papyan2020prevalence}.
This corresponds to the most extreme case of the alignment condition between the linear head and the class vectors described in \cref{eq:pretrained_head}.
Note that the signal vectors have equal lengths and orthogonality from the definition of our data model.
In \cref{def:data}, we assumed that among the weakly relevant tokens, there was only one token for each class different from the true class, namely the confusing tokens.
Here, we assume a more general setting that each weakly relevant token $\vx_u$ aligns with a class $k$ uniformly sampled from $[K]$, that is, $\vx_u = \rho \vmu_k + \epsilon_u$ for $u \in \gW$.

\begin{proposition}
\label{prop:head_alignment}
Suppose that the weights $\vp$, $\mW$, and $\mW_V$ are initialized to zero.
We also consider a generalized version of the data model in \cref{def:data}, where each weakly relevant token aligns with a class $k$ uniformly sampled from $[K]$.
Then, the gradient descent direction of the expected risk for $\mW_V$ forms an ETF geometry consisting of the signal vectors, i.e., we have
\begin{align}
- \left( \nabla_{\mW_V} \Ls(\mW_V = \bm{0}) \right)^\top \propto \left( \mI_K - \frac{1}{K}\1_K \1_K^\top \right) \left( \vmu_1, \ldots, \vmu_K \right)^\top,
\end{align}
where $\Ls(\mW_V) \coloneqq \E_{(\mX, Y)\sim P}\left[ \widehat{\Ls}(\mW_V) \right]$.
\end{proposition}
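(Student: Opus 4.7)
The plan is to reduce the computation of $\nabla_{\mW_V}\mathcal{L}(\mW_V=\bm{0})$ to a centered class-mean expression, then factor out the projector $\mI_K - \frac{1}{K}\1_K\1_K^\top$. First I would observe that because $\vp = \bm{0}$ and $\mW = \bm{0}$, we have $\mX\mW^\top\vp = \bm{0}$ for every $\mX$, hence $\mathbb{S}(\mX\mW^\top\vp) = \frac{1}{T}\1_T$ is uniform. Setting $\bar{\vx} := \frac{1}{T}\sum_{t=1}^T \vx_t$, the model output simplifies to $f(\mX) = \mW_V^\top \bar{\vx}$, which vanishes at $\mW_V = \bm{0}$. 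Then the softmax over classes is also uniform, and the standard cross-entropy identity gives $\nabla_f \ell = \frac{1}{K}\1_K - \ve_Y$. Applying the chain rule to $f = \mW_V^\top \bar{\vx}$ yields $\nabla_{\mW_V}\widehat{\mathcal{L}} = \bar{\vx}\bigl(\frac{1}{K}\1_K - \ve_Y\bigr)^\top$, so
\begin{equation*}
-\nabla_{\mW_V}\mathcal{L}(\bm{0})^\top = \mathbb{E}\bigl[\bigl(\ve_Y - \tfrac{1}{K}\1_K\bigr)\bar{\vx}^\top\bigr].
\end{equation*}

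Next I would introduce the class-conditional mean $\bar{\vmu}_k := \mathbb{E}[\bar{\vx}\mid Y=k]$ and, assuming symmetric label noise (uniform over the $K-1$ incorrect classes) and a uniform class prior, use the law of total expectation to write this row-by-row. Under the generalized data model, $\mathbb{E}[\bar{\vx}\mid Y^*=k^*] = \frac{1}{T}\vmu_{k^*} + \frac{\rho|\mathcal{W}|}{TK}\sum_l \vmu_l$ because each weakly relevant token draws its class uniformly from $[K]$. Symmetry of the noise kernel then gives $\bar{\vmu}_k = a\,\vmu_k + b\sum_l \vmu_l$ for constants $a,b$ independent of $k$, where $a > 0$ whenever the noise rate is below the uninformative threshold. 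Writing $U := (\vmu_1,\ldots,\vmu_K)^\top$, the matrix $M$ whose $k$th row is $\bar{\vmu}_k^\top$ factors as $M = (a\mI_K + b\1_K\1_K^\top)U$.

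Combining these, the gradient becomes $-\nabla_{\mW_V}\mathcal{L}(\bm{0})^\top = \frac{1}{K}\bigl(\mI_K - \frac{1}{K}\1_K\1_K^\top\bigr)M$. The final step is the short algebraic simplification using $\1_K^\top\1_K = K$:
\begin{equation*}
\bigl(\mI_K - \tfrac{1}{K}\1_K\1_K^\top\bigr)\bigl(a\mI_K + b\1_K\1_K^\top\bigr) = a\bigl(\mI_K - \tfrac{1}{K}\1_K\1_K^\top\bigr),
\end{equation*}
since the $b$ contribution annihilates against the centering projector. This yields $-\nabla_{\mW_V}\mathcal{L}(\bm{0})^\top = \frac{a}{K}\bigl(\mI_K - \frac{1}{K}\1_K\1_K^\top\bigr)U$, matching the claimed ETF geometry up to the positive scalar $a/K$.

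The main obstacle is making precise the symmetry assumption on label noise in the multi-class extension, which is only implicit in the binary model of Definition~\ref{def:data}; I would explicitly assume a symmetric noise kernel so that the Bayes-inverse $P(Y^* = k^* \mid Y = k)$ depends only on whether $k^* = k$, since without this the rows of $M$ need not share the same $(a,b)$ and the centering identity collapses only to a weaker statement. Everything else is direct computation: the uniform softmax at zero initialization removes all dependence on the attention parameters, and the ETF structure is essentially forced by the invariance of cross-entropy gradients under the action of the centering projector.
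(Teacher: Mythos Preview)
Your proposal is correct and follows essentially the same approach as the paper: both arguments observe that zero initialization forces uniform attention so that $f(\mX)=\mW_V^\top\bar{\vx}$, compute the cross-entropy gradient at the uniform softmax, condition on $Y=k$ using the symmetric noise kernel to obtain class-conditional means of the form $a\vmu_k + b\sum_l\vmu_l$, and then observe that the $b$-term cancels under centering. The only difference is packaging: the paper works column-by-column and shows $-\nabla_{\vnu_k}\Ls \propto \vmu_k-\bar{\vmu}$ by subtracting $\E[\bar{\vx}\mid Y=k]-\E[\bar{\vx}\mid Y\neq k]$ directly, whereas you write the same computation in matrix form and use the algebraic identity $(\mI_K-\tfrac{1}{K}\1_K\1_K^\top)(a\mI_K+b\1_K\1_K^\top)=a(\mI_K-\tfrac{1}{K}\1_K\1_K^\top)$, which is a cleaner way to exhibit the ETF projector but not a different idea.
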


\begin{proof}
Taking the gradient of \cref{eq:empirical_loss_multi_class}, the gradient of expected loss at $\mW_V = \bm{0}$ is given by:
\begin{align}
-\nabla_{\vnu_k} \Ls \left( \mW_V = \bm{0} \right) 
&= -\E_{(\mX, Y)\sim P} 
\left[ \sum_{k^\prime\in[K]} \frac{\nabla_{\vnu_k} \left( f(\mX)_{k^\prime} - f(\mX)_Y \right) }{\sum_{c \in [K]} \exp \left( \left( \vnu_c - \vnu_{k^\prime} \right)^\top \mX^\top \mathbb{S}\left( \mX \mW^\top \vp \right) \right)} \Bigg|_{\mW_V = 0} \right] \\
&= -\frac{1}{K} \E_{(\mX, Y)\sim P} 
\left[ \frac{1}{T}\mX^\top \1 \cdot \left( 1 - K \1_{k = Y} \right) \right] \\
&= -\frac{1}{KT} \left( \frac{1}{K}(1-K) \cdot \E \left[ \sum_{t} \vx_t \mid Y = k \right] + \frac{K-1}{K} \E \left[ \sum_{t} \vx_t \mid Y \neq k \right] \right) \\
\label{eq:fixed_linear_head_last_line}
&= \frac{K-1}{K^2T} \left( \E \left[ \sum_{t} \vx_t \mid Y = k \right] - \E \left[ \sum_{t} \vx_t \mid Y \neq k \right] \right),
\end{align}
where we changed the order of gradient and integral in the first line, and in the second equality, we denote by $\1_A$ the indicator function which returns $1$ if the event $A$ is satisfied and otherwise returns $0$. 
The third line follows that $Y^*$ is sampled from a uniform distribution over $[K]$, and label noise is added to different labels uniformly. 

Let $\bar{\vmu}$ be the mean of class signals $\sum_{k \in [K]} \vmu_k / K$. 
We have
\begin{align}
\E \left[ \sum_{t} \vx_t \mid Y = k \right]
&= (1 - \eta) \E \left[ \sum_{t} \vx_t \mid Y^* = k \right] + \frac{\eta}{K-1} \sum_{k^\prime \in [K] \setminus \{k\}} \E \left[ \sum_{t} \vx_t \mid Y^* = k^\prime \right] \\
&= (1 - \eta) \left( \vmu_k + \frac{|\gW|}{T} \rho \bar{\vmu} \right)
+ \frac{\eta}{K-1} \sum_{k^\prime \in [K] \setminus \{k\}} \left( \vmu_{k^\prime} + \frac{|\gW|}{T} \rho \bar{\vmu} \right) \\
&= (1 - \eta) \vmu_k + \frac{\eta}{K-1}\left( K \bar{\vmu} - \vmu_k \right) + |\gW| \rho \bar{\vmu}  \\
&= \left( 1 - \frac{K}{K-1}\eta \right) \vmu_k + \left( \frac{K}{K-1} \eta + |\gW| \rho \right) \bar{\vmu},
\end{align}
where we used the definition of our data model.
Therefore, \cref{eq:fixed_linear_head_last_line} leads to
\begin{align}
-\nabla_{\vnu_k} \Ls \left( \mW_V = \bm{0} \right) 
&= \frac{K-1}{K^2T} \Bigg\{ 
\left( 1 - \frac{K}{K-1}\eta \right) \vmu_k + \left( \frac{K}{K-1} \eta + |\gW| \rho \right) \bar{\vmu} \nonumber \\
&\quad - \frac{1}{K-1} \sum_{k^\prime \in [K] \setminus \{k\}}
\left(
\left( 1 - \frac{K}{K-1}\eta \right) \vmu_{k^\prime} + \left( \frac{K}{K-1} \eta + |\gW| \rho \right) \bar{\vmu}
\right)
\Bigg\} \\
&= \frac{K-1}{K^2T} \cdot 
\left( 1 - \frac{K}{K-1}\eta \right) \left( \vmu_k - \frac{1}{K-1} \left( K\bar{\vmu} - \vmu_k \right) \right) \\
&= \frac{K-1}{K^2T} \left( 1 - \frac{K}{K-1}\eta \right) \frac{K}{K-1} \left( \vmu_k - \bar{\vmu} \right).
\end{align}
Consequently, we have $-\nabla_{\vnu_k} \Ls \left( \mW_V = \bm{0} \right) \propto \vmu_k - \bar{\vmu}$, which concludes the proof.
\end{proof}

This proposition is the result for the case of zero initialization.
However, by taking the variance of the initialization sufficiently small, as in Assumption~\ref{assump_variance_wp}, similar alignment with the class signal can be achieved under random initialization.

While we have considered the gradient of the idealized expected loss, it is natural to ask about the gradient of the empirical loss.
When using the same training set $S$ as in the main theorem, $\vnu$ contains the noise $\vepsilon_t^{(i)}$ from the training examples, and thus the arguments in \cref{lem:token_score} cannot be applied, meaning that the same result does not hold.
This is because noise memorization occurs even in the head, which is outside the focus of our study on token selection.
On the other hand, if we use a different training set $S^\prime$, it is possible to appropriately modify the argument and obtain results in the main theorem.
Regarding the class signal, we can show that $\vnu^\top \vmu_k \gtrsim \|\vmu\|_2^2$ as in this proposition.
However, when the input noise is large, specifically when $\sigma_\epsilon \sqrt{d} \gtrsim \|\vmu\|_2$, the condition $k \cdot \cos \theta_k > \Theta(1)$ in \cref{eq:pretrained_head} may no longer hold.
Nevertheless, in the discussion of token scores \cref{lem:token_score}, it suffices to show that $\vnu^\top \vmu_k$ dominates $\vnu^\top \vepsilon_t^{(i)}$, which indeed holds under the parameter assumptions in \cref{sec:assumption}.

\section{Additional Experimental Results}
\label{sec:additional_experiment}
\begin{figure}[t]
\centering
\includegraphics[width=0.7\textwidth]{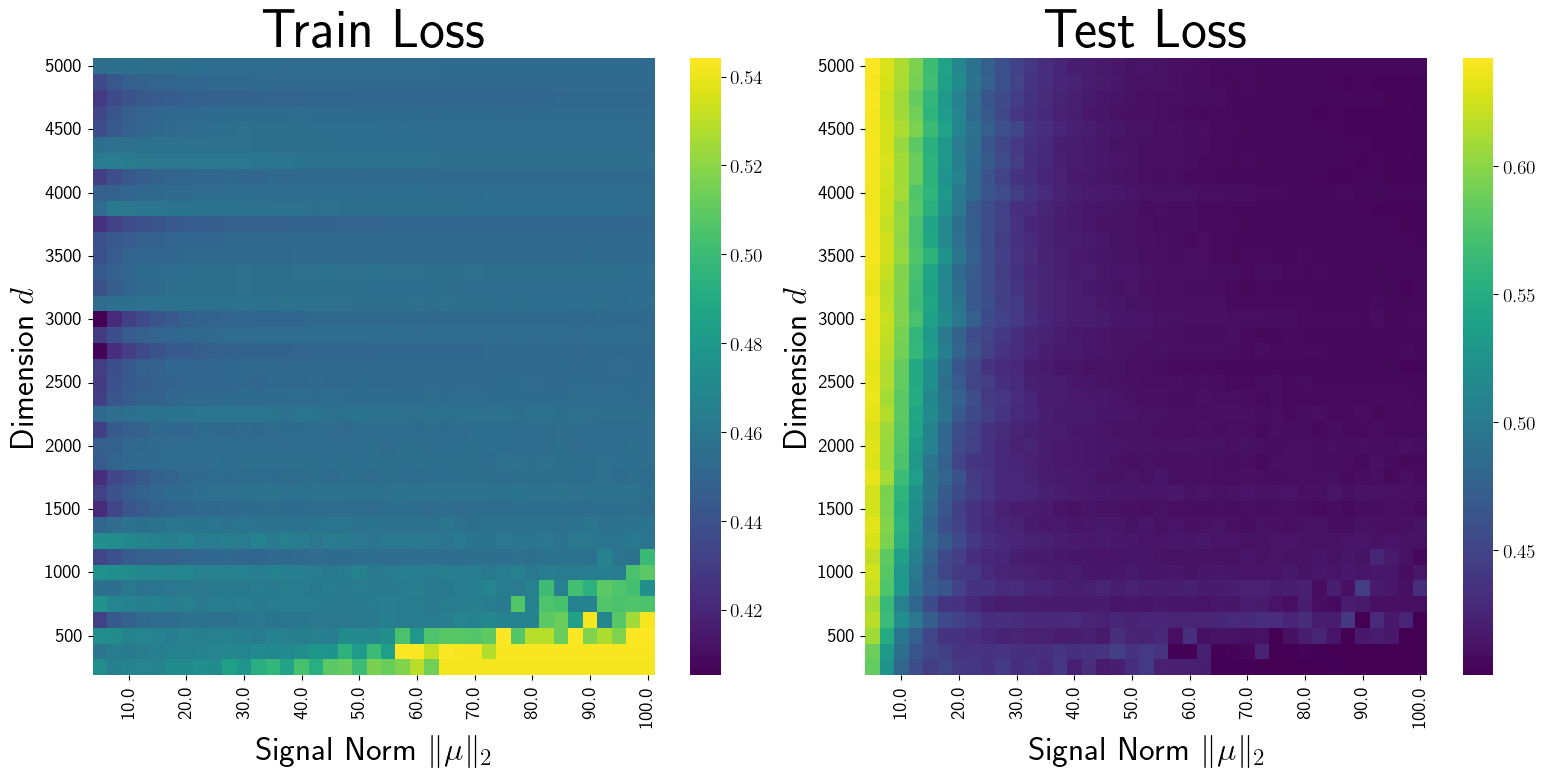}
\caption{
Heat-map of train loss and test loss under different signal norm $\|\vmu\|_2$ and the dimension $d$, after $1000$ iterations.
The yellow color indicates a higher loss value.
The model and data follow the setup in our analysis.
}
\label{fig:loss_heatmap}
\end{figure}

\subsection{Additional Synthetic Experiments}
\label{sec:additional_synthetic_experiments}

\paragraph{Heat-map experiments.}
We conducted another synthetic experiment when varying $d$ and $\|\vmu\|_2$.
\cref{fig:loss_heatmap} shows the train and test loss when varying the dimension $d$ and the signal norm $\|\vmu\|_2$ under the same setting as in \cref{sec:experiments} in the main text.
This figure shows that the balance between the dimension and the signal norm is significant for achieving low train and test loss.
Additionally, while our theoretical results in \cref{thm:convergence} propose the boundary between not-overfitting and benign overfitting as $\snr^2 = \Theta(n^{-1})$, the training loss boundary in the left figure approximately forms a quadratic curve.
This corresponds to that the ratio between $d$ and $\|\vmu\|_2^2$ remains constant at the boundary, with a fixed training size $n$.

\paragraph{One-layer transformer encoder.}

\begin{figure*}[t]
\centering
\begin{minipage}[t]{0.325\textwidth}
    \centering
    \includegraphics[width=\textwidth]{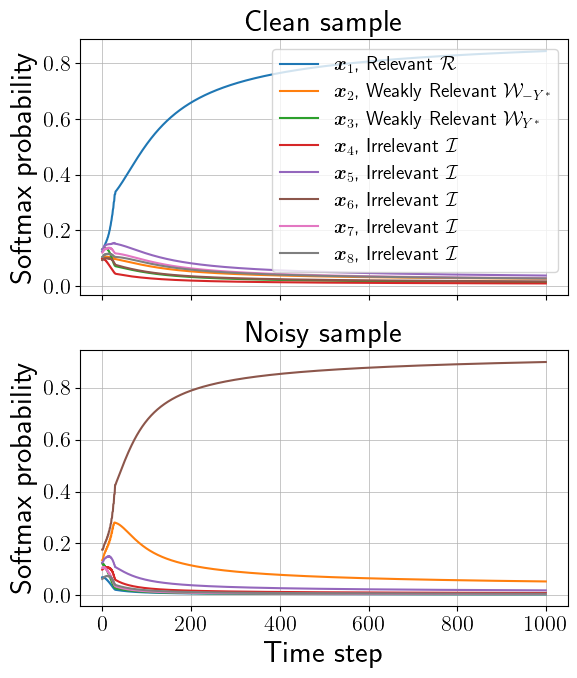}
    \subcaption{
    Large noise setting: $d = 5000$, $\|\vmu\|_2 = 5$.
    Final training accuracy is $\mathbf{1.0}$ and test accuracy is $\mathbf{0.49}$ (Harmful overfitting).
    }
    \label{fig:softmax_prob_memorize_one_layer}
\end{minipage}
\hfill
\begin{minipage}[t]{0.325\textwidth}
    \centering
    \includegraphics[width=\textwidth]{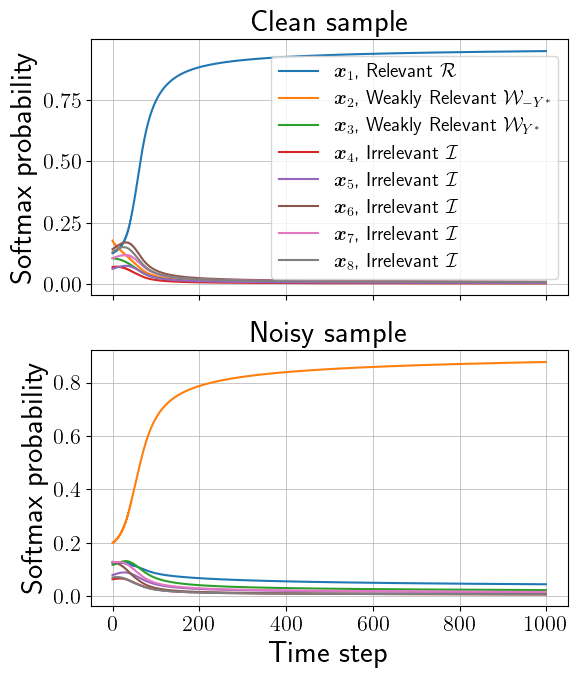}
    \subcaption{
    Balanced setting: $d = 2000$, $\|\vmu\|_2 = 20$.
    Final training accuracy is $\mathbf{1.0}$ and test accuracy is $\mathbf{0.99}$ (Benign overfitting).
    }
    \label{fig:softmax_prob_balanced_one_layer}
\end{minipage}
\hfill
\begin{minipage}[t]{0.325\textwidth}
    \centering
    \includegraphics[width=\textwidth]{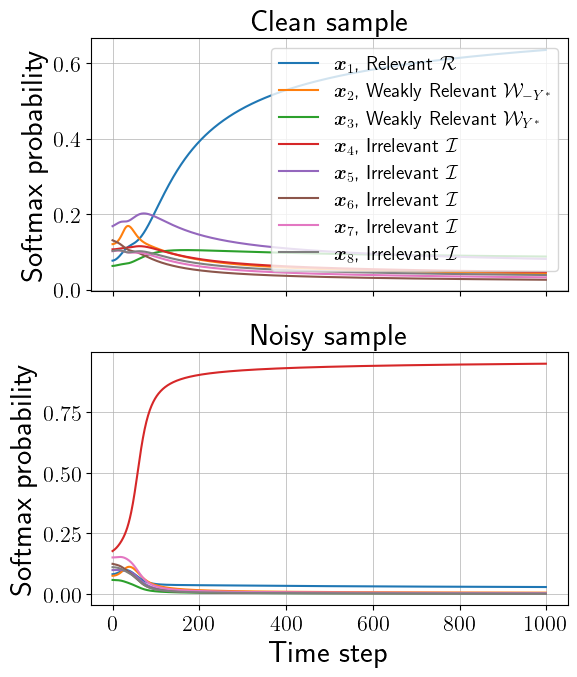}
    \subcaption{
    Large signal setting: $d = 1000$, $\|\vmu\|_2 = 100$.
    Final training accuracy is $\mathbf{1.0}$ and test accuracy is $\mathbf{1.0}$ (Benign overfitting).
    }
    \label{fig:softmax_prob_signal_one_layer}
\end{minipage}
\caption{
Dynamics of softmax probability for one-layer transformer encoder.
The top represents a clean sample, while the bottom represents a noisy sample.
From left to right, the configurations of $d$ and $\|\vmu\|_2$ follow those used in \cref{fig:softmax_prob}. 
While the right setting in \cref{fig:softmax_prob} corresponds to the not-overfitting case, it now shows benign overfitting.
}
\label{fig:softmax_prob_one_layer}
\end{figure*}

\begin{figure}[t]
\centering
\includegraphics[width=0.7\textwidth]{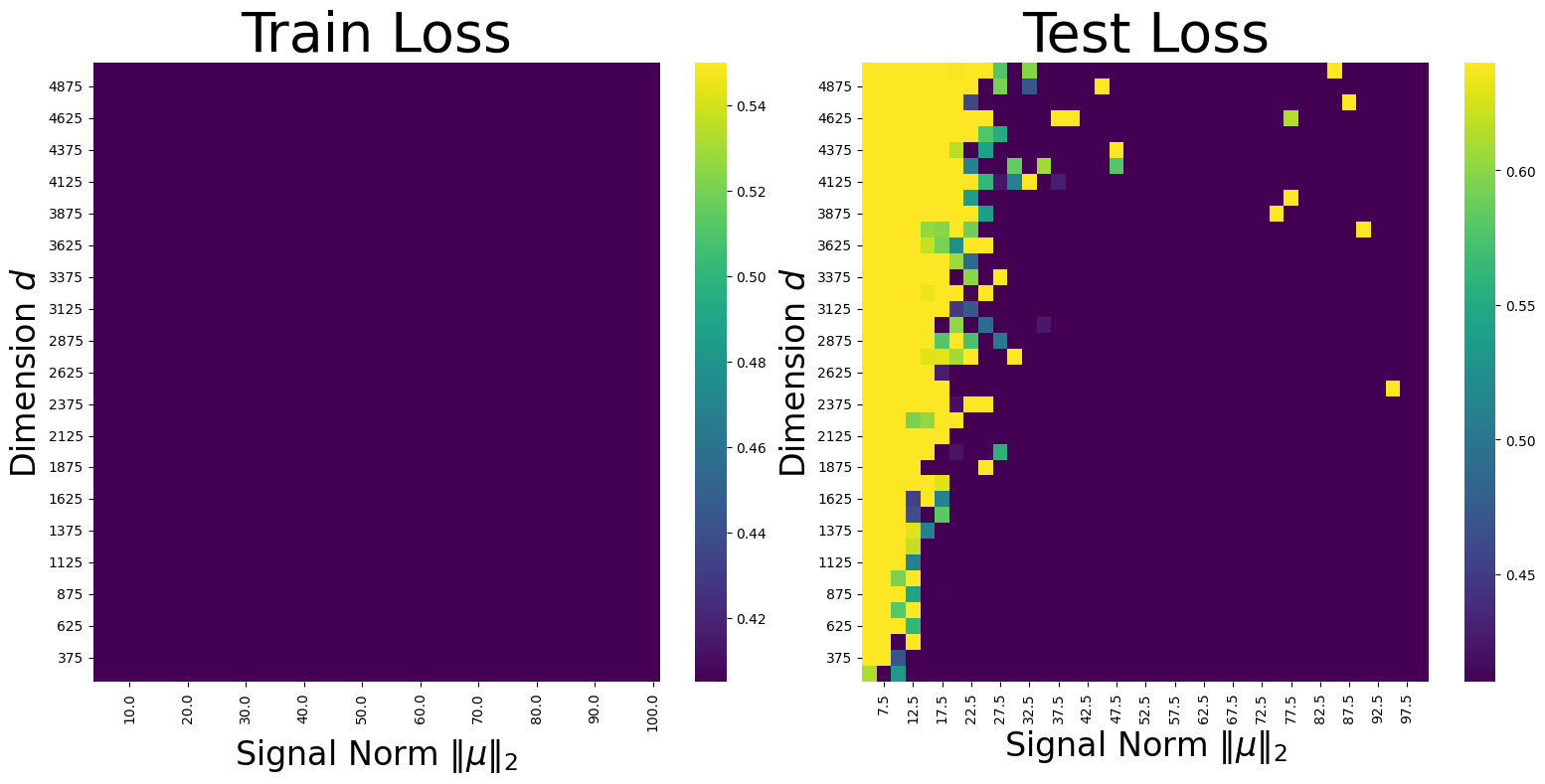}
\caption{
Heat-map of train loss and test loss under different signal norm $\|\vmu\|_2$ and the dimension $d$, after $1000$ iterations.
The yellow color indicates a higher loss value.
The data follows the analytical setup in this paper, but the model used is a one-layer transformer encoder.
For comparison with \cref{fig:loss_heatmap}, the range of the heat map is aligned with that of \cref{fig:loss_heatmap}.
}
\label{fig:loss_heatmap_one_layer}
\end{figure}

We also conducted experiments under a more practical model setting using the same synthetic data. 
Specifically, we used a one-layer transformer encoder with a single-head attention. 
Compared to the model in the analysis, this model additionally incorporates non-linear feedforward layers, normalization, and skip-connections.
The additional experiments here aim to provide further insights into the behavioral differences arising from joint optimization, rather than to support our analytical results.
We first examined the token selection dynamics in the same way as in \cref{fig:softmax_prob}, and the results are presented in \cref{fig:softmax_prob_one_layer}.
The main findings are as follows:
\begin{itemize}[topsep=2pt, parsep=2pt, itemsep=1pt]
\item Similar benign overfitting was observed depending on the relationship between $d$ and $\|\vmu\|_2$.
In a low-SNR setting, harmful overfitting was also observed in \cref{fig:softmax_prob_memorize_one_layer}.
However, \cref{fig:softmax_prob_signal_one_layer} shows that the model exhibited benign overfitting instead of not overfitting.
This aligns with the intuition that increasing model capacity facilitates fitting to the training data.
\item In the benign overfitting case, token selection for noisy samples progresses more rapidly.
The bottom in \cref{fig:softmax_prob_balanced_one_layer} shows that the softmax probability assigned to token $\vx_2$, which most aligns to label noise, increases more rapidly than that of \cref{fig:softmax_prob_balanced}.
The ability to learn token scores dynamically enables a cooperative interaction between the token selection mechanism and the feedforward layers, which could lead to faster token selection, as shown in the figure.
Note also that, unlike the setting analyzed in the main text, fitting the label noise does not necessarily require selecting $\vx_2$, and a different token is selected depending on the initialization.
\end{itemize}
Furthermore, \cref{fig:loss_heatmap_one_layer} shows the results of a heatmap-based experiment analogous to \cref{fig:loss_heatmap}. 
For the training loss, we observed very small values within the plotted range of $\|\vmu\|_2$ and $d$, which aligns with the observation of benign overfitting instead of not-overfitting in \cref{fig:softmax_prob_signal_one_layer}. 
As for the test loss, we obtained a similar boundary structure, but with clearer separation between the well-generalizing and poorly-generalizing regions.
We suppose that this difference is due to the two main factors: i) the ability to learn output scaling and ii) the ability to distribute the effect of noise memorization not only within the attention mechanism but also to the feedforward layer.

\subsection{Additional Information for Real-World Experiments}
\label{sec:additional_information_real_experiments}

\begin{table}[t]
\centering
\caption{Details of image datasets.}
\label{tab:image_dataset}
\scalebox{1.0}{
\fontsize{9pt}{7pt}\selectfont
\begin{tabular}{cccc}
\toprule
{Dataset} & {Number of Class} & {Input Size} & {Training Size for Classifier Head} \\
\midrule
MNIST & $10$ & $28\times28\times1$ & $10000$  \\
CIFAR-10 & $10$ & $32\times32\times3$& $10000$  \\
PneumoniaMNIST & $2$ & $28\times28\times1$& $3000$  \\
BreastMNIST & $2$ & $28\times28\times1$& $200$  \\
\bottomrule
\end{tabular}
}
\end{table}

\begin{table}[!t]
\centering
\caption{Details of natural language datasets. }
\label{tab:natural_language_dataset}
\scalebox{1.0}{
\fontsize{9pt}{7pt}\selectfont
\begin{tabular}{cccc}
\toprule
{Dataset} & {Number of Class} & {Average Word Count} & {Training Size for Classifier Head} \\
\midrule
AG-news & $4$ & $10$& $10000$ \\
TREC & $6$ & $10$& $3000$ \\
\bottomrule
\end{tabular}
}
\end{table}

In this section, we provide the additional information for the experimental setup.
We prepared the pre-trained ViT \citep{dosovitskiy2021an} and BERT \citep{devlin2018bert} models using huggingface transformer library \citep{wolf-etal-2020-transformers}.
These models use the default configuration; specifically, they consist of $12$ attention layers, the embedding dimension is set to $768$, and the hidden dimension of the feed-forward layers is set to $3072$.
Dropout was not performed in any layers during the following training.
Since the classifier head is initialized, we first train only the classifier head on a subset of the training data without label noise to align with \cref{eq:pretrained_head}.
The sizes of these sub-datasets vary due to the original dataset size, as summarized in \cref{tab:image_dataset,tab:natural_language_dataset}.
The experiments in \cref{sec:experiments,sec:additional_real_experiments} use training data that is a different split from the one used in the classifier head pertaining, and training is performed by initializing and updating only the attention weights in the final layer.
As explained in \cref{sec:experiments}, this setup is designed to align with our analytical setting as closely as possible, treating the pretrained model up to the final layer as a feature extractor, and training single-layer attention weights on top of these features.
It should be noted, however, that the data model of course does not follow our analytical setups, and differences such as normalization, skip connections, and multi-head attention remain.
During the experiments, the AdamW optimizer \citep{loshchilov2018decoupled} without weight decay was used with a learning rate of $5\mathrm{e}{-5}$, along with linear warmup and learning rate decay.
The used datasets are described as follows.

\paragraph{Image Dataset}
We conducted experiments on image classification on the following four datasets.
The \textbf{MNIST} \citep{lecun2010mnist} dataset consists of gray-scale $28\times28$ images with $10$ classes. 
Each image is copied to form a $3$-channel input and fed to the common image processor for the pre-trained ViT model.  
The \textbf{CIFAR-10} \citep{krizhevsky2009learning} is the dataset composed of $32\times32$ color images in $10$ classes.
These classes are mainly made up of vehicles and animals. 
Finally, we focused on the MedMNIST \citep{medmnistv2} dataset, specifically using \textbf{PneumoniaMNIST} and \textbf{BreastMNIST}, which are tasks for disease detection.
Both datasets consist of $28\times28$ gray-scale images similar to MNIST, and they are binary classification settings based on the presence or absence of disease.
\cref{tab:image_dataset} summarizes the details of these datasets.

Furthermore, we provide comments on how our data model, based on relevant, weakly relevant, and irrelevant tokens defined in \cref{def:data}, corresponds to the input images.
For example, consider the case of cancer detection using medical images such as MedMNIST.
In this case, the patches directly indicating cancer, such as tumors or lesions, can considered relevant tokens. 
The patches showing enlarged lymph nodes or inflammatory signs, which often co-occur with cancer but are not definite, can be categorized as weakly relevant tokens.
Meanwhile, the patches showing normal tissue or background anatomy, which are irrelevant to the task, correspond to irrelevant tokens.

\paragraph{Natural Language Dataset}
We conducted experiments on sentence classification in natural language in addition to image data.
The \textbf{AG-news} \citep{zhang2015character} is the dataset for the topic classification of news articles.
It has four largest classes: ``world'', ``sports'', ``business'', and ``science/technology''.
The \textbf{TREC} \citep{li-roth-2002-learning} dataset is the dataset for question classification in $6$ classes.
Each question is labeled based on the content and the question type.
The details of these datasets are summarized in \cref{tab:natural_language_dataset}.
Note that the text data can also correspond to the data model in the paper based on the relevance of each word to the class, similar to the case of images.

\subsection{Additional Real-World Experiments}
\label{sec:additional_real_experiments}
In this section, we provide additional results under the real-world experiment setting described in \cref{sec:additional_information_real_experiments}.

\begin{table}[t]
\centering
\caption{
Training loss and test accuracy when only the final attention query-key weights are trained on a sub-dataset for $2000$ epochs, sufficiently long time. 
The results show the average over three different runs with the standard deviation. 
}
\label{tab:experiment_results_additional_label_noise}
\begin{subtable}[t]{0.49\linewidth}
\label{tab:experiment_results_label_noise_0.0}
\caption{Label noise $\eta = 0.0$.}
\scalebox{1.0}{
\fontsize{9pt}{8pt}\selectfont
\begin{tabular}{llccc}
\toprule
\multirow{2}{*}{Dataset} &\multirow{2}{*}{Eval} & \multicolumn{3}{c}{Training Size $n$} \\
\cmidrule(){3-5}        & & $20$  & $200$ & $1000$   \\
\midrule                                                                                       
\multirow{2}{*}{MNIST}            & train   &  $0.00${\tiny$\pm 0.00$} & $0.00${\tiny$\pm 0.00$} & $0.00${\tiny$\pm 0.00$} \\
                                  & test    &  $90.8${\tiny$\pm 1.5$ } & $91.8${\tiny$\pm 0.1$ } & $93.9${\tiny$\pm 0.1$ } \\
\midrule                                                                                                                                                   
\multirow{2}{*}{CIFAR-10}         & train   &  $0.00${\tiny$\pm 0.00$} & $0.00${\tiny$\pm 0.00$} & $0.00${\tiny$\pm 0.00$} \\
                                  & test    &  $96.0${\tiny$\pm 0.1$ } & $95.9${\tiny$\pm 0.1$ } & $96.3${\tiny$\pm 0.1$ } \\
\midrule                                                                                                                                                  
\multirow{2}{1cm}{Pneumonia MNIST}& train   &  $0.00${\tiny$\pm 0.00$} & $0.00${\tiny$\pm 0.00$} & $0.00${\tiny$\pm 0.00$} \\
                                  & test    &  $80.4${\tiny$\pm 3.1$ } & $79.0${\tiny$\pm 2.0$ } & $81.5${\tiny$\pm 0.9$ } \\
\midrule                                                                                                                                                   
\multirow{2}{1cm}{Breast MNIST}   & train   &  $0.08${\tiny$\pm 0.02$} & $0.12${\tiny$\pm 0.01$} & $-$                     \\
                                  & test    &  $74.4${\tiny$\pm 1.4$ } & $78.4${\tiny$\pm 1.1$ } & $-$                     \\
\midrule                                                                                                                                                   
\multirow{2}{*}{AG-news}          & train   &  $0.00${\tiny$\pm 0.00$} & $0.00${\tiny$\pm 0.00$} & $0.00${\tiny$\pm 0.00$} \\
                                  & test    &  $84.0${\tiny$\pm 0.3$ } & $85.9${\tiny$\pm 0.5$ } & $85.5${\tiny$\pm 0.3$ } \\
\midrule                                                                                                                                                   
\multirow{2}{*}{TREC}             & train   &  $0.00${\tiny$\pm 0.00$} & $0.00${\tiny$\pm 0.00$} & $0.00${\tiny$\pm 0.00$} \\
                                  & test    &  $81.4${\tiny$\pm 0.3$ } & $78.7${\tiny$\pm 0.5$ } & $82.8${\tiny$\pm 0.4$ } \\
\bottomrule
\end{tabular}
}
\end{subtable}
\hfill
\begin{subtable}[t]{0.49\linewidth}
\caption{
Label noise $\eta = 0.2$.
}
\label{tab:experiment_results_label_noise_0.2}
\scalebox{1.0}{
\fontsize{9pt}{8pt}\selectfont
\begin{tabular}{llccc}
\toprule
\multirow{2}{*}{Dataset} &\multirow{2}{*}{Eval} & \multicolumn{3}{c}{Training Size $n$} \\
\cmidrule(){3-5}        & & $20$  & $200$ & $1000$   \\
\midrule                                                                                       
\multirow{2}{*}{MNIST}            & train   &  $0.00${\tiny$\pm 0.00$} & $0.04${\tiny$\pm 0.01$} & $0.14${\tiny$\pm 0.02$} \\
                                  & test    &  $84.0${\tiny$\pm 7.2$ } & $87.3${\tiny$\pm 1.2$ } & $88.4${\tiny$\pm 0.1$ } \\
\midrule                                                                                                                                                   
\multirow{2}{*}{CIFAR-10}         & train   &  $0.18${\tiny$\pm 0.07$} & $0.18${\tiny$\pm 0.01$} & $0.33${\tiny$\pm 0.02$} \\
                                  & test    &  $95.5${\tiny$\pm 0.4$ } & $94.3${\tiny$\pm 0.3$ } & $93.2${\tiny$\pm 0.1$ } \\
\midrule                                                                                                                                                  
\multirow{2}{1cm}{Pneumonia MNIST}& train   &  $0.03${\tiny$\pm 0.04$} & $0.02${\tiny$\pm 0.00$} & $0.07${\tiny$\pm 0.00$} \\
                                  & test    &  $79.2${\tiny$\pm 7.0$ } & $81.2${\tiny$\pm 2.2$ } & $82.5${\tiny$\pm 0.7$ } \\
\midrule                                                                                                                                                   
\multirow{2}{1cm}{Breast MNIST}   & train   &  $0.12${\tiny$\pm 0.03$} & $0.17${\tiny$\pm 0.02$} & $-$                     \\
                                  & test    &  $74.8${\tiny$\pm 2.0$ } & $76.3${\tiny$\pm 2.3$ } & $-$                     \\
\midrule                                                                                                                                                   
\multirow{2}{*}{AG-news}          & train   &  $0.00${\tiny$\pm 0.00$} & $0.00${\tiny$\pm 0.00$} & $0.00${\tiny$\pm 0.00$} \\
                                  & test    &  $82.5${\tiny$\pm 0.7$ } & $77.8${\tiny$\pm 0.6$ } & $69.6${\tiny$\pm 1.1$ } \\
\midrule                                                                                                                                                   
\multirow{2}{*}{TREC}             & train   &  $0.24${\tiny$\pm 0.33$} & $0.11${\tiny$\pm 0.01$} & $0.12${\tiny$\pm 0.05$} \\
                                  & test    &  $79.2${\tiny$\pm 2.4$ } & $73.3${\tiny$\pm 0.8$ } & $69.1${\tiny$\pm 1.7$ } \\
\bottomrule
\end{tabular}
}
\end{subtable}
\end{table}

\cref{tab:experiment_results_additional_label_noise} presents results corresponding to the experiment in \cref{tab:experiment_results} of the main text, where the label noise level $\eta = 0.1$ is varied to $\eta = 0.0$ and $\eta = 0.2$.
In the absence of label noise, the model fits the data across all settings except for BreastMNIST.
Notably, in the case of AG-news, where increasing the sample size $n$ under $\eta = 0.1$ led to worse test accuracy, no such accuracy degradation is observed when $\eta = 0.0$.
The case of $\eta = 0.2$ shows similar trends to those observed with $\eta = 0.1$ in the main text.

\begin{figure*}[t]
\centering
\begin{minipage}[t]{0.49\textwidth}
    \centering
    \includegraphics[width=\textwidth]{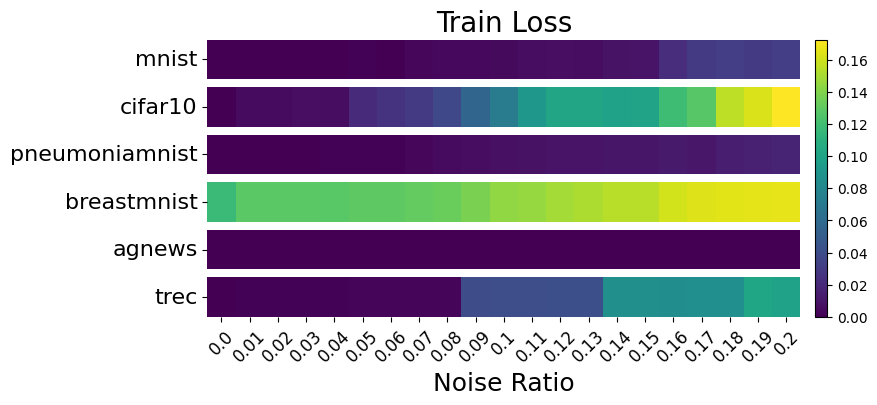}
    \label{fig:label_noise_heatmap_train}
\end{minipage}
\hfill
\begin{minipage}[t]{0.49\textwidth}
    \centering
    \includegraphics[width=\textwidth]{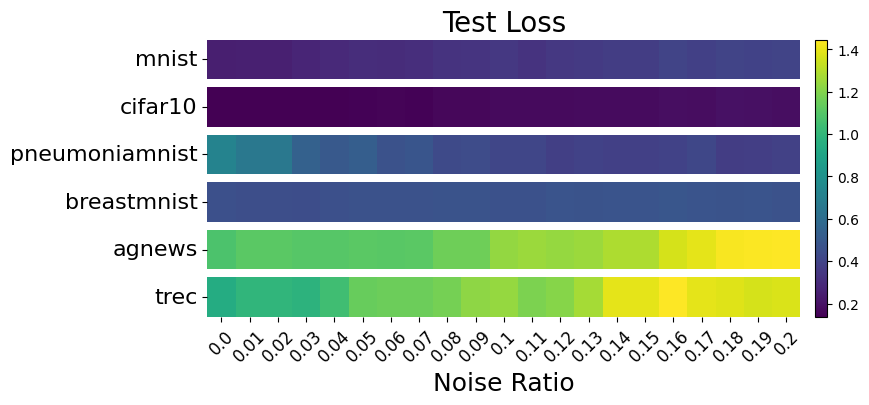}
    \label{fig:label_noise_heatmap_test}
\end{minipage}
\caption{
Heat-map of train loss and test loss when the label noise ratio $\eta$ is varied from $0.0$ to $0.2$. 
The training size $n$ is $200$, and the training setup follows that in \cref{sec:additional_information_real_experiments}.
The yellow color indicates a higher loss value.
}
\label{fig:label_noise_heatmap}
\end{figure*}

Next, \cref{fig:label_noise_heatmap} presents the results as the label noise ratio is varied continuously from $\eta = 0.0$ to $0.2$.
In datasets such as CIFAR10 and TREC, the training loss increases as the label noise increases, whereas in MNIST, PneumoniaMNIST, and AG-news, the model maintains relatively good fitting across all noise levels.
Regarding test loss, we observe that language datasets such as AG-news and TREC are relatively more difficult to predict, and the increase in test loss is more severe as the noise ratio increases.

\begin{figure}[t]
\centering
\includegraphics[width=0.8\textwidth]{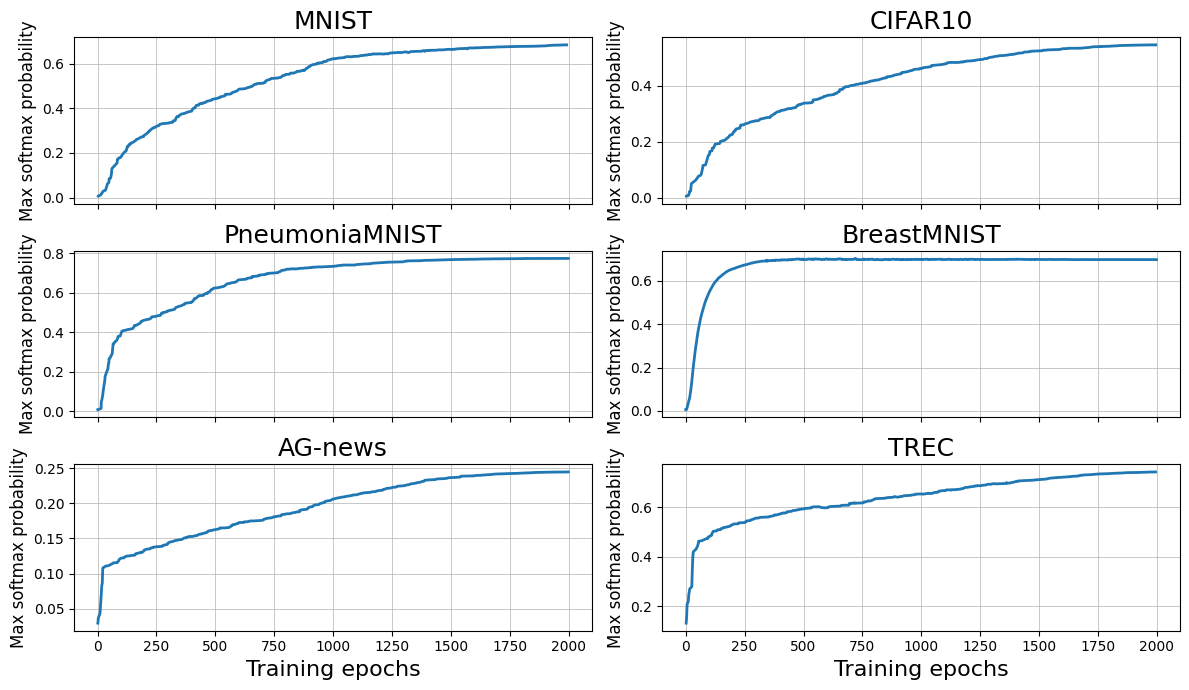}
\caption{
Dynamics of the maximum softmax probability at the position of the CLS token in the final attention layer.
The experimental setup follows the real-world experiment described in \cref{sec:additional_information_real_experiments}, with a label noise level $\eta = 0.1$.
}
\label{fig:softmax_prob_real_world}
\end{figure}

Finally, \cref{fig:softmax_prob_real_world} illustrates the evolution of the maximum softmax probability at the position of the CLS token in the real-world experimental setting.
For all datasets, the softmax probability increases as training progresses.
In MNIST and PneumoniaMNIST, the model tends to focus on a single token, whereas in other datasets, attention does not necessarily concentrate on a single token.
This behavior may be attributed to differences in the data distribution and architecture, such as skip connections and non-linear feed-forward layers, which are not captured by our analytical setting.

\end{document}